\DeclareMathOperator*{\Argmin}{arg\,min}
\DeclareMathOperator{\Diag}{diag}
\DeclareMathOperator{\Exp}{exp}
\DeclareMathOperator{\Exps}{e}
\DeclareMathOperator{\Expec}{\mathbb{E}}
\DeclareMathOperator{\Identity}{I}
\DeclareMathOperator{\Proba}{\mathbb{P}}
\DeclareMathOperator{\Trace}{Tr}
\newcommand{\Abar}{\overline{A}}
\newcommand{\abs}[1]{\left\lvert#1\right\rvert}
\newcommand{\card}[1]{\left\lvert#1\right\rvert}
\newcommand{\condexpec}[2]{\mathbb{E}\left[#1\middle|#2\right]}
\newcommand{\condproba}[2]{\mathbb{P}\left(#1\middle|#2\right)}
\newcommand{\diag}[1]{\Diag\left(#1\right)}
\newcommand{\Diff}{\mathrm{d}}
\newcommand{\dist}[2]{d\left(#1,#2\right)}
\newcommand{\defeq}{\vcentcolon =}
\newcommand{\eqdef}{= \vcentcolon}
\renewcommand{\exp}[1]{\Exp\left(#1\right)}
\newcommand{\expec}[1]{\Expec\left[#1\right]}
\newcommand{\id}{\mathrm{id}}
\newcommand{\smallexpec}[1]{\Expec[#1]}
\newcommand{\exps}[1]{\Exps^{#1}}
\newcommand{\frobnorm}[1]{\norm{#1}_{\mathrm{F}}}
\newcommand{\Indic}{\mathds{1}}
\newcommand{\indic}[1]{\Indic_{#1}}
\newcommand{\infnorm}[1]{\norm{#1}_{\infty}}
\newcommand{\kernel}[1]{\mathrm{Ker}\left(#1\right)}
\newcommand{\mutilde}{\tilde{\mu}}
\newcommand{\muttilde}{\tilde{\tilde{\mu}}}
\newcommand{\norm}[1]{\left\lVert#1\right\rVert}
\newcommand{\opnorm}[1]{\norm{#1}_{\mathrm{op}}}
\newcommand{\partition}{\mathcal{P}}
\newcommand{\scalar}[2]{\langle#1,#2\rangle}
\newcommand{\proba}[1]{\Proba\left (#1\right )}
\newcommand{\Reals}{\mathbb{R}}
\newcommand{\relimp}[1]{\mathrm{I}(#1)}
\newcommand{\spec}[1]{\mathrm{Spec}\left(#1\right)} 
\newcommand{\Sbar}{\overline{S}}
\newcommand{\smallcondexpec}[2]{\mathbb{E}[#1|#2]}
\newcommand{\smallnorm}[1]{\lVert#1\rVert}
\newcommand{\smallopnorm}[1]{\smallnorm{#1}_{\mathrm{op}}}
\newcommand{\smallproba}[1]{\Proba (#1)}
\newcommand{\supp}{\mathcal{S}} 
\newcommand{\trace}[1]{\Trace\left(#1\right)}
\newcommand{\traindata}{\mathcal{X}}
\newcommand{\volume}[1]{\mathrm{Vol}(#1)}
\newcommand{\bigo}[1]{\mathcal{O}\left(#1\right)}
\newcommand{\cvproba}{\overset{\Proba}{\longrightarrow}}
\renewcommand{\epsilon}{\varepsilon}
\theoremstyle{plain}
\newtheorem{mytheorem}{Theorem}
\newtheorem{myproposition}[mytheorem]{Proposition}
\newtheorem{mylemma}[mytheorem]{Lemma}
\newtheorem{mycorollary}[mytheorem]{Corollary}
\theoremstyle{definition}
\newtheorem{assumption}{Assumption}
\newtheorem{mydefinition}{Definition}
\newtheorem{myremark}{Remark}
\newtheorem{myexample}{Example}
\newcommand{\betahat}{\hat{\beta}}
\newcommand{\bigc}{C}
\newcommand{\boundedcst}{C_f}
\newcommand{\bxi}{b^\star}
\newcommand{\bzero}{b^\circ} 
\newcommand{\cprop}{v}
\newcommand{\erfun}[1]{\mathrm{erf}\left(#1\right)}
\newcommand{\ew}{e}
\newcommand{\et}{e^t} 
\newcommand{\littlec}{c}
\newcommand{\Gammahat}{\hat{\Gamma}}
\newcommand{\ek}{e^k} 
\newcommand{\lambdamax}[1]{\lambda_{\mathrm{max}}\left(#1\right)}
\newcommand{\lambdamin}[1]{\lambda_{\mathrm{min}}\left(#1\right)}
\newcommand{\mtilde}{\tilde{m}}
\newcommand{\Sigmahat}{\hat{\Sigma}}
\newcommand{\sigmatilde}{\tilde{\sigma}}
\newcommand{\stilde}{\tilde{s}}
\newcommand{\truncnorm}[1]{\mathrm{TN}(#1)}
\newcommand{\uniform}[1]{\mathcal{U}(#1)}
\def\hlinewd#1{%
	\noalign{\ifnum0=`}\fi\hrule \@height #1 %
	\futurelet\reserved@a\@xhline}
\def\th@plain{%
	\thm@notefont{}
	\itshape 
}
\def\th@definition{%
	\thm@notefont{}
	\normalfont 
}
\definecolor{ulesred}{rgb}{0.7,0,0}
\newcommand{\ulesred}{\color{ulesred}}
\definecolor{ulesgreen}{cmyk}{1,0,1,0}
\definecolor{verylightgray}{gray}{0.85} 
\definecolor{mediumgray}{gray}{0.5} 
\newcommand{\black}{\color{black}}
\newcommand{\blue}{\color{blue}}
\newcommand{\red}{\color{red}}
\newcommand{\green}{\color{green}}
\newcommand{\black}{\color{black}}
\newcommand{\white}{\color{white}}
\begin{document}

\title{Looking Deeper into Tabular LIME}

\author{\name Damien Garreau 
\email damien.garreau@unice.fr \\
       \addr Laboratoire J. A. Dieudonn\'e \& Inria Maasai project-team\\
       Universit\'e C\^ote d'Azur \\
       Nice, France
       \AND
       \name Ulrike von Luxburg 
       \email ulrike.luxburg@uni-tuebingen.de\\
       \addr University of T\"ubingen \\
       Department of Computer Science\\
       T\"ubingen, Germany}

\editor{Francis Bach, David Blei, and Bernhard Sch{\"o}lkopf}

\maketitle

\begin{abstract}
In this paper, we present a thorough theoretical analysis of the default implementation of LIME in the case of tabular data. 
We prove that in the large sample limit, the interpretable coefficients provided by Tabular LIME can be computed in an explicit way as a function of the algorithm parameters and some expectation computations related to the black-box model. When the function to explain has some nice algebraic structure (linear, multiplicative, or sparsely depending on a subset of the coordinates), our analysis provides interesting insights into the explanations provided by LIME. These can be applied to a range of machine learning models including  Gaussian kernels or CART random forests. As an example, for linear functions we show that LIME has the desirable property to provide explanations that are proportional to the coefficients of the function to explain and to ignore coordinates that are not used by the function to explain.  For partition-based regressors, on the other side, we show that LIME produces undesired artifacts that may provide misleading explanations. 
\end{abstract}

\begin{keywords}
machine learning, interpretability, explainable AI, statistical learning theory
\end{keywords}


\section{Introduction}

In recent years, many methods aiming to provide \emph{interpretability} of machine learning algorithms have appeared. 
These methods give explanations for virtually all machine learning models currently in use, including the most complex. 
However, it is still unclear how these methods operate in the absence of proper theoretical treatment. 

Interpretability is important for several reasons. 
First, most of the recent advances in machine learning have been achieved by models whose increasing complexity seems to know no limit, namely deep neural networks (DNNs). 
Even though methods such as tree boosting \citep{Chen_Guestrin_2016} remain strong contenders for tabular data \citep{Borisov_et_al_2021}, DNNs are increasingly employed in this setting. 
Though in most cases the model itself may be inaccessible to us, it is interesting to note that sometime the architecture and the parameters are known. 
One could very well read the code and sometime even check the value of each individual coefficient. 
Even then, it is today very challenging for a human to understand how a particular prediction is made simply because of the sheer number of parameters and the complicated architecture. 
Maybe for this reason recent models are more and more frequently referred to as \emph{black boxes}. 

Second, while some users mainly care about performance (that is, accuracy), some specific applications demand \emph{interpretability} of the algorithms involved in the decision-making process. 
This is particularly true in healthcare. 
The main worry of practitioners is that the model that they are training learns a rule yielding good accuracy on the train and test sets, but making little common sense and leading to dramatic decisions when deployed in the wild. 
For instance, \citet{Caruana_et_al_2015} describe a model trained to predict probability of death from pneumonia. 
This model ends up assigning less risk to patients if they also have asthma. 
Of course, from a medical point of view, this is nonsense, and deploying the model in a real-life situation would lead asthmatic patients to receive less prompt treatment and therefore increase their risk of ending up in critical condition. 
One can surmise that the model learned that asthma was predictive of a lower risk because these patients, \emph{a contrario}, received the quickest treatment. 
In this hypothetical example, interpretability of the model would help us not releasing the flawed model. 
For instance, one could investigate a few cases with an interpretability algorithm, and realize that asthma is associated with a decrease in the risk. 
Such sanity check is not possible ``as is'' with most recent machine learning models, due to their complexity.
We refer to \citet{Turner_2016} for other interesting examples. 

The current spread of machine learning in all aspects of our life make the previous example not so hypothetical anymore. 
Thus there is an urgent need for interpretability. 
It is interesting to note that this need is recognized by the lawmakers,  
at least in the European Union, where the European Parliament adopted in 2018 the General Data Protection Regulation (GDPR). 
Part of the GDPR is a clause on automated decision-making, stating to some extent a right for all individuals to obtain ``meaningful explanations of the logic involved.'' 
Even if there is an ongoing debate on whether this disposition is legally binding \citep{Wachter_et_al_2017}, for the first time, we find written in law the idea that decision-making algorithms cannot stay black boxes.

The main question underlying our work is the following: \emph{``Do these explanations make sense from a mathematical point of view, in particular when the black-box model is simple?''}
By simple, we mean a model that a human can already explain to some degree (\emph{e.g.}, a linear model or a shallow decision tree). 
If the answer is negative for some models, we think that this raises concern for the widespread use of such interpretability methods. 
Indeed, if a particular method fails to explain how a simple linear model predicted a value for a given example, how can we trust this same method to explain how a deep convolutional neural network predicted the class of an object in an image?
We believe that there is a need for theoretical guarantees for interpretability. 
There should be some minimal proof of correctness for any interpretability method in order to trust the results thereof. 
For instance, showing that one recovers the important coefficients when the model to explain is linear, or that the algorithm is provably ignoring coordinates that are not used by the model to explain. 
This paper attempts to answer these questions in the case of a method called \emph{Local Interpretable Model-agnostic Explanations} \citep[LIME,][]{Ribeiro_et_al_2016}.
We will see that the answer to both of these questions is \emph{affirmative} when the default implementation of LIME is used for tabular data with no feature selection. 

Without giving too much details on the inner working of LIME (which we will do in Section~\ref{section:tabular-lime}), we want to briefly summarize how it operates. 
Essentially, given a black-box model~$f$, in order to explain an example $\xi\in\Reals^d$, LIME 
\begin{enumerate}[(1)]
	\item creates perturbed examples $x_i$;
	\item gets the prediction of the black box model $f$ at these examples;
	\item weights the predictions with respect to the proximity between $x_i$ and $\xi$;
	\item trains a weighted interpretable model on these new examples.
\end{enumerate}
The output of LIME is then the top coefficients of the interpretable model (if it is linear). 
What makes LIME really interesting and so popular is the use of \emph{interpretable features}, namely discretized features. 
For instance, if the third feature is real-valued, instead of saying that ``feature number $3$ is important for the prediction $f(\xi)$,'' LIME indicates to the user that ``feature $3$ being between $1.5$ and $7.8$ is important.'' 
More precisely, LIME outputs a linear surrogate model, whose coefficients (the \emph{interpretable coefficients}) are given as explanation to the user. 
These coefficients are our primary center of interest.

\paragraph{Main findings. }
In this paper, we restrict ourselves to the default implementation of LIME for tabular data (that is, data lying in $\Reals^d$). 
Our main results are: 
\begin{itemize}
	\item {\bf Explicit expression for LIME's surrogate model (Section~\ref{section:statement-main-result}).} We show that when the surrogate model is obtained by ordinary least-squares and the number of perturbed samples is large, the interpretable coefficients obtained by LIME are close to a vector~$\beta^f$ whose expression is explicit. This statement is true with high probability with respect to the sampling of perturbed examples. 
\item {\bf Linearity of explanations (Section~\ref{sec:explanation-linearity}).} 
Leveraging the explicit expression of~$\beta^f$, we show several interesting properties of LIME. In particular,~$\beta^f$ is \emph{linear} in~$f$, and as a consequence is robust with respect to small perturbations of $f$. 
\item {\bf Stability of explanations (Section~\ref{sec:bin-indices}).} Explanations only depends on the bins into which $\xi$ belongs. As a consequence, they are quite stable with respect to perturbation of the input while the example does not cross a boundary and can be completely different otherwise. 
\item {\bf Role of the hyperparameters (Section~\ref{section:bandwidth}).}  We also obtain the behavior of $\beta^f$ for small and large bandwidth (the main free parameter of the method). In particular, we show that the interpretable coefficient can cancel out for certain choices of bandwidth. 
\item {\bf Linear model (Section~\ref{section:additive}).} When $f$ has some simple algebraic structure, we show how to compute~$\beta^f$ in closed-form. In particular, when $f$ is linear, we recover the main result of \citet{Garreau_von_Luxburg_2020}, but this time for the default weights, arbitrary bins, and arbitrary input parameters. We end up with essentially the same conclusion: the explanations provided by LIME are proportional to the coefficients of~$f$ along each coordinate. 
\item {\bf More general models (Section~\ref{section:multiplicative}).} In this paper, we reach beyond the linear case, obtaining explicit results for  multiplicative~$f$. This encompasses, for instance, indicator functions with rectangular domain and the \textbf{Gaussian kernel}. By linearity, we obtain a closed-form statement for partition-based regressor, for instance CART trees. In this last case, we leverage the theory to explain artifacts observed when explaining a \textbf{CART tree} with LIME. 
\item {\bf Dummy features (Section~\ref{section:sparse}).} We also show that unused features are ignored by LIME's explanation: they receive weight~$0$ in the surrogate model (up to noise coming from the sampling). 
\end{itemize}
The main difficulty in our analysis comes from the non-linear nature of the interpretable features (defined as indicator functions) and the complicated overall machinery of LIME. 
In contrast with our previous work \citep{Garreau_von_Luxburg_2020}, we managed to keep the analysis very close to the default implementation (found at \url{https://github.com/marcotcr/lime} as of April 2022), the only visible price to pay being additional notation.
As we will see, these become quite manageable in the simple cases that we consider. 
All our theoretical claims are validated experimentally and the code of all the experiments of the paper can be found at \url{https://github.com/dgarreau/tabular_lime_theory}.

\paragraph{Related work. }
LIME is a \emph{posthoc, local} interpretability method. 
In other words, it provides explanations (i) ``after the fact'' (the model is already trained), and (ii) for a specific example. 
We refer to the exhaustive review papers by \citet{Guidotti_et_al_2018} (especially Section~7.2), and \citet{Adadi_Berrada_2018} for an overview of such methods. 
It seems that LIME has quickly become one of the most widely-used posthoc interpretability methods. 
But besides the practical interest, LIME has also generated consequent academic attention, with many variations on the method being proposed in the last three years. 
For instance, the same set of authors later proposed Anchors \citep{Ribeiro_et_al_2018}, also based on the production of perturbed examples, but producing simpler ``if-then'' rules as explanations. 
Further specializations of LIME were proposed, in specific settings, namely time series analysis \citep{Mishra_et_al_2017}, and survival model analysis \citep{Kovalev_et_al_2020,Utkin_et_al_2020}.
More recently, some attempts have been made to refine further the method in order to improve both the interpretability and the fidelity (the accuracy of the surrogate model) \citep[LEDSNA,][]{Shi_Du_Fan_2020}. 

A very related line of work is connected to Shapley value. 
Rooted in cooperative game theory~\citep{Shapley_1953}, its original purpose is to share the overall gain among the players by looking at the gain for any given coalition of players with or without a given player. 
When considering as players the features of the model, as coalition a subset of the features, and as gain for a coalition the value of the model retrained on this subset of the features, \citet{Strumbelj_Kononenko_2010} showed that it was a valid feature attribution method for machine learning models. 
Retraining the model for each subset of features can be costly, and \citet{Strumbelj_Kononenko_2014} proposed to use the conditional expectation of the model with a subset of the features being fixed instead. 
Testing all coalitions has a computational cost that is exponential in the number of features, and numerous approximations have been proposed. 
Most notably, \citet{Lundberg_Lee_2017} proposed SHAP (SHapley Additive exPlanations) and more specifically kernel SHAP as a way to approximate Shapley values by solving a weighted least squares problem. 
This least squares problem is very close to the one at the core of LIME (Eq.~\eqref{eq:surrogate-model-general}). 
We identify two key differences: (i) the weights given by kernel SHAP to perturbed samples, chosen to recover Shapley value at the limit, which do not fall into our setting, and (ii) the absence of discretization for kernel SHAP. 
For a synthetic view on Shapley value and their application to interpretability, we refer to \citet{Sundararajan_Najmi_2020}. 


The closest work to the present paper is our previous work \citet{Garreau_von_Luxburg_2020}, which considers a modification of the LIME algorithm for tabular data. 
Namely, the interpretable components are chosen in a very specific way (the quantiles are those of a Gaussian distribution), and the parameters of the algorithm match the mean of this Gaussian. 
Moreover, the weights of the perturbed examples are not the weights used in the default implementation of LIME. 
In this setting, we showed that the values of the interpretable coefficients stabilize towards some values which are attained in closed-form when the model to interpret is \emph{linear}. 

The analysis in the current paper goes far beyond that conducted in \citet{Garreau_von_Luxburg_2020}. 
In particular, we consider (i) arbitrary discretization, (ii) general weights including the weights chosen by the default implementation, and (iii) non-linear models, including radial basis kernel function and indicator functions with rectangular support. 
Hence, our analysis does not concern a considerably simplified version, but the LIME algorithm in its default implementation, and we can apply our more general results to much larger range of algorithms including kernel algorithms and random forests, for example. 
In the Appendix we explain how the present paper can recover our earlier analysis as a special case. 
%
When no discretization step is considered, \citet{Agarwal_et_al_2021} showed that LIME's explanations were close to that of gradient-based methods. 
Finally, we note that during the revision process of this paper, the analysis of LIME was also extended to text data \citet{Mardaoui_Garreau_2021} and images \citet{Garreau_Mardaoui_2021}.

\paragraph{Summary of the paper.}
In Section~\ref{section:tabular-lime}, we introduce LIME in the context of tabular data. 
From now on, we will refer to this version of LIME as \emph{Tabular LIME}. 
Section~\ref{section:main-result} contains our main result, as well as a short discussion and an outline of the proof. 
The implications of the result for models having a nice algebraic structure are discussed further in Section~\ref{section:discussion}. 
All the proofs and additional results are collected in the Appendix. 

\section{Tabular LIME}
\label{section:tabular-lime}

In this section we present Tabular LIME and introduce our notation along the way. 
Originally proposed for text and image data, the public implementation of LIME also has a version for tabular data. 
It is on this version that we focus in the present paper. 

We will assume that all the features are \emph{continuous}. 
Indeed, when there are some discrete features, the sampling scheme of Tabular LIME changes slightly and so would our analysis. 
Note that the default implementation discretizes the continuous features: this is the road we will follow. 
It is important to note that other implementation may have different discretization schemes and therefore may have different behaviors. 
For instance, \texttt{localModel}\footnote{\url{https://cran.r-project.org/web/packages/localModel/index.html}} uses a discretization scheme specific to each instance to explain, 
whereas \texttt{iml} \citep{Molnar_et_al_2018} does not consider a discretization step at all. 


\subsection{Quick overview of the algorithm}

\begin{figure}
	\begin{center}
		\includegraphics[scale=0.3]{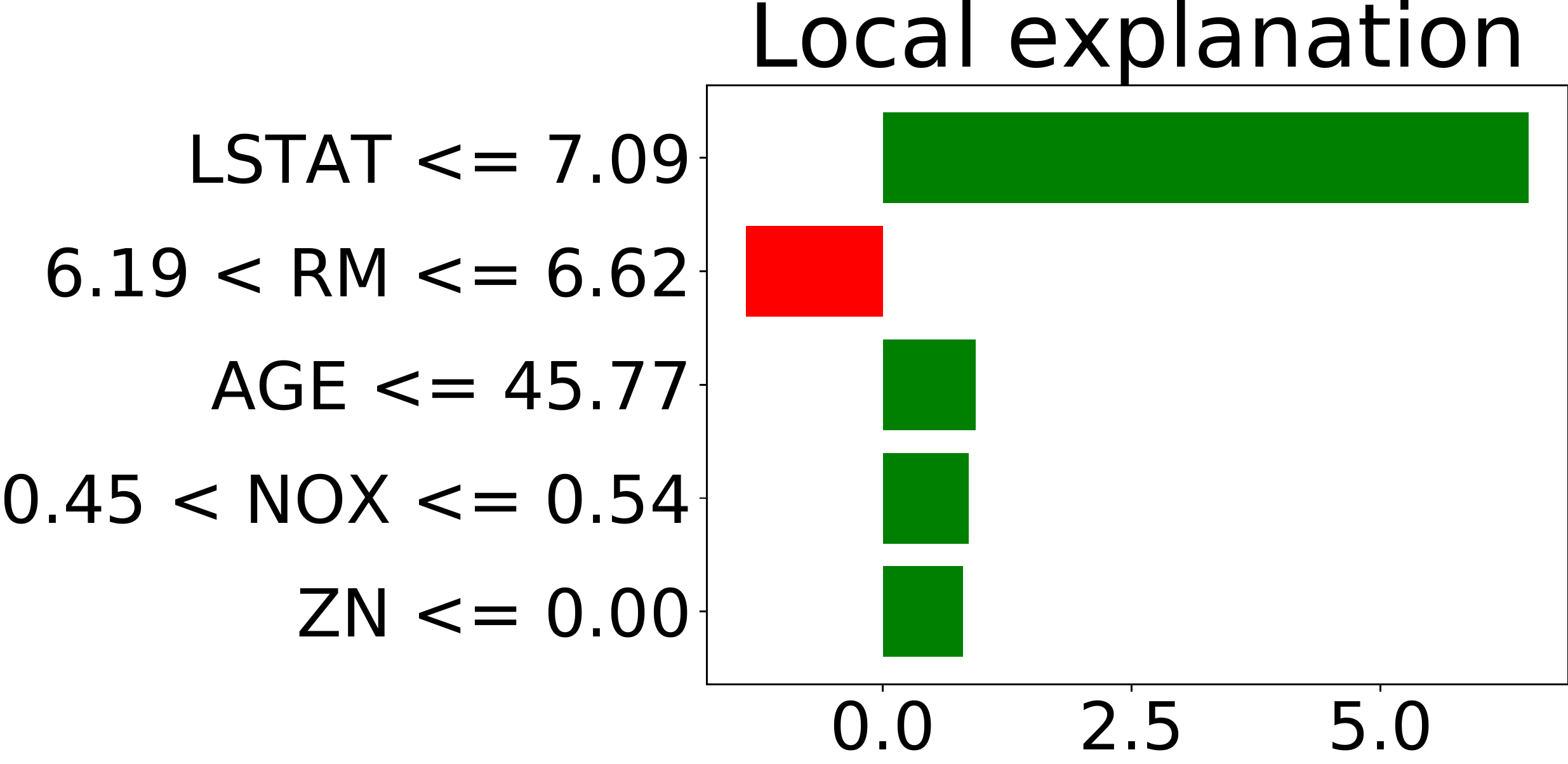}
	\end{center}
	\caption{\label{fig:example-default}Example output of Tabular LIME. Here we considered a random forest classifier on the Boston Housing dataset \citep{Harrison_Rubinfeld_1978}. 
	Each row corresponds to a bin along a particular feature, the bar in each row shows the influence of this discretized feature for the prediction according to Tabular LIME. 
	This influence can be positive (green) or negative (red). 
	Here we can see (for instance) that a low value of LSTAT (lower status population) had a positive influence on the predicted variable, the price. See Figure~\ref{fig:example-ours} for an alternative presentation. }
\end{figure} 

\paragraph{Regression. }
The core procedure of LIME is designed for regression: LIME aims to explain a real-valued function $f:\Reals^d\to\Reals$ at a fixed example $\xi\in\Reals^d$. 
If one desires to apply LIME for classification, then one must use LIME for regression with~$f$ the likelihood of being in a given class, that is,
\[
f(x) = \condproba{y(x) = c}{\zeta_1,\ldots,\zeta_m}
\, ,
\]
where $\zeta_1,\ldots,\zeta_m\in\Reals^d$ is a training set. 
In this paper, we study Tabular LIME in the context of \emph{regression}, with the undertone that it is possible to readily transpose our results for \emph{classification}. 

\begin{algorithm}[ht]
	\caption{\label{algo:getting-bins}\texttt{GetSummaryStatistics}: Getting summary statistics from training data}
	\begin{algorithmic}[1]
		\Require{Train set $\traindata=\{\zeta_1,\ldots,\zeta_m\}$, number of bins $p$}
		\For{$j=1$ to $d$}
		\For{$b=0$ to $p$}
		\State{$q_{j,b}\leftarrow \texttt{Quantile}(\zeta_{1,j},\ldots,\zeta_{m,j};b/p)$} \Comment{split each dimension into $p$ bins}
		\EndFor 
		\For{$b=1$ to $p$}
		\State{$\mathcal{S}\leftarrow \{\zeta_{i,j} \quad\text{s.t.} \quad q_{j,b-1}< \zeta_{i,j}\leq q_{j,b}\}$} \Comment{get the training data falling into bin $b$}
		\State{$\mu_{j,b} \leftarrow \texttt{Mean}(\mathcal{S})$} \Comment{compute the empirical mean}
		\State{$\sigma_{j,b}^2\leftarrow \texttt{Var}(\mathcal{S})$}\Comment{compute the empirical variance}
		\EndFor 
		\EndFor
		\State{\Return $q_{j,b},\mu_{j,b},$ and $\sigma_{j,b}^2$ for $1\leq j\leq d$ and $1\leq b\leq p$}
	\end{algorithmic}
\end{algorithm}

\paragraph{General overview. }
Let us now detail how Tabular LIME operates, a prerequisite to the analysis we aim to conduct. 
We begin with a high-level description of the algorithm. 
In the next section, we detail each step and fix additional notation. %
\begin{itemize}
    \item \textbf{Step 1: binning.} First, Tabular LIME creates interpretable features by splitting each feature's input space in a fixed number of bins $p$. The idea is to have ranges, not only features, as outputs of the algorithm. As in Figure~\ref{fig:example-default}, we prefer to know that a low value of a parameter is important for the prediction, not only that the parameter itself is important. We explain the bin creation in more details in the next section and we refer to Algorithm~\ref{algo:getting-bins} for additional details. 
    \item \textbf{Step 2: sampling.} Second, Tabular LIME samples perturbed examples $x_1,\ldots,x_n\in\Reals^d$. 
For each new example, Tabular LIME samples a bin uniformly at random on each axis and then samples according to a truncated Gaussian whose parameters are given as input to the algorithm. 
When no training data is provided, one can also directly provide to Tabular LIME the coordinates of the bins and the mean and variance parameters for the sampling. 
The intuition is to try and mimic the distribution of the data, even though this data may not be available (remember that LIME aims to explain a black-box model $f$). 
We describe the sampling procedure in more details in the next section and we refer to  Algorithm~\ref{algo:sampling-perturbed-examples} for a synthetic view of the sampling procedure.  
\item \textbf{Step 3: surrogate model.} Finally, a surrogate model is trained on the interpretable features, weighted by some positive weights $\pi_i$ depending on the distance between the $x_i$s and $\xi$. 
The final product is a visualization of the most important coefficients if no feature selection mode is selected by the user. 
Algorithm~\ref{algo:tabular-lime} summarizes Tabular LIME, while Figure~\ref{fig:example-default} presents a typical output. 
\end{itemize}

\begin{algorithm}[ht]
\caption{\label{algo:sampling-perturbed-examples}\texttt{Sample}: Sampling a perturbed example}
\begin{algorithmic}[1]
\Require{Bin boundaries $q_{j,p}$, mean parameters $\mu_{j,p}$, variance parameters $\sigma^2_{j,p}$, bin indices of the example to explain,~$\bxi$}
\For{$j=1$ to $d$}
\State $b_j\leftarrow$ \texttt{SampleUniform}($\{1,\ldots,p\}$) \Comment{sample bin index}
\State $(q_\ell,q_u)\leftarrow (q_{j,b_{j}-1},q_{j,b_{j}})$ \Comment{get the bin boundaries}
\State $x_{j}\leftarrow$ \texttt{SampleTruncGaussian}($q_\ell,q_u,\mu_{j,b},\sigma_{j,b}^2$) \Comment{sample a truncated Gaussian}
\State $z_{i,j}\leftarrow \indic{b_{j} = \bxi_j}$ \Comment{mark one if same box}
\EndFor
\State{\Return $x,z$}
\end{algorithmic}
\end{algorithm}

In order to analyze Tabular LIME, we need to be more precise with regards to the operation of the algorithm. 
We now proceed to give more details about the sampling procedure (Section~\ref{section:sampling-procedure}) and the training of the surrogate model (Section~\ref{section:surrogate-model}).

\begin{algorithm}[ht]
	\caption{\label{algo:tabular-lime}\texttt{TabularLIME}: Tabular LIME for regression, default implementation}
	\label{algo:tlime}
	\begin{algorithmic}[1]
		\Require Black-box model $f$, number of new samples $n$, example to explain $\xi$, positive bandwidth $\nu$, number of bins $p$, bin boundaries $q_{j,b}$, means $\mu_{j,b}$, variances $\sigma^2_{j,b}$
		\State $\bxi \leftarrow \texttt{BinIDs}(\xi,q)$ \Comment{get the bin indices of $\xi$}
		\For{$i=1$ to $n$}
		\State{$x_i,z_i \leftarrow $\texttt{Sample}($q,\mu,\sigma,\bxi$)}
		\State $\pi_i\leftarrow \exp{\frac{-\norm{\Indic - z_i}^2}{2\nu^2}}$ \Comment{compute the weight}
		\EndFor
		\State 
		$\betahat \in\Argmin_{\beta\in\Reals^{d+1}} \sum_{i=1}^n \pi_i (f(x_i) - \beta^\top z_i)^2 + \Omega(\beta)$ \Comment{compute the surrogate model}
		\State \Return $\betahat$
	\end{algorithmic}
\end{algorithm}


\subsection{Sampling perturbed examples}
\label{section:sampling-procedure}

In this section, we explain exactly how the sampling of perturbed examples introduced in Algorithm~\ref{algo:sampling-perturbed-examples} operates. 

\paragraph{Discretization. }
The first step in Tabular LIME is to create \emph{interpretable features} along each dimension $j\in\{1,\ldots,d\}$. 
This is achieved by dividing the input space into hyper-rectangular bins. 
The intuition for this is added interpretability: instead of knowing that a coordinate is important for the prediction, Tabular LIME gives a range of values for the coordinate that is important for the prediction. 
More precisely, each dimension is divided into $p\geq 2$ bins, where~$p$ is constant across all dimensions $1\leq j\leq d$. 
This is the default behavior of Tabular LIME (with $p=4$), although it can happen that $p_j<p$ if there are not enough data along an axis. 
We will assume throughout the paper that there are always enough data points if a training set is used, even though the current analysis can be extended to $p_j$ varying across dimensions.   
Note that if $p=1$, there are no bins: $z_{ij}=1$ for any $i,j$ and the surrogate model is just the empirical mean of the $f(x_i)$s.  

The boundaries of the bins are an input of Tabular LIME (Algorithm~\ref{algo:tabular-lime}). 
For each feature $j\in\{1,\ldots,d\}$, we denote these boundaries by 
\[
q_{j,0} < q_{j,1} < \cdots < q_{j,p}
\, .
\]
The products of these bins form a partition of $\supp \defeq \prod_{j=1}^{d}[q_{j,0},q_{j,p}]$. 
In addition to the bins boundaries, Tabular LIME takes as input some mean and variance parameters for each bin along each dimension. 
We denotes these by $(\mu_{j,b},\sigma_{j,b}^2)\in\Reals\times \Reals_+$. 
As we will see in the next paragraph, these parameters (bin boundaries, means, and standard deviation) are computed from a training set if provided. 

\begin{algorithm}[ht]
	\caption{\label{algo:bin-numbers}\texttt{BinID}: Getting the bin indices of the instance to explain}
	\begin{algorithmic}[1]
		\Require Example to explain $\xi$, bin boundaries $q_{j,b}$
		\For{$j=1$ to $d$}
		\For{$b=1$ to $p$}
		\If{$q_{j,b-1} < \xi_j < q_{j,b}$} \Comment{if $\xi_j$ belongs to bin $b$}
		\State $\bxi_j = b$ \Comment{then $\bxi_j=b$}
		\State \textbf{break}
		\EndIf
		\EndFor
		\EndFor
		\State \Return $\bxi$
	\end{algorithmic}
\end{algorithm}

\paragraph{Training data.}
In the default operation mode of Tabular LIME, a training set $\traindata = \{\zeta_1,\ldots,\zeta_m\}$ 
is given as an input to Tabular LIME. 
Note that this training set is \emph{not necessarily} the training set used to train the black-box model~$f$. 
In that case, the boundaries of the bins are obtained by considering the quantiles of $\traindata$ across each dimension. 
Intuitively, along each dimension~$j$, we split the real line in~$p$ bins such that a proportion $1/p$ of the data falls into each bin along this axis.
More formally, the boundaries are obtained by taking the quantiles of level $b/p$ for $b\in\{0,1,\ldots,p\}$. 
That is, the $q_{j,b}$ are such that 
\[
\forall 1\leq j\leq , \, \forall 1\leq b\leq p,\quad \frac{1}{m}\sum_{i=1}^{m}\indic{\zeta_{i,j}\in [q_{j,b-1},q_{j,b}]} \approx \frac{1}{p}
\, .
\]
In particular, for each $1\leq j\leq d$, $q_{j,0}=\min_{1\leq i\leq m} \zeta_{i,j}$ and $q_{j,p}=\max_{1\leq i\leq m} \zeta_{i,j}$. 

When a training set $\traindata$ is used, $\mu_{j,b}$ (resp. $\sigma_{j,b}$) corresponds to the mean (resp. the standard deviation) of the training data on the bin $b$ along dimension $j$. 
We refer to Figure~\ref{fig:sampling-scheme} for a visual depiction of this process. 

Assuming that the example to explain $\xi$ belongs to $\supp$, then each $\xi_j$ falls into a bin $\bxi_j$ along coordinate~$j$. 
The (straightforward) computation of $\bxi$ is given by Algorithm~\ref{algo:bin-numbers}. 
We will see that $\bxi$ is an important quantity: the $d$-dimensional bin containing $\xi$ plays a special role in our results. 

\begin{figure}
	\begin{center}
		\includegraphics[scale=0.4]{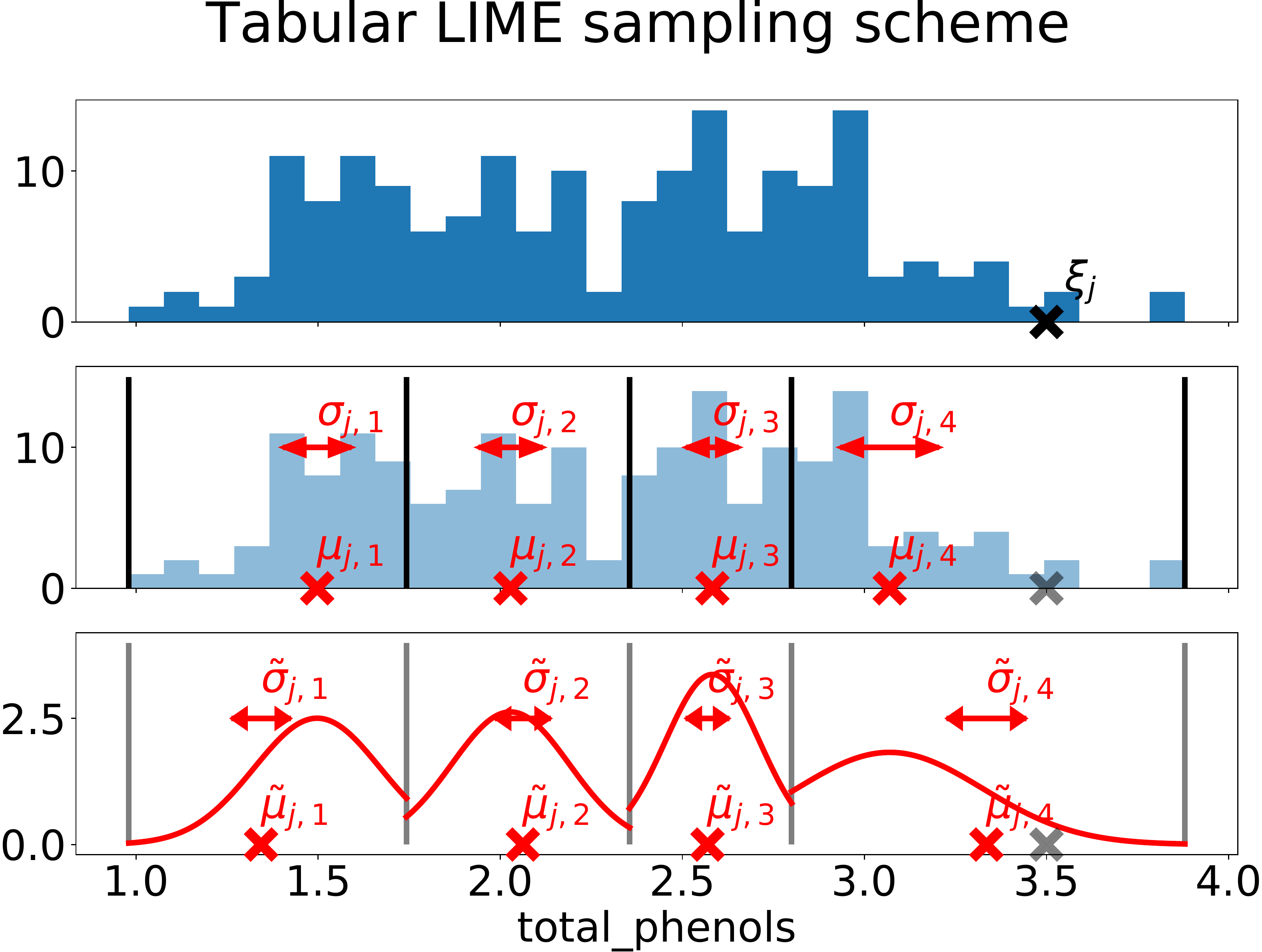}
	\end{center}
	\caption{\label{fig:sampling-scheme}Sampling of perturbed examples.  Data come from the Wine dataset \citep{Cortez_et_al_1998}. \emph{Top panel:} histogram of the values taken by the \texttt{total\_phenol} parameter in the original dataset. The feature value of $\xi$ is denoted by a black cross. \emph{Middle panel:} Tabular LIME computes the quantiles (here $p=4$), then the means and standard deviations for each bin (in red). We see that $\xi_j$ belongs to the fourth bin, therefore $\bxi_j=4$.  \emph{Bottom panel:} choose a bin uniformly at random, then sample according to a truncated Gaussian on the bin with parameters given by the mean and variance on the bin. We report the corresponding probability density function in red, as well as $\mutilde_{j,b}$ and $\sigmatilde_{j,b}$.}
\end{figure}

We note that \citet{Garreau_von_Luxburg_2020} consider the case where $\mu_{j,b}=\mu_j$ and $\sigma_{j,b}=\sigma_j$ for any $b$ as simplifying assumptions. 
In addition, the bins boundaries $q_{j,b}$ were chosen to be the exact Gaussian quantiles. 
We do not make such assumptions here. 

\paragraph{Sampling scheme. }
The next step is the sampling of~$n$ perturbed examples $x_1,\ldots,x_n\in\Reals^d$. 
First, along each dimension, Tabular LIME samples the bin indices of the perturbed samples. 
We write this sample as a matrix $B\in\Reals^{n\times d}$, where $b_{ij}$ corresponds to the bin index of example~$i$ along dimension~$j$. 
In the current implementation of Tabular LIME, the $b_{i,j}$s are i.i.d.\ distributed uniformly on $\{1,\ldots,p\}$. 

The bin indices $b_{i,j}$s are subsequently used in two ways. 
On one hand, Tabular LIME creates the binary features based on these bin indices. 
Formally, we define $z_{i,j}=\indic{b_{i,j}=\bxi_j}$, that is, we mark one if $b_{i,j}$ is the same bin as the one~$\xi_j$ (the $j$th coordinate of the example~$\xi$) falls into. 
We collect these binary features in a matrix $Z\in\Reals^{n\times (d+1)}$ defined as
\[
Z \defeq 
\begin{pmatrix}
1 & z_{1,1} & z_{1,2} & \ldots & z_{1,d} \\
1 & z_{2,1} & z_{2,2} & \ldots & z_{2,d} \\
\vdots & \vdots & \vdots & & \vdots \\
1 & z_{n,1} & z_{n,2} & \ldots & z_{n,d} \\
\end{pmatrix}
\, .
\]
On the other hand, the bin indices are used to sample the new examples $x_1,\ldots,x_n$. 
Let $1\leq i\leq n$, the new sample $x_i$ is sampled independently dimension by dimension in the following way: $x_{i,j}$ is distributed according to a truncated Gaussian random variable with parameters $q_{j,b_{ij}-1}$, $q_{j,b_{i,j}}$, $\mu_{j,b_{i,j}}$, and $\sigma^2_{j,b_{i,j}}$. 
More precisely, $x_{i,j}$ conditionally to $\{b_{i,j}=b\}$ has a density given by
\begin{equation}
\label{eq:tn-density}
\rho_{j,b}(t) \defeq \frac{1}{\sigma_{j,b}\sqrt{2\pi}} \cdot \frac{\exp{\frac{-(t-\mu_{jb})^2}{2\sigma_{j,b}^2}}}{\Phi(r_{j,b}) - \Phi(\ell_{j,b})}\indic{t\in [q_{j,b-1},q_{j,b}]}
\, ,
\end{equation}
where we set $\ell_{j,b} \defeq \frac{q_{j,b-1}-\mu_{j,b}}{\sigma_{j,b}}$ and $r_{j,b}\defeq \frac{q_{j,b}-\mu_{j,b}}{\sigma_{j,b}}$, and $\Phi$ is the cumulative distribution function of a standard Gaussian random variable. 
We denote by $\truncnorm{\mu,\sigma^2,\ell,r}$ the law of a truncated Gaussian random variable with mean parameter $\mu$, scale parameter $\sigma$, left and right boundaries $\ell$ and $r$. 
Note that the means (resp. standard deviations) of these truncated random variables are generally different from the input means (resp. standard deviations). 
We denote by $\mutilde_{j,b}$ (resp. $\sigmatilde_{j,b}$) the mean (resp. standard deviation) of a $\truncnorm{\mu_{j,b},\sigma^2_{j,b},q_{j,b-1},q_{j,b}}$ random variable. 
We refer to Figure~\ref{fig:sampling-scheme} for an illustration.

It is important to understand that \textbf{the sampling of the new examples does not depend on $\xi$}, but rather on the bin indices of $\xi$. 
Therefore, any two given instances to explain will lead to the same sampling scheme provided that they fall into the same $d$-dimensional bin. 
We also note that \textbf{the sampling scheme of LIME is feature-wise independent}. 
In particular, even though features $j$ and $k$ may be highly correlated in the training set, there is no trace of it in the resulting sampling. 

Of course, using truncated Gaussian distributions can be suboptimal in situations where the distribution of the data is more heavy-tailed. 
This is in particular the case in finance, where a larger than expected proportion of the data falls outside a few standard deviations from the mean \citep{Eberlein_Keller_1995}. 
Sampling with heavy-tailed distributions such as \emph{stable distributions} \citep{Nolan_2003} could be more appropriate, though we are not aware of an implementation of Tabular LIME with such sampling on the continuous features at the time of writing.


\subsection{Surrogate model}
\label{section:surrogate-model}

We now focus on the training of the surrogate model. 
As announced in Algorithm~\ref{algo:tabular-lime}, the new samples receive positive weights given by
\begin{equation}
\label{eq:def-weights-default}
\pi_i \defeq \exp{\frac{-\norm{\Indic - z_i}^2}{2\nu^2}} = \exp{\frac{-1}{2\nu^2}\sum_{j=1}^d \indic{b_{i,j} \neq \bxi_j}}
\, ,
\end{equation}
where $\norm{\cdot}$ is the Euclidean norm, and $\nu$ is a positive bandwidth parameter. 
Intuitively, this weighting scheme counts in how many coordinates the bin of the perturbed sample differs from the bin of the example to explain and then applies an exponential scaling. 
If all the bins are the same (the perturbed sample falls into the same hyperrectangle as $\xi$), then the weight is $1$. 
On the other hand, if the perturbed sample is ``far away'' from $\xi$, $\pi_i$ can be quite small (relatively to $\nu$). 
Here, far away means that $x_i$ does not fall into the same bins as $\xi$: it can be the case that $x_i$ is close in Euclidean distance to $\xi$. 
In the default implementation of Tabular LIME, the bandwidth parameter $\nu$ is fixed to $\sqrt{0.75 d}$, but we do not make this assumption and work with arbitrary positive~$\nu$. 
Our main result is actually true for more general weights, see Appendix~\ref{section:general-weights} for their definition. 
In particular, this generalization includes the weights studied in \citet{Garreau_von_Luxburg_2020} as a special case.

The local surrogate model of LIME is then obtained by optimizing a regularized, weighted square loss
\begin{equation}
\label{eq:surrogate-model-general}
\betahat_n \in \Argmin_{\beta\in\Reals^{d+1}} \biggl\{\sum_{i=1}^n \pi_i (f(x_i) - \beta^\top z_i)^2 + \Omega(\beta)\biggr\}
\, ,
\end{equation}
where $z_i$ is the $i$th row of $Z$. 
The coefficients of the surrogate model, collected in $\betahat_n$, are the central output of Tabular LIME and will be our main focus of interest. 
We often refer to the coordinates of $\betahat_n$ as the \emph{interpretable coefficients}. 

\paragraph{Regularization.}
In the default implementation, when no feature selection mechanism is enforced, Problem~\eqref{eq:surrogate-model-general} is solved with $\Omega(\beta)=\lambda\norm{\beta}^2$, where $\norm{\cdot}$ is the Euclidean norm and~$\lambda$ is a positive regularization parameter. 
That is, Tabular LIME is using ridge regression \citep{Hoerl_Kennard_1970} to obtain the surrogate model. 
But the default choice of hyperparameters makes the surrogate models obtained by ridge in this setting indistinguishable from one obtained by  \emph{ordinary least-squares}. 
More precisely, Tabular LIME is using the \texttt{scikit-learn} default implementation of ridge, which has a penalty constant equal to one (that is, $\lambda = 1$). 
For a large choice of bandwidth, the weights $\pi_i$ are close to $1$ and the $\norm{y - ZW\beta}^2$ term in Eq.~\eqref{eq:surrogate-model-general} is $\bigo{n}$ and dominates unless the penalty constant is at least of the same order. 
But the default~$n$ is equal to $5000$ and we investigate the limit in large sample size (meaning that~$n$ is always of order $10^3$ in our experiments), thus there is virtually no difference between taking ordinary least-squares and ridge.
Therefore, even though our analysis is true only for ordinary least-squares ($\lambda = 0$), we recover the results of the default implementation for all reasonable bandwidth choices.  
As a consequence, unless otherwise mentioned, all the experiments in the paper are done with the default regularization ($\lambda=1$). 
For smaller bandwidth parameters, the weights are closer to $0$ and the regularization begins to play a non-negligible role. 
We demonstrate the effects of regularization in  Figure~\ref{fig:regularization-default-linear}. 

\begin{figure}
	\begin{center}
		\includegraphics[scale=0.15]{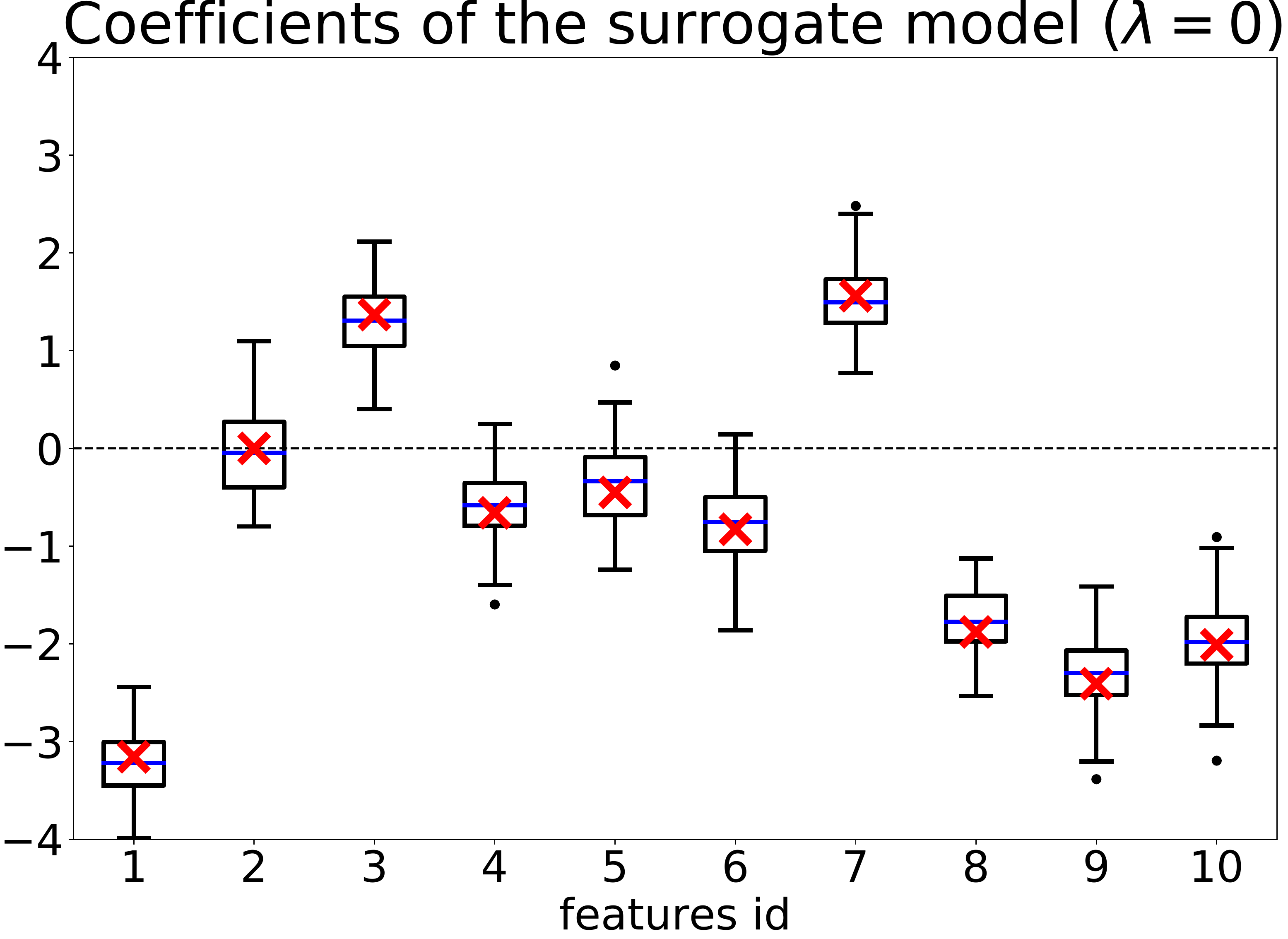}
		\hspace{0.15cm}
		\includegraphics[scale=0.15]{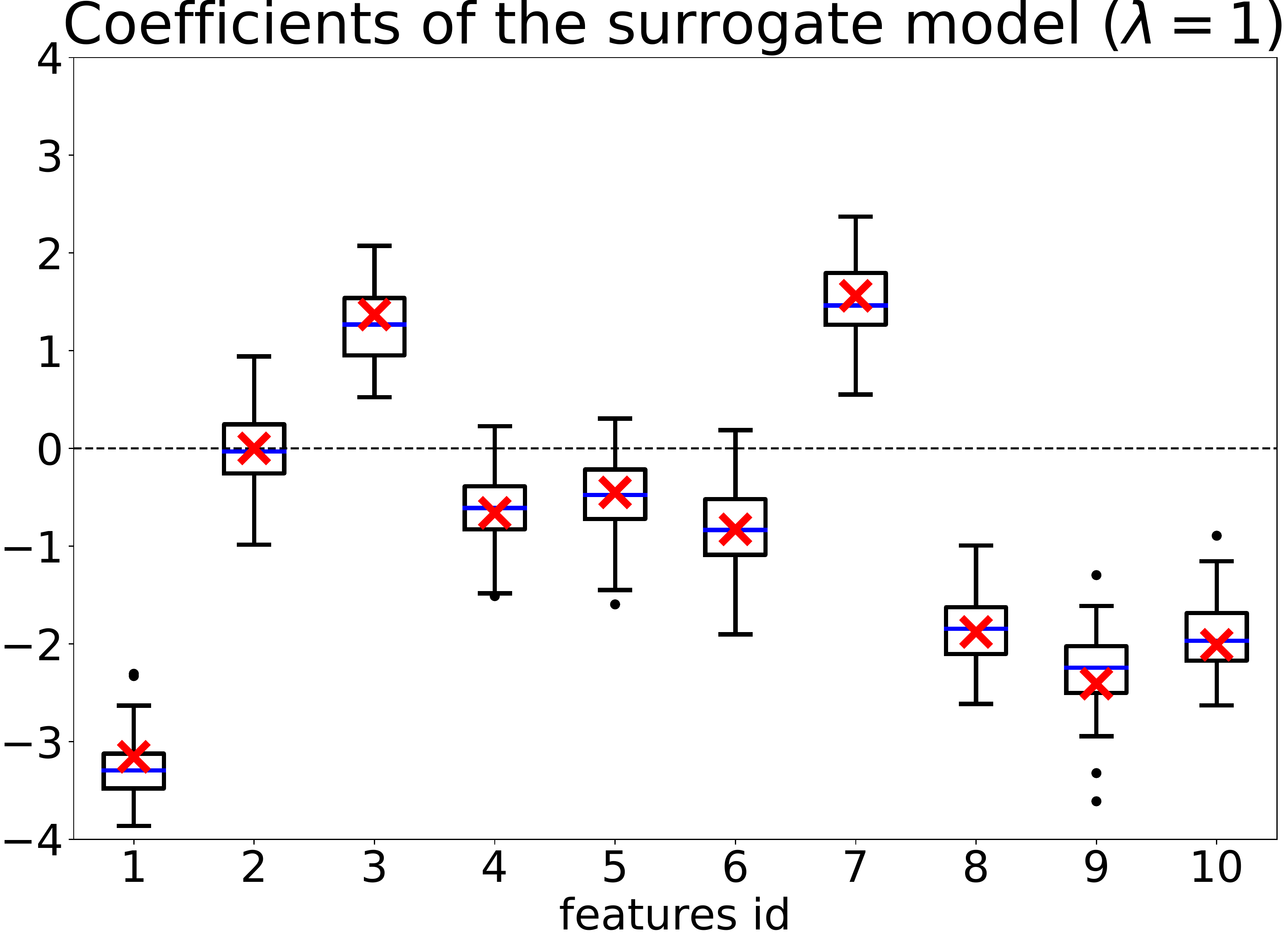}
		\hspace{0.15cm}
		\includegraphics[scale=0.15]{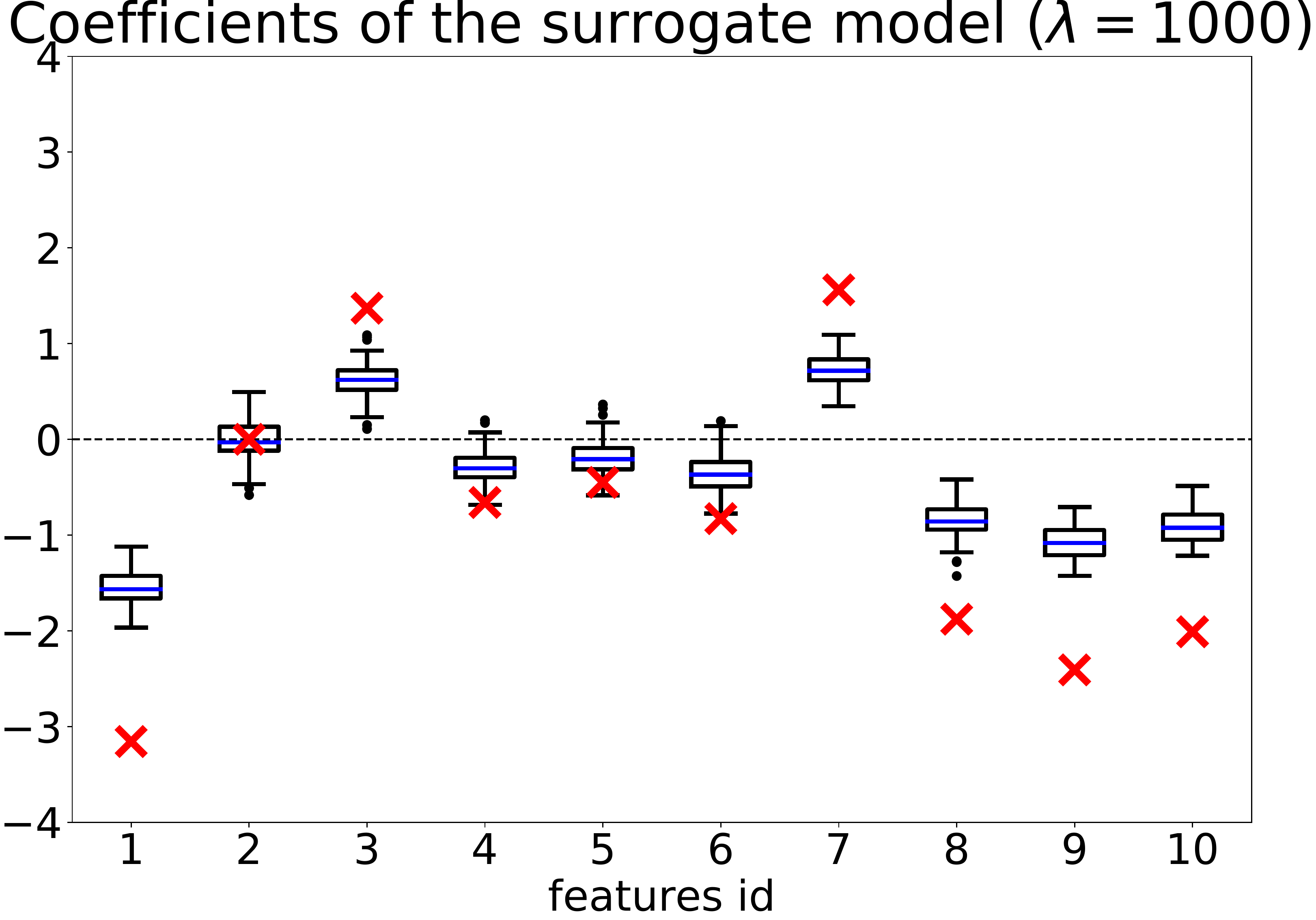}
	\end{center}
	\caption{\label{fig:regularization-default-linear}Effect of the regularization on the surrogate model. The black-box model is linear. We report $100$ runs of Tabular LIME for $5000$ perturbed samples on the same $\xi$ with $\nu=10$. In red, the theoretical predictions given by Corollary~\ref{cor:beta-computation-liner-default-weights}. \emph{Left panel:} no regularization, the surrogate model is trained with ordinary least-squares. This situation corresponds to our analysis. \emph{Middle panel:} $\lambda=1$, default choice in \texttt{scikit-learn}. We take this default choice of regularization in all our experiments. \emph{Right panel:} $\lambda=1000$. When $\lambda$ is of order $n$, the number of perturbed samples, we begin to see the effect of regularization. In effect, the interpretable coefficients are shrinked towards zero, and our theoretical predictions only provide an upper envelope. }
\end{figure}

\paragraph{Presentation of the experimental result. }
In the final step of Tabular LIME, the user is presented with a visualization of the largest coefficients of the surrogate model. 
Note that some feature selection mode can be used before training the surrogate model.  
In that event, the final output of Tabular LIME is not the given of all coefficients of the surrogate model (or setting them to zero). 
We do not consider feature selection in the present work and therefore report all the interpretable coefficients since there is randomness in the ranking of the coefficients due to the randomness of the sampling. 
Note also that because of this randomness in the construction of $\betahat$ \emph{via} the sampling of the perturbed examples $x_1,\ldots,x_n$, we will always report the result of several runs of Tabular LIME on any given example (usually $100$), see Figure~\ref{fig:example-ours}. 

\paragraph{Summary.}
Let us summarize the implementation choices for Tabular LIME that we consider in our analysis. 
The $d$ features are considered to be continuous and are discretized (the choice \texttt{discretize\_continuous=True} is default) along $p$ bins. 
The default choice is $p=4$ (\texttt{discretizer=`quartile'} is default), we will sometimes use another value for $p$. 
The bin boundaries $q_{j,b}$ as well as the location and scale parameters for the sampling $\mu_{j,b}$ and $\sigma_{j,b}$ are arbitrary (computed from the appropriate dataset unless otherwise mentioned).  
We consider default weights given by Eq.~\eqref{eq:def-ew-default-weights}, with prescribed bandwidth $\nu$ (\texttt{kernel\_width=nu} is not default) and the surrogate model is obtained by ridge regression with regularization parameter $\lambda=1$. 
As to feature selection, we do not consider any (\texttt{feature\_selection=`none'}). 
The free parameters of the method are the number of bins on each dimension $p$ and the bandwidth $\nu$: we will focus mostly on them. 

\begin{figure} 
\begin{center}
\includegraphics[scale=0.2]{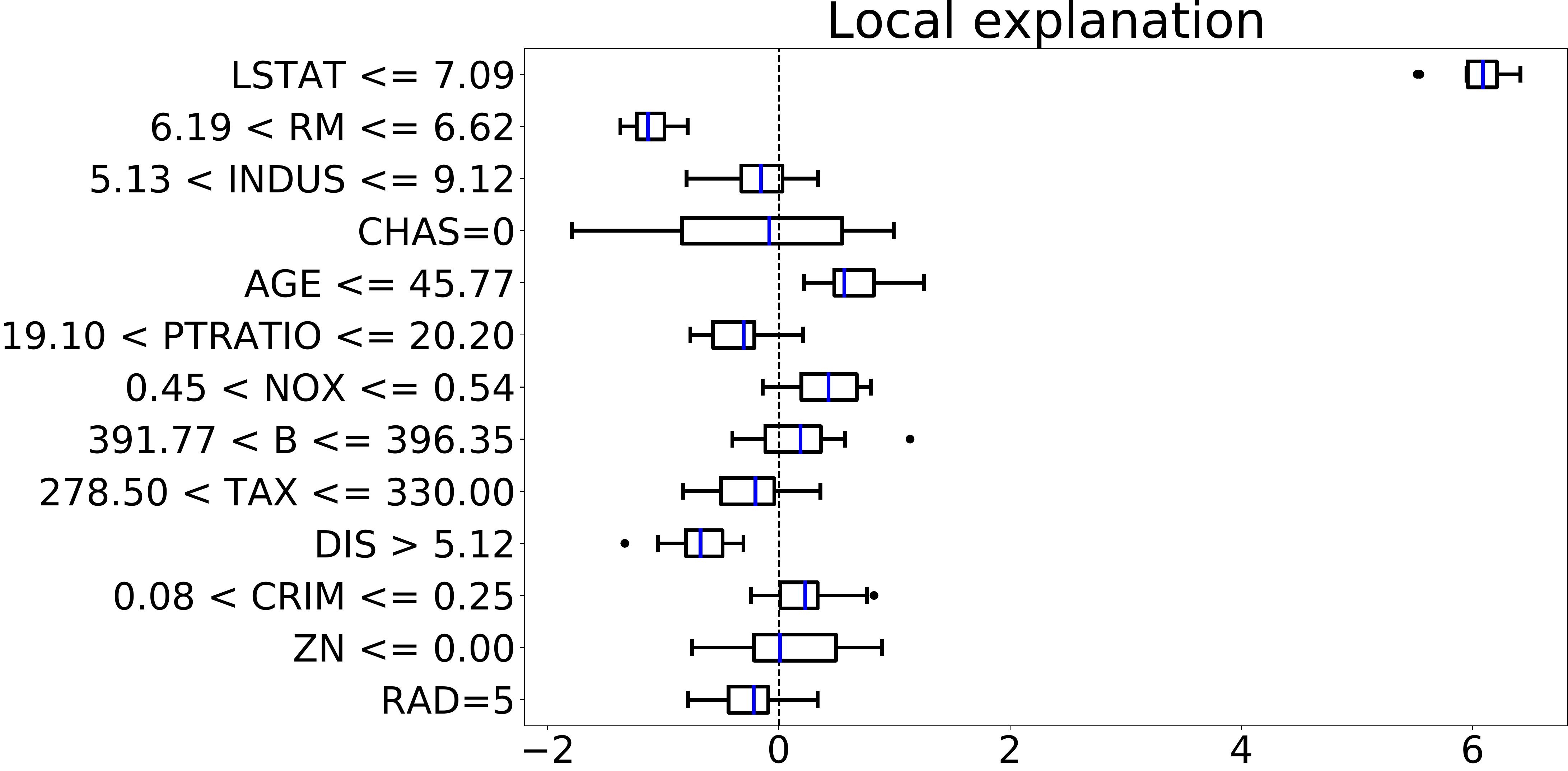}
\end{center}
\caption{\label{fig:example-ours}Example output of Tabular LIME, $100$ repetitions. Since there is randomness in the sampling of the perturbed samples, we run Tabular LIME several time on the same instance and report the whisker boxes associated to these repetitions. In blue the mean over all repetitions. The vertical dotted line marks zero. We also report all the interpretable coefficients values rather than just the top five. This presentation will be standard for the remainder of the article, although we will often omit the bin boundaries when considering synthetic data.}
\end{figure}


\section{Main result: explicit expression for $\betahat_n$ in the large sample size}
\label{section:main-result}

In this section, we state our main result. 
Recall that $\betahat_n$ is the random vector containing the coefficients of the surrogate linear model given by Tabular LIME for a given model $f$ and example $\xi$. 
In a nutshell, when the number of new samples~$n$ is large, $\betahat_n$ concentrates around a vector $\beta$, for which we provide an explicit expression. 
This explicit expression depends on~$f$ and~$\xi$ (\emph{via} $\bxi$), as well as the parameters of Tabular LIME introduced in the previous section (the bandwidth~$\nu$, the number of bins $p$, the bins boundaries~$q$, and the mean and standard deviation on each bin~$\mu$ and~$\sigma$).
Since $\xi$ and the algorithm parameters are fixed, we only emphasize the dependency in $f$ (which will vary from section to section) and we write $\beta^f$. 

Our main assumptions going into this analysis are the following:
\begin{assumption}[Bounded model]
\label{assump:bounded}
The model, $f$, is bounded. In other words, there exists a positive constant $\boundedcst$ such that $\abs{f(x)}\leq \boundedcst$ for all $x\in \supp$, where $\supp=\prod_{j=1}^{d}[q_{j,0},q_{j,p}]$ (see Section~\ref{section:sampling-procedure}).
\end{assumption}
\begin{assumption}[No regularization]
\label{assump:no-regularization}
There is no regularization in the weighted ridge regression problem of Eq.~\eqref{eq:surrogate-model-general}, thus reducing to weighted least-squares. 
In other words, we consider that $\Omega = 0$.
\end{assumption}
We discuss both these assumptions after the statement of the theorem. 

We state our main result in Section~\ref{section:statement-main-result} and present some immediate consequences in Section~\ref{section:consequences-main-result}. 
We then present a brief outline of the proof in Section~\ref{section:proof-outline}. 


\subsection{Explicit expression for $\betahat_n$ in the large sample size}
\label{section:statement-main-result}

In order to make our result precise, in particular to define the vector~$\beta^f$, we need to introduce further notation. 

Recall that $p$ is the fixed number of bins along each dimension and $\nu >0$ is the bandwidth parameter for the weights are the main free parameters of Tabular LIME.
A key normalization constant in our computations is given by 
\begin{equation}
\label{eq:def-little-c}
\littlec \defeq \frac{1}{p} + \left(1-\frac{1}{p}\right)\exps{\frac{-1}{2\nu^2}}
\, .
\end{equation}
We will also denote by $x\in\Reals$ a random variable that has the same law as the $x_i$s (recall that they are i.i.d.\ random variables by construction). 
In the same fashion, we denote by $b\in\{1,\ldots,p\}^d$ the corresponding bin indices, $z\in\{0,1\}^d$ the binary features, and $\pi\in\Reals_+$ the weights. 
These quantities are random variables related to~$x$ in the same way that $b_i$, $z_i$, and $\pi_i$ are related to $x_i$ for $1\leq i\leq n$. 
All expectations in the following are taken with respect to the random variable~$x$. 

We are now armed with enough notation to state our main result, a direct consequence of Theorem~\ref{th:betahat-concentration-general-f-general-weights} and Corollary~\ref{cor:beta-computation-general-f-default-weights} in the Appendix. 

\begin{mytheorem}[Explicit expression for $\betahat_n$ in the large sample size]
	\label{th:betahat-concentration-general-f-default-weights}
	Assume that Tabular LIME operates with the default weights (given by Eq.~\eqref{eq:def-weights-default}).  
Assume that Assumption~\ref{assump:bounded} and~\ref{assump:no-regularization} hold, that is, assume that $f$ is bounded by a constant $\boundedcst$ and take $\Omega=0$ in Eq.~\eqref{eq:surrogate-model-general}. 
	Set $\epsilon \in (0,\boundedcst)$ and $\eta\in [0,1]$. 
For any example $\xi\in \supp$ for which we want to create an explanation, define $\beta^f\in\Reals^{d+1}$ such that, for any $1\leq j\leq d$, 
	\begin{equation}
	\label{eq:coefs-general-f-default-weights}
	\beta^f_j \defeq \littlec^{-d} \left\{\frac{-p\littlec}{p\littlec - 1}\expec{\pi f(x)} + \frac{p^2\littlec^2}{p\littlec - 1}\expec{\pi z_jf(x)} \right\}
	\, ,
	\end{equation}
and 
	\begin{equation}
	\label{eq:intercept-general-f-default-weights}
	\beta^f_0 \defeq \littlec^{-d}\left\{ \left( 1+\frac{d}{p\littlec -1 }\right)\expec{\pi f(x)} - \frac{p\littlec}{p\littlec - 1}\sum_{j=1}^d \expec{\pi z_j f(x)}\right\}
	\, .
\end{equation}
Then, for every 
\[
n\geq \max \biggl\{\frac{2^{12}\boundedcst d^4p^4\exps{\frac{1}{\nu^2}}\log \frac{8d}{\eta}}{\littlec^{2d}\epsilon^2},\frac{2^{15}\boundedcst^2d^7p^8\exps{\frac{2}{\nu^2}} \log \frac{8d}{\eta}}{\littlec^{4d}\epsilon^2}\biggr\}
\, ,
\]
	we have $\smallproba{\smallnorm{\betahat_n - \beta^f} \geq \epsilon} \leq \eta$. 
\end{mytheorem}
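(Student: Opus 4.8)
The plan is to recognize \eqref{eq:surrogate-model-general} with $\Omega=0$ as an ordinary weighted least-squares problem and to pass to its population counterpart. Writing $z_i\in\Reals^{d+1}$ for the $i$th row of $Z$ (including the leading intercept entry) and $W=\diag{\pi_1,\dots,\pi_n}$, the normal equations give the closed form $\betahat_n=\Sigmahat_n^{-1}\Gammahat_n$, where $\Sigmahat_n\defeq\frac1n\sum_{i=1}^n\pi_i z_i z_i^\top$ and $\Gammahat_n\defeq\frac1n\sum_{i=1}^n\pi_i f(x_i)z_i$. These are empirical averages of i.i.d.\ bounded quantities, so they concentrate around their means $\Sigma\defeq\expec{\pi z z^\top}$ and $\Gamma\defeq\expec{\pi f(x)z}$, and the natural candidate for the limit is $\Sigma^{-1}\Gamma$. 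The proof therefore splits into two tasks: (i) computing $\Sigma^{-1}\Gamma$ explicitly and checking it equals the $\beta^f$ of \eqref{eq:coefs-general-f-default-weights}--\eqref{eq:intercept-general-f-default-weights}, and (ii) quantifying the concentration of $\betahat_n$ around $\beta^f$.

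For task (i) I would exploit the feature-wise independence of the sampling scheme. With $\alpha\defeq\exps{-1/(2\nu^2)}$ the weight \eqref{eq:def-weights-default} factorizes as $\pi=\prod_{j=1}^d\alpha^{1-z_j}$, and the $z_j$ are independent with $\proba{z_j=1}=1/p$; hence every entry of $\Sigma$ and $\Gamma$ reduces to a product of one-dimensional expectations. The two building blocks are $\expec{\alpha^{1-z_j}}=\littlec$ (which is exactly \eqref{eq:def-little-c}) and $\expec{z_j\alpha^{1-z_j}}=1/p$, so that $\Sigma$ has the permutation-symmetric form with entries $\littlec^{d}$ (intercept), $\littlec^{d-1}/p$ (intercept--feature and feature diagonal) and $\littlec^{d-2}/p^2$ (distinct features). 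This matrix is a scaled identity plus a rank-one all-ones block plus arrowhead coupling to the intercept, so its inverse is available in closed form; multiplying it against $\Gamma=(\expec{\pi f(x)},\expec{\pi z_1 f(x)},\dots)^\top$ yields precisely \eqref{eq:coefs-general-f-default-weights} and \eqref{eq:intercept-general-f-default-weights}. This is the content of the computation corollary cited just before the statement.

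For task (ii) I would use the decomposition
\[
\betahat_n-\beta^f=\Sigmahat_n^{-1}(\Gammahat_n-\Gamma)+(\Sigmahat_n^{-1}-\Sigma^{-1})\Gamma ,
\]
valid on the event $\opnorm{\Sigmahat_n-\Sigma}\leq\lambdamin{\Sigma}/2$, on which $\Sigmahat_n$ is invertible, $\opnorm{\Sigmahat_n^{-1}}\leq 2/\lambdamin{\Sigma}$, and $\opnorm{\Sigmahat_n^{-1}-\Sigma^{-1}}\leq 2\opnorm{\Sigma^{-1}}^2\opnorm{\Sigmahat_n-\Sigma}$. Since the summands are bounded ($\pi\in[0,1]$, $z$ has $\{0,1\}$ entries, $\abs{f}\leq\boundedcst$), a coordinate-wise Hoeffding bound controls $\smallnorm{\Gammahat_n-\Gamma}$ and a matrix Hoeffding/Bernstein bound controls $\opnorm{\Sigmahat_n-\Sigma}$, each carrying the $\log(8d/\eta)$ factor from a union bound over the $d+1$ coordinates. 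Choosing $n$ large enough that both error terms fall below $\epsilon/2$, each with probability at least $1-\eta/2$, gives the stated threshold.

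The main obstacle is the explicit lower bound on $\lambdamin{\Sigma}$, since it controls $\opnorm{\Sigma^{-1}}$ and hence the whole sample complexity. Here I would again use the permutation symmetry of $\Sigma$: its eigenbasis splits into the $(d-1)$-dimensional subspace orthogonal to the all-ones vector in the feature block, on which $\Sigma$ acts with eigenvalue $\littlec^{d-2}(1-1/p)\alpha/p$, and a two-dimensional problem coupling the intercept to the all-ones direction. The smallest eigenvalue carries the factors $\littlec^{d}$ and $\alpha=\exps{-1/(2\nu^2)}$, so that $\opnorm{\Sigma^{-1}}\sim\littlec^{-d}\exps{1/(2\nu^2)}$ up to powers of $p$ and $d$. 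The first term of the $\max$ then arises from $\opnorm{\Sigma^{-1}}\,\smallnorm{\Gammahat_n-\Gamma}$ (producing $\littlec^{2d}$ and $\exps{1/\nu^2}$ in the denominator), and the second, heavier term from $\opnorm{\Sigma^{-1}}^2\opnorm{\Sigmahat_n-\Sigma}\,\smallnorm{\Gamma}$ (producing $\littlec^{4d}$ and $\exps{2/\nu^2}$, together with the extra powers of $d$ and $p$). Getting these dependencies sharp — rather than merely finite — is the delicate part; everything else reduces to standard concentration of bounded variables and the matrix-inverse perturbation bound.
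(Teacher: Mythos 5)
Your proposal is correct and, in its architecture, coincides with the paper's own proof: the normal equations $\betahat_n=\Sigmahat_n^{-1}\Gammahat_n$, the closed-form computation of $\Sigma$ (your entries $\littlec^{d}$, $\littlec^{d-1}/p$, $\littlec^{d-2}/p^{2}$ agree with Proposition~\ref{prop:sigma-computation} specialized to default weights) and its block inversion giving $\beta^f=\Sigma^{-1}\Gamma^f$, matrix Hoeffding for $\smallopnorm{\Sigmahat_n-\Sigma}$ (Proposition~\ref{prop:sigmahat-concentration-general-weights}), coordinate-wise Hoeffding plus a union bound for $\smallnorm{\Gammahat_n-\Gamma^f}$ (Proposition~\ref{prop:gammahat-concentration}), and your two-term decomposition of $\betahat_n-\beta^f$, which is exactly the content of the paper's binding lemma (Lemma~\ref{lemma:binding-lemma}, used through Eq.~\eqref{eq:key-control}); your invertibility event $\smallopnorm{\Sigmahat_n-\Sigma}\leq\lambdamin{\Sigma}/2$ matches the paper's condition $\opnorm{\Sigma^{-1}(\Sigmahat_n-\Sigma)}\leq 0.32$ up to constants. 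The single point where you genuinely depart is the control of $\opnorm{\Sigma^{-1}}$: you lower-bound $\lambdamin{\Sigma}$ spectrally, exploiting the exchangeability of the default-weight $\Sigma$ (your eigenvalue $\littlec^{d-2}(1-1/p)\exps{\frac{-1}{2\nu^2}}/p$ on the subspace of the feature block orthogonal to the all-ones vector is correct, leaving a $2\times 2$ intercept coupling), whereas the paper bounds $\opnorm{\Sigma^{-1}}\leq\frobnorm{\Sigma^{-1}}$ entrywise from the explicit inverse (Proposition~\ref{prop:upper-bound-operator-norm}). The trade-off is that the Frobenius-norm bound survives for the general weights of Eq.~\eqref{eq:definition-weights}, for which the $\ew_{j,b}$ depend on $j$ and the permutation symmetry you rely on is lost; this is what lets the appendix prove Theorem~\ref{th:betahat-concentration-general-f-general-weights} once and then specialize. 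Your spectral route works only in the exchangeable default case, but that is all the stated theorem requires, and it yields a slightly sharper bound; the paper acknowledges exactly this in the remark following Proposition~\ref{prop:upper-bound-operator-norm}, quoting $\opnorm{\Sigma^{-1}}\lesssim\littlec^{-d}\exps{\frac{1}{2\nu^2}}(d+p)$ obtained ``by studying closely the spectrum of $\Sigma$.''
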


In practice, Tabular LIME only relies on the interpretable coefficients for $1\leq j\leq d$. 
We nonetheless provide the intercept for completeness' sake.

\paragraph{Direct consequences.}
Intuitively, Theorem~\ref{th:betahat-concentration-general-f-default-weights} states that:
\begin{itemize}
	\item if the number of perturbed samples is large enough, the explanations provided by Tabular LIME for any $f$ at a given example $\xi$ stabilize around a fixed value, $\beta^f$;
	\item this $\beta^f$ has an explicit expression, simple enough that we can hope to use it in order to answer the question we asked in the introduction.
\end{itemize}
In particular, for reasonably large $n$, \textbf{we can focus on $\beta^f$ in order to gain insight on the explanations} provided by Tabular LIME with default settings. 
This will be our agenda in Section~\ref{section:discussion}, where we will assume specific structures for $f$. 
Theorem~\ref{th:betahat-concentration-general-f-default-weights} also hints that, from a practical standpoint, the empirical values $(\betahat_n)_j$ concentrate around a given value for large $n$. 
Thus one is encouraged to take $n$ as large as possible to avoid instability of the explanations. 

\paragraph{Large bandwidth behavior.}
Let us assume for a moment that $\littlec=1$ and that $\pi=1$ almost surely (we will see in Section~\ref{section:bandwidth} that this happens when $\nu \to +\infty$). 
Notice that, by definition of $z_j$, $\expec{z_jf(x)}=\frac{1}{p}\smallcondexpec{f(x)}{b_j=\bxi_j}$. 
Thus, for any $1\leq j\leq d$, the interpretable coefficients take the simple expression 
\begin{equation}
\label{eq:simple-beta}
\beta_j^f = \frac{p}{p-1}\left(\condexpec{f(x)}{b_j=\bxi_j} - \expec{f(x)}\right)
\, .
\end{equation}
Up to numerical constants, $\beta_j^f=\smallcondexpec{f(x)}{b_j=\bxi_j}$. 
In other words, $\beta_j^f$ is the expected value of the model conditioned to $x_j$ falling into the same bin as $\xi_j$. 
Intuitively, $\beta_j^f$ is high (resp. low) if $f$ takes larger (resp. smaller) than average values on the $d$-dimensional bins above~$\xi_j$. 
The general picture is of course more complicated, since we ignored completely the role of~$\nu$ and~$p$ in this discussion. 

\paragraph{Discussing the assumptions.}
It is remarkable that Theorem~\ref{th:betahat-concentration-general-f-default-weights} is a statement about the default implementation of Tabular LIME as is, the only difference being $\Omega=0$ (Assumption~\ref{assump:no-regularization}).
Moreover, Theorem~\ref{th:betahat-concentration-general-f-default-weights} is true under pretty mild assumptions on the model to explain. 
Essentially, we only require~$f$ to be bounded on the bins (Assumption~\ref{assump:bounded}). 
If these bins are computed from a finite training set $\traindata$, then $\supp$ is compact and we are essentially requiring that $f$ be well-defined on $\supp$, which is virtually always the case for most machine learning models. 
An important assumption that can be easily overlooked is that $\xi$ should lie in $\supp=\prod_j [q_{j,0},q_{j,p}]$ for the theorem to hold. 
In particular, Theorem~\ref{th:betahat-concentration-general-f-default-weights} does not say anything about examples to explain that do not belong to the bins provided to the algorithm. 
We leave the analysis of Tabular LIME explanations for $\xi\notin \supp$ to future work. 

\paragraph{Limitations.}
The strongest limitation of Theorem~\ref{th:betahat-concentration-general-f-default-weights} is the poor concentration for small bandwidths. 
Indeed, for small $\nu$ (say $\nu<1$), according to Theorem~\ref{th:betahat-concentration-general-f-default-weights}, one has to take $n$ polynomial in $d$ in order to witness the convergence. 
This usually means that it is quite hard to use Theorem~\ref{th:betahat-concentration-general-f-default-weights} in practice for $\nu<1$ as soon as the dimension is greater than $10$. 
Moreover, in that case, the effect of regularization start to come into play (see the discussion in Section~\ref{section:surrogate-model}). 
However, the default bandwidth is $\sqrt{0.75d}\gg 1$ as soon as $d\approx 10$, and Theorem~\ref{th:betahat-concentration-general-f-default-weights} is quite satisfactory to study LIME in its normal use.


\subsection{Further consequences: properties of LIME that can be deduced from the explicit expression}
\label{section:consequences-main-result}

We now present some additional consequences of Theorem~\ref{th:betahat-concentration-general-f-default-weights}, which are true without assuming anything on~$f$ other than boundedness. 

\subsubsection{Linearity of explanations}
\label{sec:explanation-linearity}
We first notice that the vector $\beta^f$ \textbf{depends linearly on $f$}. 
Indeed, a careful reading of Eqs.~\eqref{eq:intercept-general-f-default-weights} and \eqref{eq:coefs-general-f-default-weights} reveals that~$\beta^f$ depends on~$f$ only through the expectations $\expec{\pi f(x)}$ and $\expec{\pi z_j f(x)}$. 
Since~$\pi$ and~$z$ do not depend on~$f$, by linearity of the expectation, we see that 
\begin{equation}
\label{eq:linearity-limit-explanation}
\beta^{f+g}=\beta^f+\beta^g
\, .
\end{equation}
Up to noise coming from the sampling of the $n$ perturbed examples and ignoring the effect of regularization, we can conclude that $\betahat_n^{f+g} \approx \betahat_n^f + \betahat_n^g$.
Therefore the \emph{linearity} axiom of \citet{Sundararajan_Najmi_2020} is approximately satisfied. 
We give a simple example of this property in Figure~\ref{fig:linearity-of-explanations}. 

From a theoretical standpoint, the main consequence of Eq.~\eqref{eq:linearity-limit-explanation} is the following:
we will soon specialize Theorem~\ref{th:betahat-concentration-general-f-default-weights} to more explicit models~$f$. 
Quite a number of models can be written in an additive form (think of a generalized additive model, a kernel regressor, or a random forest). 
Linearity will allow us to focus on the \emph{building bricks} of these models (a kernel function or an indicator function).

\begin{figure}[ht!]
    \centering
    \includegraphics[scale=0.12]{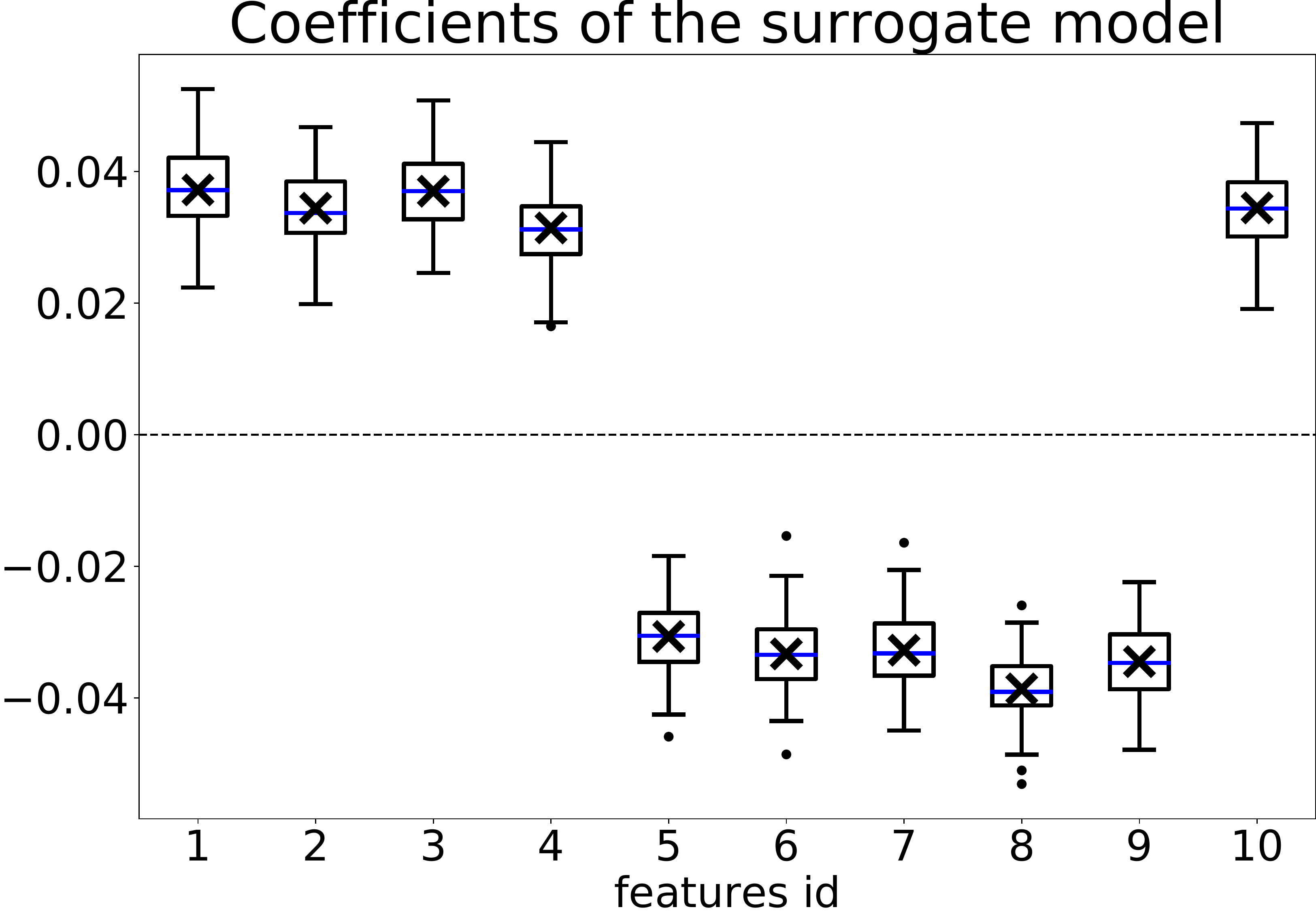} 
    \raisebox{0.87cm}{\includegraphics[scale=1]{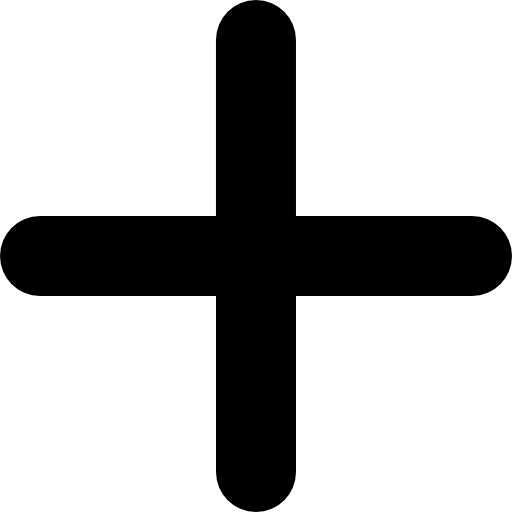}}
    \includegraphics[scale=0.12]{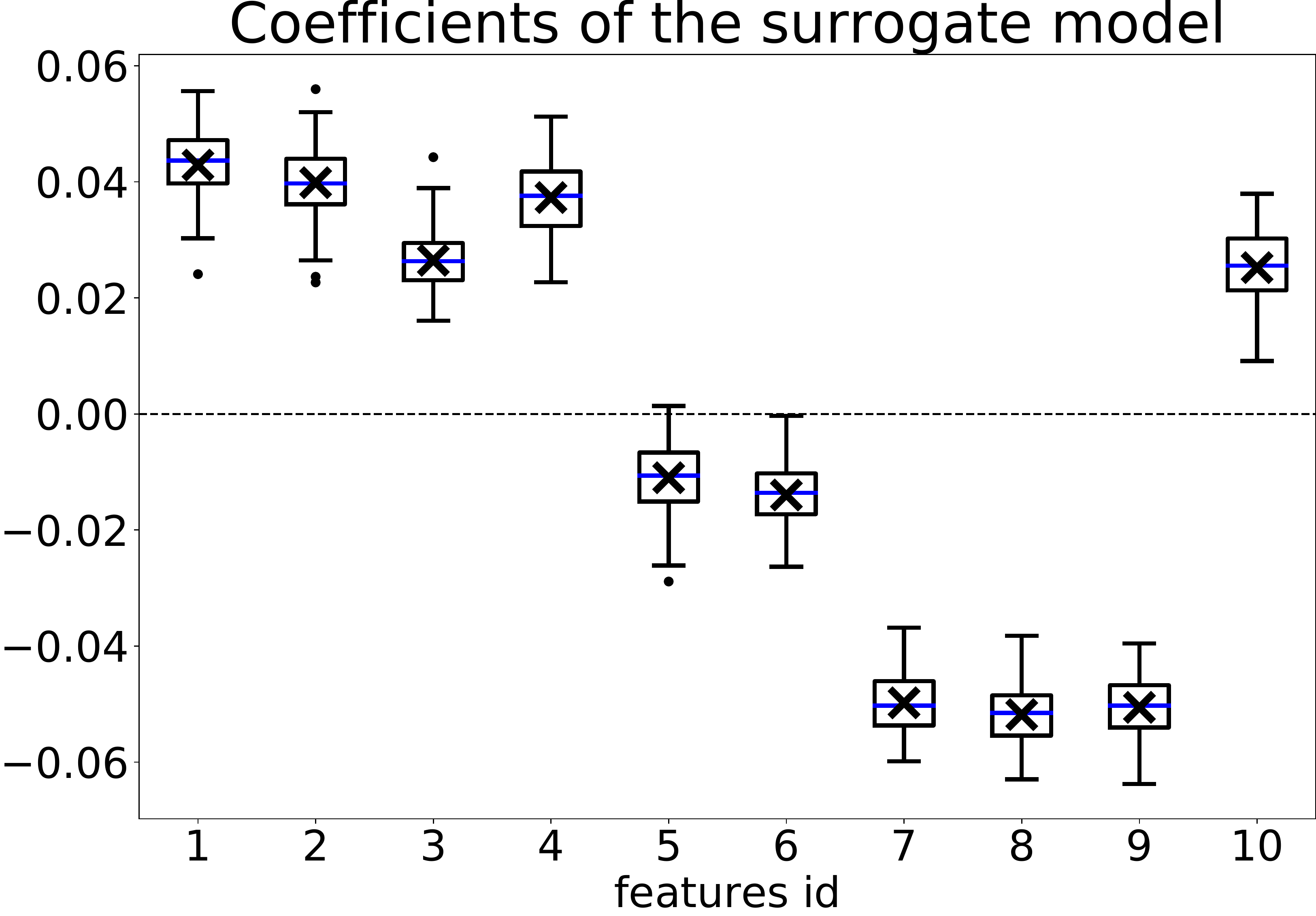}
    \raisebox{0.95cm}{\includegraphics[scale=1]{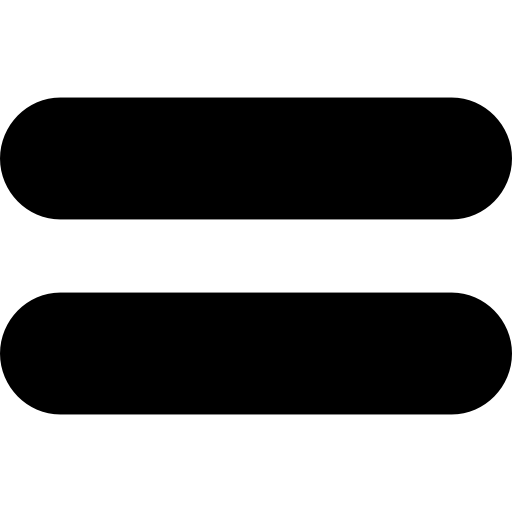}}
    \includegraphics[scale=0.12]{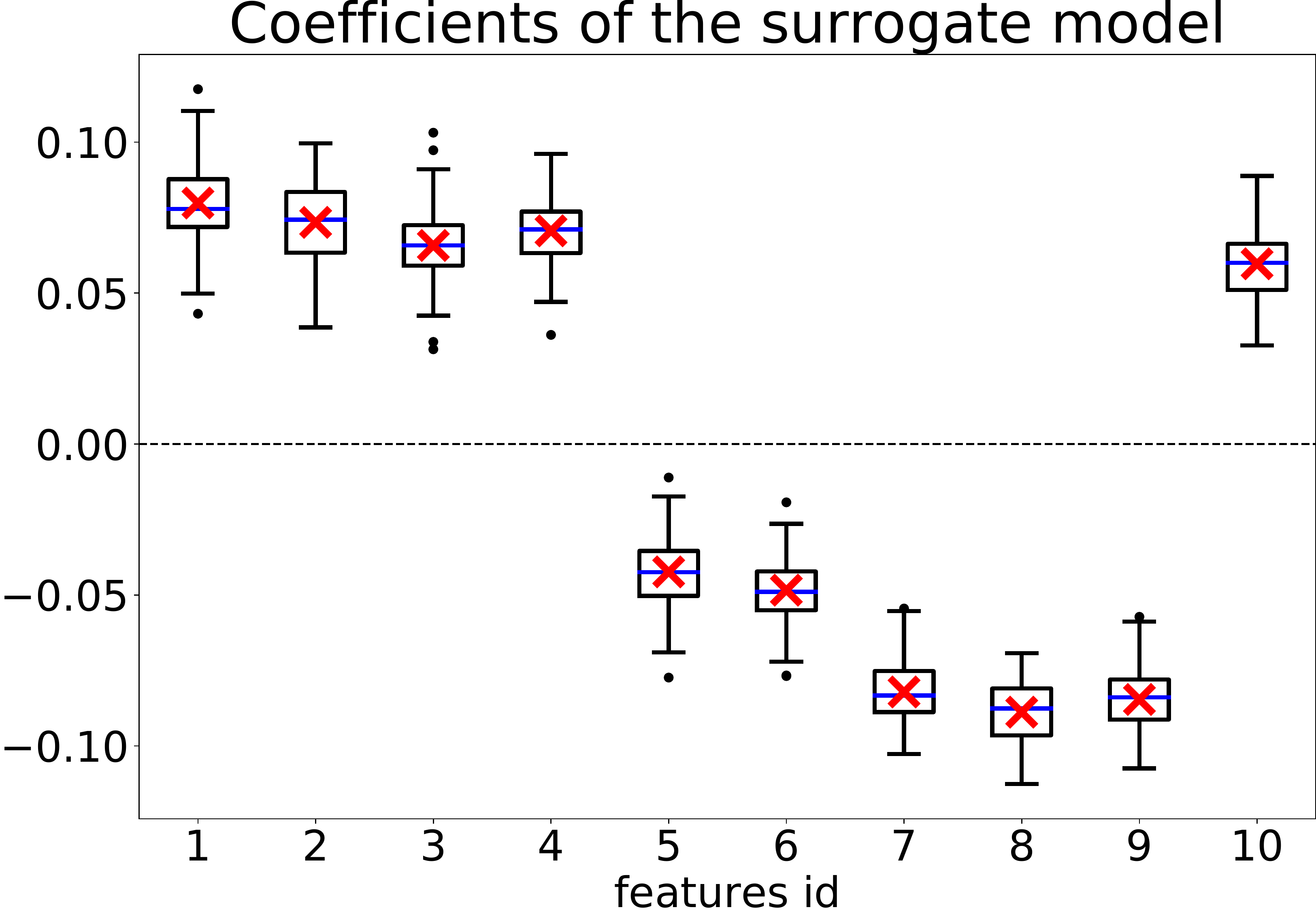}
    \caption{\label{fig:linearity-of-explanations}Linearity of explanations. In this experiment, we compute the explanations for two radial basis kernel functions $f_1$ and $f_2$. The explanation provided by Tabular LIME for $f_1$ (resp. $f_2$) is displayed on the left panel (resp. middle), with empirical mean of the explanations denoted by a black cross. The explanation for $f_1+f_2$ is displayed in the right panel, with the sum of the mean explanations marked in red. As predicted, $\betahat^{f_1} + \betahat^{f_2}=\betahat^{f_1+f_2}$, up to the randomness due to the sampling scheme.}
\end{figure}

From a more practical standpoint, let us assume that our knowledge of $f$ is imperfect. 
More precisely, let us split the function to explain in two parts: (i) the part coming from the black-box model~$f$ by itself, and (ii) the part coming from small perturbations such as numerical errors or measurement noise. 
Linearity allows us to focus on the perturbation part separately, and prove the following:

\begin{myproposition}[Robustness of the explanations given by Tabular LIME]
\label{prop:explanation-stability-default-weights}
Suppose that $\xi\in\supp$. 
Consider $f$ and $g$ two functions that are bounded on $\supp$. 
Then, under the assumptions of Theorem~\ref{th:betahat-concentration-general-f-default-weights}, 
\[
\smallnorm{\beta^f - \beta^g} \leq \frac{\sqrt{d(9d+4p^2)}\exps{\frac{1}{2\nu^2}}}{p-1}\infnorm{f-g}
\, .
\]
\end{myproposition}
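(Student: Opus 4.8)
The plan is to reduce the statement to a bound on a single explanation vector using the linearity established in Eq.~\eqref{eq:linearity-limit-explanation}. Setting $h \defeq f-g$, that identity gives $\beta^f - \beta^g = \beta^{f-g} = \beta^h$, so it suffices to prove $\smallnorm{\beta^h} \le \frac{\sqrt{d(9d+4p^2)}\exps{\frac{1}{2\nu^2}}}{p-1}\infnorm{h}$ for any $h$ bounded on $\supp$. Since $h$ is now just a bounded function, the whole problem reduces to controlling the two families of expectations $\expec{\pi h(x)}$ and $\expec{\pi z_j h(x)}$ that appear in the explicit formulas \eqref{eq:coefs-general-f-default-weights} and \eqref{eq:intercept-general-f-default-weights}.

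The key computation is to evaluate $\expec{\pi}$ and $\expec{\pi z_j}$ exactly. Because the sampling is feature-wise independent, $\pi = \prod_{k=1}^d \exps{-\indic{b_k \neq \bxi_k}/(2\nu^2)}$ factorizes, and each factor has expectation $\frac{1}{p} + (1-\frac{1}{p})\exps{-1/(2\nu^2)} = \littlec$ since $b_k$ is uniform on $\{1,\ldots,p\}$; hence $\expec{\pi} = \littlec^d$. Similarly, using $\pi z_j = \indic{b_j = \bxi_j}\prod_{k\ne j}\exps{-\indic{b_k\ne\bxi_k}/(2\nu^2)}$, one gets $\expec{\pi z_j} = \frac{1}{p} \littlec^{d-1}$. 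Then, since $\abs{h}\le\infnorm{h}$ on $\supp$ and $\pi, z_j \ge 0$, I would bound $\abs{\expec{\pi h(x)}} \le \littlec^d\infnorm{h}$ and $\abs{\expec{\pi z_j h(x)}} \le \frac{1}{p}\littlec^{d-1}\infnorm{h}$.

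Next I would substitute these bounds into \eqref{eq:coefs-general-f-default-weights} and \eqref{eq:intercept-general-f-default-weights}, applying the triangle inequality. The factors of $\littlec^{-d}$ cancel cleanly with the $\littlec^{d}$ and $\littlec^{d-1}$ produced by the expectations, leaving $\abs{\beta_j^h} \le \frac{2p\littlec}{p\littlec-1}\infnorm{h}$ and $\abs{\beta_0^h} \le \bigl(1 + \frac{2d}{p\littlec-1}\bigr)\infnorm{h}$. Using the exact identity $p\littlec - 1 = (p-1)\exps{-1/(2\nu^2)}$ together with $\littlec \le 1$ turns these into $\abs{\beta_j^h} \le \frac{2p}{p-1}\exps{\frac{1}{2\nu^2}}\infnorm{h}$ and $\abs{\beta_0^h} \le \frac{3d}{p-1}\exps{\frac{1}{2\nu^2}}\infnorm{h}$. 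Finally, I would assemble $\smallnorm{\beta^h}^2 = (\beta_0^h)^2 + \sum_{j=1}^d (\beta_j^h)^2 \le \bigl(9d^2 + 4dp^2\bigr)\frac{\exps{1/\nu^2}}{(p-1)^2}\infnorm{h}^2$ and take the square root to recover the claimed $\sqrt{d(9d+4p^2)}$ factor.

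The main obstacle is pinning down the constants exactly rather than merely up to a multiplicative factor, and in particular handling the additive $1$ in the intercept bound: absorbing $1 + \frac{2d}{p\littlec-1}$ into $\frac{3d}{p-1}\exps{\frac{1}{2\nu^2}}$ (equivalently, making the final Euclidean sum fit inside the $9d^2$ budget) requires $\frac{d\,\exps{1/(2\nu^2)}}{p-1}\ge 1$, so I would need to verify this holds in the regime of interest: it is immediate whenever $p-1 \le d$, which covers the default $p=4$ as soon as $d \ge 3$, and otherwise one carries the extra term through and checks directly that it is dominated in the final sum. Everything else is the routine bookkeeping of matching the coefficient-wise bounds against the target constant.
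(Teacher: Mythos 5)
Your proof is correct and takes essentially the same route as the paper's: reduce to $h=f-g$ by linearity, bound $\abs{\expec{\pi h(x)}}\leq \littlec^d\infnorm{h}$ and $\abs{\expec{\pi z_jh(x)}}\leq \littlec^{d-1}\infnorm{h}/p$ (Lemma~\ref{lemma:expectation-computations}), substitute into the explicit formulas for $\beta^h_0$ and $\beta^h_j$, use $p\littlec-1=(p-1)\exps{-1/(2\nu^2)}$ together with $\littlec\leq 1$, and assemble the Euclidean norm; the paper merely groups the two terms as $\expec{\pi(1-p\littlec z_j)h(x)}$ before bounding, which yields the identical intermediate constants $1+\tfrac{2d}{p\littlec-1}$ and $\tfrac{2p\littlec}{p\littlec-1}$. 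The condition you flag, $p-1\leq d\,\exps{1/(2\nu^2)}$, is precisely what is needed to fit the intercept bound inside the $9d^2$ budget, and the paper's own proof needs it too --- its closing sentence (``we then collect\ldots to obtain the promised bound'') silently assumes it --- so your explicit caveat is a point of care, not a defect; be aware, though, that your fallback of ``carrying the extra term through'' cannot recover the stated constant when $p-1>d\,\exps{1/(2\nu^2)}$, since in that regime this bounding strategy (the paper's included) genuinely produces a larger right-hand side.
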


As expected, small perturbations of the function to explain do not perturb the explanations too much, which is a desirable property. 
Proposition~\ref{prop:explanation-stability-default-weights} is proven in Appendix~\ref{section:proof-stability-corollary}. 


\subsubsection{Explanations only depend on the bin indices of $\xi$}
\label{sec:bin-indices}

The interpretable coefficients $\beta^f_j$ depend only on the bin indices $\bxi_j$ of the example $\xi$. 
Indeed, Eqs.~\eqref{eq:intercept-general-f-default-weights} and \eqref{eq:coefs-general-f-default-weights} reveal that only the sampling of $x$ depends on the actual coordinates of~$\xi$. 
But if we recall Section~\ref{section:sampling-procedure}, this sampling only depends on the bin indices of~$\xi$. 
Therefore, \textbf{Tabular LIME provides the same explanation for any two instances falling in the same bins along each dimension}, up to some noise coming from the sampling procedure. 
See Figure~\ref{fig:explanations-bins} for an illustration of this phenomenon. 
In a sense, this behavior gives a certain stability to the explanations provided by Tabular LIME: if two examples to explain $\xi$ and $\xi'$ are very close, they are likely to have the same bin indices, and therefore the same $\beta^f$. 
On the other hand, if $\xi$ and $\xi'$ are close but do not have the same bin indices, the explanations are likely to be quite different. 
This could be an explanation for the instability of explanations observed by \citet{Alvarez_Jaakola_2018}. 

An interesting consequence is the lack of faithfulness to the true model. 
Let us fix a hyperrectangle~$R$ given by the discretization and consider all $\xi\in R$. 
In the general case, $f$ is not constant on $R$. 
But on the other hand, since all surrogate models obtained inside $R$ are the same, their values at $\xi$ cannot be the same as $f(\xi)$ for all $\xi$. 
The \emph{local accuracy} property of \citet{Lundberg_Lee_2017} (also called \emph{efficiency} by \citet{Sundararajan_Najmi_2020}) is thus not satisfied.

From a practical point of view, if one believes that the exact value of $\xi$ matters, then increasing $p$ is a solution, leading to thinner bins along each dimension. 
At the limit, taking $p=0$ (no discretization) is also an option. 

\begin{figure}
\begin{center}
\includegraphics[scale=0.18]{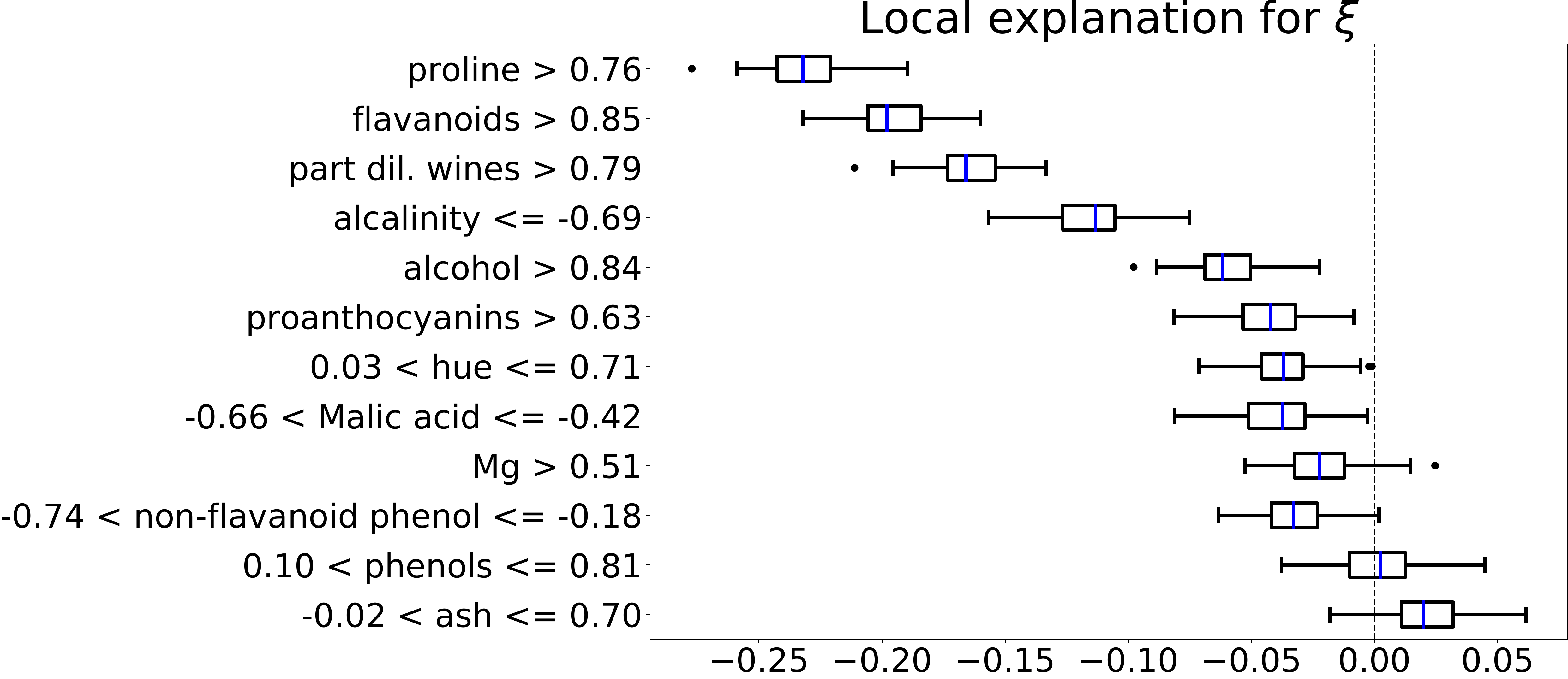}
\hspace{0.1cm}
\includegraphics[scale=0.18]{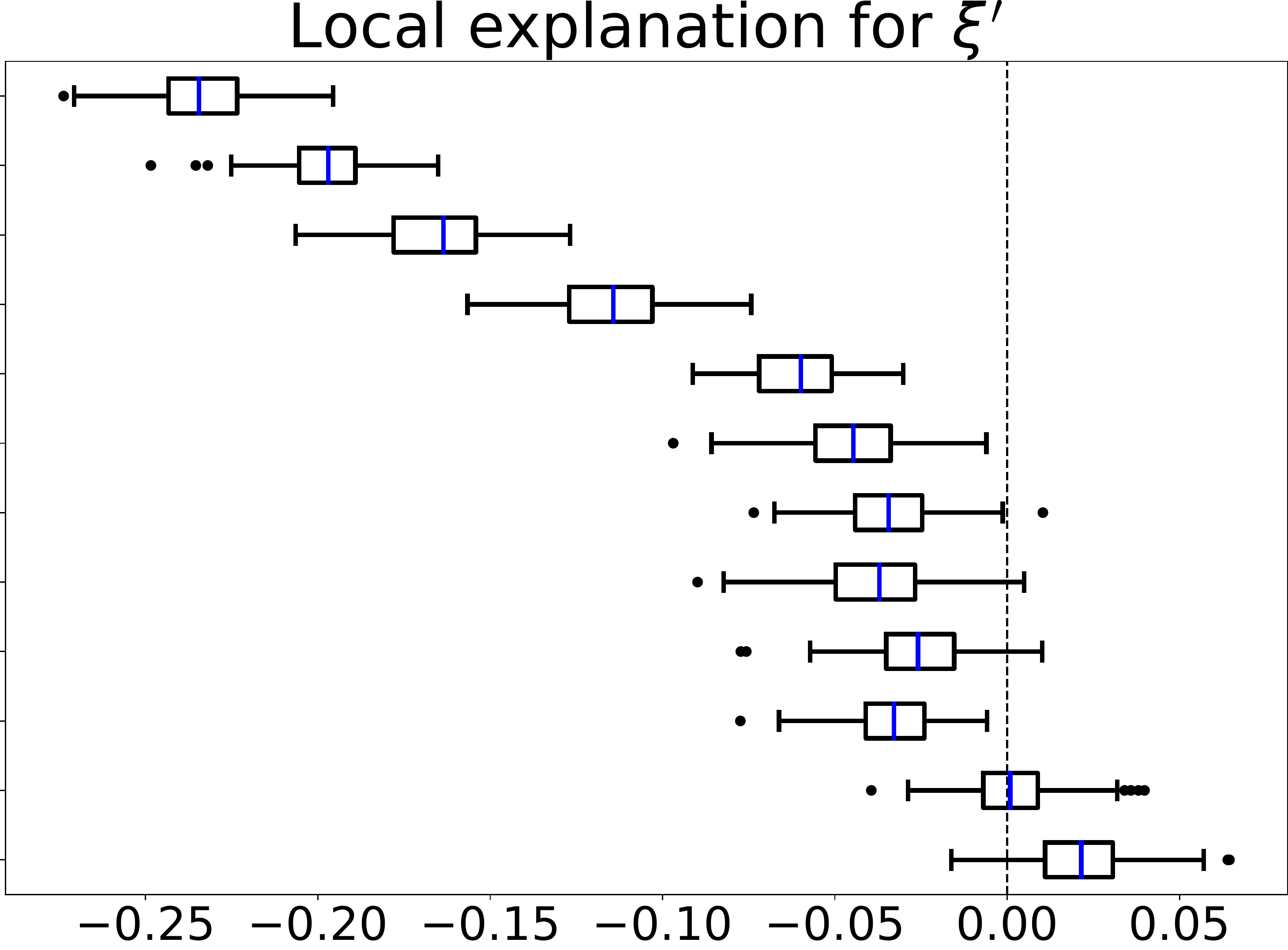}
\end{center}
\caption{\label{fig:explanations-bins}Explanations given by Tabular LIME for a kernel ridge regressor trained on the Wine dataset \citep{Cortez_et_al_1998} with normalized features. \emph{Left panel:} Explanations for a given $\xi$ with $1000$ perturbed samples. \emph{Right panel:} We recovered the bins boundaries as well as the bin indices of $\xi$. We then sampled $\xi'$ in the same $d$-dimensional box as $\xi$. The local explanation for~$\xi'$ are indistinguishable from those for~$\xi$ once randomness of the sampling procedure is taken into account.}
\end{figure}


\subsubsection{Dependency on the bandwidth parameter $\nu$}
\label{section:bandwidth}

\paragraph{Large bandwidth behavior.}
Suppose that the bandwidth is large, that is, $\nu\to +\infty$. 
In that case, it is clear from their definitions that $\littlec\to 1$ and $\pi_i \to 1$ almost surely. 
By dominated convergence, Eq.~\eqref{eq:intercept-general-f-default-weights} and~\eqref{eq:coefs-general-f-default-weights} imply that, for any $1\leq j\leq d$, 
\begin{equation}
\label{eq:coefs-limit-large-bandwidth}
\beta^f_j \longrightarrow \frac{-p}{p - 1}\expec{f(x)} + \frac{p^2}{p - 1}\expec{ z_{j}f(x)}
\, .
\end{equation}
The intercept satisfies
\begin{equation}
	\label{eq:intercept-limit-large-bandwidth}
	\beta^f_0 \longrightarrow \left(1 + \frac{d}{p - 1}\right)\expec{f(x)} -\frac{p}{p - 1}\sum_{j=1}^{d}\expec{z_{j}f(x)}
	\, .
\end{equation}
We show this convergence phenomenon in Figure~\ref{fig:bandwidth-cancellation}. 
In cases where the bandwidth choice of the default implementation $\nu=\sqrt{0.75 d}$ is large, this approximation is well-satisfied. 
In fact, the bandwidth parameter then becomes redundant: it is equivalent to give weight~$1$ to every perturbed sample. 

\paragraph{Small bandwidth behavior.}
On the other hand, when $\nu\to 0$, it is straightforward to show that $\beta^f_j\to 0$ for any $1\leq j\leq d$. 
In between these two extremes, the behavior of the interpretable coefficients is not trivial. 
In particular, the sign of interpretable coefficients can change when the bandwidth varies. 
{\bf This is a worrying phenomenon: for some values of the bandwidth $\nu$, the explanation provided by Tabular LIME is negative, while it becomes positive for other choices.} 
In the first case, the trusting user of Tabular LIME would grant a positive influence for the parameter, in contrast to a negative influence in the second case. 
This is not only a theoretical worry: if the values of these interpretable coefficients are large enough, they may very well be among the top coefficients that are usually relevant for the user. 
We demonstrate this effect in Figure~\ref{fig:bandwidth-cancellation} on a subset of the interpretable coefficients, the remainder can be found in Section~\ref{sec:additional-experiments} of the Appendix.

\paragraph{Practical considerations.}
In practice, the previous discussion suggests to take large values of the bandwidth. 
By doing so, we make sure that we are far away from the small bandwidth regime where all coefficients vanish and where the non-linear behavior seems to occur. 
If we come back to Eq.~\eqref{eq:def-weights-default}, this means taking $\nu^2$ comparable to $\norm{\Indic - z_i}^2$. 
We think that the default choice is satisfying, since $\nu^2=\bigo{d}$, which is the maximum number of terms in $\norm{\Indic - z_i}^2$. 

\begin{figure}
	\begin{center}
\includegraphics[scale=0.3]{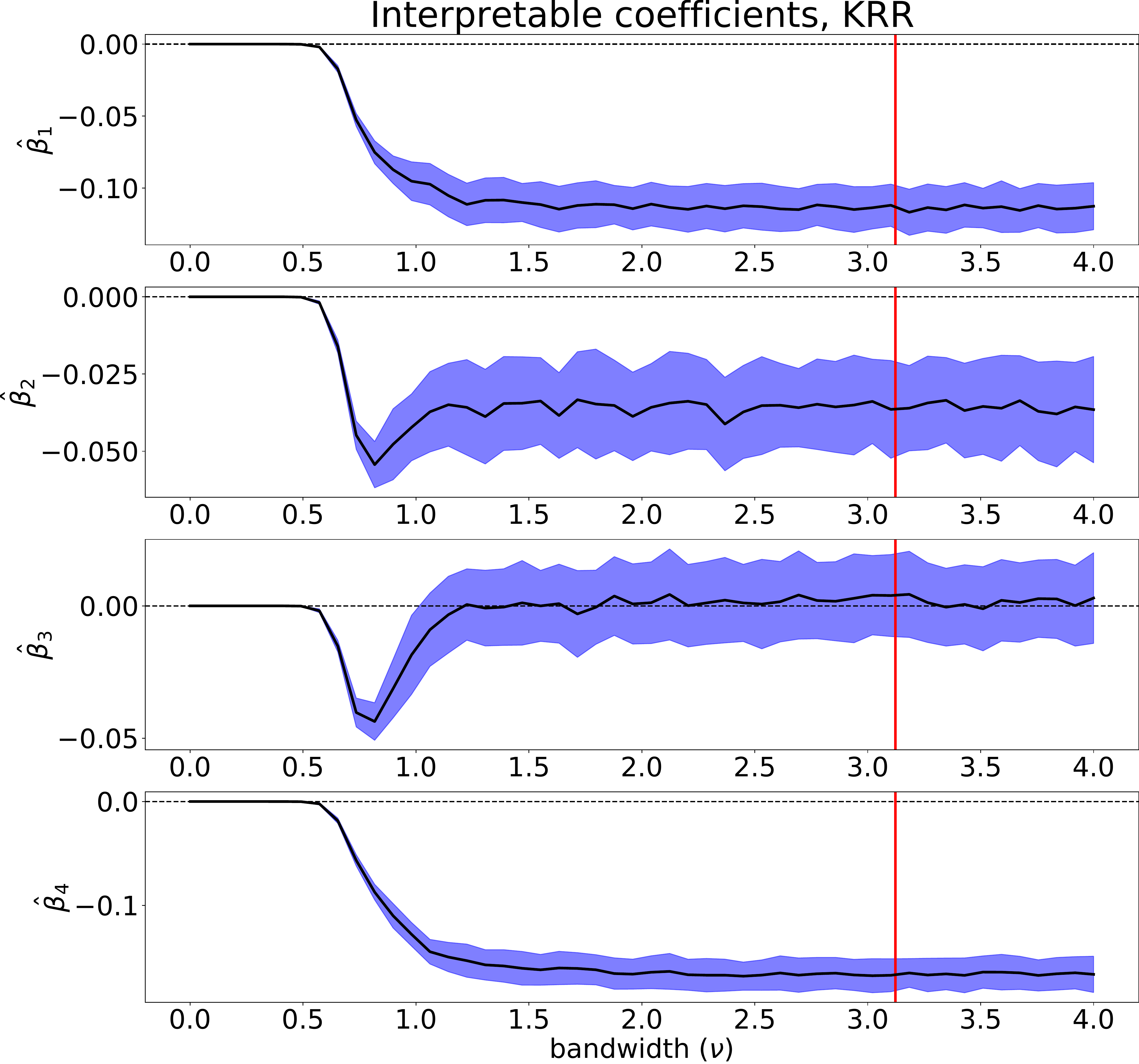}
	\end{center}
	\caption{\label{fig:bandwidth-cancellation}Tabular LIME on a kernel regressor trained on the Wine dataset, $100$ repetitions, $1000$ samples, plotting the interpretable coefficient for the first four features for varying bandwidth. 
The vertical red line marks the default choice for $\nu$, given by $\sqrt{0.75 d}$. 
We see that $\beta^f_j=0$ when $\nu=0$ as predicted, while $\betahat^f_j$ stabilizes when $\nu\to +\infty$. We also note that the sign of the interpretable coefficient can change as a function of the bandwidth (see $\betahat_3$). }
\end{figure}


\subsection{Outline of the proof of Theorem~\ref{th:betahat-concentration-general-f-default-weights}}
\label{section:proof-outline}

Before turning to specializations of Theorem~\ref{th:betahat-concentration-general-f-default-weights} to specific classes of models, we provide a short outline of the proof (the complete proof is provided in appendix). 

Since we restrict our analysis to $\Omega=0$,  Eq.~\eqref{eq:surrogate-model-general} becomes a simple weighted least-squares problem. 
Let us collect the weights $\pi_i$ in a diagonal matrix $W\in\Reals^{n\times n}$ such that $W_{i,i}=\pi_i$ for all $i\in [n]$. 
We also define $y\in\Reals^n$ coordinate-wise by $y_i\defeq f(x_i)$. 
Then the solution of Eq.~\eqref{eq:surrogate-model-general} is given in closed-form by
\[
\betahat_n = (Z^\top W Z)^{-1}Z^\top W y
\, .
\]
Let us set $\Sigmahat_n \defeq \frac{1}{n}Z^\top W Z$ and $\Gammahat_n \defeq \frac{1}{n}Z^\top W y$ and notice that $\betahat_n = \Sigmahat_n^{-1}\Gammahat_n$. 
Elementary computations show that both $\Sigmahat_n$ and $\Gammahat_n$ can be written as empirical averages of expressions depending on the perturbed samples~$x_i$. 
Since the sampling of the $x_i$s is i.i.d., the weak law of large numbers guarantees that 
\[
\Sigmahat_n \cvproba \Sigma 
\quad \text{and}\quad 
\Gammahat_n \cvproba \Gamma^f
\, ,
\]
where we defined $\Sigma\defeq \smallexpec{\Sigmahat_n}$ and $\Gamma^f\defeq \smallexpec{\Gammahat_n}$. 
These quantities are given by Proposition~\ref{prop:sigma-computation} and Eq.~\eqref{eq:computation-gamma-general}. 
Since $\Sigma$ is invertible in closed-form for a large class of weights (Proposition~\ref{prop:computation-inverse-sigma}), we can set $\beta^f \defeq \Sigma^{-1}\Gamma^f$ and proceed to show that $\betahat_n$ is close from $\beta^f$ in probability, whose expression is given by Corollary~\ref{cor:beta-computation-general-f-default-weights}.  
A prerequisite to the concentration of $\betahat_n$ are the concentration of $\Sigmahat_n$ (Proposition~\ref{prop:sigmahat-concentration-general-weights}) and the concentration of $\Gammahat_n$ (Proposition~\ref{prop:gammahat-concentration}). 
Together with the control of $\opnorm{\Sigma^{-1}}$ (Proposition~\ref{prop:upper-bound-operator-norm}), a binding lemma (Lemma~\ref{lemma:binding-lemma}) allows us to put everything together in Appendix~\ref{section:concentration-betahat}.


\section{Special cases of Theorem~\ref{th:betahat-concentration-general-f-default-weights}}
\label{section:discussion}

We now specialize Theorem~\ref{th:betahat-concentration-general-f-default-weights} in three situations where $f$ has some additional structure, allowing us to go further in the computation of $\beta^f$. 
Namely, we will assume that~$f$ is an \textbf{additive function} over the coordinates (Section~\ref{section:additive}), \textbf{multiplicative} along the coordinates (Section~\ref{section:multiplicative}), and finally a function \textbf{depending only on a subset of the coordinates} (Section~\ref{section:sparse}). 
Our goal is to show how to use Theorem~\ref{th:betahat-concentration-general-f-default-weights} in specific cases and how to gain insights into the explanations provided by Tabular LIME in these cases. 

Before being able to complete this program, we need to introduce some additional notation. 
In order to use Theorem~\ref{th:betahat-concentration-general-f-default-weights} for models with additional structure, we will need to make the expectations computations more explicit. 
Recall that~$x$ corresponds to the sampling of the perturbed examples (see Section~\ref{section:sampling-procedure}). 
Often, we will need to compute the expectation of the product of the weight and a function of~$x$ conditionally to $x$ belonging to a certain bin along dimension $j$. 
For this reason, we define
\begin{equation}
\label{eq:def-ew-default-weights}
\forall 1\leq j\leq d, \enspace \forall 1\leq b\leq p, \quad \ew_{j,b}^\psi \defeq \condexpec{\exps{\frac{-(1-z_j)^2}{2\nu^2}}\psi(x_j)}{b_j = b}
\, ,
\end{equation}
where $\psi : \Reals \to\Reals$ is any integrable function. 
A typical use case of this definition will be $\psi(t) = 1$ for all $t\in\Reals$, in which case we shorten $\ew_{j,b}^1$ to $\ew_{j,b}$. 
Another frequent use will be when $\psi$ is the identity mapping, in which event we will write $\ew_{j,b}^\times$ instead of $\ew_{j,b}^\id$. 

Even though this can look cumbersome, this notation is quite helpful. 
The reason is quite simple: whenever we need to compute an expectation with respect to $x$, we will use the law of total expectation with respect to the events $\{b_j = b\}$ for $b\in\{1,\ldots,p\}$. 
The idea is to ``cut'' the expectation depending on which bins $x$ falls into on each dimension. 

For any $1\leq j\leq d$, we also define the constants
\begin{equation}
\label{eq:def-little-cj-psi}
\littlec_j^\psi \defeq \frac{1}{p}\sum_{b=1}^{p} \ew_{j,b}^\psi
\, ,
\end{equation}
the average value of the $\ew_{j,b}^\psi$ coefficients over all the bins for a given feature.  
When $\psi$ is identically equal to one, we also shorten $\littlec_j^\psi$ to $\littlec_{j}$ and  set $\bigc\defeq \prod_{j=1}^d \littlec_j$. 
They can be seen as a generalization of $\littlec$ and $\bigc$, the normalization constants encountered in Theorem~\ref{th:betahat-concentration-general-f-default-weights}.  

\paragraph{First computations. }
In the default implementation of Tabular LIME, if $b=\bxi_j$, then $z_j=1$, and $0$ otherwise. 
Thus the computation of $\ew_{j,b}$ is straightforward in this case:
\[
\ew_{j,b} = 
\begin{cases}
1 &\text{ if }b = \bxi_j \\
\exps{\frac{-1}{2\nu^2}} & \text{ otherwise.}
\end{cases}
\]
The expression of $\littlec_{j}\defeq \littlec_j^1$ is also quite simple.
In particular, $\littlec_j$ does not depend on $j$ and we find
\begin{equation}
\label{eq:little-cj-special-case}
\forall 1\leq j\leq d,\quad \littlec_j = \littlec \defeq \frac{1}{p} + \left(1-\frac{1}{p}\right)\exps{\frac{-1}{2\nu^2}}
\, ,
\end{equation}
recovering the expression of $\littlec$ given in Section~\ref{section:main-result}.  
As a consequence, note that $\bigc = \littlec^d$. 

We now have all the required tools to specialize Theorem~\ref{th:betahat-concentration-general-f-default-weights} in practical cases.


\subsection{General additive models, including linear functions}
\label{section:additive}

In this section, we consider functions that can be written as a sum of functions where each function depends only on one coordinate. 
Namely, we make the following assumption on $f$:
\begin{assumption}[Additive function]
	\label{ass:additive}
	We say that $f:\Reals^d\to\Reals$ is \emph{additive} if there exist arbitrary functions $f_1,\ldots,f_d : \Reals^d \to \Reals$ such that, for any $x\in\Reals^d$, 
	\[
	f(x) = \sum_{j=1}^{d} f_j(x_j)
	\, .
	\]
\end{assumption}
General additive models generalize \emph{linear models}, where $f_j: x\mapsto f_j\cdot x$ for any $1\leq j\leq d$.  
They were popularized by \citet{Stone_1985} and \citet{Hastie_Tibshirani_1990}. 
We refer to Chapter~9 in \citet{Hastie_Tibshirani_Friedman_2001} for an introduction to general additive models, and in particular Section 9.1.1 regarding the training thereof. 
If $f$ is a general additive model, we can specialize Theorem~\ref{th:betahat-concentration-general-f-default-weights}, and examine the explanation provided by Tabular LIME in this case. 
From now on, rather than giving the concentration result (which remains unchanged), we focus directly on $\beta^f$. 
Recall that the $\ew_{j,b}^{f_j}$ are the conditional expectation of $f_j$ on bin $b$ along dimension $j$ (Eq.~\eqref{eq:def-ew-default-weights}) and that $\littlec_{j}$ and $\littlec_{j}^{f_j}$ are normalization constants (Eqs.~\eqref{eq:def-little-cj-psi}). 

\begin{myproposition}[Computation of $\beta^f$ for an additive function]
	\label{prop:beta-computation-additive-f-default-weights}
	Assume that $f$ satisfies Assumption~\ref{ass:additive} and that the assumptions of Theorem~\ref{th:betahat-concentration-general-f-default-weights} are satisfied (in particular, each $f_j$ is bounded on $[q_{j,0},q_{j,p}]$). 
	Set $\xi\in \supp$. 
	Then Theorem~\ref{th:betahat-concentration-general-f-default-weights} holds and, for any $1\leq j\leq d$, 
	\[
	\beta^f_j = \frac{p\littlec}{p\littlec - 1} \left(\ew_{j,\bxi_j}^{f_j} - \frac{\littlec_j^{f_j}}{\littlec}\right)
	\, .
	\]
Moreover, the intercept is given by
	\[
	\beta^f_0 = \frac{1}{p\littlec-1}\sum_{k=1}^{d} \sum_{b\neq \bxi_k}\ew_{k,b}^{f_k}
	\, .
	\]

\end{myproposition}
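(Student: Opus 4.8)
The plan is to feed the additive structure of $f$ into the two expectation computations $\expec{\pi f(x)}$ and $\expec{\pi z_j f(x)}$ that appear in Eqs.~\eqref{eq:coefs-general-f-default-weights} and~\eqref{eq:intercept-general-f-default-weights}, and then simplify. The two facts I would lean on throughout are (i) the sampling is feature-wise independent, so that the expectation of a product of coordinate-wise functions factorizes, and (ii) the default weight factorizes as $\pi = \prod_{k=1}^d \exps{\frac{-(1-z_k)^2}{2\nu^2}}$, each factor depending only on coordinate~$k$. Combined with the law of total expectation over the events $\{b_k = b\}$, these let me recognize every coordinate-wise factor as one of the constants $\ew_{j,b}^\psi$, $\littlec_j^{f_j}$, or $\littlec$ already introduced in Eqs.~\eqref{eq:def-ew-default-weights} and~\eqref{eq:def-little-cj-psi}.

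First I would compute $\expec{\pi f(x)}$. Writing $f = \sum_j f_j$ and distributing, each summand $\expec{\pi f_j(x_j)}$ factorizes across coordinates: the coordinate-$j$ factor is $\expec{\exps{\frac{-(1-z_j)^2}{2\nu^2}} f_j(x_j)} = \littlec_j^{f_j}$, while each of the $d-1$ remaining factors equals $\littlec_k = \littlec$. This yields $\expec{\pi f(x)} = \littlec^{d-1}\sum_{j} \littlec_j^{f_j}$.

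Next I would compute $\expec{\pi z_j f(x)}$, which requires splitting $f = \sum_\ell f_\ell$ into the diagonal term $\ell = j$ and the off-diagonal terms $\ell \neq j$. The one simplification that makes this tractable is that $z_j \exps{\frac{-(1-z_j)^2}{2\nu^2}} = z_j$, since the weight factor in coordinate $j$ equals $1$ precisely when $z_j = 1$; together with $\condexpec{f_j(x_j)}{b_j = \bxi_j} = \ew_{j,\bxi_j}^{f_j}$ and $\smallproba{b_j = \bxi_j} = 1/p$, the diagonal term contributes $\frac{1}{p}\littlec^{d-1}\ew_{j,\bxi_j}^{f_j}$. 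Each off-diagonal term factorizes into the coordinate-$j$ factor $\expec{z_j \exps{\frac{-(1-z_j)^2}{2\nu^2}}} = \expec{z_j} = 1/p$, the coordinate-$\ell$ factor $\littlec_\ell^{f_\ell}$, and $d-2$ factors equal to $\littlec$, giving $\frac{1}{p}\littlec^{d-2}\sum_{\ell\neq j}\littlec_\ell^{f_\ell}$ in total.

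Substituting these two expressions into Eq.~\eqref{eq:coefs-general-f-default-weights} and collecting the $\sum_\ell \littlec_\ell^{f_\ell}$ terms (using $\sum_{\ell}\littlec_\ell^{f_\ell} - \sum_{\ell\neq j}\littlec_\ell^{f_\ell} = \littlec_j^{f_j}$) gives the claimed $\beta^f_j$ once the powers of $\littlec$ cancel against the prefactor $\littlec^{-d}$. For the intercept I would plug the same two expressions into Eq.~\eqref{eq:intercept-general-f-default-weights}; the only extra identity needed is $\sum_{j}\sum_{\ell\neq j}\littlec_\ell^{f_\ell} = (d-1)\sum_\ell \littlec_\ell^{f_\ell}$, after which the coefficients of $\sum_\ell\littlec_\ell^{f_\ell}$ collapse and one is left with $\frac{1}{p\littlec - 1}\bigl(p\sum_k \littlec_k^{f_k} - \sum_k \ew_{k,\bxi_k}^{f_k}\bigr)$; recognizing $p\littlec_k^{f_k} = \sum_{b}\ew_{k,b}^{f_k}$ turns this into $\frac{1}{p\littlec-1}\sum_k\sum_{b\neq\bxi_k}\ew_{k,b}^{f_k}$. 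The argument is entirely mechanical once the two expectations are in hand; the one place to be careful is the bookkeeping of the powers of $\littlec$ (equivalently, the number of ``free'' coordinates $d$, $d-1$, or $d-2$) in the factorizations, which is where an off-by-one error would most easily creep in.
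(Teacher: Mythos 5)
Your proof is correct and follows essentially the same route as the paper: the two expectations you compute are precisely the entries of $\Gamma^f$ (Eq.~\eqref{eq:computation-gamma-general}), your factorization argument (feature-wise independence plus the product structure of the default weights, with the law of total expectation over bins) is exactly the content of Lemma~\ref{lemma:key-computation} and Proposition~\ref{prop:gamma-computation-additive-f-general-weights}, and substituting into Eqs.~\eqref{eq:coefs-general-f-default-weights} and~\eqref{eq:intercept-general-f-default-weights} plays the role of multiplying by $\Sigma^{-1}$. The only organizational difference is that the paper first establishes the general-weights statement (Proposition~\ref{prop:beta-computation-additive-f-general-weights}) and then specializes $\ew_{j,\bxi_j}=1$ and $\littlec_j=\littlec$, whereas you work with the default weights from the outset; the algebra, including your final simplification $p\littlec_k^{f_k}-\ew_{k,\bxi_k}^{f_k}=\sum_{b\neq\bxi_k}\ew_{k,b}^{f_k}$, matches the paper's.
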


The proof of Proposition~\ref{prop:beta-computation-additive-f-default-weights} is a direct consequence of Theorem~\ref{th:betahat-concentration-general-f-default-weights} and Proposition~\ref{prop:beta-computation-additive-f-general-weights}. 
Proposition~\ref{prop:beta-computation-additive-f-default-weights} has several interesting consequences. 

\paragraph{Splitting the coordinates. }
A careful reading of the expression of $\beta^f$ in Proposition~\ref{prop:beta-computation-additive-f-default-weights} reveals that the $j$-th interpretable coefficient $\beta^f_j$ depends only on $f_j$, the part of the function depending on the $j$-th coordinate. 
In other words, \textbf{Tabular LIME splits the explanations coordinate by coordinate for general additive models.}
This property is desirable in our opinion.
Indeed, since our model~$f$ depends on the $j$-th coordinate only through the function $f_j$, then $f_j$ alone should be involved on the part of the explanation which is concerned by~$j$. 

\paragraph{Dummy features. }
Suppose for a moment that~$f$ is additive but does not depend on coordinate~$j$ at all. 
That is, $f_j(x)=\kappa$ for any $x$, where $\kappa$ is a constant. 
Then, by linearity of the conditional expectation, $\ew_{j,b}^{f_j}=\kappa\ew_{j,b}$ for any $b$.
By definition of the normalization constant $\littlec_j^{f_j}$, we have $\littlec_j^{f_j}=\kappa\littlec$. 
Therefore
\[
\ew_{j,\bxi_j}^{f_j} - \frac{\littlec_j^{f_j}}{\littlec} = 0
\, ,
\]
and we deduce immediately that $\beta^f_j=0$. 
In other words, unused coordinates are ignored. 
\textbf{For an additive $f$, the dummy axiom \citep{Sundararajan_Najmi_2020} is satisfied.} 
We show that this property also holds for more general weights in the Appendix (see Proposition~\ref{prop:beta-computation-additive-f-general-weights}), and for more general $f$ in Section~\ref{section:sparse}. 

\subsubsection{Linear functions}

We can be even more precise in the case of linear functions. 
In this case, the functions $f_j$ are defined as $f_j(x)=\lambda_j x$ with $\lambda_j\in\Reals$ for any $1\leq j\leq d$ some real coefficients.
Recall that, for any $1\leq j\leq d$ and $1\leq b\leq p$, we defined $\mutilde_{j,b}$ as the mean of the random variable $\truncnorm{\mu_{j,b},\sigma_{j,b},q_{j,b-1},q_{j,b}}$. 

\begin{mycorollary}[Computation of $\beta^f$, linear $f$]
\label{cor:beta-computation-liner-default-weights}
Assume that $f$ is linear. 
Let us set $\xi\in\supp$.  
Then Theorem~\ref{th:betahat-concentration-general-f-default-weights} holds and, for any $1\leq j\leq d$,
\begin{equation}
\label{eq:beta-linear-default-ls}
\beta^f_j = \lambda_j \cdot \frac{1}{p-1}\sum_{b = 1}^p (\mutilde_{j,\bxi_j} - \mutilde_{j,b})
\, .
\end{equation}
Moreover, if one defines 
\begin{equation}
\label{eq:def:muttilde}
\muttilde_j \defeq \frac{\mutilde_{j,\bxi_j} + \sum_{b\neq \bxi_j}  \exps{\frac{-1}{2\nu^2}}\mutilde_{j,b}}{1 + (p-1)\exps{\frac{-1}{2\nu^2}} }
\, ,
\end{equation}
the weighted average of the $\mutilde_{j,b}$ across dimension $j$, 
the intercept is given by 
\[
\beta_0^f = f(\muttilde) - \frac{1}{p\littlec - 1}\sum_{j=1}^{d}  (\mutilde_{j,\bxi_j} - \muttilde_j) \lambda_j
\, .
\]
\end{mycorollary}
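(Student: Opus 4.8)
The plan is to read off Corollary~\ref{cor:beta-computation-liner-default-weights} as a specialization of Proposition~\ref{prop:beta-computation-additive-f-default-weights} to the case $f_j(t)=\lambda_j t$, and then to carry out the (purely algebraic) simplification. Since a linear $f$ is additive with $f_j(t)=\lambda_j t$, the only model-dependent ingredients in Proposition~\ref{prop:beta-computation-additive-f-default-weights} are the coefficients $\ew_{j,b}^{f_j}$, which I would compute first. By the definition in Eq.~\eqref{eq:def-ew-default-weights} and linearity of conditional expectation, $\ew_{j,b}^{f_j}=\lambda_j\ew_{j,b}^{\times}$. Conditionally on $\{b_j=b\}$ the binary feature $z_j=\indic{b=\bxi_j}$ is deterministic, so the exponential weight equals $1$ when $b=\bxi_j$ and $\exps{\frac{-1}{2\nu^2}}$ otherwise; combined with $\condexpec{x_j}{b_j=b}=\mutilde_{j,b}$, this yields $\ew_{j,\bxi_j}^{\times}=\mutilde_{j,\bxi_j}$ and $\ew_{j,b}^{\times}=\exps{\frac{-1}{2\nu^2}}\mutilde_{j,b}$ for $b\neq\bxi_j$.

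Next I would assemble $\littlec_j^{f_j}=\frac{1}{p}\sum_{b=1}^{p}\ew_{j,b}^{f_j}$ from these two cases and substitute into the formula $\beta_j^f=\frac{p\littlec}{p\littlec-1}\bigl(\ew_{j,\bxi_j}^{f_j}-\littlec_j^{f_j}/\littlec\bigr)$. The simplification is transparent once one records the two identities $p\littlec=1+(p-1)\exps{\frac{-1}{2\nu^2}}$ and $p\littlec-1=(p-1)\exps{\frac{-1}{2\nu^2}}$, which follow from Eq.~\eqref{eq:little-cj-special-case}. Writing $a\defeq\exps{\frac{-1}{2\nu^2}}$ as local shorthand and clearing the common denominator $1+(p-1)a$, the numerator of $\ew_{j,\bxi_j}^{f_j}-\littlec_j^{f_j}/\littlec$ collapses to $\lambda_j a\sum_{b\neq\bxi_j}(\mutilde_{j,\bxi_j}-\mutilde_{j,b})$; the prefactor $\frac{p\littlec}{p\littlec-1}=\frac{1+(p-1)a}{(p-1)a}$ then cancels both $a$ and $1+(p-1)a$, leaving $\beta_j^f=\frac{\lambda_j}{p-1}\sum_{b=1}^{p}(\mutilde_{j,\bxi_j}-\mutilde_{j,b})$, where the sum extends to all bins because the $b=\bxi_j$ term vanishes. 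This is exactly Eq.~\eqref{eq:beta-linear-default-ls}.

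For the intercept I would start from $\beta_0^f=\frac{1}{p\littlec-1}\sum_{k=1}^{d}\sum_{b\neq\bxi_k}\ew_{k,b}^{f_k}$, substitute $\ew_{k,b}^{f_k}=\lambda_k a\,\mutilde_{k,b}$ for $b\neq\bxi_k$, and use $p\littlec-1=(p-1)a$ to cancel $a$, obtaining the compact form $\beta_0^f=\frac{1}{p-1}\sum_{k=1}^{d}\lambda_k\sum_{b\neq\bxi_k}\mutilde_{k,b}$. The remaining task is to check that this agrees with the stated expression $f(\muttilde)-\frac{1}{p\littlec-1}\sum_{j}(\mutilde_{j,\bxi_j}-\muttilde_j)\lambda_j$. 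Using $f(\muttilde)=\sum_j\lambda_j\muttilde_j$ and grouping coordinate-by-coordinate, each term becomes $\frac{\lambda_j}{(p-1)a}\bigl[(1+(p-1)a)\muttilde_j-\mutilde_{j,\bxi_j}\bigr]$; substituting the definition~\eqref{eq:def:muttilde}, namely $(1+(p-1)a)\muttilde_j=\mutilde_{j,\bxi_j}+a\sum_{b\neq\bxi_j}\mutilde_{j,b}$, makes the bracket equal to $a\sum_{b\neq\bxi_j}\mutilde_{j,b}$, which reproduces the compact expression after cancelling $a$.

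I do not anticipate any genuine obstacle: the statement is a pure corollary of Proposition~\ref{prop:beta-computation-additive-f-default-weights}, and every step is a deterministic computation. The only points requiring care are the bookkeeping of the dichotomy $b=\bxi_j$ versus $b\neq\bxi_j$ inside the weighted conditional expectations, and the reconciliation of the direct compact form of the intercept with the more interpretable form built from $\muttilde_j$. Both reduce to substituting definition~\eqref{eq:def:muttilde} and cancelling the common factor $a=\exps{\frac{-1}{2\nu^2}}$.
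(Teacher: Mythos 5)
Your proposal is correct and follows essentially the same route as the paper: both specialize Proposition~\ref{prop:beta-computation-additive-f-default-weights} with $f_j(t)=\lambda_j t$, compute $\ew_{j,b}^{\times}$ via the dichotomy $b=\bxi_j$ versus $b\neq\bxi_j$, and cancel the factors $p\littlec = 1+(p-1)\exps{\frac{-1}{2\nu^2}}$ and $p\littlec-1=(p-1)\exps{\frac{-1}{2\nu^2}}$. The only cosmetic difference is that the paper first rewrites everything in terms of $\muttilde_j=\littlec_j^{\times}/\littlec$ and then simplifies the difference $\mutilde_{j,\bxi_j}-\muttilde_j$, whereas you clear denominators directly and, for the intercept, verify the stated $\muttilde$-form against a compact form; the algebra is identical.
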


\paragraph{Linearity in $\lambda_j$.}
Intuitively, Corollary~\ref{cor:beta-computation-liner-default-weights} tells us that for any $1\leq j\leq d$, $\beta_j^f$ is \emph{proportional} to $\lambda_j$, the coefficient of the linear model corresponding to this feature.  
\textbf{For a linear model, the interpretable coefficient along dimension $j$ is proportional to the corresponding coefficient of the linear model.} 
This result is analogous to Theorem~3.1 in \citet{Garreau_von_Luxburg_2020}. 
Such property is also satisfied by kernel SHAP (Corollary~1 in \citet{Lundberg_Lee_2017}), for which $\beta_j^f = \lambda_j\cdot (\xi_j-\expec{x_j})$ in our notation. 
This is often considered as a nice property of an interpretability method: since a linear model is already interpretable to some extent, the interpretable version thereof should coincide up to constants. 
In the language of \citet{Sundararajan_Najmi_2020}, this is called the \emph{proportionality} axiom. 
We demonstrate in Figure~\ref{fig:linear_default_ls} the accuracy of the theoretical predictions of Corollary~\ref{cor:beta-computation-liner-default-weights}. 

\begin{figure}
	\begin{center}
		\includegraphics[scale=0.3]{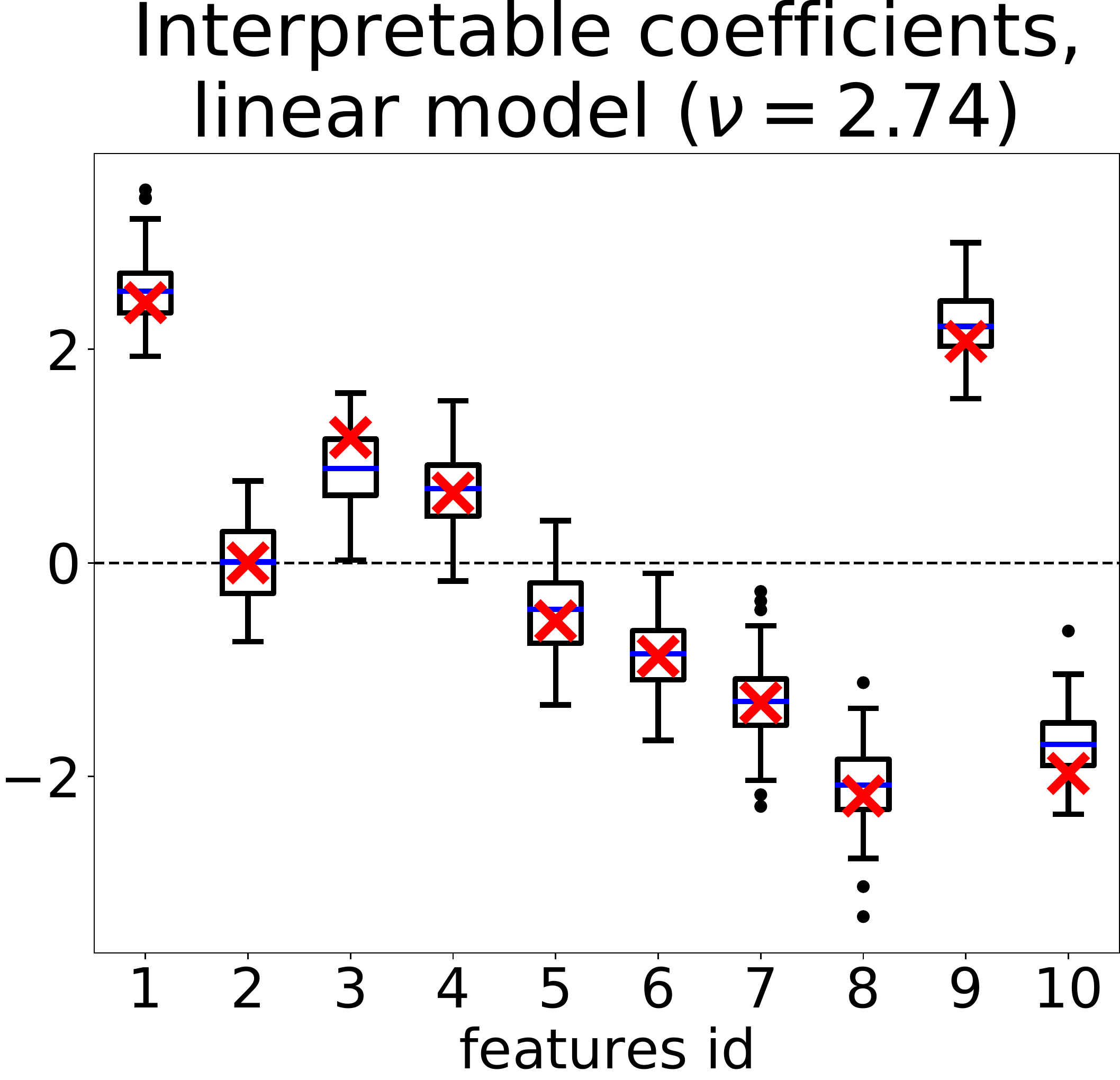}\hspace{1cm}
		\includegraphics[scale=0.3]{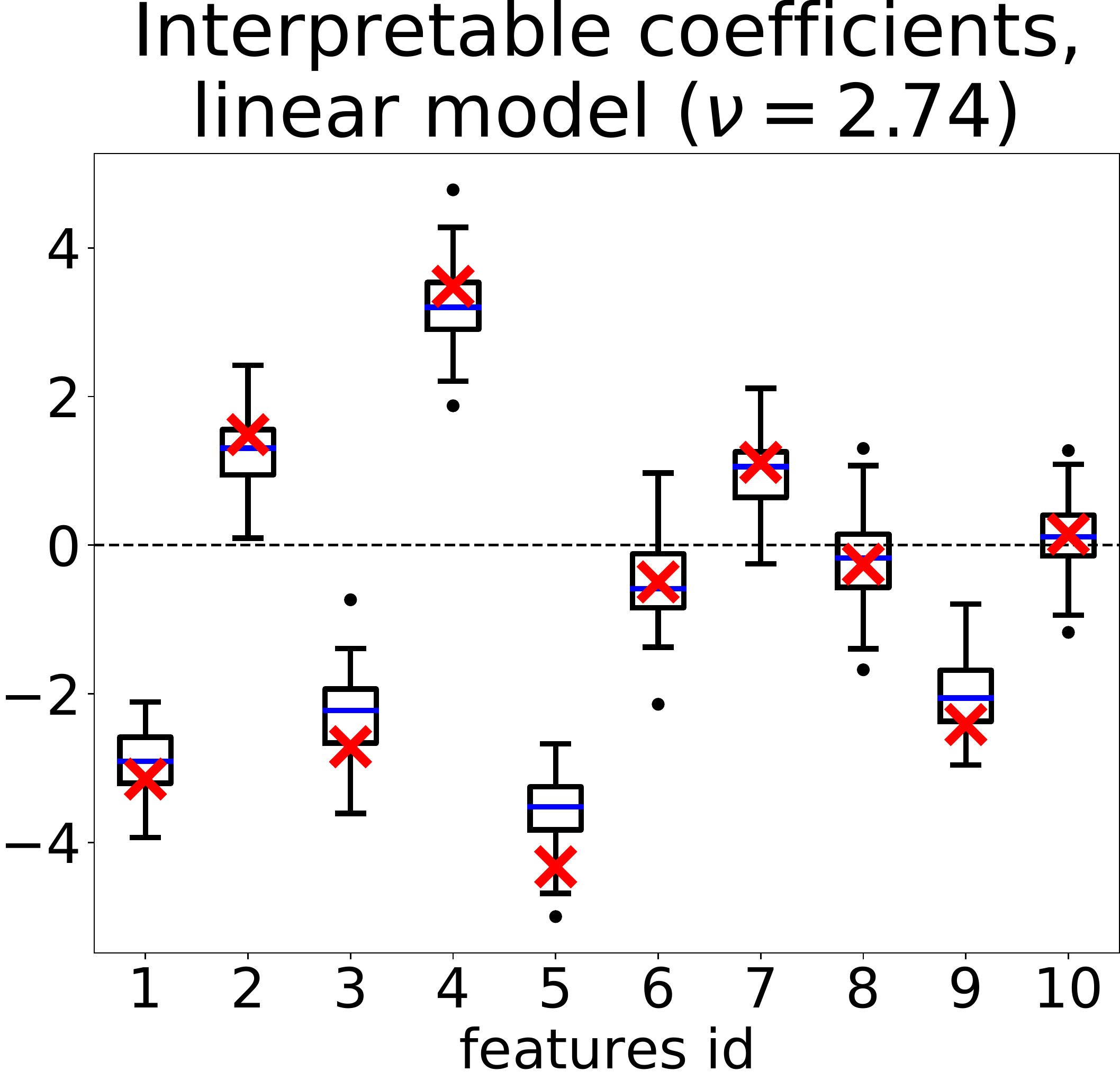}
	\end{center}
	\caption{\label{fig:linear_default_ls}In this set of experiments, we ran Tabular LIME with default settings ($n=5\cdot 10^3$ perturbed samples, $\nu=\sqrt{0.75 d}$) on a linear $f$ with arbitrary coefficients. The ambient dimension is $10$ in the top panel, $20$ in the bottom panel. In black, the whisker boxes corresponding to $10^2$ repetitions of this experiment. In blue, the experimental mean. In red, the values given by Corollary~\ref{cor:beta-computation-liner-default-weights}.  We can see that the theoretical predictions of Eq.~\eqref{eq:beta-linear-default-ls} match the experimental results.}
\end{figure}

\paragraph{Cancellation phenomenon.}
Now let let us take a closer look at the multiplicative coefficient between $\beta_j^f$ and $\lambda_j$, given by Eq.~\eqref{eq:beta-linear-default-ls}. 
For the purpose of this discussion, let us define 
\[
\cprop_j\defeq \frac{1}{p-1}\sum_{b = 1}^p (\mutilde_{j,\bxi_j} - \mutilde_{j,b})
\, .
\]
We notice that $\cprop_j$ depends only on the discretization scheme, the means of the truncated Gaussians $\mutilde_{j,b}$, and the index of the bin containing $\xi$ along feature $j$. 
More precisely, one can see this $\cprop_j$ as the average of the differences between the mean of $x_j$ on the bin containing $\xi_j$ and the mean of $x_j$ on all other bins. 
We refer to Figure~\ref{fig:linear-coefficient-explanation} for a visual help. 

\begin{figure}
	\begin{center}
		\includegraphics[scale=0.3]{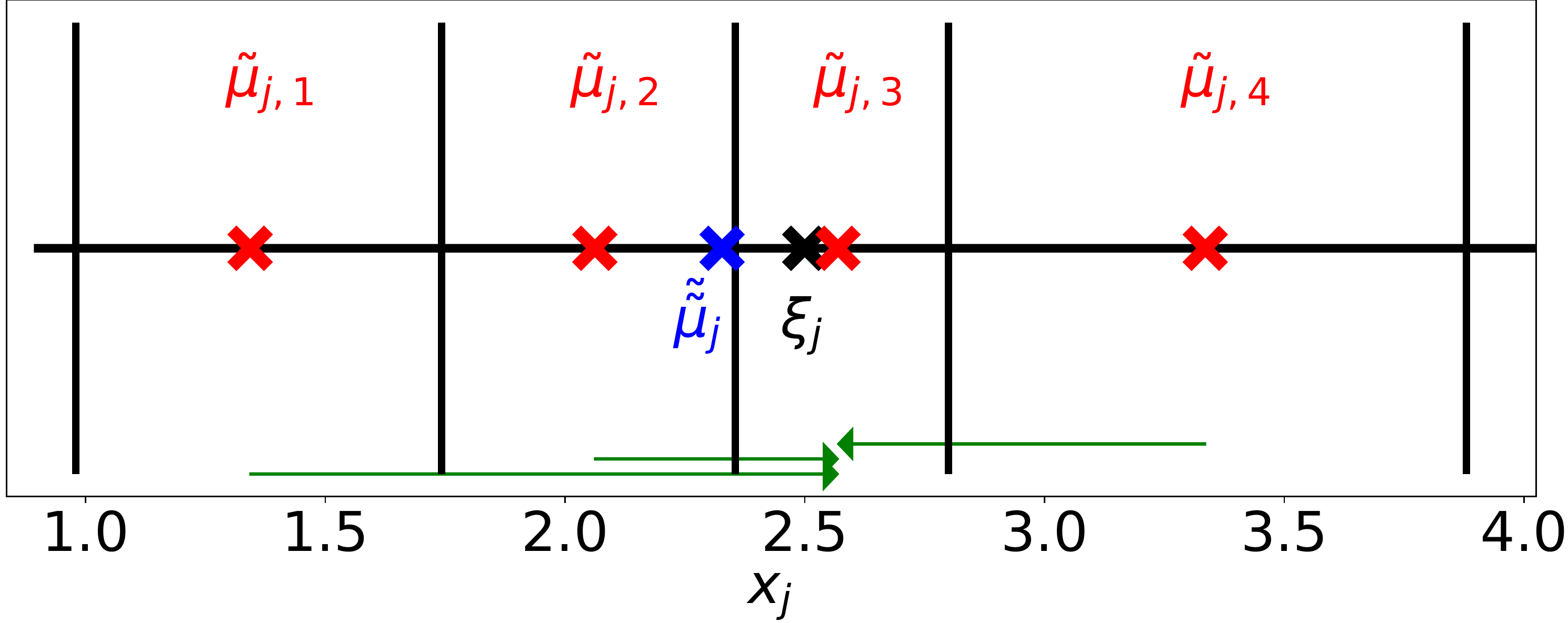} \\
		\vspace{0.5cm}
		\includegraphics[scale=0.3]{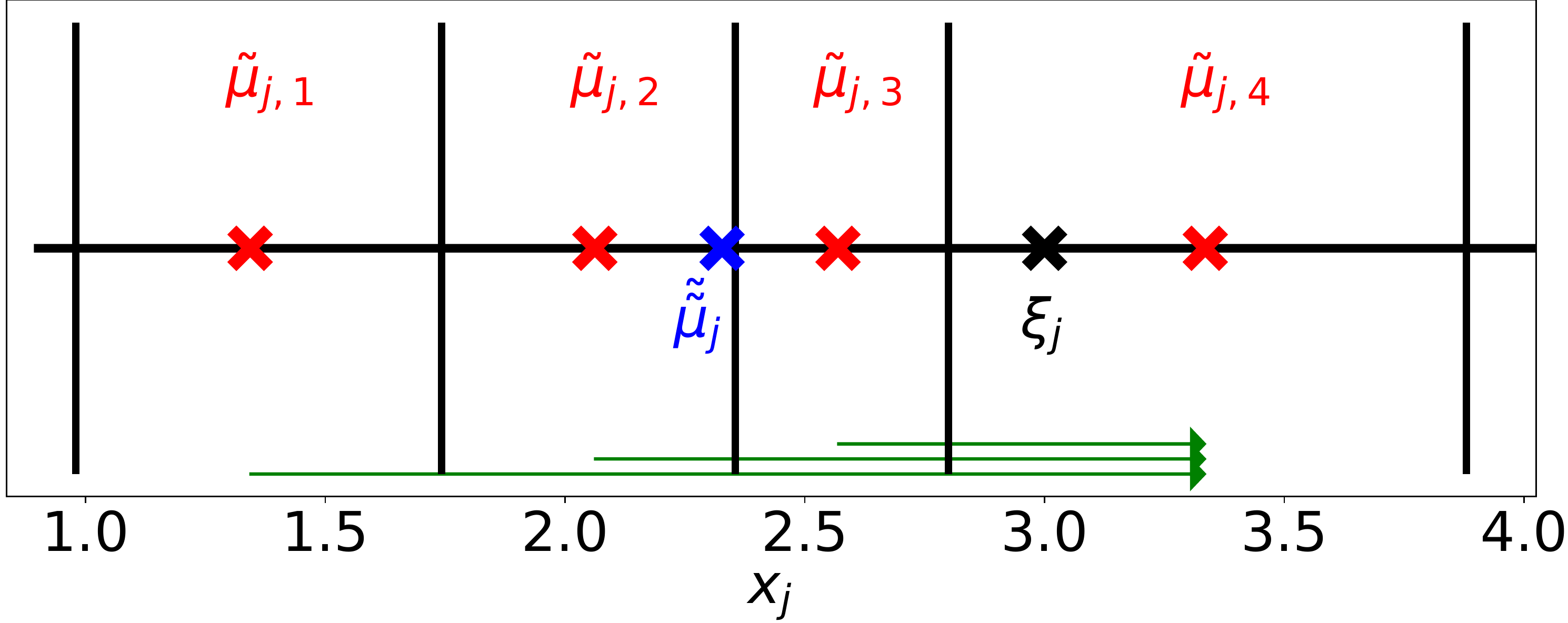}
	\end{center}
	\caption{\label{fig:linear-coefficient-explanation}Computation of $\cprop_j$. Recall that the $\mutilde_{j,b}$ (red crosses) are the averages of $x_j$ on each bin $b$. The bin index of $\xi$ along dimension $j$ is given by $\bxi_j$. The sum of the $\mutilde_{j,b}-\mutilde_{j,\bxi_j}$ is represented by the green arrows, $\cprop_j$ is the average. We report $\muttilde$ in blue (Eq.~\eqref{eq:def:muttilde}). In the top panel, since the bin containing $\xi_j$ is in a central position, the proportionality coefficient is small, whereas it is much larger in the bottom panel. }
\end{figure}

A first observation is that nothing prevents $\cprop_j$ to be zero, leading to \textbf{cancellation of the $j$th interpretable component in the explanation whatever the value of $\lambda_j$ may be.}  
In particular, a modification of the number of bins~$p$ can achieve this cancellation: 
when the bin containing $\xi_j$ along dimension $j$ is in a central position, since the bins are typically of the same size and $\mutilde_{j,b}$ takes a central position in the bin, the values taken by $\cprop_j$ can be very small (see Figure~\ref{fig:linear-coefficient-explanation}). 
More precisely, if 
\[
\mutilde_{j,\bxi_j} = \frac{1}{p-1}\sum_{b\neq \bxi_j} \mutilde_{j,b}
\, ,
\]
then $\beta_j^f=0$. 
Intuitively, along dimension $j$, if the means of the perturbed sample distribution on the bins balance the mean on the bin containing $\xi$, then $\beta^f_j$ vanishes. 
We demonstrate this effect in Figure~\ref{fig:cancellation-linear-default-ls}.
In this specific example, we  considered a uniformly distributed training set on $[-10,10]$. 
Thus the $\mu_{j,b}$ are evenly distributed across $[-10,10]$, as well as the means of the truncated Gaussians $\mutilde_{j,b}$. 
For any $\xi$ in a central position, the following happens:
\begin{itemize}
	\item if $p$ is even, then $\mutilde_{j,\bxi_j}$ and $\frac{1}{p-1}\sum_{b\neq \bxi_j} \mutilde_{j,b}$ are far away (left panel of Figure~\ref{fig:cancellation-linear-default-ls});
	\item if $p$ is odd, then $\mutilde_{j,\bxi_j}$ is in a central position and approximately equal to $\frac{1}{p-1}\sum_{b\neq \bxi_j} \mutilde_{j,b}$. The corresponding coefficient vanishes (right panel of Figure~\ref{fig:cancellation-linear-default-ls}), whatever the value of $\lambda_j$ may be. 
\end{itemize}
We do not see this cancellation phenomenon as a good property: we do not want to erase meaningful feature from the explanation simply by changing the discretization. 

\begin{figure}
	\begin{center}
		\includegraphics[scale=0.3]{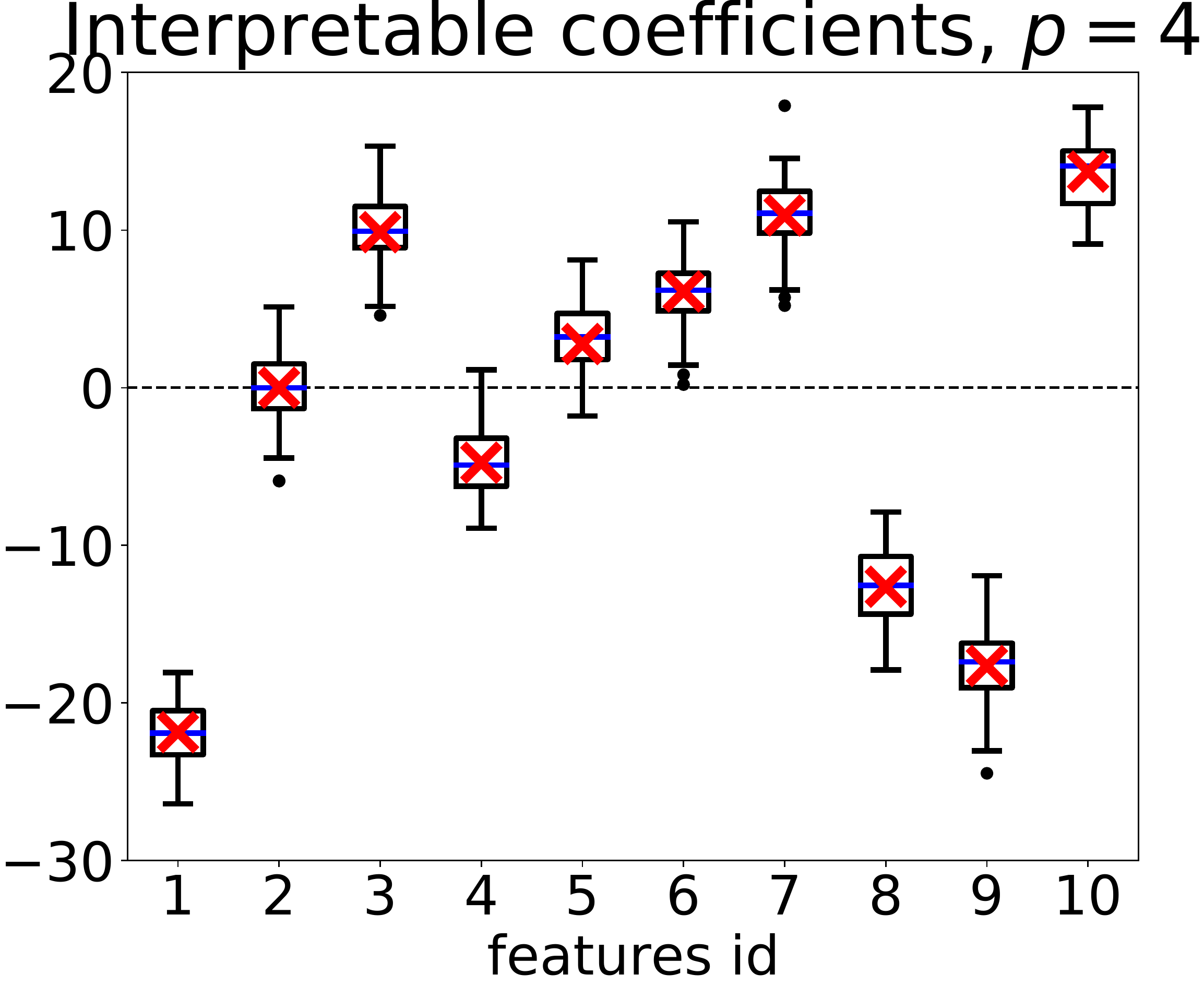}
		\includegraphics[scale=0.3]{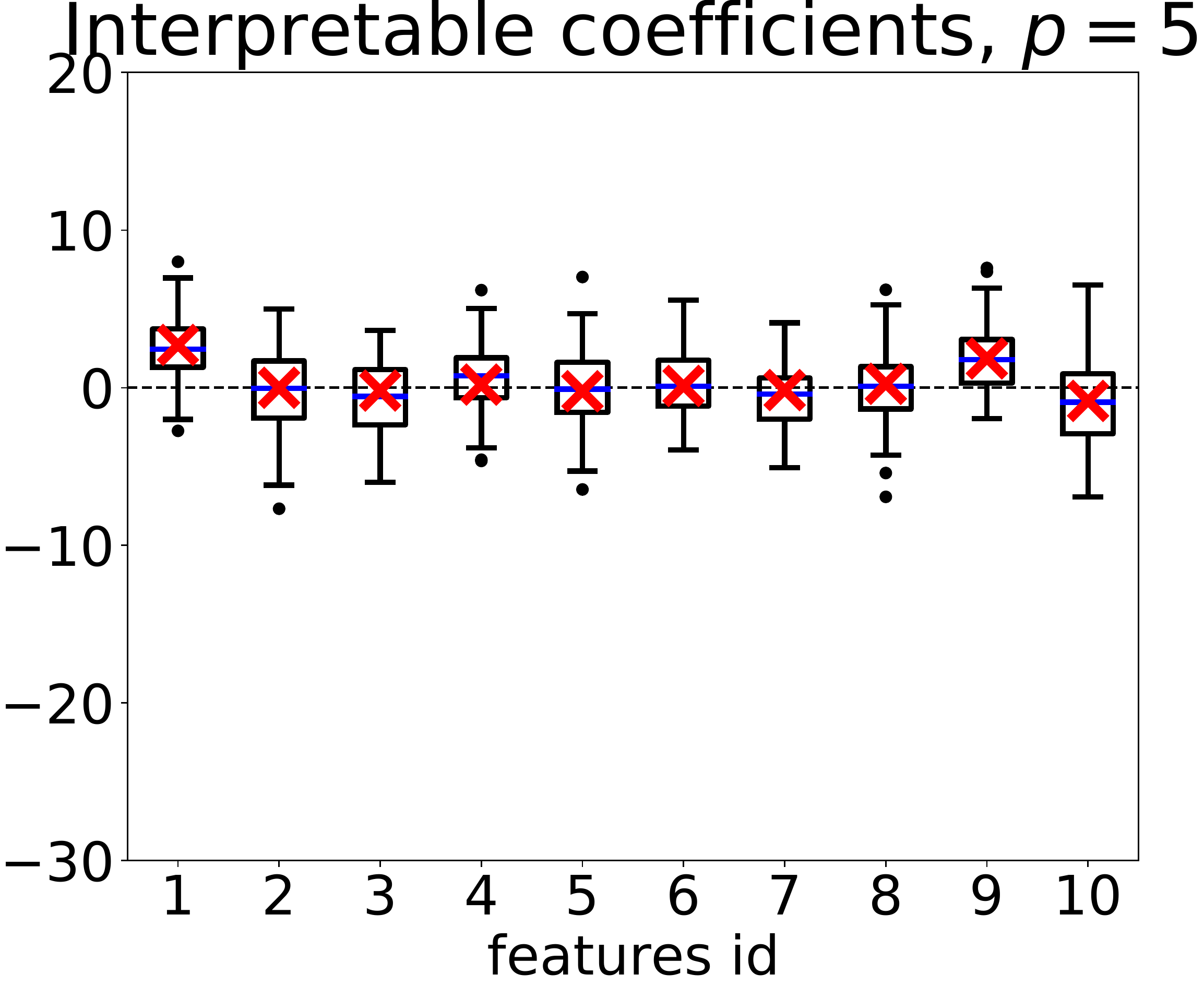}
	\end{center}
	\caption{\label{fig:cancellation-linear-default-ls}Cancellation phenomenon. Explanation given by Tabular LIME for the same linear model and example in dimension $10$. The training set is sampled independently according to a $\uniform{[-10,10]}$ distribution along each coordinate. We choose $n=5\cdot 10^3$, and $\nu=\sqrt{0.75 d}$. In the left panel, we took $p=4$ bins, and $p=5$ bins in the right panel. In red the theoretical values given by Proposition~\ref{prop:beta-computation-additive-f-default-weights}, in black the experimental values ($10^2$ repetitions). We can see that, surprisingly, choosing a different number of bins along each dimension ($5$ instead of $4$) sets the values of \emph{all} interpretable coefficients near zero, both in theory and in practice. }
\end{figure}

It is interesting to note, however, that it is not possible to cancel out the interpretable coefficient by a clever choice of bandwidth in Eq.~\eqref{eq:beta-linear-default-ls}, contrarily to what is done in \citet{Garreau_von_Luxburg_2020}. 
This seems due to the specific setting considered in \citet{Garreau_von_Luxburg_2020}, especially choosing $\mu_{j,b}$ and $\sigma_{j,b}$ not depending on $b$. 
In fact, in the present case, the magnitude of the explanations does not depend on the bandwidth at all. 

Let us conclude with the inspection of $\cprop_j$ by considering $p$ and $\nu$ fixed and letting $\xi_j$ vary. 
In that case, when the boundary of a bin on feature $j$ is crossed, the value of the proportionality coefficient may change sign. 
This can provoke non-monotone behaviors, and therefore the \emph{demand monotonicity} axiom of \citet{Sundararajan_Najmi_2020} is not satisfied. 

\medskip

We conclude this section by presenting the proof of Corollary~\ref{cor:beta-computation-liner-default-weights}, in order to show how to use Theorem~\ref{th:betahat-concentration-general-f-default-weights} in specific cases. 
It is a direct application of Proposition~\ref{prop:beta-computation-additive-f-default-weights}. 

\begin{proof}
Reading Proposition~\ref{prop:beta-computation-additive-f-default-weights}, we need to compute $\ew_{j,b}^{f_j}$ for all $1\leq j\leq d$ and $1\leq b\leq p$. 
By linearity, all that is required is the computation of $\ew_{j,b}^\times$, which is equal to $\ew_{j,b}^\times=\mutilde_{j,b}$ if $b=\bxi_j$ and $\exps{\frac{-1}{2\nu^2}}\mutilde_{j,b}$ otherwise by definition of $\mutilde$. 
We deduce that $\littlec_j^\times = \mutilde_{j,\bxi_j} + \sum_{b\neq \bxi_j} \mutilde_{j,b}$, and $\muttilde_j = \frac{\littlec_j^{f_j}}{\littlec}$ by definition of $\muttilde$ (Eq.~\eqref{eq:def:muttilde}). 
Following Proposition~\ref{prop:beta-computation-additive-f-default-weights}, we obtain
\[
\beta^f_0 = f(\muttilde) - \frac{1}{p\littlec - 1}\sum_{j=1}^{d}  (\mutilde_{j,\bxi_j} - \muttilde_j) \lambda_j
\, ,
\]
and, for any $1\leq j\leq d$, 
\[
\beta^f_j = \frac{p\littlec}{p\littlec - 1}(\mutilde_{j,\bxi_j} - \muttilde_j) \lambda_j
\, .
\]
We simplify further this last display using 
\begin{align*}
\muttilde_j - \mutilde_{j,\bxi_j} &= \frac{\mutilde_{j,\bxi_j} + \sum_{b\neq \bxi_j} \exps{\frac{-1}{2\nu^2}} \mutilde_{j,b}}{1 + (p-1)\exps{\frac{-1}{2\nu^2}}} - \mutilde_{j,\bxi_j} \tag{definition of $\muttilde$} \\
&= \frac{\sum_{b\neq \bxi_j} \exps{\frac{-1}{2\nu^2}} (\mutilde_{j,b} - \mutilde_{j,\bxi_j})}{1+(p-1)\exps{\frac{-1}{2\nu^2}}}
\, .
\end{align*}
The result follows after some algebra. 
\end{proof}


\subsection{Multiplicative models, including indicator functions and Gaussian kernels}
\label{section:multiplicative}

In this section, we turn to the study of functions $f$ that can be written as a product of functions where each term depends only on one coordinate. 
Namely, we now make the following assumption on $f$:
\begin{assumption}[Multiplicative function]
	\label{ass:multiplicative}
	We say that $f$ is \emph{multiplicative} if there exist functions $f_1,\ldots,f_d:\Reals\to\Reals$ such that
	\[
	\forall x\in\Reals^d, \quad f(x) = \prod_{j=1}^{d} f_j(x_j)
	\, .
	\]
\end{assumption}

In this case, as promised, we also can be more explicit in the statement of Theorem~\ref{th:betahat-concentration-general-f-default-weights}. 
Before stating our result, let us recall that we set $\ew_{j,b}^{f_j}$ to be the weighted expectation of $f_j$ on bin $b$ along dimension $j$ (Eq.~\eqref{eq:def-ew-default-weights}) and that $\littlec_{j}$ and $\littlec_{j}^{f_j}$ are normalization constants (Eqs.~\eqref{eq:def-little-cj-psi}). 

\begin{myproposition}[Computation of $\beta^f$ for a multiplicative function]
	\label{prop:beta-computation-multiplicative-default}
	Assume that~$f$ satisfies Assumption~\ref{ass:multiplicative} and suppose that the assumptions of Theorem~\ref{th:betahat-concentration-general-f-default-weights} hold (in particular, for each $1\leq j\leq d$, $f_j$ is bounded on $[q_{j,0},q_{j,p}]$). 
	Set $\xi\in \supp$. 
	Then Theorem~\ref{th:betahat-concentration-general-f-default-weights} holds and, for any $1\leq j\leq d$,
\[
\beta^f_j = \frac{\prod_{k=1}^{d}\littlec_k^{f_k}}{\bigc} \cdot \frac{p\littlec}{p\littlec - 1} \left( \ew_{j,\bxi_j}^{f_j}\cdot \frac{\littlec}{\littlec_j^{f_j}} - 1\right)
\, .
\]
Moreover, the intercept is given by
	\[
	\beta^f_0 = \frac{\prod_{k=1}^{d}\littlec_k^{f_k}}{\bigc} \left\{1 + \sum_{j=1}^{d} \frac{1}{p\littlec-1} \left(1 - \ew_{j,\bxi_j}^{f_j}\cdot \frac{\littlec}{\littlec_j^{f_j}}\right)\right\}
	\, .
	\]
\end{myproposition}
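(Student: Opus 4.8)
The plan is to start from the explicit expressions for $\beta^f_j$ and $\beta^f_0$ furnished by Theorem~\ref{th:betahat-concentration-general-f-default-weights}, which reduce the problem to evaluating the two expectations $\expec{\pi f(x)}$ and $\expec{\pi z_j f(x)}$ when $f$ has the multiplicative structure of Assumption~\ref{ass:multiplicative}. The crucial observation is that the default weight factorizes across coordinates: since each $z_k\in\{0,1\}$, we have $(1-z_k)^2=\indic{b_k\neq\bxi_k}$ and therefore $\pi=\prod_{k=1}^d\exps{\frac{-(1-z_k)^2}{2\nu^2}}$. Combined with $f(x)=\prod_{k=1}^d f_k(x_k)$ and the feature-wise independence of the sampling (Section~\ref{section:sampling-procedure}), every expectation we must compute splits into a product of one-dimensional expectations, each of which is exactly the building block encoded by the $\ew_{j,b}^\psi$ and $\littlec_j^\psi$ notation.

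First I would compute $\expec{\pi f(x)}$. By independence it equals $\prod_{k=1}^d \expec{\exps{\frac{-(1-z_k)^2}{2\nu^2}}f_k(x_k)}$, and conditioning on the bin index $b_k$ (uniform on $\{1,\ldots,p\}$) the law of total expectation turns each factor into $\frac{1}{p}\sum_{b=1}^p \ew_{k,b}^{f_k}=\littlec_k^{f_k}$ by Eqs.~\eqref{eq:def-ew-default-weights} and~\eqref{eq:def-little-cj-psi}. Hence $\expec{\pi f(x)}=\prod_{k=1}^d \littlec_k^{f_k}$. Next I would compute $\expec{\pi z_j f(x)}$: independence isolates the $j$-th factor, giving $\expec{z_j\exps{\frac{-(1-z_j)^2}{2\nu^2}}f_j(x_j)}\prod_{k\neq j}\littlec_k^{f_k}$. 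The one subtlety is the $j$-th factor, where I would use the identity $z_j\exps{\frac{-(1-z_j)^2}{2\nu^2}}=z_j$ (on $\{z_j=1\}$ the exponential is one, on $\{z_j=0\}$ the factor $z_j$ kills the term). Thus the $j$-th factor reduces to $\expec{\indic{b_j=\bxi_j}f_j(x_j)}=\frac{1}{p}\condexpec{f_j(x_j)}{b_j=\bxi_j}=\frac{1}{p}\ew_{j,\bxi_j}^{f_j}$, the last equality holding because conditioning on $b_j=\bxi_j$ makes the weight trivial as well. This yields $\expec{\pi z_j f(x)}=\frac{1}{p}\ew_{j,\bxi_j}^{f_j}\prod_{k\neq j}\littlec_k^{f_k}$.

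Finally I would substitute these two expressions into Eqs.~\eqref{eq:coefs-general-f-default-weights} and~\eqref{eq:intercept-general-f-default-weights}, rewriting the prefactor $\littlec^{-d}$ as $\bigc^{-1}$ and factoring out $\prod_{k=1}^d\littlec_k^{f_k}/\bigc$. Writing $\prod_k\littlec_k^{f_k}=\littlec_j^{f_j}\prod_{k\neq j}\littlec_k^{f_k}$ collapses the two terms of $\beta^f_j$ into the factor $\ew_{j,\bxi_j}^{f_j}\cdot\frac{\littlec}{\littlec_j^{f_j}}-1$, while for the intercept the regrouping $\frac{d}{p\littlec-1}=\sum_{j=1}^d\frac{1}{p\littlec-1}$ pairs the additive constant with the sum and produces the stated form. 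I do not expect a genuine obstacle here: the mathematical content is entirely the factorization via independence together with the identity $z_j\exps{\frac{-(1-z_j)^2}{2\nu^2}}=z_j$, and the remainder is bookkeeping. The only point requiring care is to keep the $j$-th normalization constant $\littlec_j^{f_j}$ separate from the product over $k\neq j$, so that the ratio $\littlec/\littlec_j^{f_j}$ emerges cleanly rather than being absorbed into $\prod_k\littlec_k^{f_k}$.
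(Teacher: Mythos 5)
Your proposal is correct and follows essentially the same route as the paper: the paper also reduces the claim to the two expectations $\expec{\pi f(x)}$ and $\expec{\pi z_j f(x)}$, factorizes them across coordinates using the multiplicative structure of $f$, the product form of the weights, and feature-wise independence, obtaining exactly your values $\prod_{k}\littlec_k^{f_k}$ and $\frac{1}{p}\ew_{j,\bxi_j}^{f_j}\prod_{k\neq j}\littlec_k^{f_k}$ before concluding by algebra. The only cosmetic difference is that the paper proves this for general weights (Lemma~\ref{lemma:beta-computation-multiplicative-f-general-weights}, via Lemma~\ref{lemma:base-computation}) and then specializes $\ew_{j,\bxi_j}=1$, $\littlec_j=\littlec$, $\bigc=\littlec^d$, whereas you work directly with the default weights, your identity $z_j\exps{\frac{-(1-z_j)^2}{2\nu^2}}=z_j$ playing the role of the general conditioning argument.
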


Proposition~\ref{prop:beta-computation-multiplicative-default} is a consequence of Lemma~\ref{lemma:beta-computation-multiplicative-f-general-weights} in the Appendix (which is true for more general weights). 

As in the additive case, we can see that (i) $f_j$ only comes into play for the value of $\beta_j^f$, up to a multiplicative constant common to all interpretable coefficients; (ii) unused coordinates are ignored in the explanation. 
Let us give a bit more details regarding the second point and let us assume that for a certain index $j$, the function $f$ does not depend on the $j$th coordinate. 
In other words, $f_j(x)=\kappa$ for any $x\in\Reals$ where $\kappa$ is a constant. 
Then, by definition, $\ew_{j,\bxi_j}^{f_j} = \kappa\ew_{j,\bxi_j}$ and $\littlec_j^{f_j}=\kappa\littlec_j$. 
It follows from Proposition~\ref{prop:beta-computation-multiplicative-default} that $\beta^f_j=0$. 
For a multiplicative $f$, Tabular LIME provably ignores unused coordinates.
This is also true for more general weights (see Lemma~\ref{lemma:beta-computation-multiplicative-f-general-weights} in the Appendix). 

We now give two fundamental examples in which multiplicative functions are the building block of the model: tree regressors when with splits parrallel to the axis and kernel methods using the Gaussian kernel.


\subsubsection{Indicator functions and partition-based models}

Partition-based models split the feature space and then fit a simple model on each element of the partition.
Simply put, once trained, such models outputs predictions according to
\begin{equation}
\label{eq:def-cart}
f(x) = \sum_{A\in\partition} f(A) \indic{x\in A}
\, ,
\end{equation}
where we set $f(A)$ the value of $f$ on~$A$. 
In particular, these partition-based rules include tree-based methods (see Section~9.2 in \citet{Hastie_Tibshirani_Friedman_2001} for an introduction). 
Random trees are the building bricks of very popular regression algorithms such as random forests \citep{Breiman_2001}, which are considered as one of the most successful general-purpose algorithms in modern-times \citep{Biau_Scornet_2016}. 
In this section, we want to answer the following question: 
Do the explanations provided by Tabular LIME make sense when $f$ is a partition-based model? 
We will focus on, CART trees \citep{Breiman_et_al_1984}, but most of the discussion remains true if the elements of $\partition$ are \emph{rectangles}.

Without loss of generality, we will assume from now on that \emph{$A$ is included into the bins of Tabular LIME onto each dimension.} 
Indeed, instead of $\partition$, one can always consider the partition formed by the intersection between $\partition$ and the LIME grid. 
Formally, for any $1\leq j\leq d$, if we call $A_j$ the projection of $A$ in the $j$-th dimension, then there exist $1\leq b\leq p$ such that
\[
A_j \subseteq [q_{j,b-1},q_{j,b}] \eqdef \Abar_j
\, .
\]
Thus for any $A\in\partition$, there exists a rectangle $\Abar$ such that the $\Abar_j$ are the bins. 

By linearity, if we want to understand how Tabular LIME produces interpretable coefficients for $f$, we only need to understand how Tabular LIME produces interpretable coefficients for the function $\indic{A} : x\mapsto \indic{x\in A}$ for any $A\in\partition$. 
The explanation for $f$ will simply be the (weighted) sum of the explanations. 

We now make the fundamental observation that \textbf{$\indic{A}$ is a multiplicative function whenever $A$ is a rectangles with faces parallel to the axes.}  
Indeed, in this case we can write $A=\prod_{j=1}^d A_j$, and therefore $\indic{x\in A} = \prod_{j=1}^{d} \indic{x_j\in A_j}$. 
Hence we can apply Proposition~\ref{prop:beta-computation-multiplicative-default} to the function $\indic{A}$. 
Before giving our result for indicator functions, we need to introduce two important concepts. 
\begin{mydefinition}[Relative importance]
\label{def:relative-importance}
Let $A\subseteq \Reals^d$ be a rectangle such that $A$ is included in a $d$-dimensional bin $\Abar$ given to LIME. 
In other words, we assume that $A=\prod_{j=1}^d[s_j,t_j]$ and that, for each $1\leq j\leq d$, there exist a $b$ such that $[s_j,t_j]\subseteq [q_{j,b-1},q_{j,b}]$. 
We define the \emph{relative importance} of $A$ by
\[
\relimp{A} \defeq 
\prod_{j=1}^d \frac{\Phi\left(\frac{t_j -\mu_{j,b}}{\sigma_{j,b}}\right) - \Phi\left(\frac{s_j-\mu_{j,b}}{\sigma_{j,b}}\right)}{\Phi\left(\frac{q_{j,b}-\mu_{j,b}}{\sigma_{j,b}}\right) - \Phi\left(\frac{q_{j,b-1}-\mu_{j,b}}{\sigma_{j,b}}\right)}
\, .
\]
\end{mydefinition}
Intuitively, $\relimp{A}$ is the mass given to $A$ by the truncated Gaussian measure normalized by the volume of $\Abar$. 
One can check that a first order approximation of $\relimp{A}$ is given by $\volume{A}/\volume{\Abar}$ by setting $\Phi(x)\approx \frac{1}{2}+\frac{1}{\sqrt{2\pi}}x$. 
If $A$ is very small with respect to $\Abar$ or is located in a low density region of $\Abar$, then $\relimp{A} = 0$. 
On the contrary, if $A$ is large with respect to $\Abar$ or is located in a high density region of $\Abar$, then $\relimp{A}=1$. 

We now present a notion of distance between $\xi$ and $A$. 
\begin{mydefinition}[Bin distance]
\label{def:bin-distance}
Let $A\subseteq\Reals^d$ and $\Abar$ be a rectangle of the LIME grid such that $A\subseteq \Abar$. 
Let $\xi$ be a point in $\supp$. 
We define the \emph{bin distance} between $\xi$ and $A$ by
\[
\dist{\xi}{A} \defeq \card{\{k \in \{1,\ldots,d\}\text{ such that } \xi_k \notin \Abar_k\}}
\, ,
\]
where $\card{E}$ denotes the cardinality of the finite set $E$. 
\end{mydefinition}
To put it simply, $\dist{\xi}{A}$ counts in how many directions the projection of $\xi$ does not fall into the projection of $\Abar$. 
If $\xi$ is in the same $d$-dimensional bin as $A$, then $\dist{\xi}{A}=0$. 
On the contrary, if $\xi$ is not aligned with $A$ in any given direction, then $\dist{\xi}{A}=d$. 
Note that $\dist{\xi}{A}$ can be high without the Euclidean distance between $\xi$ and $A$ being high. 

We are now able to give the expression of $\beta^A$ which is short for $\beta^{\indic{A}}$. 
\begin{myproposition}[Computation of $\beta^f$ for an indicator function]
\label{prop:beta-computation-indicator}
Let $A$ be a rectangle with faces parallel to the axes such that $A\subseteq \Abar$, where $\Abar$ is a $d$-dimensional bin given as input to Tabular LIME. 
Then Theorem~\ref{th:betahat-concentration-general-f-default-weights} holds for $\indic{A}$ and for any $1\leq j\leq d$,
\begin{itemize}
\item for all $j\in [d]$ such that $\xi_j\in\Abar_j$, 
\[
\beta_j^A = \frac{1}{p^{d-1}\littlec^{d-1}} \relimp{A}\exp{\frac{-\dist{\xi}{A}}{2\nu^2}}
\, ;
\]
\item for all $1\leq j\leq d$ such that $\xi_j\notin\Abar_j$,
\[
\beta_j^A = 
\frac{-1}{(p-1)p^{d-1}\littlec^{d-1}}  \relimp{A}\exp{\frac{-\dist{\xi}{A}+1}{2\nu^2}}
\, .
\]
\end{itemize} 
The intercept is given by 
\[
\beta_0^A = \frac{1}{p^d\littlec^d} \relimp{A}\exp{\frac{-\dist{\xi}{A}}{2\nu^2}} - \frac{1}{p\littlec} \sum_{j=1}^d \beta_j^A
\, .
\]
\end{myproposition}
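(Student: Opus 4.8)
The plan is to exploit the fact that $\indic{A}$ is a multiplicative function and invoke Proposition~\ref{prop:beta-computation-multiplicative-default} directly. Writing $A=\prod_{j=1}^d[s_j,t_j]$, we have $\indic{A}(x)=\prod_{j=1}^d f_j(x_j)$ with $f_j=\indic{[s_j,t_j]}$, so the whole computation reduces to evaluating the building blocks $\ew_{j,b}^{f_j}$ and $\littlec_j^{f_j}$, forming the product $\prod_k\littlec_k^{f_k}$ (recall $\bigc=\littlec^d$), substituting into the proposition, and simplifying.

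First I would compute $\ew_{j,b}^{f_j}$. Conditionally on $\{b_j=b\}$ the variable $x_j$ is the truncated Gaussian of Eq.~\eqref{eq:tn-density} supported on $[q_{j,b-1},q_{j,b}]$, so $\condexpec{f_j(x_j)}{b_j=b}=\condproba{x_j\in[s_j,t_j]}{b_j=b}$. Because $A_j$ lies inside a single bin $\Abar_j$ (whose index I denote $b^A_j$), this conditional probability vanishes for every $b\neq b^A_j$ and, when $b=b^A_j$, equals exactly the $j$-th factor of $\relimp{A}$ by Definition~\ref{def:relative-importance}. The exponential weight factor $\exps{-(1-z_j)^2/(2\nu^2)}$ is $1$ if $b=\bxi_j$ and $\exps{-1/(2\nu^2)}$ otherwise. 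Combining the two observations, $\ew_{j,b}^{f_j}$ is supported on the single bin $b^A_j$, where it equals the $j$-th factor of $\relimp{A}$ if $\xi_j\in\Abar_j$ (equivalently $b^A_j=\bxi_j$) and that same factor times $\exps{-1/(2\nu^2)}$ if $\xi_j\notin\Abar_j$. Hence $\littlec_j^{f_j}=\frac{1}{p}\ew_{j,b^A_j}^{f_j}$, and taking the product over $k$ collapses the per-coordinate relative-importance factors into $\relimp{A}$ while the exponential factors multiply to $\exps{-\dist{\xi}{A}/(2\nu^2)}$, since by Definition~\ref{def:bin-distance} precisely $\dist{\xi}{A}$ coordinates satisfy $\xi_k\notin\Abar_k$. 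Writing $P$ for the resulting leading factor $\prod_k\littlec_k^{f_k}/\bigc=p^{-d}\littlec^{-d}\relimp{A}\exps{-\dist{\xi}{A}/(2\nu^2)}$, we recognize the prefactor appearing in all three claimed formulas.

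Finally I would substitute into the $\beta_j^f$ expression of Proposition~\ref{prop:beta-computation-multiplicative-default}, splitting on the two cases. When $\xi_j\in\Abar_j$, both $\ew_{j,\bxi_j}^{f_j}$ and $\littlec_j^{f_j}$ come from the bin $b^A_j$, so $\ew_{j,\bxi_j}^{f_j}\littlec/\littlec_j^{f_j}=p\littlec$, the parenthesis becomes $p\littlec-1$ and cancels the denominator, leaving $\beta_j^A=P\cdot p\littlec=p^{-(d-1)}\littlec^{-(d-1)}\relimp{A}\exps{-\dist{\xi}{A}/(2\nu^2)}$. When $\xi_j\notin\Abar_j$, the bin $\bxi_j$ does not meet $A_j$, hence $\ew_{j,\bxi_j}^{f_j}=0$ and the parenthesis reduces to $-1$, giving $\beta_j^A=-P\cdot p\littlec/(p\littlec-1)$. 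The main subtlety is matching this to the stated exponent $-\dist{\xi}{A}+1$: using the identity $p\littlec-1=(p-1)\exps{-1/(2\nu^2)}$, which is immediate from Eq.~\eqref{eq:def-little-c}, the factor $\exps{-1/(2\nu^2)}$ in the denominator becomes $\exps{1/(2\nu^2)}$ in the numerator and $p\littlec-1$ becomes $p-1$, yielding the claimed form. For the intercept I would avoid expanding the bracket of the proposition and instead note that each summand $\frac{P}{p\littlec-1}\bigl(1-\ew_{j,\bxi_j}^{f_j}\littlec/\littlec_j^{f_j}\bigr)$ is exactly $-\frac{1}{p\littlec}\beta_j^A$, so that the sum telescopes into $-\frac{1}{p\littlec}\sum_j\beta_j^A$ and the leading $P$ supplies the first term, producing the stated intercept at once. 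The only real bookkeeping obstacle throughout is keeping separate the two roles of the distinguished bin—$b^A_j$ carries the support of the indicator while $\bxi_j$ governs the weight—and it is their (non)coincidence that generates the dichotomy in the statement.
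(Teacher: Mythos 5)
Your proposal is correct and follows essentially the same route as the paper's proof: both exploit the multiplicative structure of $\indic{A}$, compute the coefficients $\ew_{j,b}^{f_j}$ via the truncated-Gaussian conditional probabilities (recognizing the per-coordinate factors of $\relimp{A}$ and the weight factor $\exps{-1/(2\nu^2)}$), take the product to obtain the common prefactor $\frac{\relimp{A}}{p^d\littlec^d}\exp{\frac{-\dist{\xi}{A}}{2\nu^2}}$, split on whether $\bxi_j$ coincides with the bin supporting $A_j$, and conclude with Proposition~\ref{prop:beta-computation-multiplicative-default}. Your case-by-case evaluation of $\ew_{j,\bxi_j}^{f_j}\littlec/\littlec_j^{f_j}$ (equal to $p\littlec$ or $0$) and your telescoping treatment of the intercept are algebraically equivalent to the paper's intermediate identities, just packaged slightly more compactly.
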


As a direct consequence of Proposition~\ref{prop:beta-computation-indicator}, we can see that the interpretable coefficients are small when $\xi$ (the example to explain) is far away from $A$ (the support of the function to explain).  
We believe that this is a desirable property: we want the explanations to be local, and in this case the function is locally completely flat. 
Therefore we would like to attribute $0$ to each feature (except in the bin containing the hyperrectangle $A$).  
Note however that there is a notable exception: \textbf{whenever $\xi_j$ is aligned with $A$ along coordinate $j$, then $\beta_j^A$ can be large even though $\xi$ is far away from $A$.} 
We refer to Figure~\ref{fig:2D-indicator} for an illustration of this phenomenon. 
In particular, we can visualize artifacts on the bins that are aligned with $A$ along the axes.

\begin{figure}
	\begin{center}
		\includegraphics[scale=0.3]{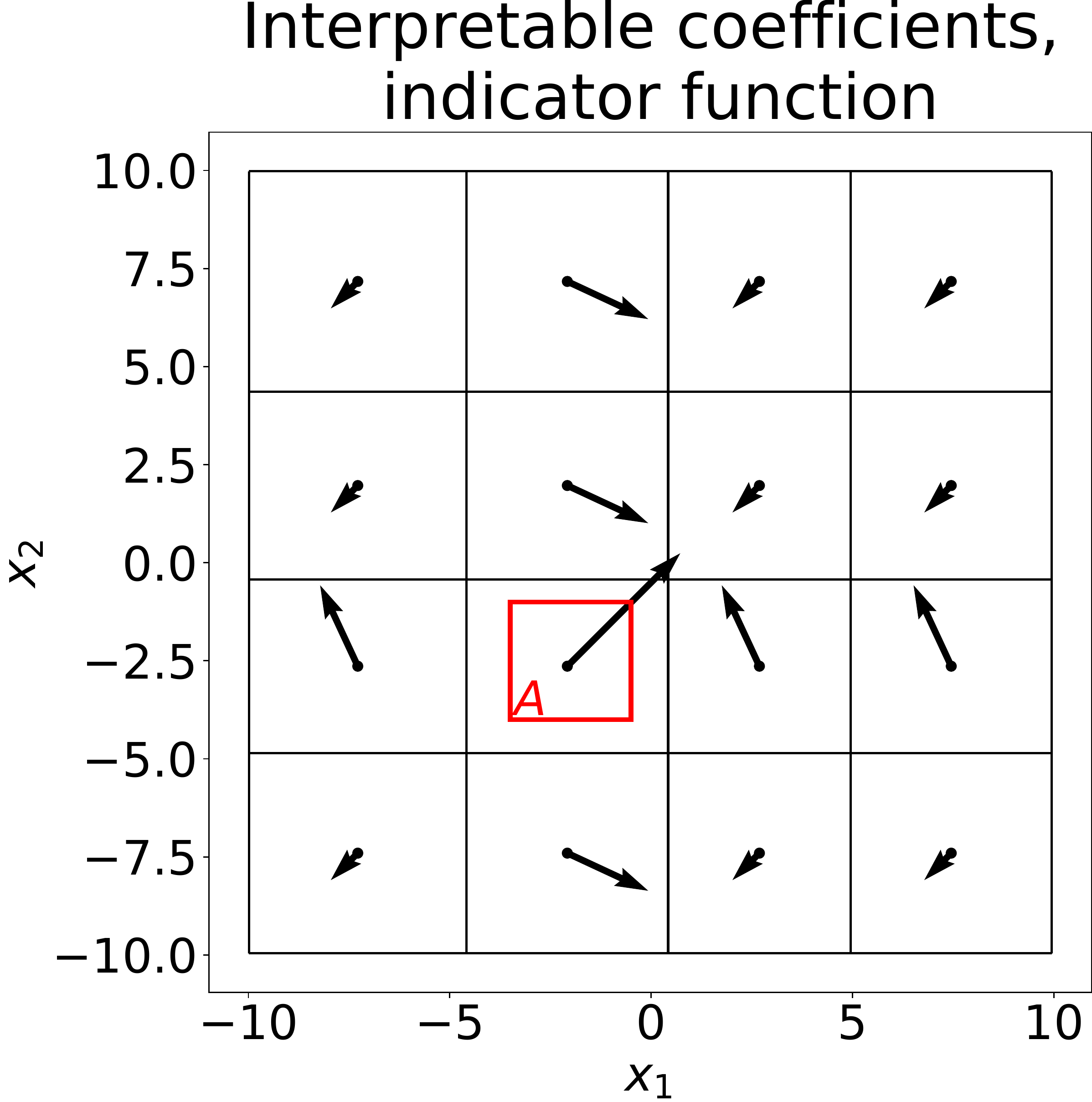}
		\includegraphics[scale=0.3]{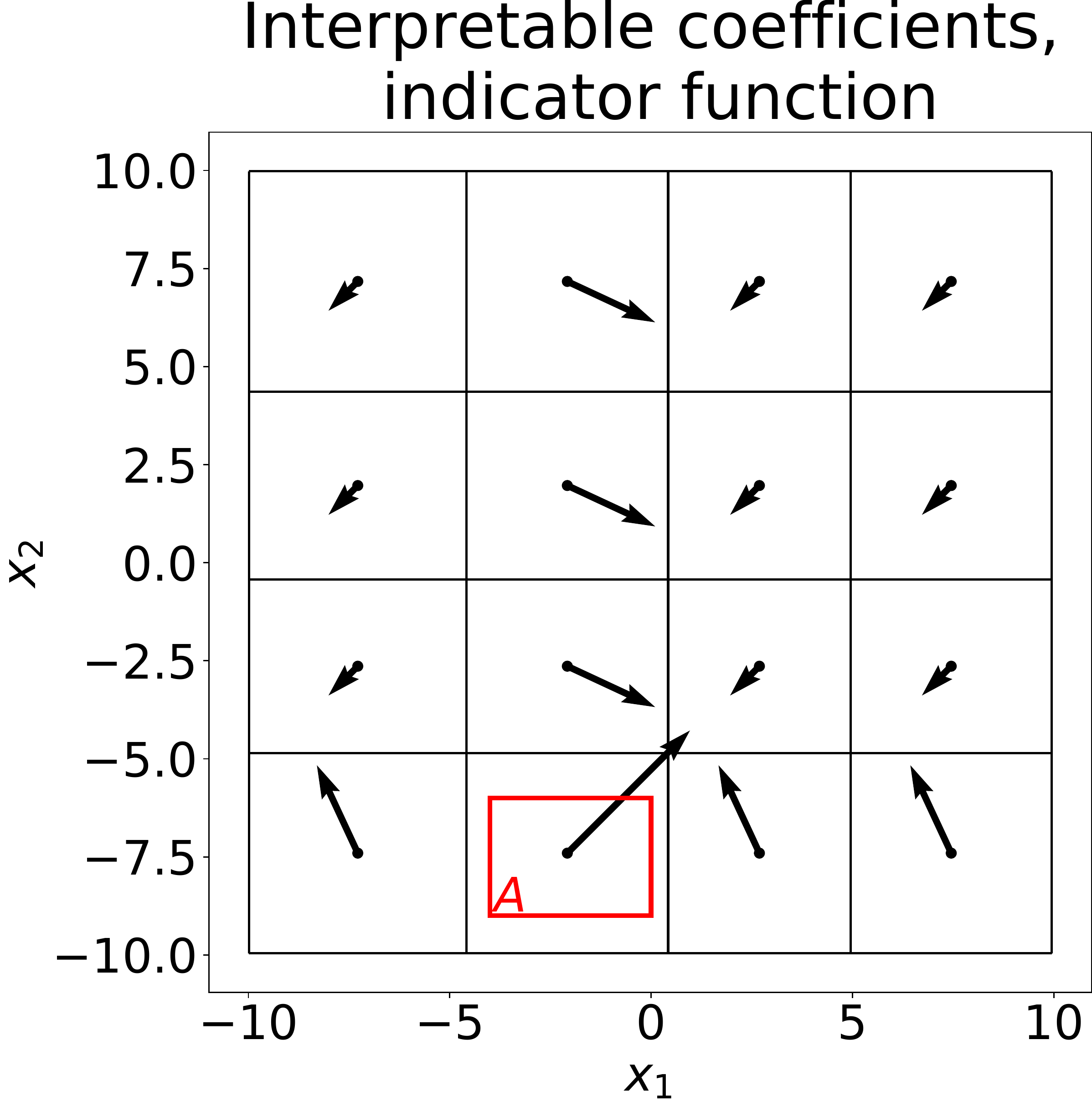}
	\end{center}
	\caption{\label{fig:2D-indicator}Explanations provided by Tabular LIME (default settings) for $\indic{A}$ in dimension $2$. In red, the rectangle $A$. 
	The training data is a uniform sample on $[-10,10]^2$. 
The grid corresponds to the discretization of the input space. 
	For each $2$-dimensional bin, we compute the explanations given by Tabular LIME at the central point. 
	The black arrows correspond to the vectors $(\beta^A_1,\beta^A_2)$ (not to scale): for instance, a large arrow pointing north-east means that both $\beta^A_1$ and $\beta^A_2$ take large positive values at $\xi$, the base-point of the arrow.   
}
\end{figure}

%
\begin{mycorollary}[Computation of $\beta^f$ for partition-based model]
\label{cor:beta-computation-tree}
Let $\xi\in\supp$. 
Let $f$ be a partition-based model defined as in Eq.~\eqref{eq:def-cart}. 
Assume that each $A\in\partition$ is a rectangle with faces parallel to the axes and is included in a $d$-dimensional bin of Tabular LIME. 
Then, for any $1\leq j\leq d$, 
\begin{equation}
\label{eq:beta-tree}
\beta_j^f = \frac{1}{p^{d-1}c^{d-1}}\sum_{\substack{A\in\partition \\ \xi_j\in \Abar_j}} f(A) \relimp{A}\exps{\frac{-\dist{\xi}{A}}{2\nu^2}}
 - \frac{1}{(p-1)p^{d-1}\littlec^{d-1}}\sum_{\substack{A\in\partition \\ \xi_j\notin \Abar_j}} f(A)\relimp{A}\exps{\frac{-\dist{\xi}{A}+1}{2\nu^2}}
 \, .
\end{equation}
\end{mycorollary}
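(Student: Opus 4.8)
The plan is to reduce everything to the indicator case already handled by Proposition~\ref{prop:beta-computation-indicator} and then invoke the linearity of the map $f\mapsto\beta^f$. First I would write the partition-based model in the form $f=\sum_{A\in\partition}f(A)\,\indic{A}$ as in Eq.~\eqref{eq:def-cart}, where $\indic{A}:x\mapsto\indic{x\in A}$ and $f(A)\in\Reals$ is the constant value of $f$ on the cell $A$. Since $\partition$ is finite and each piece of $f$ is bounded, $f$ is bounded on $\supp$, so Assumption~\ref{assump:bounded} holds and Theorem~\ref{th:betahat-concentration-general-f-default-weights} applies to $f$.

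The key step is the linearity identity Eq.~\eqref{eq:linearity-limit-explanation}, extended by a trivial induction from two summands to any finite sum: applied to the decomposition above it yields
\[
\beta^f = \sum_{A\in\partition} f(A)\,\beta^{\indic{A}} = \sum_{A\in\partition} f(A)\,\beta^A ,
\]
so it suffices to insert the closed form for each $\beta^A_j$. By the standing convention (justified in the text by refining $\partition$ against the LIME grid), every $A\in\partition$ satisfies $A\subseteq\Abar$ for some $d$-dimensional bin $\Abar$, hence Proposition~\ref{prop:beta-computation-indicator} gives $\beta^A_j$ explicitly. The remaining task is bookkeeping: for each $A$ the value of $\beta^A_j$ is one of two expressions according to whether $\xi_j\in\Abar_j$ or $\xi_j\notin\Abar_j$. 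Grouping the terms of $\sum_A f(A)\beta^A_j$ along this dichotomy produces exactly the two sums in Eq.~\eqref{eq:beta-tree}, the factor $\frac{1}{p^{d-1}\littlec^{d-1}}\relimp{A}\exp{\frac{-\dist{\xi}{A}}{2\nu^2}}$ coming from the aligned case and $\frac{-1}{(p-1)p^{d-1}\littlec^{d-1}}\relimp{A}\exp{\frac{-\dist{\xi}{A}+1}{2\nu^2}}$ from the non-aligned case.

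There is essentially no hard analytic step here: all the real work was done in Proposition~\ref{prop:beta-computation-indicator}, and the corollary is a purely formal consequence. The only points that require care are (i) checking that the linearity of $\beta^{\cdot}$, stated in Eq.~\eqref{eq:linearity-limit-explanation} for two functions, extends to a finite sum, which is immediate since $\beta^{\cdot}$ is assembled from the expectations $\smallexpec{\pi f(x)}$ and $\smallexpec{\pi z_j f(x)}$ that are themselves linear in $f$; and (ii) ensuring that the reduction $A\subseteq\Abar$ is without loss of generality, which follows by replacing $\partition$ with its common refinement against the LIME grid, a step that leaves $f$ unchanged because $f$ is already constant on each $A$ and the refinement merely subdivides the cells. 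With these two observations in place the substitution is routine and Eq.~\eqref{eq:beta-tree} follows, so the main obstacle is organizational rather than analytic.
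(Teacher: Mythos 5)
Your proposal is correct and follows exactly the route the paper takes: decompose $f=\sum_{A\in\partition}f(A)\indic{A}$, invoke the linearity $\beta^{f+g}=\beta^f+\beta^g$ (extended to finite sums), apply Proposition~\ref{prop:beta-computation-indicator} to each $\indic{A}$ after the WLOG refinement of $\partition$ against the LIME grid, and split the resulting sum according to whether $\xi_j\in\Abar_j$. The paper treats the corollary as precisely this immediate consequence, so there is nothing to add.
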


Essentially, the explanation for $\xi$ is the difference between two weighted averages: the average value of $f$ ``above'' $\xi_j$ to the left and the average value of $f$ everywhere else to the right. 
The weights of each term are the product of the relative importance of the rectangle and an exponential decay driven by the bin distance between $\xi$ and $A$. 
This is maybe more visible when $\nu$ is large. 
In that case, Eq.~\eqref{eq:beta-tree} becomes
\[
\beta_j^f = \frac{1}{p^{d-1}}\sum_{\substack{A\in\partition \\ \xi_j\in \Abar_j}} f(A) \relimp{A} - \frac{1}{(p-1)p^{d-1}}\sum_{\substack{A\in\partition \\ \xi_j\notin \Abar_j}} f(A)\relimp{A}
 \, .
\]
In the last display, $p^{d-1}$ counts the number of $d$-dimensional bins that are above $\xi_j$, while $(p-1)p^{d-1}$ counts all the others bins. 
We demonstrate the accuracy of our theoretical predictions in Figure~\ref{fig:tree-explanation} in the particular case of a CART tree. 

\begin{figure}
    \centering
    \includegraphics[scale=0.30]{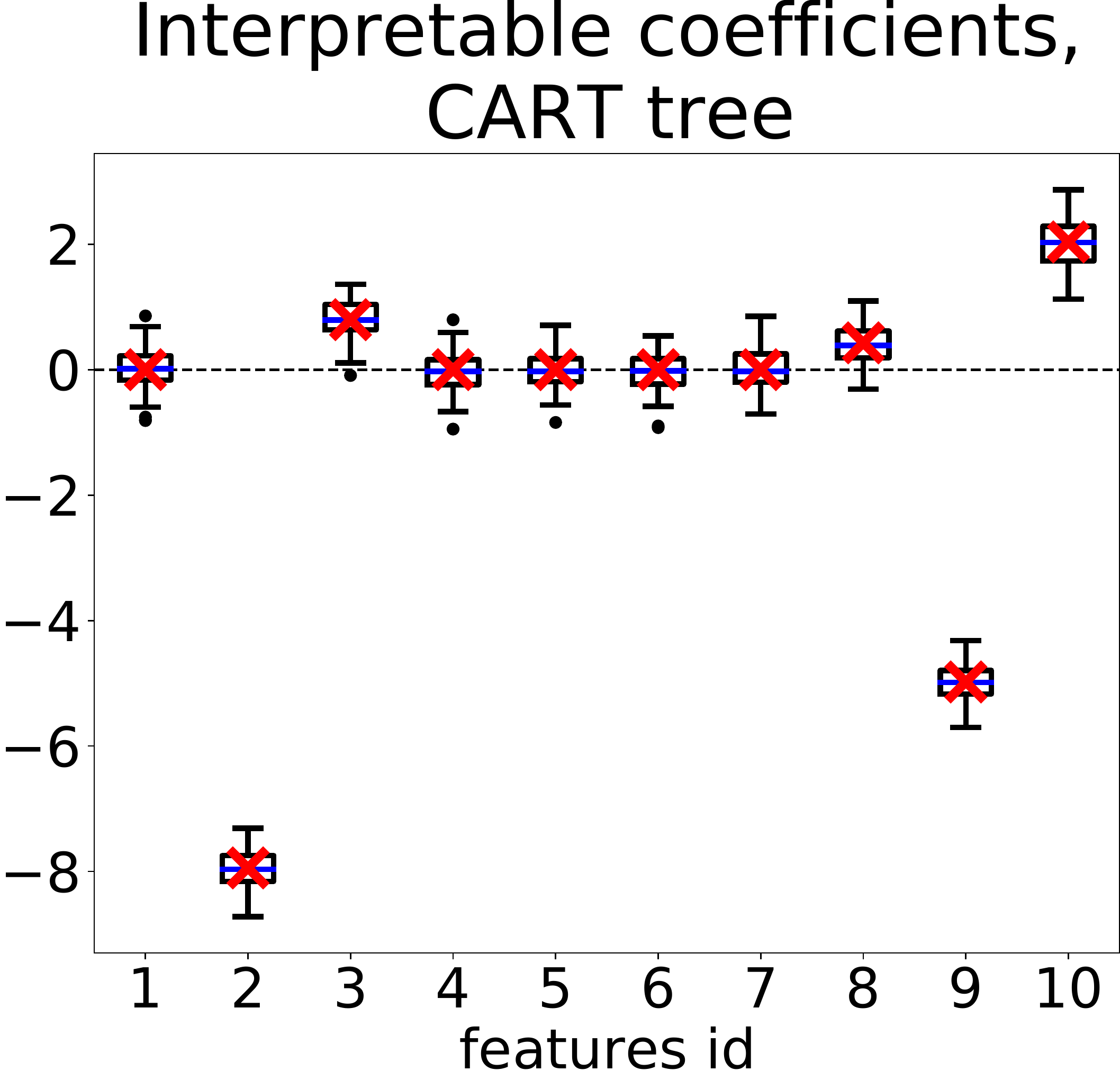}\hspace{1cm}
    \includegraphics[scale=0.30]{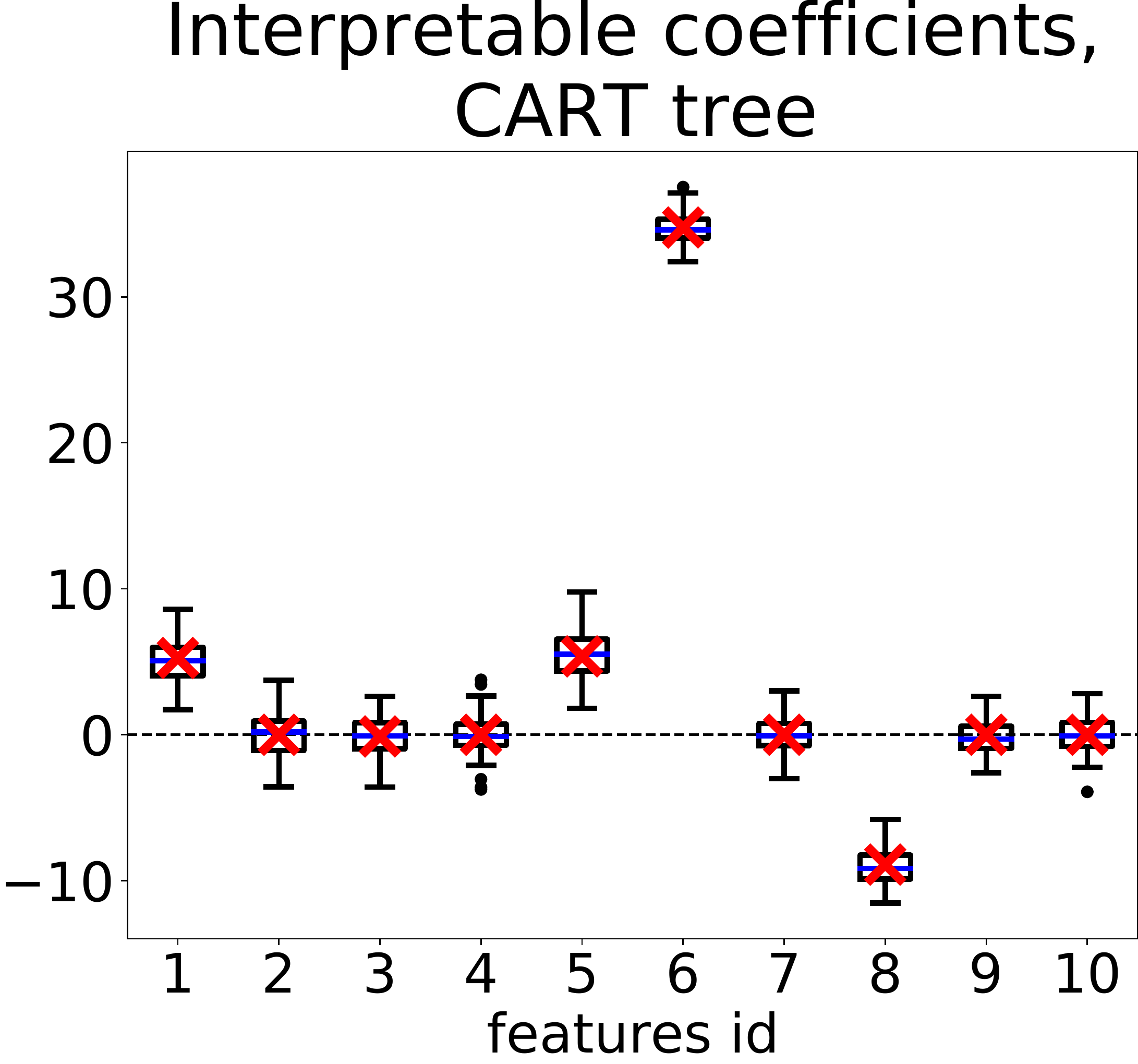} 
    \caption{\label{fig:tree-explanation}Interpretable coefficients given by Tabular LIME with default settings for a CART tree (depth $3$, training data is uniform in $[-10,10]^d$). The tree was fitted on the function $x\mapsto \sum_j x_j$ in the left panel, $x\mapsto \sum_{j=1}^{d}x_j^2$ in the right panel. We repeated the experiment $100$ times. The theoretical predictions (red crosses) are obtained by retrieving the partition associated to the tree and using Corollary~\ref{cor:beta-computation-tree}. }
\end{figure}

Perhaps the most striking feature of Eq.~\eqref{eq:beta-tree} is the positive influence of rectangles that are aligned with $\xi$. 
Indeed, when $\dist{\xi}{A}$ is small with respect to $\nu^2$, then the $f(A)\relimp{A}\exps{-\dist{\xi}{A}/(2\nu^2)}$ terms have a huge influence in Eq.~\eqref{eq:beta-tree}. 
Even though these rectangles can be far away from $\xi$: $\dist{\xi}{A}$ only counts how many directions do not align $\xi$ and~$\Abar$. 
To put it plainly, \textbf{Tabular LIME can give positive influence to $j$ whereas the function to explain is totally flat in the vicinity of $\xi$} if there is a bump in $f$ aligned with $\xi$ in this direction. 
We demonstrate this effect in Figure~\ref{fig:tree-artifacts}. 
We do not think that this is a desired behavior. 
It is important to notice, however, that these artifacts are less visible in high dimension, since such alignments are less common. 
Moreover, for complicated models, we surmise that the local variability of the model predominates. 
Nevertheless, from a practical point of view, taking a distance function depending not only on the $z_j$ in the definition of the weights may alleviate this effect. 

\begin{figure}
    \centering
    \includegraphics[scale=0.2]{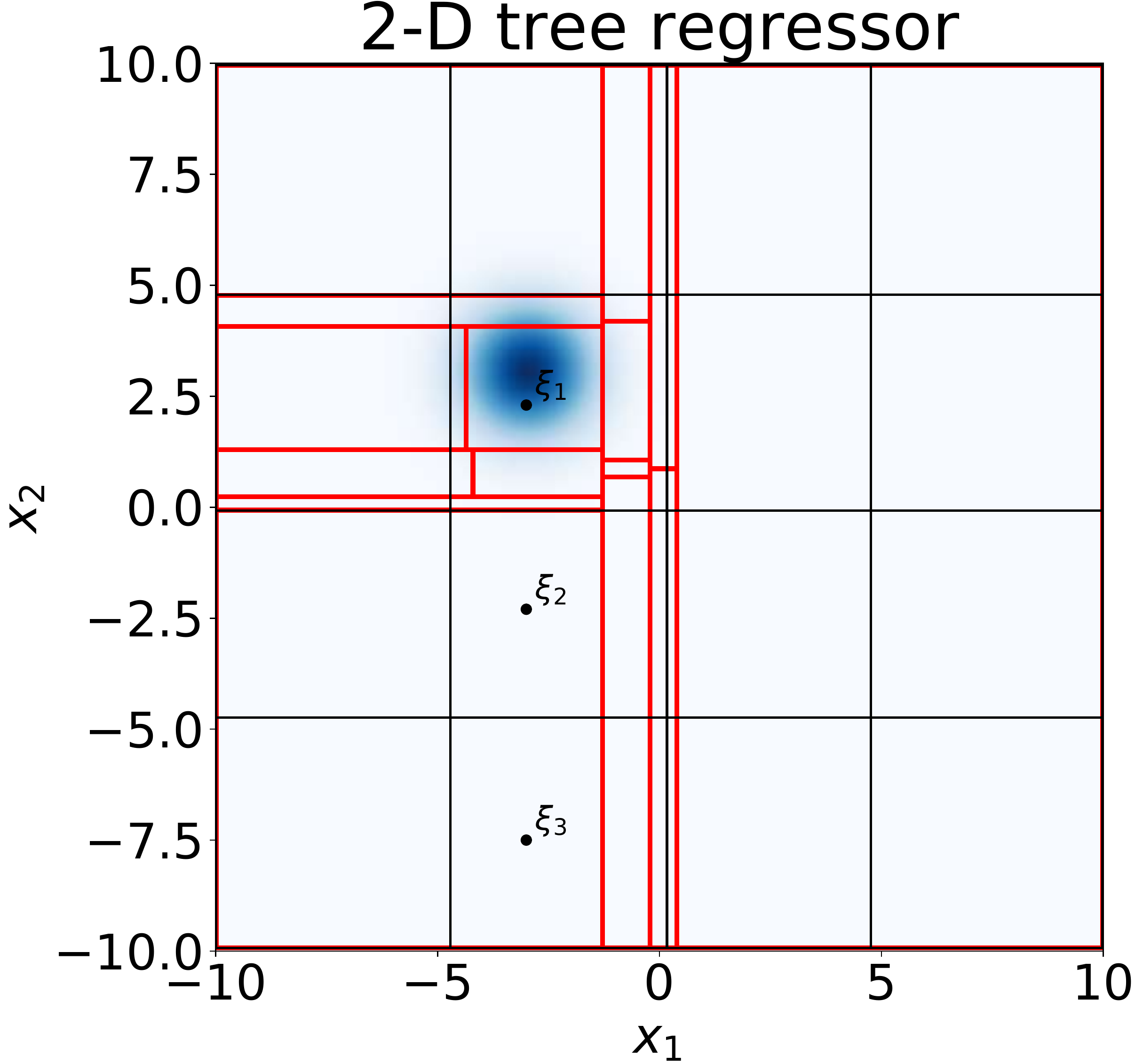}
    \includegraphics[scale=0.2]{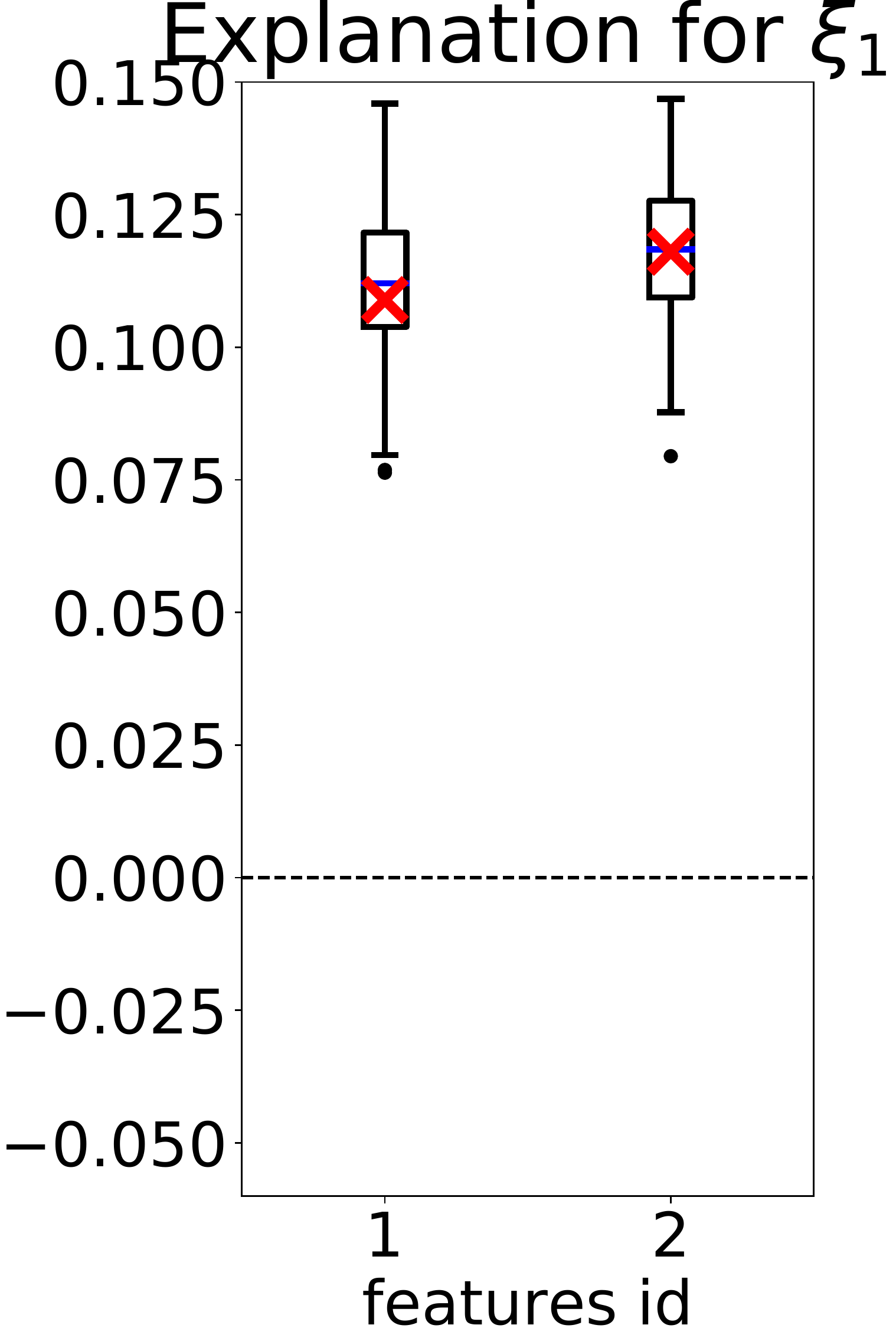}
    \includegraphics[scale=0.2]{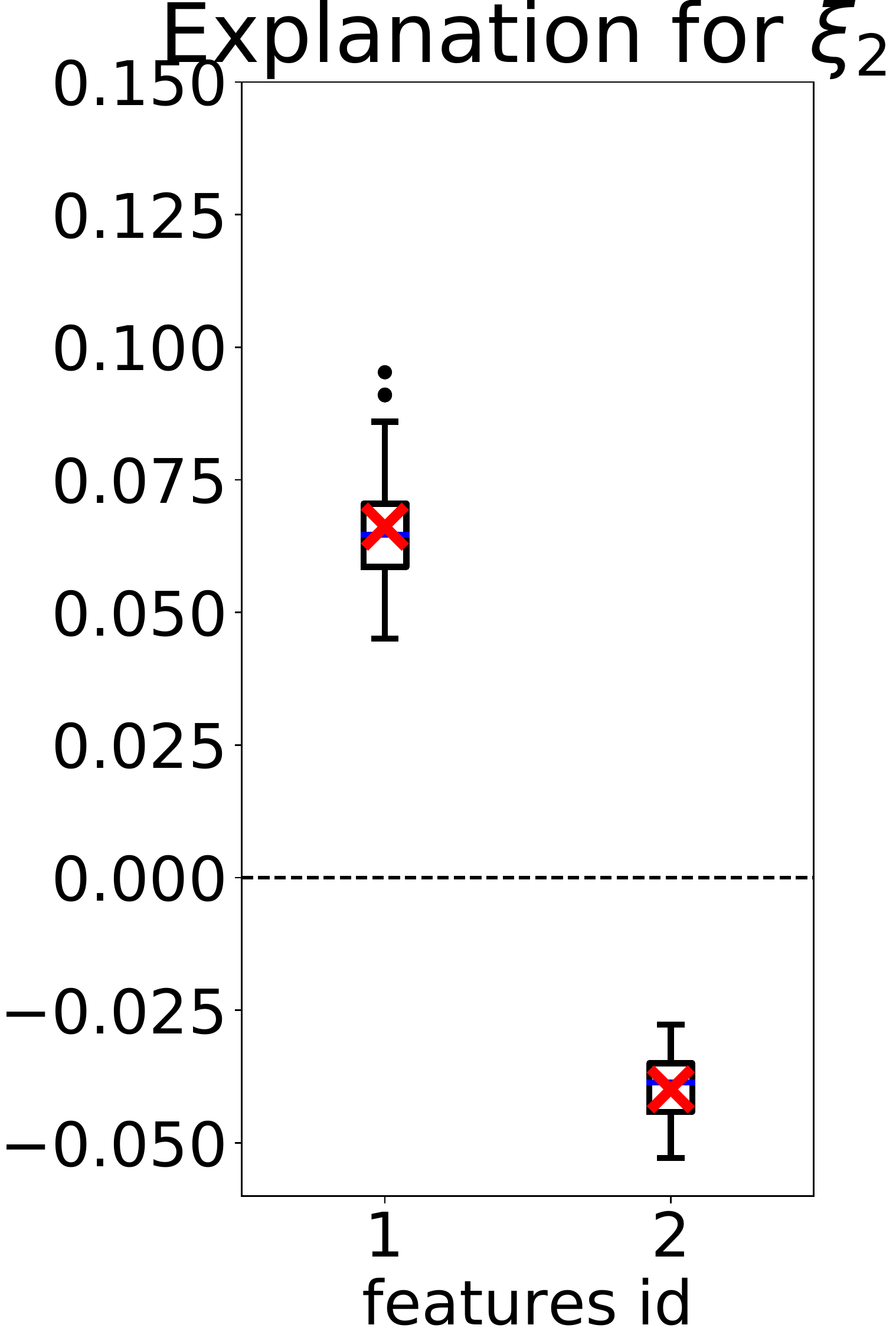}
    \includegraphics[scale=0.2]{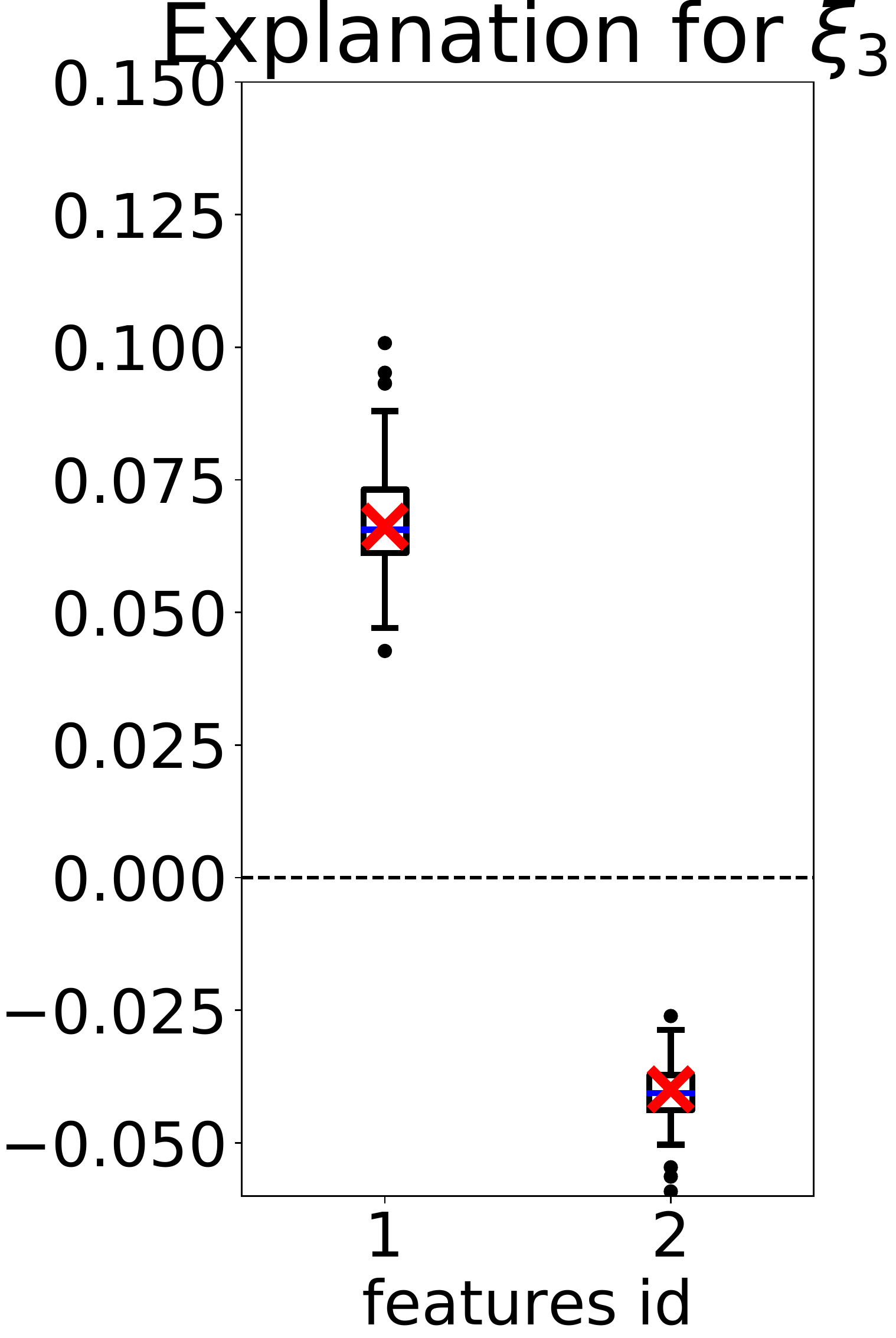}
    \caption{\label{fig:tree-artifacts}Alignments create artifacts. \emph{Left panel:} We trained a $2$-dimensional tree regressor $f$ on a bump function (in blue) with uniform training data on $[-10,10]^2$. The partition associated to $f$ is in red, the partition given to Tabular LIME in black. \emph{Right panels:} explanations given by Tabular LIME for $\xi_i$, $i\in\{1,2,3\}$, theory in red. As predicted, the explanations are high for $\xi_1$: the weighted mean of $f$ is higher near $\xi_1$ than everywhere else. However, the explanation for $\xi_2$ and $\xi_3$ remains high in the first coordinate. Despite being far away, the bin distance is the same due to the alignment. }
\end{figure}

We want to conclude this section by noticing that Corollary~\ref{cor:beta-computation-tree} also applies to a random forest regressor build upon CART trees: 
one only needs to consider the partition obtained by intersecting all the associated tree partitions. 


\subsubsection{Radial basis function kernel}

An important class of examples is given by kernel regressors, a family of predictors containing, for instance, kernel support vector machines (see \citet{Hastie_Tibshirani_Friedman_2001} Section~12.3 and \citet{Schoelkopf_Smola_2001}). 
In all these cases, $f$, the model to explain can be written in the form
\[
f(x) = \sum_{i=1}^{m} \alpha_i k_{\gamma_i}(x,\zeta_i)
\, ,
\]
where $k$ is a positive semi-definite kernel, $\gamma_i$ is a positive scale  parameter, $\alpha\in\Reals^m$ are some real coefficients, and the $\zeta_i$s are support points. 
By linearity of the explanation, we can focus on the $\beta^f$  with $f = k_{\gamma_i}(\cdot,\zeta_i)$. 
For one of the most intensively used kernel, the \emph{Gaussian kernel} (also called radial basis function kernel),  $x\mapsto k_{\gamma}(x,\zeta)$ is \emph{multiplicative}. 
Therefore, we can compute in closed-form the associated interpretable coefficients.
Although, it is more challenging to obtain a statement as readable as Proposition~\ref{prop:beta-computation-indicator}, and we only give the computation of the $\ew_{j,b}$ in this case. 
Namely, we focus on $f(x) = \exp{\frac{-\norm{x-\zeta}^2}{2\gamma^2}}$ for some positive $\gamma$ and a fixed $\zeta\in\Reals^d$. 

\begin{myproposition}[Gaussian kernel computations]
\label{prop:ews-computation-gaussian-kernel}
For any $1\leq j\leq d$, define $k_j\defeq \exp{\frac{-(\cdot -\zeta_j)^2}{2\gamma^2}}$. 
Let us set 
\[
\mtilde_{j,b} \defeq \frac{\gamma^2 \mu_{j,b} + \sigma_{j,b}^2 \zeta_j}{\gamma^2 + \sigma_{j,b}^2}
\, ,
\quad 
\text{and} 
\quad 
\stilde_{j,b}^2 \defeq \frac{\sigma_{j,b}^2\gamma^2}{\sigma_{j,b}^2 + \gamma^2}
\, .
\]
Then, for any $1\leq j\leq d$ and $1\leq b\leq p$,  $\ew_{j,b}=\ek_{j,b}$ if $b=\bxi_j$ and $\exps{\frac{-1}{2\nu^2}}\ek_{j,b}$ otherwise, where we defined 
\begin{equation}
\label{eq:def-ek}
\ek_{j,b} \defeq  \frac{\stilde_{j,b}}{\sigma_{j,b}} \frac{\Phi\left(\frac{q_{j,b}-\mtilde_{j,b}}{\stilde_{j,b}}\right) - \Phi\left(\frac{q_{j,b-1}-\mtilde_{j,b}}{\stilde_{j,b}}\right)}{\Phi\left(\frac{q_{j,b}-\mu_{j,b}}{\sigma_{j,b}}\right) - \Phi\left(\frac{q_{j,b-1}-\mu_{j,b}}{\sigma_{j,b}}\right)} \exps{\frac{-(\mu_{j,b}-\zeta_j)^2}{2(\gamma^2+\sigma_{j,b}^2)}}
\, .
\end{equation}
\end{myproposition}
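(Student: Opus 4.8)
The plan is to reduce everything to a single unweighted conditional expectation and then perform a product-of-Gaussians computation. Since the Gaussian kernel factorizes across coordinates, $f(x)=\prod_{j=1}^d k_j(x_j)$ with $k_j(t)=\exp{\frac{-(t-\zeta_j)^2}{2\gamma^2}}$, we are squarely in the multiplicative setting of Proposition~\ref{prop:beta-computation-multiplicative-default}, so it suffices to compute the weighted conditional expectation $\ew_{j,b}^{k_j}$ from Eq.~\eqref{eq:def-ew-default-weights}. First I would dispose of the weight factor. Conditionally on $\{b_j=b\}$ the indicator $z_j=\indic{b_j=\bxi_j}$ is deterministic, equal to $1$ when $b=\bxi_j$ and $0$ otherwise; hence $\exps{\frac{-(1-z_j)^2}{2\nu^2}}$ is the constant $1$ when $b=\bxi_j$ and the constant $\exps{\frac{-1}{2\nu^2}}$ otherwise, and in either case it pulls out of the expectation. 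This already explains the case distinction in the statement and reduces the task to the unweighted quantity $\ek_{j,b}\defeq\condexpec{k_j(x_j)}{b_j=b}$, for which it remains to establish the closed form \eqref{eq:def-ek}.

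For the core computation, recall that conditionally on $\{b_j=b\}$ the variable $x_j$ carries the truncated Gaussian density $\rho_{j,b}$ of Eq.~\eqref{eq:tn-density}. Writing $\ek_{j,b}=\int_{q_{j,b-1}}^{q_{j,b}} k_j(t)\rho_{j,b}(t)\,\Diff t$ produces an integrand whose exponent is the negative-definite quadratic $-\frac{(t-\zeta_j)^2}{2\gamma^2}-\frac{(t-\mu_{j,b})^2}{2\sigma_{j,b}^2}$. The heart of the argument is to complete the square: this quadratic rewrites as $-\frac{(t-\mtilde_{j,b})^2}{2\stilde_{j,b}^2}$ plus a $t$-independent remainder, where the precision-weighted location and the harmonic-type scale are exactly the $\mtilde_{j,b}$ and $\stilde_{j,b}^2$ of the statement, and where the remainder collapses (using $\frac{1}{\gamma^2}+\frac{1}{\sigma_{j,b}^2}=\frac{\gamma^2+\sigma_{j,b}^2}{\gamma^2\sigma_{j,b}^2}$) to the constant $-\frac{(\mu_{j,b}-\zeta_j)^2}{2(\gamma^2+\sigma_{j,b}^2)}$, contributing the factor $\exps{\frac{-(\mu_{j,b}-\zeta_j)^2}{2(\gamma^2+\sigma_{j,b}^2)}}$.

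Once the exponent is in single-Gaussian form, the plan is to evaluate the leftover incomplete Gaussian integral, which equals $\stilde_{j,b}\sqrt{2\pi}\,[\Phi(\frac{q_{j,b}-\mtilde_{j,b}}{\stilde_{j,b}})-\Phi(\frac{q_{j,b-1}-\mtilde_{j,b}}{\stilde_{j,b}})]$, and then divide by the normalizing constant $\sigma_{j,b}\sqrt{2\pi}\,[\Phi(r_{j,b})-\Phi(\ell_{j,b})]$ carried by $\rho_{j,b}$. Using $r_{j,b}=\frac{q_{j,b}-\mu_{j,b}}{\sigma_{j,b}}$ and $\ell_{j,b}=\frac{q_{j,b-1}-\mu_{j,b}}{\sigma_{j,b}}$, the $\sqrt{2\pi}$ cancels and one lands exactly on \eqref{eq:def-ek}. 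I do not expect a genuine conceptual obstacle here: the only delicate point is the bookkeeping in the completion of the square — in particular checking that the $t$-independent remainder simplifies to $\frac{(\mu_{j,b}-\zeta_j)^2}{\gamma^2+\sigma_{j,b}^2}$ rather than some uncollapsed expression — which is routine Gaussian algebra.
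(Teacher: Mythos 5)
Your proposal is correct and follows essentially the same route as the paper's proof: reduce to the multiplicative setting of Proposition~\ref{prop:beta-computation-multiplicative-default}, pull out the weight factor (constant conditionally on the bin), complete the square to merge the kernel and truncated-Gaussian exponents into a single Gaussian in $\mtilde_{j,b},\stilde_{j,b}$, and evaluate the resulting incomplete Gaussian integral against the normalization of $\rho_{j,b}$. If anything, your write-up is slightly more careful than the paper's, which states the square-splitting identity without the $t$-independent remainder $\frac{(\mu_{j,b}-\zeta_j)^2}{2(\gamma^2+\sigma_{j,b}^2)}$ that you correctly track.
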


\begin{proof}
We see that $f$ is multiplicative, with $f_j(x)=k_j$. 
Therefore we can use Proposition~\ref{prop:beta-computation-multiplicative-default} to compute the associated interpretable coefficients. 
Computing $\ew_{j,b}$ requires to integrate $f_j$ with respect to the Gaussian measure. 
We can split the square as
\[
\frac{(x-\mu_{j,b})^2}{2\sigma_{j,b}^2} + \frac{(x-\zeta_j)^2}{2\gamma^2} = \frac{(x-\mtilde_{j,b})^2}{2\stilde_{j,b}^2}
\, ,
\]
Let $x\sim\truncnorm{\mu_{j,b},\sigma_{j,b}^2,q_{j,b-1},q_{j,b}}$. 
We deduce that 
\begin{equation*}
\expec{\exp{\frac{-(x-\zeta_j)^2}{2\gamma^2}}} =  \ek_{j,b}
\, .
\end{equation*}
From the previous display we can deduce the value of $\ew_{j,b}^{k_j}$ for any $1\leq j\leq d$ and $1\leq b\leq p$ and conclude the proof. 
\end{proof}

The value of the $\littlec_j^{f_j}$ coefficients follows, which gives us a closed-formula for $\beta^f_j$. 
Figure~\ref{fig:gaussian-kernel-explanation} demonstrates how our theoretical predictions match practice in dimension $10$.

Without giving the explicit formula for $\beta^f$, we can see that, as in the previous section, for a given $j$, $\beta_j^f$ is relatively larger than the other interpretable coefficients if 
\[
\ek_{j,\bxi_j} \gg \frac{1}{p-1}\sum_{b\neq \bxi_j}\ek_{j,b}
\, .
\]
Now, in the typical situation, $\gamma \ll \sigma_{j,b}$ for any $1\leq j\leq d$ and $1\leq b\leq p$. 
Indeed, the bins are usually quite large (containing approximately $1/p$-th of the training data in any given direction), whereas $\gamma$ should be rather small in order to encompass the small-scale variations of the data. 
In this case, whenever $\mu_{j,b}$ is far away from $\zeta_j$ with respect to $\gamma$, the exponential term in Eq.~\eqref{eq:def-ek} vanishes, and $\ek_{j,b}\approx 0$. 
Therefore, we end up in a situation very similar to the indicator function case treated in the previous section, where \textbf{the only large $\ek_{j,b}$ coefficients are the one for which the associated bin contains $\zeta_j$.}  
This has similar consequences: the explanations are rather small when $\xi$ is far away from $\zeta$. 
With the same exception: when $\xi_j$ is ``aligned'' with $\zeta_j$, $\ek_{j,b}$ is large and this yields artifacts in the explanation, which we demonstrate in Figure~\ref{fig:vector-field}. 
Again, we do not think that this is a desirable behavior. 
One would prefer to see $\beta^f_j$ pointing in the direction of $\zeta_j$, or at the very least small $\beta^f_j$ since the function is very flat when far from $\zeta$ with respect to $\nu$ (at least in the Gaussian kernel case). 

\begin{figure}
	\begin{center}
\includegraphics[scale=0.29]{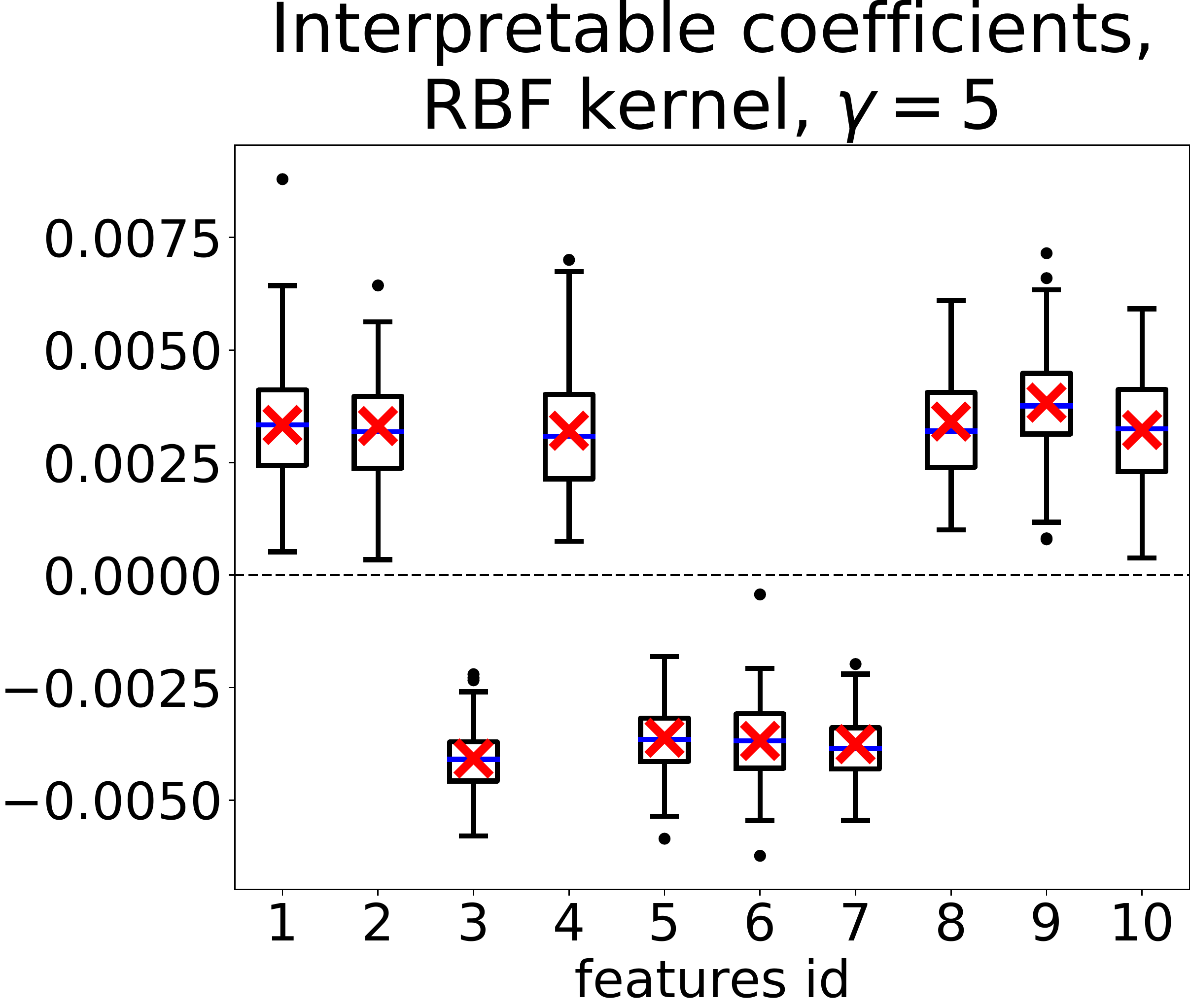}\hspace{1cm}
\includegraphics[scale=0.29]{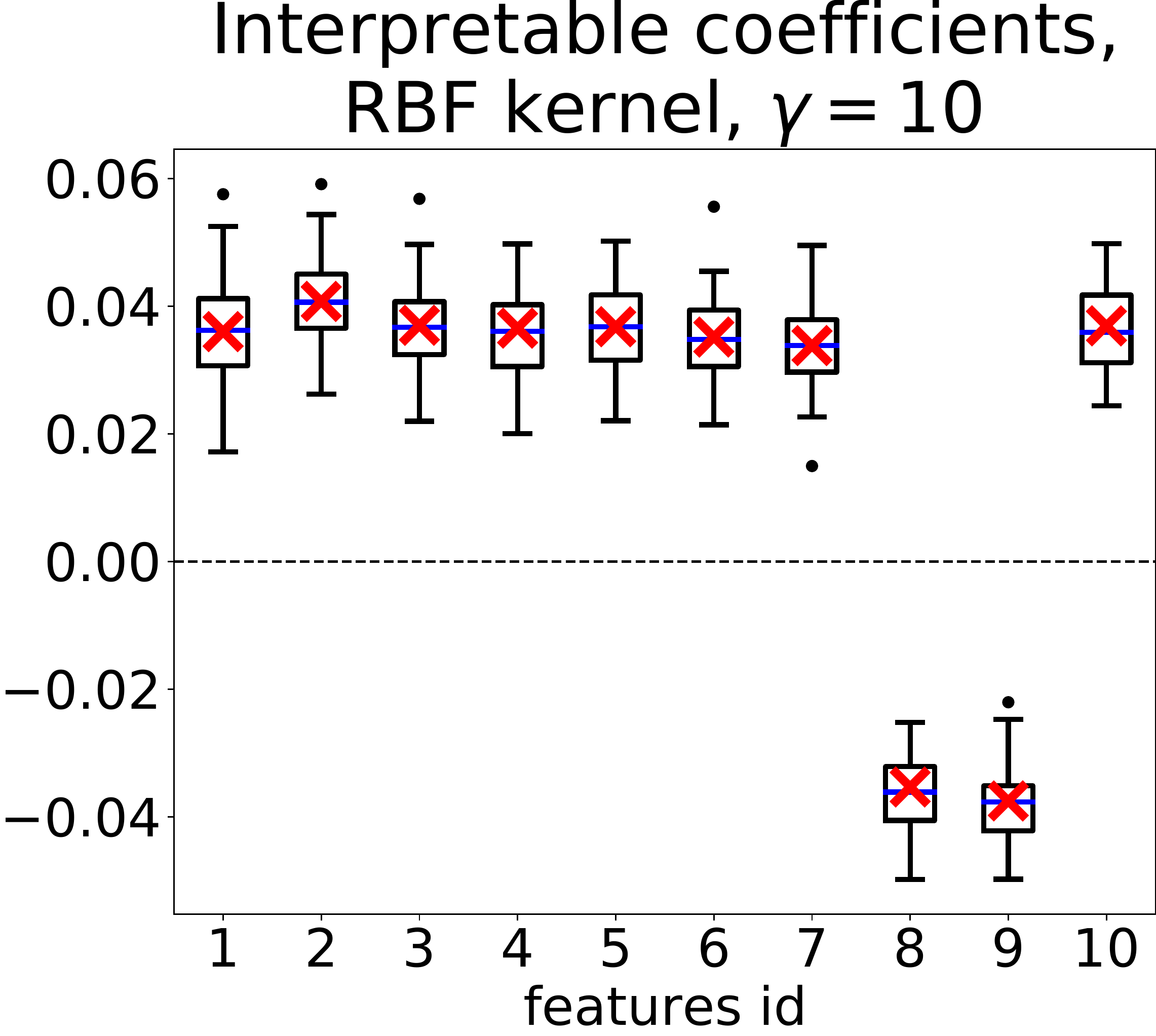}
	\end{center}
	\caption{\label{fig:gaussian-kernel-explanation}Theory vs practice for $f$ given by a Gaussian kernel with bandwidth parameter $\gamma=10$, in dimension $10$ (top panel) and $20$ (bottom panel). We see that our theoretical predictions (red crosses) match perfectly the values of the interpretable coefficients given by Tabular LIME with default settings ($100$ repetitions, black whisker boxes).}
\end{figure}

\begin{figure}
	\begin{center}
		\includegraphics[scale=0.28]{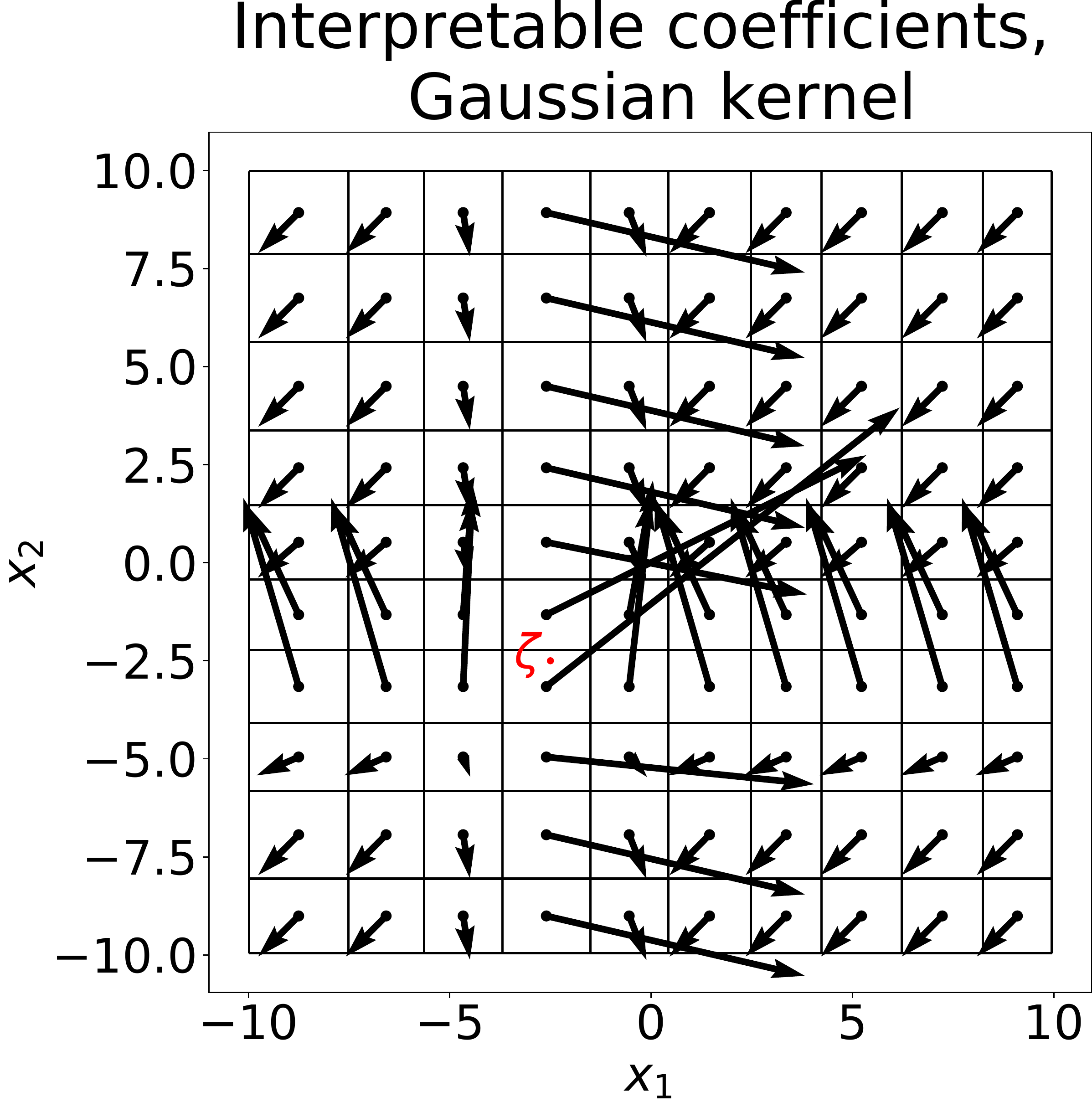}\hspace{1cm}
		\includegraphics[scale=0.28]{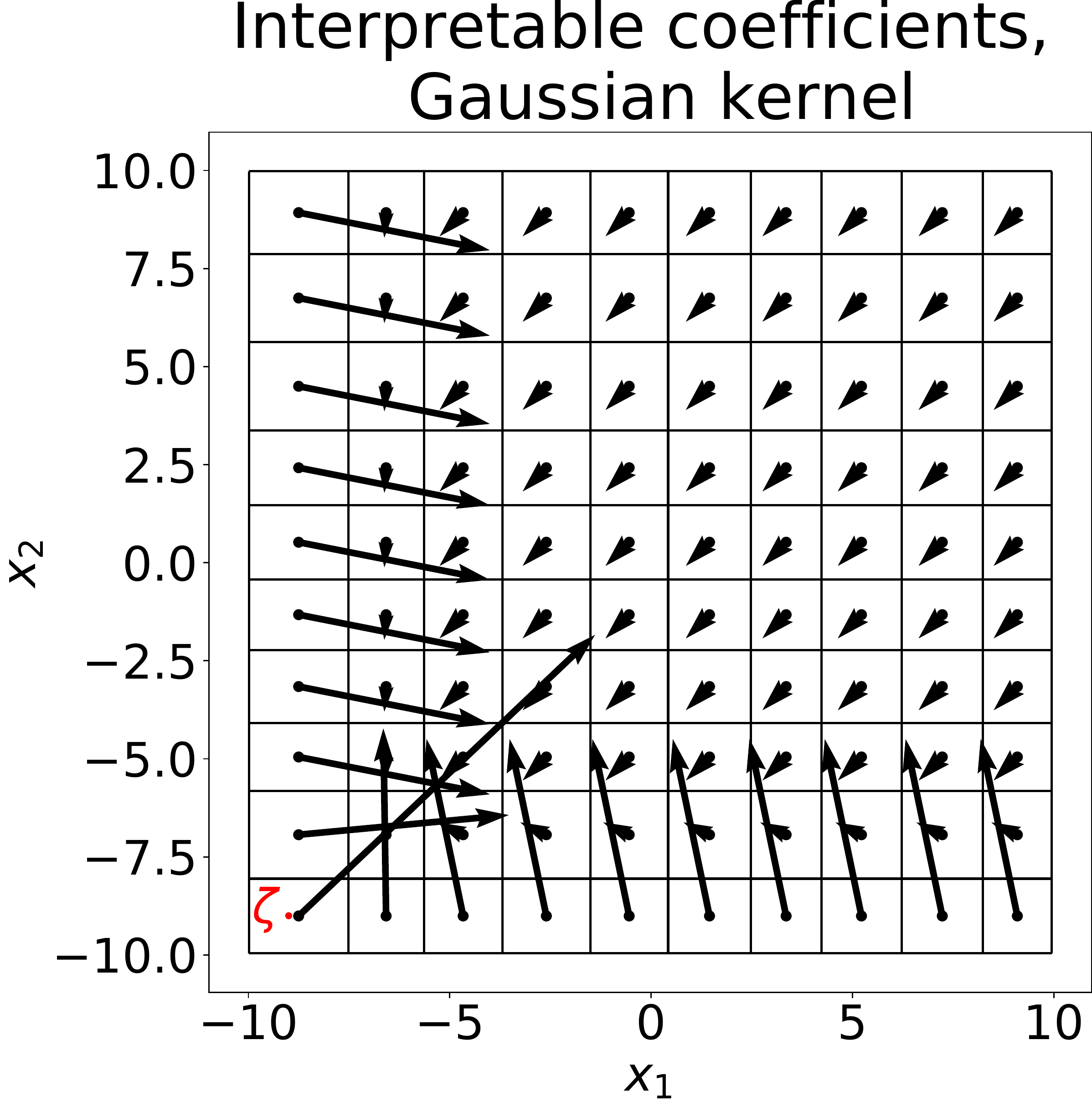}
	\end{center}
	\caption{\label{fig:vector-field}Explanations for a Gaussian kernel in dimension $2$ ($\gamma=1$). We consider uniformly distributed training data on $[-10,10]^2$ with $p=10$ bins along each coordinate. 
We plot the explanation for each center of a $2$-dimensional bin. 
		As predicted by the theory, the explanations are very small if $\xi$ is far away from $\zeta$, excepted when $\xi_j$ falls into the same bin as $\zeta_j$. }
\end{figure}


\subsection{Dummy features}
\label{section:sparse}

As a last application of Theorem~\ref{th:betahat-concentration-general-f-default-weights}, we return to the case where one or more coordinates are unused by the model.

\begin{myproposition}[Dummy features]
\label{prop:ignoring-unused-coordinates}
Assume that $f$ satisfies Assumption~\ref{assump:bounded}. 
	Assume further that there exists a mapping $g:\Reals^s\to\Reals$ such that
	\[
	\forall x\in\Reals^d, \quad f(x) = g(x_{j_1},\ldots,x_{j_s})
	\, ,
	\]
	where $S\defeq \{j_1,\ldots,j_s\}$ is a subset of $\{1,\ldots,d\}$ of cardinality $s$. 
	Let $S$ be the set of indices relevant for $f$ and $\Sbar \defeq \{1,\ldots,d\}\setminus S$. 
	Then Theorem~\ref{th:betahat-concentration-general-f-default-weights} holds with $\beta_j^f=0$ for any $j\in\Sbar$. 
\end{myproposition}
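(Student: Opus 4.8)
The plan is to read the explicit formula for the interpretable coefficients straight off Theorem~\ref{th:betahat-concentration-general-f-default-weights} and show that, for each $j\in\Sbar$, the bracketed term in Eq.~\eqref{eq:coefs-general-f-default-weights} vanishes. Factoring $\frac{p\littlec}{p\littlec-1}$ out of that bracket (this prefactor is finite and nonzero, since $p\littlec-1 = (p-1)\exps{\frac{-1}{2\nu^2}} > 0$ for $p\geq 2$), one sees that $\beta_j^f = 0$ is equivalent to the single identity
\[
\expec{\pi z_j f(x)} = \frac{1}{p\littlec}\expec{\pi f(x)}
\, .
\]
So the whole proof reduces to establishing this identity for every $j\in\Sbar$.

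The key ingredient is the feature-wise independence of the sampling scheme (Section~\ref{section:sampling-procedure}): the pairs $(b_k,x_k)$ are mutually independent across $k$, and the weight factorizes as $\pi = \prod_{k=1}^d \omega_k$, where $\omega_k\defeq \exps{\frac{-(1-z_k)}{2\nu^2}}$ is measurable with respect to $b_k$ alone. Since $j\in\Sbar$, the function value $f(x)=g(x_{j_1},\ldots,x_{j_s})$ does not involve $x_j$, hence is independent of $(b_j,x_j)$, and in particular of both $\omega_j$ and $z_j$. Separating the $j$-th factor from the rest, I would write
\[
\expec{\pi z_j f(x)} = \expec{\omega_j z_j}\cdot\expec{\Bigl(\textstyle\prod_{k\neq j}\omega_k\Bigr) f(x)}
\, , \qquad
\expec{\pi f(x)} = \expec{\omega_j}\cdot\expec{\Bigl(\textstyle\prod_{k\neq j}\omega_k\Bigr) f(x)}
\, ,
\]
where the common factor $\expec{(\prod_{k\neq j}\omega_k) f(x)}$ carries all the dependence on the relevant coordinates.

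It then remains to compute the two one-dimensional expectations. Since $z_j=\indic{b_j=\bxi_j}$, with $\omega_j=1$ when $z_j=1$ and $\omega_j=\exps{\frac{-1}{2\nu^2}}$ when $z_j=0$, and since $b_j$ is uniform on $\{1,\ldots,p\}$, I would get $\expec{\omega_j z_j}=\frac{1}{p}$ and $\expec{\omega_j}=\frac{1}{p}+\bigl(1-\frac{1}{p}\bigr)\exps{\frac{-1}{2\nu^2}}=\littlec$. Their ratio is $\frac{1}{p\littlec}$, and multiplying back by the common factor gives exactly the identity above, so $\beta_j^f=0$.

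The only delicate point is justifying the factorizations in the second step: one must verify that $f(x)$ is genuinely independent of the $j$-th coordinate block and that $\pi$ splits as a product whose $j$-th factor depends on $b_j$ alone. Both follow from the construction of the sampling (independence of the $b_k$ and of the conditional laws of the $x_k$) together with the hypothesis $j\in\Sbar$; once this independence is in hand the computation is immediate. I expect no analytic difficulty beyond this bookkeeping, and boundedness of $f$ (Assumption~\ref{assump:bounded}) guarantees that every expectation above is finite.
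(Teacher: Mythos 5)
Your proof is correct and follows essentially the same route as the paper's: both exploit the multiplicative structure of the default weights together with the feature-wise independence of the sampling to factor the expectations $\expec{\pi f(x)}$ and $\expec{\pi z_j f(x)}$, and then conclude from the explicit formula of Theorem~\ref{th:betahat-concentration-general-f-default-weights} that the two terms cancel. The only (cosmetic) difference is that you peel off just the single factor for coordinate $j$ and reduce to the identity $\expec{\pi z_j f(x)} = \frac{1}{p\littlec}\expec{\pi f(x)}$, whereas the paper factors out all coordinates in $\Sbar$ at once, writing $\expec{\pi f(x)} = \littlec^{d-s}G$ and $\expec{\pi z_j f(x)} = \littlec^{d-s-1}G/p$; the underlying computation, including $\expec{\omega_j z_j}=1/p$ and $\expec{\omega_j}=\littlec$, is identical.
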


Proposition~\ref{prop:ignoring-unused-coordinates} encompasses a very desirable trait of any feature attribution method: if a coordinate is not used by the model at all, then this coordinate has no influence on the prediction given by the model and should receive weight $0$ in the explanation. 
As mentioned earlier, this is the \emph{dummy} axiom in \citet{Sundararajan_Najmi_2020}. 
We illustrate Proposition~\ref{prop:ignoring-unused-coordinates} in Figure~\ref{fig:ignore-default-non-linear-ls}. 

\begin{figure}
	\begin{center}
		\includegraphics[scale=0.3]{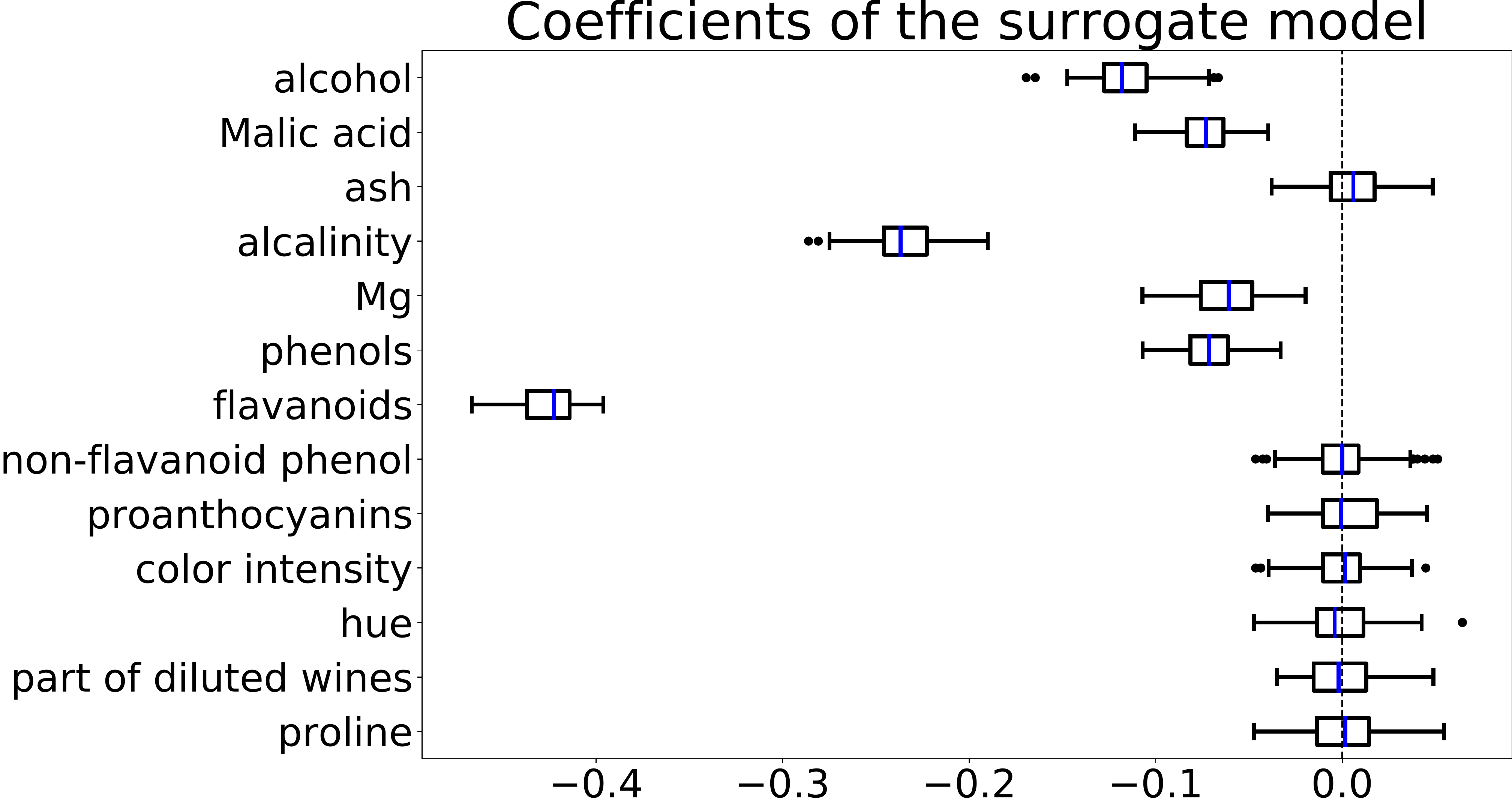}
	\end{center}
\caption{\label{fig:ignore-default-non-linear-ls}Dummy features. In this experiment, we run Tabular LIME  on a kernel ridge regressor (Gaussian kernel with bandwidth set to $5$) not using the last $6$ features. The data is Wine quality. As predicted by Proposition~\ref{prop:ignoring-unused-coordinates}, the interpretable coefficients for the last six coordinates are zero up to noise from the sampling: LIME ignore these coordinates in the explanation.}
\end{figure}

While Proposition~\ref{prop:ignoring-unused-coordinates} is true for more general weights (see Appendix~\ref{section:ignoring-unused-coordinates-general-weights}), we provide here a proof for the default weights. 
Our goal is, again, to demonstrate how Theorem~\ref{th:betahat-concentration-general-f-default-weights} can be used to get interesting statements on the explanations provided by Tabular LIME when simple structural assumptions are made on $f$.

\begin{proof}
We want to specialize Theorem~\ref{th:betahat-concentration-general-f-default-weights} in the specific case where $f$ does not depend on a subset of coordinates. 
As we have seen before, the main challenge in using Theorem~\ref{th:betahat-concentration-general-f-default-weights} is to compute the expectations $\expec{\pi f(x)}$ and $\expec{\pi z_jf(x)}$. 
The main idea of the proof is to regroup in these expectation computations the parts depending on $j\in S$ and those depending on $j\notin S$. 
We will see that some cancellations happen afterwards.

Let us turn first to the computation of $\expec{\pi f(x)}$. 
By definition of the weights (Eq.~\eqref{eq:def-weights-default}), we have
\[
\expec{\pi f(x)} = \expec{\prod_{k=1}^{d}\exps{\frac{-(1-z_k)^2}{2\nu^2}} f(x)} 
\, .
\]
Using successively the properties of $f$ and the independence of the $x_j$s, we can rewrite the previous display as
\[
\prod_{k\in\Sbar}\expec{\exps{\frac{-(1-z_k)^2}{2\nu^2}}} \cdot \expec{\prod_{k\in S}\exps{\frac{-(1-z_k)^2}{2\nu^2}} g(x_{j_1},\ldots,x_{j_s})}
\, .
\]
We recognize the definition of the normalization constant $\littlec$. 
Setting 
\[
G \defeq \expec{\prod_{k\in S}\exps{\frac{-(1-z_k)^2}{2\nu^2}} g(x_{j_1},\ldots,x_{j_s})}
\, ,
\] 
we have proved that
\begin{equation}
\label{eq:aux-sparse-1}
\expec{\pi f(x)} = \littlec^{d-s} \cdot G
\, .
\end{equation}

Let $j\in \{1\ldots,d\}\setminus S$. 
The computation of $\expec{\pi z_j f(x)}$ is similar. 
Indeed, by definition of the weights, we can write
\[
\expec{\pi z_j f(x)} = \expec{\prod_{k=1}^{d}\exps{\frac{-(1-z_k)^2}{2\nu^2}} z_{j}f(x)}
\, .
\]
Again, since the $x_j$s are independent and $f$ satisfies Assumption~\ref{ass:multiplicative}, we rewrite the previous display as
\[
\prod_{k\in\Sbar \setminus\{j\}} \expec{\exps{\frac{-(1-z_k)^2}{2\nu^2}}} \cdot \expec{\exps{\frac{-(1-z_j)^2}{2\nu^2}}z_{j}}\cdot \expec{\prod_{k\in S} \exps{\frac{-(1-z_k)^2}{2\nu^2}} g(x_{j_1},\ldots,x_{j_s})}
\, .
\]
We have proved that
\begin{equation}
\label{eq:aux-sparse-2}
\expec{\pi z_j f(x)} = \frac{\littlec^{d-s-1} G }{p}
\, .
\end{equation}

Finally, according to Theorem~\ref{th:betahat-concentration-general-f-default-weights}, 
\[
\beta^f_j = \bigc^{-1}\frac{p\littlec}{p\littlec-1}\biggl(-\expec{\pi f(x)} + p\littlec \expec{\pi z_{j}f(x)} \biggr) 
\, .
\]
Plugging Eqs.~\eqref{eq:aux-sparse-1} and~\eqref{eq:aux-sparse-2} in the previous display, we obtain that 
\[
\beta^f_j = \bigc^{-1}\frac{p\littlec}{p\littlec-1}\left(-\littlec^{d-s} \cdot G + p\littlec \frac{\littlec^{d-s-1} G }{p} \right) = 0
\, .
\]
\end{proof}


\section{Conclusion: strengths and weaknesses of Tabular LIME}
\label{section:conclusion}

In this paper, we provided a thorough analysis of Tabular LIME. 
In particular, we show that, in the large sample size, the interpretable coefficients provided by Tabular LIME can be obtained in an explicit way as a function of the algorithm parameters and some expectation computations related to the black-box model. 
Our experiments show that our theoretical analysis yields predictions that are very close to empirical results for the default implementation. 
This allowed us to provide very precise insights on the inner working of Tabular LIME, revealing some desirable behavior (linear models are explained linearly, unused coordinates are provably ignored, and kernel functions yield flat explanations far away from the bump), and some not so desirable properties (artifacts appear when explaining indicator and kernel functions). 

We believe that the present work paves the way for a better theoretical understanding of Tabular LIME in numerous simple cases. 
Using the machinery presented here, one can check how a particular model interacts with Tabular LIME, at the price of reasonable computations. 
It is then possible to check whether some basic sanity checks are satisfied, helping us to decide whether to use Tabular LIME in this case. 
We also hope that the insights presented here can help us to design better interpretability methods by fixing the flaws of Tabular LIME. 
For instance, our analysis is valid for a large class of weights: for a given class of models, it could be the case that choosing non-default weights is more adequate. 




\acks{The authors want to thank Sebastian Bordt, Romaric Gaudel, Michael Lohaus, and Martin Pawelcyk for constructive discussions.}


%

\vskip 0.2in
\bibliography{../lime}

\newpage

\appendix


\begin{center}
{\Huge Appendix}
\end{center}


In this Appendix we collect all proofs and additional results. 
Appendix~\ref{section:general-weights} generalizes the notation to weights generalizing the default weights.  
In Appendix~\ref{section:study-of-sigma}, \ref{section:gamma}, and \ref{section:beta} we study successively $\Sigma$, $\Gamma^f$, and $\beta^f$. 
The proof of Proposition~\ref{prop:explanation-stability-default-weights} is presented in Appendix~\ref{section:proof-stability-corollary}, and the proof of Proposition~\ref{prop:beta-computation-indicator} is given in Appendix~\ref{section:proof-computation-beta-indicator}. 
We turn to the concentration of $\Sigmahat_n$ and $\Gammahat_n$ in Appendix~\ref{section:concentration-sigmahat} and~\ref{section:concentration-gammahat}. 
Our main result is proved in Appendix~\ref{section:concentration-betahat}. 
An extension of Proposition~\ref{prop:ignoring-unused-coordinates} for general weights is presented in Appendix~\ref{section:ignoring-unused-coordinates-general-weights}. 
Finally, technical results are collected in Appendix~\ref{section:technical-results}. 

\section{General weights}
\label{section:general-weights}

As discussed throughout the main paper, our analysis holds for more general weights than the default weighting scheme. 
Indeed, it will become clear in our analysis that the proof scheme works provided that the weight $\pi$ associated to $x$ has some multiplicative structure. 
More precisely, from now on we assume that
\begin{equation}
\label{eq:definition-weights}
\pi(x) \defeq \exp{\frac{-1}{2\nu^2}\sum_{j=1}^{d}(\tau_j(\xi_j) - \tau_j(x_{j}))^2}
\, ,
\end{equation}
where $\nu >0$ is, as before, the bandwidth parameter, and for any $1\leq j\leq d$, $\tau_j : \Reals \to \Reals$ is an arbitrary fixed function that can depend on the input of the algorithm. 
We will refer to these weights as \emph{general weights} in the following. 
We make the following assumption on the $\tau_j$s:
\begin{assumption}
	\label{ass:bounded-weights}
	For any $1\leq j\leq d$ and any $x,y\in[0,1]$, we have
	\[
	\abs{\tau_j(x) - \tau_j(y)} \leq 1
	\, .
	\]
\end{assumption}
This assumption is mainly needed to control the spectrum of $\Sigma$. 

General weights generalize two important examples, which we describe below. 

\begin{myexample}[Weights in the default implementation]
	In the default implementation of LIME, for a given example $x_i$, we have defined $\pi_i=\exp{-\norm{\Indic - z_i}^2/(2\nu^2)}$ in Eq~\eqref{eq:def-weights-default}. 
	This definition of the weights amounts to taking 
	\[
	\tau_j(x) = \indic{x\in\bigl[q_{j,\bxi_j-1},q_{j,\bxi_j}\bigr]}
	\, .
	\]
	By definition of the $\tau_j$s in this case, Assumption~\ref{ass:bounded-weights} is satisfied. 
\end{myexample}

\begin{myexample}[Smooth weights]
	In \citet{Garreau_von_Luxburg_2020}, the weights were defined as $\pi_i = \exp{-\norm{\xi - x_i}^2/(2\nu^2)}$. 
	This definition of the weights corresponds to taking $\tau_j(x)=x$ for any $1\leq j\leq d$. 
	If the data is bounded (say by $1$), then the boundedness of the $\tau_j$s is satisfied in this case. 
\end{myexample}


\section{The study of $\Sigma$}
\label{section:study-of-sigma}

In this section, we study the matrix $\Sigma$ for general weights satisfying Eq.~\eqref{eq:definition-weights}. 
We begin by generalizing the notation $\ew_{j,b}^\psi$ introduced at the beginning of Section~\ref{section:discussion} to general weights. 
For any $\psi:\Reals\to\Reals$, we set
\begin{equation}
\label{eq:def-ew-general-weights}
\ew_{j,b}^\psi \defeq \condexpec{\psi(x_{ij})\exp{\frac{-1}{2\nu^2}(\tau_j(\xi_j) - \tau_j(x_{ij}))^2}}{b_{ij} = b}
\, .
\end{equation}
As before, when $\psi = 1$, we just write $\ew_{j,b}$ instead of $\ew_{j,b}^1$, and when $\psi=\id$, we write $\ew_{j,b}^\times$. 
We also extend the definition of the normalization constants
\[
\littlec_j^\psi \defeq \frac{1}{p}\sum_{b=1}^{p} \ew_{j,b}^\psi
\, ,
\]
and define $\littlec_j \defeq \littlec_{j}^1$. 
Finally, we also extend the definition of
\begin{equation} 
\label{eq:normalization-constant}
\bigc\defeq \prod_{j=1}^d \littlec_j
\, .
\end{equation}
As pointed out in Section~\ref{section:discussion}, the constant $\bigc$ is ubiquitous in our computations. 
We can give an exact expression for $\bigc$ in many cases, see Example~\ref{ex:basic-computations-old-analysis}. 
In any case, since the $\ew_{j,b}$s belong to $[0,1]$, the same holds for the $\littlec_j$s and for $\bigc$. 

\begin{myremark}
	\label{remark:bandwidth-ew}
	Whenever $\psi$ is regular enough, the behavior of these coefficients in the small and large bandwidth is quite straightforward. 
	Namely:
	\begin{itemize}
		\item if $\nu\to 0$, then $\ew_{j,b}^\psi\to 0$. As a consequence, $\littlec_j^\psi\to 0$ as well;
		\item if $\nu\to +\infty$, then $\ew_{j,b}^\psi \to \condexpec{\psi(x_{ij})}{b_{ij}=b}$. 
	\end{itemize}
\end{myremark}

In Section~\ref{section:discussion}, we computed these coefficients for the default weights. 
Let us redo this exercise for the weighting scheme of \citet{Garreau_von_Luxburg_2020}.

\begin{myexample}[Basic computations, smooth weights]
	\label{ex:basic-computations-old-analysis}
	Let us compute the $\ew_{j,b}$ when $\tau_j=\id$. 
	We write 
	\begin{align*}
	\ew_{j,b} &= \condexpec{\exp{\frac{-(x_{ij}-\xi_j)^2}{2\nu^2}}}{b_{ij} = b} \tag{Eq. \eqref{eq:def-ew-general-weights}} \\
	&= \frac{1}{\sigma_{jb}\sqrt{2\pi}}\cdot \frac{1}{\Phi(r_{j,b}) - \Phi(\ell_{j,b})} \int_{q_{j,b-1}}^{q_{j,b}}\exp{\frac{-(x-\mu_{j,b})^2}{2\sigma_{j,b}^2} + \frac{-(x-\xi_j)^2}{2\nu^2}}\Diff t \tag{Eq. \eqref{eq:tn-density}} \\
	&= \frac{1}{\Phi(r_{j,b}) - \Phi(\ell_{j,b})} \cdot \frac{\nu\exps{\frac{-(\xi_j-\mu_{j,b})^2}{2(\nu^2+\sigma_{j,b}^2)}}}{\sqrt{\nu^2+\sigma^2_{j,b}}}\left[\frac{1}{2}\erfun{\frac{\nu^2(x-\mu_{j,b})+\sigma_{j,b}^2(x-\xi_j)}{\nu\sigma_{j,b}\sqrt{2(\nu^2+\sigma_{j,b}^2)}}} \right]_{q_{j,b-1}}^{q_{j,b}}
	\, ,
	\end{align*}
	where we used Lemma~11.1 in \citet{Garreau_von_Luxburg_2020} in the last display.
	For any $1\leq j\leq d$ and $1\leq b\leq p$, let us set 
	\[
	m_{j,b}\defeq \frac{\nu^2\mu_{j,b} + \sigma_{j,b}^2\xi_j}{\nu^2+\sigma_{j,b}^2}
	\quad \text{and}\quad
	s_{j,b}^2 \defeq \frac{\nu^2\sigma_{j,b}^2}{\nu^2+\sigma_{j,b}^2}
	\, .
	\]
	Then it is straightforward to show that 
	\[
	\frac{\nu^2(x-\mu_{j,b})+\sigma_{j,b}^2(x-\xi_j)}{\nu\sigma_{j,b}\sqrt{2(\nu^2+\sigma_{j,b}^2)}} = \frac{x-m_{j,b}}{\sqrt{2}s_{j,b}}
	\, ,
	\]
	and the expression of $\ew_{j,b}$ simplifies slightly:
	\[
	\ew_{j,b} = \frac{1}{\Phi(r_{j,b}) - \Phi(\ell_{j,b})} \cdot \frac{\nu\exps{\frac{-(\xi_j-\mu_{j,b})^2}{2(\nu^2+\sigma_{j,b}^2)}}}{\sqrt{\nu^2+\sigma^2_{j,b}}}\left[\frac{1}{2}\erfun{\frac{x-m_{j,b}}{\sqrt{2}s_{j,b}} } \right]_{q_{j,b-1}}^{q_{j,b}}
	\, .
	\]
	Now recall that \citet{Garreau_von_Luxburg_2020} chose to consider $\mu_{j,b}=\mu_j$ and $\sigma_{j,b}=\sigma$ constant. 
	As a consequence, $m_{j,b}$ does not depend on $b$ anymore, and $s_{j,b}$ is a constant equal to $s\defeq \nu\sigma/\sqrt{\nu^2 + \sigma^2}$. 
	Moreover, the $q_{j,b}$ are, in this case, the normalized Gaussian quantiles, and therefore
	\[
	\Phi(r_{j,b}) - \Phi(\ell_{j,b}) = \frac{1}{p}
	\, .
	\]
	We deduce that 
	\[
	\ew_{j,b} = \frac{p\nu\exps{\frac{-(\xi_j-\mu_{j})^2}{2(\nu^2+\sigma^2)}}}{\sqrt{\nu^2+\sigma^2}}\left[\frac{1}{2}\erfun{\frac{x-m_{j}}{\sqrt{2}s} } \right]_{q_{j,b-1}}^{q_{j,b}}
	\, ,
	\]
	and 
	\[
	\littlec_j = \frac{\nu}{\sqrt{\nu^2+\sigma^2}} \exp{\frac{-(\xi_j-\mu_j)^2}{2(\nu^2+\sigma^2)}}
	\, , 
	\]
	which yields
	\begin{equation}
	\label{eq:def-bigc-old-analysis}
	\bigc = \left( \frac{\nu^2}{\nu^2+\sigma^2}\right)^{d/2} \exp{\frac{-\norm{\xi-\mu}^2}{2(\nu^2+\sigma^2)}}
	\, .
	\end{equation}
	We see that Eq.~\eqref{eq:def-bigc-old-analysis} is indeed Eq.~(7.2) in \citet{Garreau_von_Luxburg_2020}. 
\end{myexample} 


\subsection{Computation of $\Sigma$}

We now have the necessary notation to turn to the computation of $\Sigma$. 
The main idea is that $\Sigma$ depends only on $p$, $\nu$, and the input parameters of the algorithm. 
By chance, the sampling of the new examples make $\Sigma$ simple enough and we can obtain a closed-form expression. 

\begin{myproposition}[Computation of $\Sigma$, general weights]
	\label{prop:sigma-computation}
	Recall that $\Sigmahat_n = \frac{1}{n}Z^\top WZ$ and $\Sigma=\smallexpec{\Sigmahat_n}$, where $Z$ was defined in Section~\ref{section:sampling-procedure}. 
	Then it holds that
	\[
	\Sigma = \bigc 
	\begin{pmatrix}
	1 & \frac{\ew_{1,\bxi_1}}{p\littlec_1} & \cdots & \frac{\ew_{d,\bxi_d}}{p\littlec_d} \\
	\frac{\ew_{1,\bxi_1}}{p\littlec_1} & \frac{\ew_{1,\bxi_1}}{p\littlec_1} & & \frac{\ew_{j,\bxi_j}\ew_{k,\bxi_k}}{p\littlec_jp\littlec_k} \\
	\vdots & & \ddots & \\
	\frac{\ew_{d,\bxi_d}}{p\littlec_d} & \frac{\ew_{j,\bxi_j}\ew_{k,\bxi_k}}{p\littlec_jp\littlec_k}& & \frac{\ew_{d,\bxi_d}}{p\littlec_d}
	\end{pmatrix}
	\] 
\end{myproposition}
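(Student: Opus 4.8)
The plan is to compute $\Sigma=\smallexpec{\Sigmahat_n}$ directly, entry by entry. First I would rewrite $Z^\top W Z=\sum_{i=1}^n \pi_i z_i z_i^\top$, so that $\Sigmahat_n=\frac{1}{n}\sum_{i=1}^n \pi_i z_i z_i^\top$. Since the perturbed samples $x_1,\ldots,x_n$ (and hence the triples $(b_i,z_i,\pi_i)$) are i.i.d.\ by construction, linearity of expectation gives $\Sigma=\expec{\pi z z^\top}$, where $(x,b,z,\pi)$ now denotes a single generic sample. Indexing rows and columns by $0,1,\ldots,d$, there are then only four kinds of entries to evaluate: the top-left entry $\expec{\pi}$; the border entries $\expec{\pi z_j}$ (which also appear as $\Sigma_{j,0}$ by symmetry); the diagonal entries $\expec{\pi z_j^2}$; and the off-diagonal entries $\expec{\pi z_j z_k}$ for $j\neq k$. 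Because $z_j\in\{0,1\}$, we have $z_j^2=z_j$, so the diagonal entries automatically coincide with the border entries.

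The key structural observation is that $\pi$ factorizes across coordinates. Writing $w_j\defeq \exp{\frac{-1}{2\nu^2}(\tau_j(\xi_j)-\tau_j(x_j))^2}$, Eq.~\eqref{eq:definition-weights} gives $\pi=\prod_{j=1}^d w_j$. Moreover, the sampling scheme draws the bin indices $b_j$ independently across $j$ and then $x_j$ independently given $b_j$; hence the pairs $(b_j,x_j)$ are independent across dimensions. Since $w_j$ is a function of $(b_j,x_j)$ and $z_j=\indic{b_j=\bxi_j}$ is a function of $b_j$ alone, the quantities $w_j$ and the products $w_j z_j$ are each independent across $j$. This independence is what lets every expectation factor into a product of one-dimensional expectations.

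Concretely, I would evaluate each one-dimensional factor by the law of total expectation over the bins. For the plain factor, conditioning on $\{b_j=b\}$ and using $\proba{b_j=b}=1/p$ together with definition~\eqref{eq:def-ew-general-weights} of $\ew_{j,b}$ (with $\psi=1$) yields $\expec{w_j}=\frac{1}{p}\sum_{b=1}^p \ew_{j,b}=\littlec_j$, whence $\expec{\pi}=\prod_j \littlec_j=\bigc$ by~\eqref{eq:normalization-constant}. For the factor carrying $z_j$, only the bin $b=\bxi_j$ survives, so $\expec{w_j z_j}=\frac{1}{p}\ew_{j,\bxi_j}$; combining with the remaining factors gives $\expec{\pi z_j}=\frac{\ew_{j,\bxi_j}}{p}\prod_{k\neq j}\littlec_k=\bigc\cdot\frac{\ew_{j,\bxi_j}}{p\littlec_j}$, which is also the value of the $j$-th diagonal entry. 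Finally, for $j\neq k$ the factors $w_j z_j$ and $w_k z_k$ contribute independently, giving $\expec{\pi z_j z_k}=\frac{\ew_{j,\bxi_j}}{p}\cdot\frac{\ew_{k,\bxi_k}}{p}\prod_{l\neq j,k}\littlec_l=\bigc\cdot\frac{\ew_{j,\bxi_j}\ew_{k,\bxi_k}}{p\littlec_j\, p\littlec_k}$. Assembling these four values into the $(d+1)\times(d+1)$ matrix reproduces the claimed expression.

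There is no genuine analytic difficulty here; the whole argument is an exercise in conditioning and bookkeeping. The one point that warrants care — and which I would state explicitly — is that $w_j$ and $z_j$ are \emph{not} independent, since they share the randomness of $b_j$; one must therefore factor the expectations \emph{across} coordinates rather than splitting $w_j$ from $z_j$ within a single coordinate. Keeping the conditioning on $\{b_j=b\}$ front and center makes the appearance of $\ew_{j,\bxi_j}$ and $\littlec_j$ automatic and sidesteps this pitfall.
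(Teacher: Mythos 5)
Your proposal is correct and follows essentially the same route as the paper: expand $\Sigmahat_n$ entry-wise as empirical averages of $\pi_i$, $\pi_i z_{i,j}$, and $\pi_i z_{i,j}z_{i,k}$, invoke the i.i.d.\ sampling, and evaluate the three expectations by factorizing $\pi$ across coordinates and conditioning on the bin indices --- exactly the content of the paper's Lemmas~\ref{lemma:base-computation}, \ref{lemma:key-computation}, and \ref{lemma:expectation-computations}, which you simply carry out inline (your explicit warning that $w_j$ and $z_j$ are dependent within a coordinate is a nice touch the paper leaves implicit).
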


Proposition~\ref{prop:sigma-computation} changes slightly if $p$ is no longer constant across dimensions, but we do not tackle this case in the present paper. 

\begin{proof}
We introduce a phantom coordinate in $Z$ since the surrogate linear model uses an offset. 
	A straightforward computation shows that 
	\begin{equation}
	\label{eq:sigmahat-computation}
	\Sigmahat = \frac{1}{n}
	\begin{pmatrix}
	\sum_{i=1}^{n}\pi_i & \sum_{i=1}^{n} \pi_i z_{i1} & \cdots & \sum_{i=1}^{n} \pi_iz_{id} \\
	\sum_{i=1}^{n} \pi_i z_{i1} & \sum_{i=1}^{n} \pi_i z_{i1} & & \sum_{i=1}^{n} \pi_i z_{ij}z_{ik} \\
	\vdots & & \ddots & \\
	\sum_{i=1}^{n} \pi_i z_{id} & \sum_{i=1}^{n} \pi_i z_{ij}z_{ik}& & \sum_{i=1}^{n} \pi_i z_{i1}
	\end{pmatrix}
	\, .
	\end{equation}
	Since the $x_i$s are i.i.d., the result follows from the computation of $\expec{\pi}$, $\expec{\pi z_{j}}$, and $\expec{\pi z_{j}z_{k}}$. 
	This is achieved in Lemma~\ref{lemma:expectation-computations}. 
\end{proof}

As examples, we can explicit the computation of $\Sigma$ for default weights and smooth weights. 

\begin{myexample}[Default implementation, computation of $\Sigma$]
	As we have seen in Section~\ref{section:discussion}, in this case
	\[
	\ew_{j,b} =
	\begin{cases}
	1 &\text{ if }b = \bxi_j \\
	\exps{\frac{-1}{2\nu^2}} &\text{ otherwise,}
	\end{cases}
	\]
	and $\littlec_j=c$ is a constant. 
	Therefore, according to Proposition~\ref{prop:sigma-computation}, the expression of $\Sigma$ simplifies greatly into
	\[
	\Sigma = \littlec^d
	\begin{pmatrix}
	1 & \frac{1}{p\littlec} & \cdots & \frac{1}{p\littlec} \\
	\frac{1}{p\littlec} & \frac{1}{p\littlec} & & \frac{1}{p^2\littlec^2} \\
	\vdots & & \ddots & \\
	\frac{1}{p\littlec} & \frac{1}{p^2c^2}& & \frac{1}{p\littlec}
	\end{pmatrix}
	\, .
	\]
\end{myexample}

\begin{myexample}[Computation of $\Sigma$, smooth weights]
	According to  Example \ref{ex:basic-computations-old-analysis}, we have that
	\[
	\frac{\ew_{j,\bxi_j}}{p\littlec_j} = \left[\frac{1}{2} \erfun{\frac{x-m_j}{\sqrt{2}s}}\right]_{q_{j,\bxi_j-1}}^{q_{j,\bxi_j}}
	\, .
	\]
	We recover the $\alpha_j$ coefficients (Eq.~(7.3) in \citet{Garreau_von_Luxburg_2020}) and as a direct consequence, Lemma~8.1:
	\[
	\Sigma \defeq \bigc\begin{pmatrix}
	1 & \alpha_1 & \cdots & \alpha_d \\
	\alpha_1 & \alpha_1 & & \alpha_i\alpha_j \\
	\vdots & & \ddots & \\
	\alpha_d & \alpha_i\alpha_j & & \alpha_d
	\end{pmatrix}
	\, .
	\]
\end{myexample}


\subsection{Computation of $\Sigma^{-1}$}

The structure of $\Sigma$ allows to invert it in closed-form, even in the case of general weights. 
This is the extent of the next result.

\begin{myproposition}[Computation of $\Sigma^{-1}$, general weights]
	\label{prop:computation-inverse-sigma}
	Let $\Sigma$ be defined as before. 
	Then $\Sigma$ is invertible and 
	\[
	\Sigma^{-1} = \bigc^{-1}
	\begin{pmatrix}
	1+\sum_{j=1}^d \frac{\ew_{j,\bxi_j}}{p\littlec_j-\ew_{j,\bxi_j}} & \frac{-p\littlec_1}{p\littlec_1-\ew_{1,\bxi_1}} & \cdots & \frac{-p\littlec_d}{p\littlec_d - \ew_{d,\bxi_d}} \\
	\frac{-p\littlec_1}{p\littlec_1-\ew_{1,\bxi_1}} & \frac{p^2\littlec_1^2}{\ew_{1,\bxi_1}(p\littlec_1 - \ew_{1,\bxi_1})} & & 0 \\
	\vdots & & \ddots & \\
	\frac{-p\littlec_d}{p\littlec_d-\ew_{d,\bxi_d}} & 0 & & \frac{p^2\littlec_d^2}{\ew_{d,\bxi_d}(p\littlec_d - \ew_{d,\bxi_d})}
	\end{pmatrix}
	\, .
	\]
\end{myproposition}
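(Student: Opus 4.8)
The plan is to avoid a brute-force guess-and-verify and instead derive the inverse from the bordered structure of $\Sigma$. First I would factor out the scalar $\bigc=\prod_{j=1}^d \littlec_j$, writing $\Sigma=\bigc M$, and introduce the shorthand $a_j\defeq \ew_{j,\bxi_j}/(p\littlec_j)$. With this notation $M$ is the bordered matrix whose $(0,0)$ entry is $1$, whose first row and column equal $a=(a_1,\dots,a_d)^\top$, and whose lower-right $d\times d$ block is $B=aa^\top+\Delta$ with $\Delta\defeq \diag{a_1(1-a_1),\dots,a_d(1-a_d)}$. Since each $\ew_{j,b}$ is the conditional expectation of a strictly positive quantity (an exponential weight) and $p\littlec_j=\sum_{b=1}^p \ew_{j,b}$ with $p\geq 2$, I would first record that $0<a_j<1$; hence $\Delta$ is invertible, $\bigc>0$, and every denominator appearing in the claimed formula is nonzero, which also settles invertibility of $\Sigma$.

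Next I would invert the lower-right block by the Sherman--Morrison formula applied to the rank-one update $B=\Delta+aa^\top$. Writing $u\defeq \Delta^{-1}a$, so that $u_j=1/(1-a_j)$, and $s\defeq a^\top\Delta^{-1}a=\sum_{j=1}^d a_j/(1-a_j)$, one gets $B^{-1}=\Delta^{-1}-(1+s)^{-1}uu^\top$. I would then assemble $M^{-1}$ from the standard block (Schur complement) inversion formula relative to $B$. The key scalars are $a^\top u=s$ and the Schur complement $M/B=1-a^\top B^{-1}a=1-\bigl(s-s^2/(1+s)\bigr)=(1+s)^{-1}$, whose reciprocal $1+s=1+\sum_j a_j/(1-a_j)$ is exactly the $(0,0)$ entry. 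The off-diagonal border is $-(M/B)^{-1}B^{-1}a=-u$, i.e.\ $-1/(1-a_j)$ in coordinate $j$. The pleasant simplification is in the lower-right block: the correction $B^{-1}a\,(M/B)^{-1}a^\top B^{-1}=(1+s)^{-1}uu^\top$ cancels the $-(1+s)^{-1}uu^\top$ sitting inside $B^{-1}$, leaving precisely $\Delta^{-1}$, a \emph{diagonal} matrix.

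Translating back via $a_j\mapsto \ew_{j,\bxi_j}/(p\littlec_j)$, so that $1-a_j=(p\littlec_j-\ew_{j,\bxi_j})/(p\littlec_j)$, turns $u_j$ into $p\littlec_j/(p\littlec_j-\ew_{j,\bxi_j})$ and the diagonal entry $\Delta^{-1}_{jj}$ into $p^2\littlec_j^2/\bigl(\ew_{j,\bxi_j}(p\littlec_j-\ew_{j,\bxi_j})\bigr)$; multiplying by $\bigc^{-1}$ then reproduces the stated matrix entry by entry. The only genuine subtlety is establishing the strict inequalities $0<a_j<1$ that license the Sherman--Morrison step and guarantee the denominators are finite; once these are in hand the remaining algebra is mechanical. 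As a cross-check, or as an alternative that avoids Sherman--Morrison entirely, one may simply multiply the claimed $\Sigma^{-1}$ by $\Sigma$ and verify that each entry of the product collapses to the identity, the only nontrivial cancellation being the telescoping identity $\sum_{j} a_j/(1-a_j)-\sum_{k\neq m} a_k/(1-a_k)=a_m/(1-a_m)$, which appears when checking the first row and column.
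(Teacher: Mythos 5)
Your proof is correct, and it arrives at the paper's formula by the same general blueprint --- factor out $\bigc$, set $a_j = \ew_{j,\bxi_j}/(p\littlec_j)$ (the paper's $\alpha_j$), and invert the bordered matrix by block (Schur complement) inversion --- but you pivot on the opposite block, which changes the mechanics. The paper eliminates the top-left scalar $E=1$: its Schur complement $H - GE^{-1}F = H - \alpha\alpha^\top = \diag{\alpha_1(1-\alpha_1),\ldots,\alpha_d(1-\alpha_d)}$ is diagonal by inspection (precisely because the lower-right block is $\alpha\alpha^\top$ plus a diagonal matrix), so the block-inversion formula delivers every entry of $\Sigma^{-1}$ in one step, with no need for Sherman--Morrison. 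You instead eliminate the lower-right $d\times d$ block $B$, which forces you to invert $B = \Delta + aa^\top$ first (Sherman--Morrison), compute the scalar Schur complement $1-a^\top B^{-1}a = (1+s)^{-1}$, and then observe the cancellation $B^{-1} + (1+s)\,B^{-1}a\,a^\top B^{-1} = \Delta^{-1}$ to see that the lower-right block of the inverse is diagonal. Both routes exploit exactly the same structural fact; the paper's pivot choice reaches the diagonal Schur complement \emph{before} inverting anything, so it is shorter, while yours needs one extra tool and one extra cancellation but is equally rigorous and arguably makes the rank-one structure of the lower-right block more explicit. Your handling of invertibility and of the denominators --- the strict bounds $0 < a_j < 1$, obtained from positivity of the $\ew_{j,b}$ together with $p\littlec_j = \sum_{b=1}^p \ew_{j,b}$ and $p\geq 2$ --- is the same argument the paper uses, and it is indeed the only non-mechanical point in either version.
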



\begin{proof}
	Set $\alpha_j\defeq \ew_{j,\bxi_j}/(p\littlec_{j})$, and define $\alpha\in\Reals^{d}$ the vector of the $\alpha_j$s. 
	Set $E\defeq 1$, $F\defeq \alpha^\top$, $G\defeq\alpha$, and
	\[
	H \defeq \begin{pmatrix}
	\alpha_1 & & \alpha_j\alpha_k \\
	& \ddots & \\
	\alpha_j\alpha_k & & \alpha_d 
	\end{pmatrix}
	\, .
	\]
	Then $\Sigma$ is a block matrix that can be written $\Sigma = \bigc \begin{bmatrix} E & F \\ G & H\end{bmatrix}$. 
	We notice that 
	\[
	H-GE^{-1}F = \diag{\alpha_1(1-\alpha_1),\ldots,\alpha_d(1-\alpha_d)}
	\, .
	\]
	Since the $\ew_{j,b}$ are positive for any $j,b$, the $\alpha_j$s belong to $(0,1)$ and $H-GE^{-1}F$ is an invertible matrix. 
	According to the block matrix inversion formula,
	\[
	\begin{bmatrix}
	E & F \\
	G & H
	\end{bmatrix}^{-1}
	=
	\begin{pmatrix}
	E^{-1} + E^{-1}F(H-GE^{-1}F)^{-1}GE^{-1} & -E^{-1}F(H-GE^{-1}F)^{-1} \\
	-(H-GE^{-1}F)^{-1}GE^{-1} & (H-GE^{-1}F)^{-1}
	\end{pmatrix}
	\]
	Thus
	\begin{equation}
	\label{eq:inverse-matrix}
	\Sigma^{-1} = \bigc^{-1}
	\begin{pmatrix}
	1+\sum_{j=1}^d \frac{\alpha_j}{1-\alpha_j} & \frac{-1}{1-\alpha_1} & \cdots & \frac{-1}{1-\alpha_d} \\
	\frac{-1}{1-\alpha_1} & \frac{1}{\alpha_1(1-\alpha_1)} & & 0 \\
	\vdots & & \ddots & \\
	\frac{-1}{1-\alpha_d} & 0 & & \frac{1}{\alpha_d(1-\alpha_d)}
	\end{pmatrix}
	\, .
	\end{equation}
	The result follows from the definition of the $\alpha_j$s. 
\end{proof}

We can be explicit about the computation of $\Sigma^{-1}$ in the case of default and smooth weights. 

\begin{myexample}[Computation of $\Sigma^{-1}$, default implementation]
	Using our basic computations in this case in conjunction with Proposition~\ref{prop:computation-inverse-sigma}, we obtain 
	\[
	\Sigma^{-1} = \frac{\littlec^{-d}}{p\littlec - 1}
	\begin{pmatrix}
	p\littlec + d - 1 & -p\littlec & \cdots & -p\littlec \\
	-p\littlec & p^2\littlec^2 & & 0 \\
	\vdots & & \ddots & \\
	-p\littlec & 0 & & p^2\littlec^2
	\end{pmatrix}
	\, .
	\]
\end{myexample}

\begin{myexample}[Computation of $\Sigma^{-1}$, smooth weights]
	From the definition of $\alpha_j$ in Example \ref{ex:basic-computations-old-analysis}, we see that Eq.~\eqref{eq:inverse-matrix} is in fact Lemma~8.2 in \citet{Garreau_von_Luxburg_2020}. 
\end{myexample}


\subsection{Control of $\opnorm{\Sigma^{-1}}$}

Our analysis requires a control of $\opnorm{\Sigma^{-1}}$ when we want to concentrate $\betahat_n$ (see Appendix~\ref{section:concentration-betahat}). 
We show how to achieve this control when the functions $\tau_j$ are bounded. 

\begin{myproposition}[Upper bound on $\opnorm{\Sigma^{-1}}$, general weights]
	\label{prop:upper-bound-operator-norm}
	Let $\Sigma$ be as before, and suppose that $\tau_j$ satisfies Assumption~\ref{ass:bounded-weights}. 
	Then 
	\[
	\opnorm{\Sigma^{-1}} \leq 2\sqrt{2} \bigc^{-1}dp^2 \exps{\frac{2}{\nu^2}}
	\, .
	\]
\end{myproposition}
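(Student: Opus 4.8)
The plan is to route everything through the closed-form expression for $\Sigma^{-1}$ supplied by Proposition~\ref{prop:computation-inverse-sigma}, after first establishing uniform two-sided bounds on the quantities $\alpha_j \defeq \ew_{j,\bxi_j}/(p\littlec_j)$ that appear there. Writing $M \defeq \bigc\,\Sigma^{-1}$, so that $\Sigma^{-1} = \bigc^{-1}M$ and $\opnorm{\Sigma^{-1}} = \bigc^{-1}\opnorm{M}$, the whole task reduces to bounding $\opnorm{M}$ by a quantity of order $dp^2\exps{\frac{2}{\nu^2}}$.

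First I would control the $\ew_{j,b}$. By definition (Eq.~\eqref{eq:def-ew-general-weights} with $\psi=1$), each $\ew_{j,b}$ is a conditional expectation of $\exp{\frac{-1}{2\nu^2}(\tau_j(\xi_j)-\tau_j(x_{ij}))^2}$. Assumption~\ref{ass:bounded-weights} forces $(\tau_j(\xi_j)-\tau_j(x_{ij}))^2\le 1$, so the integrand lies in $[\exps{\frac{-1}{2\nu^2}},1]$ and hence $\exps{\frac{-1}{2\nu^2}}\le \ew_{j,b}\le 1$ for every $j,b$. Since $p\littlec_j=\sum_{b=1}^p\ew_{j,b}$, this gives $\alpha_j = \ew_{j,\bxi_j}/\sum_b\ew_{j,b}$, whence $\alpha_j\ge \frac1p\exps{\frac{-1}{2\nu^2}}$ and $1-\alpha_j = \frac{\sum_{b\neq\bxi_j}\ew_{j,b}}{\sum_b\ew_{j,b}}\ge \frac{p-1}{p}\exps{\frac{-1}{2\nu^2}}$. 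Equivalently $\frac{1}{\alpha_j}\le p\exps{\frac{1}{2\nu^2}}$ and $\frac{1}{1-\alpha_j}\le \frac{p}{p-1}\exps{\frac{1}{2\nu^2}}$, which is precisely the control of the denominators occurring in the inverse.

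Next I would bound $\opnorm{M}$ by a computable matrix norm. Since $M$ is symmetric, $\opnorm{M}\le \norm{M}_{\infty}$, the maximum absolute row sum (one may equally route through $\opnorm{M}\le\sqrt{\norm{M}_1\norm{M}_\infty}$, which coincides here). Reading off Proposition~\ref{prop:computation-inverse-sigma}, the absolute row sums of $M$ are $1+\sum_{j=1}^d\frac{1+\alpha_j}{1-\alpha_j}$ for the intercept row and $\frac{1}{1-\alpha_j}\bigl(\frac{1}{\alpha_j}+1\bigr)$ for the $j$-th row. Using $\frac{1+\alpha_j}{1-\alpha_j}\le \frac{2}{1-\alpha_j}$ and $\frac{1}{\alpha_j}+1\le \frac{2}{\alpha_j}$ together with the displayed bounds, every row sum is controlled by a quantity of order $dp^2\exps{\frac{1}{\nu^2}}$; simplifying with $p\ge 2$ and $d\ge 1$ (so that $\frac{p}{p-1}\le 2$, $4\le p^2$, and $\exps{\frac{1}{\nu^2}}\le\exps{\frac{2}{\nu^2}}$) absorbs the residual numerical constants into the stated prefactor $2\sqrt2$. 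Restoring $\bigc^{-1}$ yields $\opnorm{\Sigma^{-1}}\le 2\sqrt2\,\bigc^{-1}dp^2\exps{\frac{2}{\nu^2}}$.

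The main obstacle is the lower bound on $\alpha_j$ and on $1-\alpha_j$: the diagonal entries $\frac{1}{\alpha_j(1-\alpha_j)}$ of $\Sigma^{-1}$ blow up as $\alpha_j\to 0$ or $\alpha_j\to 1$, so the entire estimate hinges on keeping $\alpha_j$ bounded away from both endpoints. This is exactly where Assumption~\ref{ass:bounded-weights} is indispensable, since it is the uniform lower bound $\ew_{j,b}\ge\exps{\frac{-1}{2\nu^2}}$ that prevents $\alpha_j$ from degenerating; without a bound on the oscillation of the $\tau_j$, no uniform control of $\opnorm{\Sigma^{-1}}$ is available. The remaining work—selecting which norm inequality to pass through and tracking the numerical constant—is routine and affects only the size of the prefactor, which the statement leaves comfortably loose.
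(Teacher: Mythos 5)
Your proof is correct, and it shares the paper's overall skeleton: start from the closed-form inverse of Proposition~\ref{prop:computation-inverse-sigma}, use Assumption~\ref{ass:bounded-weights} to obtain $\exps{\frac{-1}{2\nu^2}}\leq\ew_{j,b}\leq 1$ (hence your lower bounds on $\alpha_j$ and $1-\alpha_j$, which are exactly the paper's bound $p\littlec_j-\ew_{j,\bxi_j}\geq(p-1)\exps{\frac{-1}{2\nu^2}}$ in disguise), and then dominate $\opnorm{\bigc\Sigma^{-1}}$ by an entrywise-computable norm. The one genuine difference is the norm-comparison step: the paper goes through the Frobenius norm via its Lemma~\ref{lemma:operator-norm:bounds}, i.e.\ $\opnorm{M}\leq\frobnorm{M}$, and bounds the explicit expression for $\frobnorm{\bigc\Sigma^{-1}}^2$ term by term, whereas you use the fact that for a symmetric matrix the operator norm is at most the maximum absolute row sum (spectral radius dominated by the induced $\infty$-norm, or equivalently $\opnorm{M}\leq\sqrt{\norm{M}_1\norm{M}_\infty}$ with the two factors equal by symmetry). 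Your route is marginally more elementary in the computation—no squaring, and the row sums $1+\sum_{j}\frac{1+\alpha_j}{1-\alpha_j}$ and $\frac{1}{1-\alpha_j}\bigl(1+\frac{1}{\alpha_j}\bigr)$ are cleaner to bound than the three-term Frobenius expression—at the cost of invoking a standard linear-algebra fact that the paper does not prove, while the paper's route is self-contained given its Lemma~\ref{lemma:operator-norm:bounds}. Your constant-tracking also checks out: with $p\geq 2$ and $d\geq 1$, the coordinate rows are bounded by $\frac{2p^2}{p-1}\exps{\frac{1}{\nu^2}}\leq 2p^2\exps{\frac{1}{\nu^2}}$ and the intercept row by $1+\frac{2dp}{p-1}\exps{\frac{1}{2\nu^2}}\leq 5d\exps{\frac{1}{2\nu^2}}$, both of which sit below $2\sqrt{2}\,dp^2\exps{\frac{2}{\nu^2}}$, so the stated prefactor is recovered (indeed with room to spare, consistent with the paper's remark that the bound is not tight).
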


\begin{proof}
	According to Lemma~\ref{lemma:operator-norm:bounds}, we can find an upper bound for $\opnorm{\Sigma^{-1}}$ just by computing $\frobnorm{\Sigma^{-1}}$. 
	Proposition~\ref{prop:computation-inverse-sigma} gives us 
	\begin{equation}
	\label{eq:aux:upper-bound-frobenius}
	\frobnorm{\bigc\Sigma^{-1}}^2 = \left(1 + \sum_{j=1}^{d}\frac{\ew_{j,\bxi_j}}{p\littlec_{j} - \ew_{j,\bxi_j}}\right)^2 + 2 \sum_{j=1}^{d} \frac{p^2\littlec_j^2}{(p\littlec_j - \ew_{j,\bxi_j})^2} + \sum_{j=1}^{d}\frac{p^4\littlec_j^4}{\ew_{j,\bxi_j}(p\littlec_j - \ew_{j,\bxi_j})^2}
	\, .
	\end{equation}
	We first notice that all the terms involved in Eq.~\eqref{eq:aux:upper-bound-frobenius}are positive. 
	Moreover, we see that $\ew_{j,b}\leq 1$ and $p\littlec_j\leq p$ for any $j,b$. 
	Under Assumption~\ref{ass:bounded-weights}, $\abs{\tau_j(\xi_j) - \tau_j(x_{ij})} \leq 1$ almost surely. 
	We deduce that 
	\[
	\ew_{j,b} = \condexpec{\exp{\frac{-1}{2\nu^2}(\tau_j(\xi_j) - \tau_j(x_{ij}))^2}}{b_{ij}=b}\geq \exps{\frac{-1}{2\nu^2}}
	\, .
	\]
	As a consequence, $p\littlec_j - \ew_{j,\bxi_j} \geq (p-1)\exps{\frac{-1}{2\nu^2}}$. 
	Plugging these bounds in Eq.~\eqref{eq:aux:upper-bound-frobenius} and using $(x+y)^2\leq 2(x^2+y^2)$, we obtain
	\[
	\frobnorm{\bigc\Sigma^{-1}}^2 \leq 2\left(1 + \frac{d^2\exps{\frac{1}{\nu^2}}}{(p-1)^2} \right) + \frac{2dp^2\exps{\frac{1}{\nu^2}}}{(p-1)^2} + \frac{dp^4\exps{\frac{4}{\nu^2}}}{(p-1)^4}
	\, .
	\]
	Finally, we conclude using $p\geq 2$ and $d\geq 1$. 
\end{proof}

Of course, a more precise knowledge of the weights can give a better bound for $\opnorm{\Sigma^{-1}}$. 
For instance, it is possible to show that, for default weights,
\[
\opnorm{\Sigma^{-1}} \lesssim \littlec^{-d}\exps{\frac{1}{2\nu^2}}(d+p)
\, ,
\, 
\]
by studying closely the spectrum of $\Sigma$. 
Since the difference with Proposition~\ref{prop:upper-bound-operator-norm} is not that impressive, we keep the bound given for general weights. 
In particular, the dependency in $\nu$ is the same, indicating that the behavior for small $\nu$ is poor independently of the proof technique. 


\section{The study of $\Gamma^f$}
\label{section:gamma}

In this section, we turn to the study of $\Gamma^f$, the second step of our analysis. 
In the previous section, we dealt with the computation of $\Sigma$ and there was no dependency in~$f$. 
This is not the case anymore. 
We will assume from now on that $f$ is bounded on $\supp$, the hyperrectangle containing the training data, as it is done in Theorem~\ref{th:betahat-concentration-general-f-general-weights}. 
In particular, this assumption guarantees that all the expectations involving $f$ are well-defined (since $\pi$ and $z$ are also bounded almost surely). 


\subsection{Computation of $\Gamma^f$}

We begin by computing $\Gamma^f$. 
A straightforward computation shows that
\begin{equation*}
\Gammahat_j = 
\begin{cases}
\frac{1}{n}\sum_{i=1}^n \pi_i f(x_i) &\text{ if }j=0 \, ,\\
\frac{1}{n}\sum_{i=1}^n \pi_i z_{ij}f(x_i) &\text{ otherwise.}
\end{cases}
\end{equation*}
Since the $x_i$ are i.i.d., a straightforward consequence of the previous display is
\begin{equation}
\label{eq:computation-gamma-general}
\Gamma^f_j = 
\begin{cases}
\expec{ \pi f(x)} &\text{ if }j=0 \, ,\\
\expec{\pi z_{j}f(x)} &\text{ otherwise.}
\end{cases}
\end{equation}
Note that under Assumption~\ref{assump:bounded} $\Gamma^f$ is bounded component-wise.

\begin{myexample}[Constant model, general weights]
In order to get familiar with Eq.~\eqref{eq:computation-gamma-general}, let us focus on a constant model. 
In this case, we just need to compute $\expec{\pi}$ and $\expec{\pi z_{j}}$. 
According to Lemma~\ref{lemma:expectation-computations}, we have
\[
\expec{\pi} = \bigc \quad \text{and}\quad \expec{\pi z_{j}} = \bigc \frac{\ew_{j,\bxi_j}}{p\littlec_j}
\, .
\]
We have just showed that, if $f=f_0$ a constant, then
\begin{equation}
\label{eq:constant-gamma}
\bigc^{-1}\Gamma^f_0 = f_0 \quad\text{and}\quad \bigc^{-1}\Gamma^f_j = \frac{\ew_{j,\bxi_j}}{p\littlec_j} f_0 \quad\forall j \geq 1
\, .
\end{equation}
\end{myexample}

\subsection{Additive $f$}

We can specialize the computation of $\Gamma^f$ when $f$ is additive, always in the case of general weights. 

\begin{myproposition}[Computation of $\Gamma^f$, additive $f$]
	\label{prop:gamma-computation-additive-f-general-weights}
	Assume that $f$ satisfies Assumption \ref{ass:additive} and that $f$ is bounded on $\supp$. 
Then
for any $1\leq j\leq d$,
\[
\Gamma^f_j = \sum_{k=1}^{d}\frac{\ew_{j,\bxi_j}}{p\littlec_j}\cdot \frac{1}{p\littlec_k}\sum_{b=1}^{p}\ew_{k,b}^{f_k} + \frac{1}{p\littlec_j}\left[\ew_{j,\bxi_j}^{f_j} - \frac{\ew_{j,\bxi_j}}{p\littlec_j}\sum_{b=1}^{p}\ew_{j,b}^{f_j}\right]
\, ,
\]
and
	\[
	\Gamma^f_0 = \sum_{k=1}^{d} \frac{1}{p\littlec_k}\sum_{b=1}^{p}\ew_{k,b}^{f_k}
	\, .
	\]
\end{myproposition}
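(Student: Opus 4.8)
The plan is to evaluate the two cases of the statement directly from the identity $\Gamma^f_j = \expec{\pi z_j f(x)}$ (with the convention $z_0 \equiv 1$) given by Eq.~\eqref{eq:computation-gamma-general}, exploiting three structural facts: the additive decomposition $f(x) = \sum_{k=1}^d f_k(x_k)$ of Assumption~\ref{ass:additive}, the multiplicative form of the weights $\pi(x) = \prod_{l=1}^d \exp{\frac{-1}{2\nu^2}(\tau_l(\xi_l) - \tau_l(x_l))^2}$ from Eq.~\eqref{eq:definition-weights}, and the feature-wise independence of the sampling scheme. First I would apply linearity of the expectation to write $\Gamma^f_j = \sum_{k=1}^d \expec{\pi z_j f_k(x_k)}$; then, since each factor of $\pi$ as well as $z_j$ and $f_k(x_k)$ depends on a single coordinate, I would factor every summand into a product of one-dimensional expectations.

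The core of the argument is a split into a diagonal contribution ($k = j$) and off-diagonal contributions ($k \neq j$). Writing $w_l(x_l) \defeq \exp{\frac{-1}{2\nu^2}(\tau_l(\xi_l)-\tau_l(x_l))^2}$, the off-diagonal terms have coordinate $j$ carrying only $w_j z_j$, coordinate $k$ carrying $w_k f_k$, and every other coordinate contributing its normalizing constant; using the law of total expectation over the uniform bin index $b_j\in\{1,\ldots,p\}$ together with Eqs.~\eqref{eq:def-ew-general-weights} and \eqref{eq:def-little-cj-psi}, I would evaluate the building blocks $\expec{w_l} = \littlec_l$, $\expec{w_k f_k(x_k)} = \littlec_k^{f_k}$, $\expec{w_j z_j} = \frac{1}{p}\ew_{j,\bxi_j}$, and $\expec{w_j z_j f_j(x_j)} = \frac{1}{p}\ew_{j,\bxi_j}^{f_j}$. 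These are precisely the one-dimensional integrals underlying Lemma~\ref{lemma:expectation-computations}. The intercept case $j=0$ is handled identically but without the $z_j$ factor, which is strictly simpler and gives $\Gamma^f_0 = \sum_k \littlec_k^{f_k}\prod_{l\neq k}\littlec_l$.

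Assembling the pieces, the diagonal term produces $\frac{1}{p}\ew_{j,\bxi_j}^{f_j}\prod_{l\neq j}\littlec_l$ and each off-diagonal term produces $\frac{1}{p}\ew_{j,\bxi_j}\littlec_k^{f_k}\prod_{l\neq j,k}\littlec_l$. Pulling out the common normalizing factor $\bigc = \prod_l \littlec_l$ of Eq.~\eqref{eq:normalization-constant} (the same prefactor carried by $\Sigma$ in Proposition~\ref{prop:sigma-computation}, so that it cancels downstream in $\beta^f = \Sigma^{-1}\Gamma^f$) rewrites these as $\frac{\ew_{j,\bxi_j}^{f_j}}{p\littlec_j}$ and $\frac{\ew_{j,\bxi_j}}{p\littlec_j}\cdot\frac{\littlec_k^{f_k}}{\littlec_k}$. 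The final cosmetic step, which yields exactly the displayed closed form, is to rewrite the sum over $k\neq j$ as a sum over all $k$ minus the $k=j$ term: recognizing $\frac{1}{p}\sum_{b=1}^p\ew_{k,b}^{f_k}=\littlec_k^{f_k}$ produces the full sum $\sum_{k=1}^d\frac{\ew_{j,\bxi_j}}{p\littlec_j}\cdot\frac{1}{p\littlec_k}\sum_b\ew_{k,b}^{f_k}$, while the subtracted diagonal term $\frac{\ew_{j,\bxi_j}}{p\littlec_j}\cdot\frac{1}{p\littlec_j}\sum_b\ew_{j,b}^{f_j}$ is precisely the correction appearing inside the bracket of the statement.

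I expect the only genuine difficulty to be bookkeeping rather than analysis: keeping straight which coordinate carries which of the factors $w_l$, $z_j$, $f_k$ across the diagonal/off-diagonal split, and performing the rearrangement of the restricted sum over $k\neq j$ into a full sum with a matching correction term so that the answer lands in the stated form. Everything else reduces to the one-dimensional expectations already established, invoked through Lemma~\ref{lemma:expectation-computations} and the definitions of $\ew_{j,b}^\psi$, $\littlec_j^\psi$, and $\bigc$; in particular no new concentration or integrability input is needed beyond the boundedness of each $f_j$ on $[q_{j,0},q_{j,p}]$, which guarantees all the expectations above are finite.
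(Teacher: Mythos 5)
Your proof is correct and takes essentially the same route as the paper's: linearity over the additive decomposition, factorization into one-dimensional expectations via coordinate independence and the multiplicative weight structure (exactly the content of Lemma~\ref{lemma:key-computation}, which the paper invokes directly rather than re-deriving), the diagonal/off-diagonal split in $k$, and the final rearrangement of the sum over $k\neq j$ into a full sum with a matching correction term. Your reading of the stated formula as the $\bigc^{-1}$-normalized version of $\expec{\pi z_j f(x)}$ is also the right one: the paper's own proof concludes with an identity for $\bigc^{-1}\expec{\pi_i z_{ij} f(x_i)}$, consistent with Example~\ref{ex:computation-gamma-additive-default}, which writes $(\littlec^{-d}\Gamma^f)_j$.
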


\begin{proof}
By linearity and since $f$ is additive, we can restrict ourselves to the case $f(x)=\psi (x_k)$. 
	We first look into $j=0$ in Eq.~\eqref{eq:computation-gamma-general}. 
	Then 
	\[
	\expec{\pi_i f(x_i)} = \expec{\pi_i \psi(x_{ik})} =  \frac{\bigc }{p\littlec_k} \sum_{b=1}^{p} \ew_{k,b}^{\psi} 
	\, ,
	\]
	according to Lemma~\ref{lemma:key-computation}. 
	Setting $\psi=f_k$ in the previous display and summing for $k\in\{1,\ldots,d\}$, we obtain the first part of the result. 
	
	Now we turn to the case $j\geq 1$. 
	Again, by linearity and since $f$ is additive, we can restrict ourselves to the case $f(x)=\psi (x_k)$. 
	There are two possibilities in this case: either $j=k$, and then 
	\[
	\expec{\pi_i z_{ij} f(x_i)} = \expec{\pi_i z_{ij} \psi(x_{ij})} = \bigc \frac{\ew_{j,\bxi_j}^\psi}{p\littlec_j}
	\, ,
	\]
	according to Lemma~\ref{lemma:key-computation};
	or $j\neq k$, and then 
	\[
	\expec{\pi_i z_{ij} f(x_i)} = \expec{\pi_i z_{ij} \psi(x_{ik})} = \bigc \frac{\ew_{j,\bxi_j}}{p\littlec_j}\frac{1}{p\littlec_k}\sum_{b=1}^{p}\ew_{k,b}^\psi
	\, ,
	\]
	according to Lemma~\ref{lemma:key-computation}. 
	Setting $\psi=f_k$ in the last displays and summing over $k\in\{1,\ldots,d\}$, we obtain 
	\[
	\bigc^{-1}\expec{\pi_i z_{ij} f(x_i)} = \sum_{\substack{k=1\\ k\neq j}}^{d} \frac{\ew_{j,\bxi_j}}{p\littlec_j}\frac{1}{p\littlec_k}\sum_{b=1}^{p}\ew_{k,b}^\psi + \frac{\ew_{j,\bxi_j}^{f_j}}{p\littlec_j}
	\, .
	\]
	Rearranging the terms in the sum yields the final result. 
\end{proof}

Let us specialize Proposition~\ref{prop:gamma-computation-additive-f-general-weights} for default weights.

\begin{myexample}[Computation of $\Gamma^f$, additive $f$, default weights]
	\label{ex:computation-gamma-additive-default}
	We can use Proposition~\ref{prop:gamma-computation-additive-f-general-weights} to compute $\Gamma^f$ for an additive $f$ when the weights are given by the default implementation. 
	Indeed, recall that $\condexpec{x_{ij}}{b_{ij}=b}=\mutilde_{j,b}$. 
	Since the weights are constant on each box, we can compute easily $\ew_{j,b}^\times$ as a function of $\mutilde_{j,b}$: 
	\[
	\ew_{j,b}^\times = 
	\begin{cases}
	\mutilde_{j,\bxi_j}\text{ if }b=\bxi_j \, ,\\
	\mutilde_{j,b}\exps{\frac{-1}{2\nu^2}} \text{ otherwise.}
	\end{cases}
	\]
	Also recall that the $\littlec_j$ are constant equal to $\littlec=\frac{1}{p}+(1-\frac{1}{p})\exps{\frac{-1}{2\nu^2}}$. 
We deduce that
\[
(\littlec^{-d}\Gamma^f)_0 = \sum_{k=1}^{d} \frac{1}{p\littlec}\sum_{b=1}^{p}\ew_{k,b}^{f_k}
\, ,
\]
and, for any $1\leq j\leq d$,
\[
(\littlec^{-d}\Gamma^f)_j = \sum_{k=1}^{d}\frac{\ew_{j,\bxi_j}}{p\littlec}\cdot \frac{1}{p\littlec}\sum_{b=1}^{p}\ew_{k,b}^{f_k} + \frac{1}{p\littlec}\left(\ew_{j,\bxi_j}^{f_j} - \frac{1}{p\littlec}\sum_{b=1}^{p}\ew_{j,b}^{f_j}\right)
\, .
\]
\end{myexample}

We can specialize Proposition~\ref{prop:gamma-computation-additive-f-general-weights} even further if $f$ is linear.  

\begin{mycorollary}[Computation of $\Gamma^f$, linear case, general weights]
	\label{cor:gamma-computation-linear-general-weights}
	Assume that $f(x)=f_0+f_1x_1+\cdots +f_dx_d$ for any $x\in\Reals^d$. 
Then for any $1\leq j\leq d$, 
\[
\Gamma_j^f = \frac{\ew_{j,\bxi_j}}{p\littlec_j}f(\gamma) + \frac{f_j}{p\littlec_j}\left\{\ew_{j,\bxi_j}^\times - \gamma_j \cdot \ew_{j,\bxi_j} \right\}
\, ,
\]
where we defined, for any $1\leq j\leq d$,
\[
\gamma_j \defeq \frac{1}{p\littlec_j}\sum_{b=1}^{p}\ew_{j,b}^\times
\, .
\]
Moreover,
	\[
	\Gamma^f_0 = f(\gamma)
	\, .
	\]
\end{mycorollary}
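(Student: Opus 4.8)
The plan is to realize the affine function $f$ as an additive function in the sense of Assumption~\ref{ass:additive} and then read $\Gamma^f$ straight off Proposition~\ref{prop:gamma-computation-additive-f-general-weights}. Concretely, I would write $f(x) = \sum_{k=1}^d \tilde{f}_k(x_k)$ with single-coordinate pieces $\tilde{f}_k(t) \defeq \frac{f_0}{d} + f_k t$, which is legitimate since each $\tilde{f}_k$ is a bounded function of one coordinate on $[q_{k,0},q_{k,p}]$. This distributes the intercept evenly across the coordinates and keeps the bookkeeping symmetric; lumping all of $f_0$ into a single $\tilde{f}_1$ would work equally well but is messier.

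The first step is to evaluate the building blocks $\ew_{k,b}^{\tilde{f}_k}$. Since $\ew_{k,b}^\psi$ is a conditional expectation and hence linear in $\psi$ (Eq.~\eqref{eq:def-ew-general-weights}), I obtain $\ew_{k,b}^{\tilde{f}_k} = \frac{f_0}{d}\ew_{k,b} + f_k \ew_{k,b}^\times$. Summing over $b$ and invoking the definitions $\littlec_k = \frac{1}{p}\sum_{b=1}^p \ew_{k,b}$ and $\gamma_k = \frac{1}{p\littlec_k}\sum_{b=1}^p \ew_{k,b}^\times$ yields the compact identity $\sum_{b=1}^p \ew_{k,b}^{\tilde{f}_k} = p\littlec_k\bigl(\frac{f_0}{d} + f_k\gamma_k\bigr)$, so that $\frac{1}{p\littlec_k}\sum_{b=1}^p \ew_{k,b}^{\tilde{f}_k} = \frac{f_0}{d} + f_k\gamma_k$.

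Substituting this into the intercept formula of Proposition~\ref{prop:gamma-computation-additive-f-general-weights} gives at once $\Gamma^f_0 = \sum_{k=1}^d \bigl(\frac{f_0}{d} + f_k\gamma_k\bigr) = f_0 + \sum_k f_k\gamma_k = f(\gamma)$, with $\gamma = (\gamma_1,\ldots,\gamma_d)$. For $j \geq 1$, the same substitution collapses the first (double) sum into $\frac{\ew_{j,\bxi_j}}{p\littlec_j}\sum_k\bigl(\frac{f_0}{d}+f_k\gamma_k\bigr) = \frac{\ew_{j,\bxi_j}}{p\littlec_j}f(\gamma)$, while the bracketed term becomes $\frac{1}{p\littlec_j}\bigl[\frac{f_0}{d}\ew_{j,\bxi_j} + f_j\ew_{j,\bxi_j}^\times - \ew_{j,\bxi_j}\bigl(\frac{f_0}{d}+f_j\gamma_j\bigr)\bigr]$.

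The only genuinely delicate point, which I would flag as the crux rather than an obstacle, is the cancellation inside this last bracket: the two $\frac{f_0}{d}\ew_{j,\bxi_j}$ contributions cancel exactly, leaving $\frac{f_j}{p\littlec_j}\{\ew_{j,\bxi_j}^\times - \gamma_j\ew_{j,\bxi_j}\}$. This cancellation is what makes the intercept $f_0$ disappear from every slope coefficient, so that $\Gamma^f_j$ depends on $f_0$ only through $f(\gamma)$. Combining the two pieces produces the claimed expression for $\Gamma^f_j$, and the corollary follows.
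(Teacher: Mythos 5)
Your proof is correct and takes essentially the same route as the paper: both derive the corollary by specializing Proposition~\ref{prop:gamma-computation-additive-f-general-weights} to an additive representation of the affine $f$, using linearity of $\psi\mapsto\ew_{j,b}^{\psi}$ and the definitions of $\littlec_k$ and $\gamma_k$. The only (cosmetic) difference is the bookkeeping of the intercept: the paper peels $f_0$ off via linearity of $f\mapsto\Gamma^f$ and reuses the constant-model computation of Eq.~\eqref{eq:constant-gamma}, whereas you absorb $f_0/d$ into each coordinate function $\tilde{f}_k$ and check the resulting cancellation explicitly --- the algebra is identical either way.
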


\begin{proof}
	Again, we use the fact that $\Gamma^f$ is linear as a function of $f$. 
	We have already looked into the constant case, and one can check that Eq.~\eqref{eq:constant-gamma} coincides with Corollary~\ref{cor:gamma-computation-linear-general-weights} when $f_j=0$ for all $j\geq 1$. 
	We then apply Proposition~\ref{prop:gamma-computation-additive-f-general-weights} with $f_j(x)=f_j \cdot x$ for any $j\geq 1$. 
	We notice that, in this case, for any $j\in \{1,\ldots,d\}$ and $b\in\{1,\ldots,p\}$, 
	\[
	\ew_{j,b}^{f_j} = f_j \cdot \ew_{j,b}
	\, .
	\]
	Substituting the last display yields the result. 
\end{proof}

Let us specialize Corollary~\ref{cor:gamma-computation-linear-general-weights} for default weights and smooth weights. 

\begin{myexample}[Computation of $\Gamma^f$, linear $f$, default weights]
	We can further specialize Example~\ref{ex:computation-gamma-additive-default}. 
	The only remaining computation is
	\[
	\gamma_j = \frac{1}{p\littlec_j}\sum_{b=1}^{p}\ew_{j,b}^\times = \frac{\mutilde_{j,\bxi_j} + \sum_{b\neq \bxi_j}  \exps{\frac{-1}{2\nu^2}}\mutilde_{j,b}}{1 + (p-1)\exps{\frac{-1}{2\nu^2}} } \eqdef \muttilde_j
	\, .
	\]
	Note that $\muttilde_j$ is a barycenter of the $\mutilde_j$ with weight $1$ for the box $\bxi_j$ and $\exps{\frac{-1}{2\nu^2}}$ for the others. 
	Finally, recall that $\bigc=\littlec^d$ and $\ew_{j,\bxi_j}=1$. 
	We have obtained that 
	\[
	\littlec^{-d}\Gamma^f_j = 
	\begin{cases}
	f(\muttilde) \text{ if }j=0 \, ,\\
	\frac{1}{p\littlec}f(\muttilde) + \frac{f_j}{p\littlec}(\mutilde_{j,\bxi_j}-\muttilde_j)\text{ otherwise.}
	\end{cases}
	\]
\end{myexample}

\begin{myexample}[Computation of $\Gamma^f$, linear $f$, smooth weights]
	\label{ex:computation-gamma-linear-old}
	We first compute
	\begin{align*}
	\ew_{j,\bxi_j}^\times &= \condexpec{x_{ij}\exp{\frac{-(x_{ij}-\xi_j)^2}{2\nu^2}}}{b_{ij}=\bxi_j} \\
	&= \frac{p}{\sigma\sqrt{2\pi}} \int_{q_{j,\bxi_j-1}}^{q_{j,\bxi_j}} x\cdot \exp{\frac{-(x-\xi_j)^2}{2\nu^2} + \frac{-(x-\mu_j)^2}{2\sigma^2}} \Diff x \tag{Eq. \eqref{eq:tn-density}} \\
	&= (\alpha_j m_j - \theta_j) \cdot \frac{p\nu}{\sqrt{\nu^2+\sigma^2}}\exps{\frac{-(\xi_j-\mu_j)^2}{2(\nu^2+\sigma^2)}} \tag{Lemma 11.2 in \citet{Garreau_von_Luxburg_2020}}
	\, ,
	\end{align*}
	where we set 
	\[
	\theta_j \defeq \left[\frac{1}{s\sqrt{2\pi}}\exp{\frac{-(x-m_j)^2}{2s^2}} \right]_{q_{j,\bxi_j-1}}^{q_{j,\bxi_j}}
	\, .
	\]
	We deduce that 
	\[
	\frac{\ew_{j,\bxi_j}^\times}{p\littlec_j} = \alpha_j m_j - \theta_j
	\, .
	\]
	Moreover, by the law of total expectation and Lemma~11.2 in~\citet{Garreau_von_Luxburg_2020},
	\[
	\frac{1}{p}\sum_{b=1}^{p} \ew_{j,b}^\times = m_j\cdot \frac{\nu}{\sqrt{\nu^2+\sigma^2}}\exp{\frac{-(\xi_j-\mu_j)^2}{2(\nu^2+\sigma^2)}}
	\, .
	\]
	Finally, since $\ew_{j,\bxi_j}/(p\littlec_j) = \alpha_j$ and $\gamma_j=m_j$, we find that $\Gamma^f_0=f(m)$ and 
	\[
	\Gamma^f_j = \alpha_j f(m) + f_j(\alpha_j m_j - \theta_j - \alpha_j m_j) = \alpha_j f(m) - f_j \theta_j
	\, .
	\]
We recover Lemma~9.1 in \citet{Garreau_von_Luxburg_2020}. 
\end{myexample}


\section{The study of $\beta^f$}
\label{section:beta}

After focusing on $\Sigma$ and $\Gamma^f$, we can now turn our attention to $\beta^f=\Sigma^{-1}\Gamma^f$. 
This section consists mostly in straightforward consequences of the computations achieved in Appendix~\ref{section:study-of-sigma} and~\ref{section:gamma}.

\subsection{Computation of $\beta^f$}

We begin by computing $\beta^f$ in the general case in closed-form.

\begin{myproposition}[Computation of $\beta^f$, general $f$, general weights]
	\label{prop:beta-computation-general-f-general-weights}
	Assume that $f$ is bounded on $\supp$. 
	Then 
for any $1\leq j\leq d$, 
\[
\beta^f_j = \bigc^{-1} \left\{\frac{-p\littlec_j}{p\littlec_j - \ew_{j,\bxi_j}}\expec{\pi f(x)} + \frac{p^2\littlec^2_j}{\ew_{j,\bxi_j}(p\littlec_j - \ew_{j,\bxi_j})}\expec{\pi z_{j}f(x)}\right\}
\, .
\]
Moreover,
	\[
	\beta^f_0 = \bigc^{-1}\left(1 + \sum_{j=1}^{d}\frac{\ew_{j,\bxi_j}}{p\littlec_j - \ew_{j,\bxi_j}}\right)\expec{\pi f(x)} -\bigc^{-1}\sum_{j=1}^{d}\frac{p\littlec_j}{p\littlec_j - \ew_{j,\bxi_j}}\expec{\pi z_{j}f(x)}
	\, .
	\]
\end{myproposition}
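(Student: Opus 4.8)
The plan is to compute $\beta^f = \Sigma^{-1}\Gamma^f$ directly by matrix multiplication, using the closed-form expression for $\Sigma^{-1}$ from Proposition~\ref{prop:computation-inverse-sigma} and the component-wise expression for $\Gamma^f$ from Eq.~\eqref{eq:computation-gamma-general}. This is the natural approach since $\beta^f$ was \emph{defined} as $\Sigma^{-1}\Gamma^f$ in the proof outline (Section~\ref{section:proof-outline}), and both factors are already known explicitly. The statement is therefore essentially a bookkeeping exercise in carrying out one matrix-vector product, and the main task is to organize the arithmetic so that the row structure of $\Sigma^{-1}$ matches the entries of $\Gamma^f$ cleanly.

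First I would recall that, by Eq.~\eqref{eq:computation-gamma-general}, the vector $\Gamma^f$ has entries $\Gamma^f_0 = \expec{\pi f(x)}$ and $\Gamma^f_k = \expec{\pi z_k f(x)}$ for $1\leq k\leq d$. Then I would read off the rows of $\Sigma^{-1}$ from Proposition~\ref{prop:computation-inverse-sigma}. The key structural fact to exploit is that $\Sigma^{-1}$ has a sparse ``arrow'' form: apart from the first row and first column, the matrix is \emph{diagonal}. Concretely, for $1\leq j\leq d$, the $j$-th row of $\bigc\Sigma^{-1}$ has only two nonzero entries, namely the $(j,0)$ entry $\frac{-p\littlec_j}{p\littlec_j - \ew_{j,\bxi_j}}$ and the diagonal $(j,j)$ entry $\frac{p^2\littlec_j^2}{\ew_{j,\bxi_j}(p\littlec_j - \ew_{j,\bxi_j})}$. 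Taking the inner product of this row with $\Gamma^f$ picks out exactly $\Gamma^f_0$ and $\Gamma^f_j$, giving
\[
\beta^f_j = \bigc^{-1}\left\{\frac{-p\littlec_j}{p\littlec_j - \ew_{j,\bxi_j}}\,\Gamma^f_0 + \frac{p^2\littlec_j^2}{\ew_{j,\bxi_j}(p\littlec_j - \ew_{j,\bxi_j})}\,\Gamma^f_j\right\}.
\]
Substituting the values of $\Gamma^f_0$ and $\Gamma^f_j$ yields the claimed formula for $\beta^f_j$ immediately.

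For the intercept, I would take the inner product of the first row of $\bigc\Sigma^{-1}$ with $\Gamma^f$. That row is $\bigl(1+\sum_{j=1}^d \frac{\ew_{j,\bxi_j}}{p\littlec_j - \ew_{j,\bxi_j}},\ \frac{-p\littlec_1}{p\littlec_1 - \ew_{1,\bxi_1}},\ \ldots,\ \frac{-p\littlec_d}{p\littlec_d - \ew_{d,\bxi_d}}\bigr)$, so contracting against $\Gamma^f$ produces the diagonal coefficient multiplying $\Gamma^f_0 = \expec{\pi f(x)}$ plus the sum over $j$ of $\frac{-p\littlec_j}{p\littlec_j - \ew_{j,\bxi_j}}\,\Gamma^f_j$, which is exactly the stated expression for $\beta^f_0$ once $\Gamma^f_j = \expec{\pi z_j f(x)}$ is inserted. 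The only point requiring care — and the closest thing to an obstacle — is making sure the $(1,0)$/$(0,1)$ symmetry and the sign conventions of the block-inversion formula are transcribed correctly, together with the validity of the whole computation: this rests on $\Sigma$ being invertible, which is guaranteed by Proposition~\ref{prop:computation-inverse-sigma} (the $\alpha_j = \ew_{j,\bxi_j}/(p\littlec_j)$ lie strictly in $(0,1)$ since each $\ew_{j,b}>0$), and on boundedness of $f$, which ensures all the expectations in $\Gamma^f$ are finite. No genuine difficulty arises beyond keeping the indices and the factor $\bigc^{-1}$ straight.
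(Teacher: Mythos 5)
Your proposal is correct and is exactly the paper's argument: the paper's proof reads, in full, ``Direct computation from Proposition~\ref{prop:computation-inverse-sigma} and Eq.~\eqref{eq:computation-gamma-general},'' which is precisely the matrix--vector product you carry out, exploiting the arrow structure of $\Sigma^{-1}$. You have simply written out the bookkeeping that the paper leaves implicit.
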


\begin{proof}
Direct computation from Proposition~\ref{prop:computation-inverse-sigma} and Eq.~\eqref{eq:computation-gamma-general}. 
\end{proof}

We can easily specialize Proposition~\ref{prop:beta-computation-general-f-general-weights} for the default weights. 
Recall that, in this case, $\ew_{j,\bxi_j}=1$ and $\littlec_j=\littlec$ is a constant. 
Furthermore, $\bigc = \littlec^d$. 

\begin{mycorollary}[Computation of $\beta^f$, general $f$, default weights]
	\label{cor:beta-computation-general-f-default-weights}
	Assume that $f$ is bounded on $\supp$. 
	Then it holds that
for any $1\leq j\leq d$,
\[
\beta^f_j = \littlec^{-d} \left\{\frac{-p\littlec}{p\littlec - 1}\expec{\pi f(x)} + \frac{p^2\littlec^2}{p\littlec - 1}\expec{\pi z_{j}f(x)} \right\}
\, .
\]
Moreover
	\[
	\beta^f_0 = \littlec^{-d}\left(1 + \frac{d}{p\littlec - 1}\right)\expec{\pi f(x)} -\littlec^{-d}\frac{p\littlec}{p\littlec - 1}\sum_{j=1}^{d}\expec{\pi z_{j}f(x)}
	\, .
	\]
	and, 
\end{mycorollary}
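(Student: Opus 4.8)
The plan is to specialize Proposition~\ref{prop:beta-computation-general-f-general-weights}, which already furnishes a closed-form expression for $\beta^f = \Sigma^{-1}\Gamma^f$ under the general weights of Eq.~\eqref{eq:definition-weights}, to the default weighting scheme. The only work required is to substitute the default-weight values of the three quantities $\ew_{j,\bxi_j}$, $\littlec_j$, and $\bigc$ that appear in that proposition; no new estimate or computation is needed.

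First I would recall the default-weight computations carried out in the ``First computations'' paragraph of Section~\ref{section:discussion}. There it is established that $\ew_{j,b}=1$ when $b=\bxi_j$ and $\ew_{j,b}=\exps{\frac{-1}{2\nu^2}}$ otherwise; in particular $\ew_{j,\bxi_j}=1$ for every $j$. Consequently the normalization constant $\littlec_j$ does not depend on $j$ and equals the common value $\littlec=\frac{1}{p}+(1-\frac{1}{p})\exps{\frac{-1}{2\nu^2}}$ (Eq.~\eqref{eq:little-cj-special-case}), so that $\bigc=\prod_{j=1}^d\littlec_j=\littlec^d$ and hence $\bigc^{-1}=\littlec^{-d}$.

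Next I would substitute $\ew_{j,\bxi_j}=1$, $\littlec_j=\littlec$, and $\bigc^{-1}=\littlec^{-d}$ into the two displays of Proposition~\ref{prop:beta-computation-general-f-general-weights}. For the interpretable coefficients this turns $\frac{-p\littlec_j}{p\littlec_j-\ew_{j,\bxi_j}}$ into $\frac{-p\littlec}{p\littlec-1}$ and $\frac{p^2\littlec_j^2}{\ew_{j,\bxi_j}(p\littlec_j-\ew_{j,\bxi_j})}$ into $\frac{p^2\littlec^2}{p\littlec-1}$, yielding the stated formula for $\beta^f_j$. For the intercept the sum $\sum_{j=1}^d\frac{\ew_{j,\bxi_j}}{p\littlec_j-\ew_{j,\bxi_j}}$ collapses to $\frac{d}{p\littlec-1}$ and each coefficient $\frac{p\littlec_j}{p\littlec_j-\ew_{j,\bxi_j}}$ becomes $\frac{p\littlec}{p\littlec-1}$, giving the claimed $\beta^f_0$.

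Since every ingredient is already proved, there is no genuine obstacle: the corollary is a direct specialization. The only point requiring minor care is that the factor $1/\ew_{j,\bxi_j}$ in Proposition~\ref{prop:beta-computation-general-f-general-weights} remains harmless, which holds because $\ew_{j,\bxi_j}=1>0$ for the default weights so all expressions stay well defined; one should also note $p\littlec-1>0$ (as $p\geq 2$ and $\littlec>1/p$), so the denominators never vanish.
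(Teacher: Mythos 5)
Your proposal is correct and matches the paper's own treatment exactly: the paper obtains this corollary by the very same substitutions $\ew_{j,\bxi_j}=1$, $\littlec_j=\littlec$, and $\bigc=\littlec^d$ into Proposition~\ref{prop:beta-computation-general-f-general-weights}. Your added remark that the denominators are safe (since $p\littlec = 1+(p-1)\exps{\frac{-1}{2\nu^2}}>1$ for $p\geq 2$) is a small but sound point of care that the paper leaves implicit.
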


This last result gives the expression of $\beta^f$ used in our main result. 


\subsection{Additive $f$}

We can specialize the results of the previous section when $f$ has a specific structure. 
We begin with the case of an additive $f$. 

\begin{myproposition}[Computation of $\beta^f$, additive $f$, general weights]
\label{prop:beta-computation-additive-f-general-weights}
Assume that $f$ satisfies Assumption~\ref{ass:additive}. 
Then 

for any $1\leq j\leq d$, 
\[
\beta^f_j = \frac{p\littlec_j}{\ew_{j,\bxi_j}(p\littlec_j - \ew_{j,\bxi_j})} \left(\ew_{j,\bxi_j}^{f_j} - \frac{\ew_{j,\bxi_j}}{p\littlec_j}\sum_{b=1}^{p}\ew_{j,b}^{f_j}\right)
\, .
\]
Moreover, 
\[
\beta^f_0 = \sum_{k=1}^{d} \frac{1}{p\littlec_k-\ew_{k,\bxi_k}}\sum_{b\neq \bxi_k}\ew_{k,b}^{f_k}
\, .
\]
\end{myproposition}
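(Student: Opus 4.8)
The plan is to combine two results already established for general weights: the closed-form expression for $\beta^f$ in terms of the two expectations $\expec{\pi f(x)}$ and $\expec{\pi z_j f(x)}$ (Proposition~\ref{prop:beta-computation-general-f-general-weights}), and the explicit computation of these same expectations when $f$ is additive, namely the formulas for $\Gamma^f$ in Proposition~\ref{prop:gamma-computation-additive-f-general-weights} together with the identification $\Gamma^f_0 = \expec{\pi f(x)}$ and $\Gamma^f_j = \expec{\pi z_j f(x)}$ from Eq.~\eqref{eq:computation-gamma-general}. Since $f$ is additive and bounded, each $f_j$ is bounded on $[q_{j,0},q_{j,p}]$, so all expectations are well-defined and both propositions apply. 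The entire proof is then a substitution followed by a cancellation, with no analytic content.

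First I would abbreviate $S_k \defeq \frac{1}{p\littlec_k}\sum_{b=1}^{p}\ew_{k,b}^{f_k}$, so that (up to the overall factor $\bigc$) Proposition~\ref{prop:gamma-computation-additive-f-general-weights} reads $\bigc^{-1}\expec{\pi f(x)} = \sum_{k=1}^{d}S_k$ and $\bigc^{-1}\expec{\pi z_j f(x)} = \frac{\ew_{j,\bxi_j}}{p\littlec_j}\sum_{k=1}^{d}S_k + \frac{1}{p\littlec_j}\ew_{j,\bxi_j}^{f_j} - \frac{\ew_{j,\bxi_j}}{p\littlec_j}S_j$. The key structural observation is that the block $\sum_{k=1}^{d}S_k$, which pools the off-diagonal coordinates $k\neq j$ together with the diagonal contribution, appears in \emph{both} expectations, and it is precisely this common piece that will drop out when they are combined.

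Then I would plug these two expressions into the formula of Proposition~\ref{prop:beta-computation-general-f-general-weights}. For the coefficient $\beta^f_j$, the prefactor $\frac{p^2\littlec_j^2}{\ew_{j,\bxi_j}(p\littlec_j-\ew_{j,\bxi_j})}$ multiplying the $\frac{\ew_{j,\bxi_j}}{p\littlec_j}\sum_k S_k$ part of $\bigc^{-1}\expec{\pi z_j f(x)}$ reduces to $\frac{p\littlec_j}{p\littlec_j-\ew_{j,\bxi_j}}\sum_k S_k$, which exactly cancels the term $\frac{-p\littlec_j}{p\littlec_j-\ew_{j,\bxi_j}}\sum_k S_k$ coming from $\bigc^{-1}\expec{\pi f(x)}$. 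What survives is $\frac{p^2\littlec_j^2}{\ew_{j,\bxi_j}(p\littlec_j-\ew_{j,\bxi_j})}\cdot\frac{1}{p\littlec_j}\bigl(\ew_{j,\bxi_j}^{f_j} - \ew_{j,\bxi_j}S_j\bigr)$, which simplifies to the claimed expression after pulling out the factor $\frac{p\littlec_j}{\ew_{j,\bxi_j}(p\littlec_j-\ew_{j,\bxi_j})}$ and rewriting $\ew_{j,\bxi_j}S_j = \frac{\ew_{j,\bxi_j}}{p\littlec_j}\sum_{b=1}^{p}\ew_{j,b}^{f_j}$.

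For the intercept I would proceed identically but track the sums more carefully. The coefficient of $\sum_k S_k$ arising from $\bigl(1+\sum_j \frac{\ew_{j,\bxi_j}}{p\littlec_j-\ew_{j,\bxi_j}}\bigr)$ collapses to $1$ once the matching $-\sum_j \frac{\ew_{j,\bxi_j}}{p\littlec_j-\ew_{j,\bxi_j}}$ contribution from the $z_j$ term is subtracted; regrouping the leftover $\sum_k S_k$ with the $+\sum_k \frac{\ew_{k,\bxi_k}}{p\littlec_k-\ew_{k,\bxi_k}}S_k$ terms gives $\sum_k \frac{p\littlec_k S_k}{p\littlec_k-\ew_{k,\bxi_k}} = \sum_k \frac{\sum_b \ew_{k,b}^{f_k}}{p\littlec_k-\ew_{k,\bxi_k}}$, and subtracting the diagonal piece $\sum_k \frac{\ew_{k,\bxi_k}^{f_k}}{p\littlec_k-\ew_{k,\bxi_k}}$ collapses each numerator to $\sum_{b\neq\bxi_k}\ew_{k,b}^{f_k}$, which is the claim. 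The only real obstacle is bookkeeping: keeping the diagonal ($k=j$) and off-diagonal ($k\neq j$) contributions separate and recognizing that the entire $\sum_k S_k$ block is spurious and cancels; beyond that the computation is elementary algebra.
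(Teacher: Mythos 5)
Your proposal is correct and follows essentially the same route as the paper: the paper's own proof also plugs the additive-$f$ expression of $\Gamma^f$ (Proposition~\ref{prop:gamma-computation-additive-f-general-weights}) into the closed form $\beta^f = \Sigma^{-1}\Gamma^f$ (equivalently, Proposition~\ref{prop:beta-computation-general-f-general-weights}) and simplifies, with the $\sum_k S_k$ block cancelling exactly as you describe. Your bookkeeping of the $\bigc$ factor and the final regrouping for the intercept are both accurate, so there is nothing to correct.
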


\begin{proof}
	First let us treat the case $j=0$. 
	We write
	\begin{align*}
	\beta^f_0 &= \left(1+\sum_{j=1}^{d}\frac{\ew_{j,\bxi_j}}{p\littlec_j - \ew_{j,\bxi_j}}\right)\left(\sum_{k=1}^{d} \frac{1}{p\littlec_k}\sum_{b=1}^{p}\ew_{k,b}^{f_k}\right) \\
	&- \sum_{j=1}^{d}\frac{p\littlec_j}{p\littlec_j - \ew_{j,\bxi_j}} \left(\sum_{k=1}^{p} \frac{\ew_{j,\bxi_j}}{p^2\littlec_j\littlec_k}\sum_{b=1}^{p}\ew_{k,b}^{f_k} + \frac{1}{p\littlec_j}\left[\ew_{j,\bxi_j}^{f_j} - \frac{\ew_{j,\bxi_j}}{p\littlec_j}\sum_{b=1}^p \ew_{j,b}^{f_j}\right]\right) \\
	&= \sum_{k=1}^{d} \frac{1}{p\littlec_k} \sum_{b=1}^{p} \ew_{k,\bxi_k}^{f_k} - \sum_{j=1}^{d} \frac{1}{p\littlec_j - \ew_{j,\bxi_j}} \left[\ew_{j,\bxi_j}^{f_j} - \frac{\ew_{j,\bxi_j}}{p\littlec_j}\sum_{b=1}^{p}\ew_{j,b}^{f_j}\right]\, .
	\end{align*}
	We conclude after changing the indices in the sum and some algebra. 
	As for the other terms,
	\begin{align*}
	\beta^f_j &= \frac{-p\littlec_j}{p\littlec_j - \ew_{j,\bxi_j}}\sum_{k=1}^{d}\frac{1}{p\littlec_k}\sum_{b=1}^{p} \ew_{k,b}^{f_k} \\
	&+ \frac{p^2\littlec_j^2}{\ew_{j,\bxi_j}(p\littlec_j - \ew_{j,\bxi_j})}\left[\sum_{k=1}^{d}\frac{\ew_{j,\bxi_j}}{p^2\littlec_j\littlec_k}\sum_{b=1}^{p}\ew_{k,b}^{f_k} + \frac{1}{p\littlec_j}\left(\ew_{j,\bxi_j}^{f_j} - \frac{\ew_{j,\bxi_j}}{p\littlec_j}\sum_{b=1}^{p}\ew_{j,b}^{f_j}\right)\right]\, , \\
	\end{align*}
	and we obtain the promised result after some simplifications.
\end{proof}

We can specialize Proposition~\ref{prop:beta-computation-additive-f-general-weights} even further if $f$ is linear. 
For any $1\leq j\leq d$, define
\[
\gamma_j \defeq \frac{1}{p\littlec_j}\sum_{b=1}^{p}\ew_{j,b}^\times
\, .
\]
Note that $\gamma_j=\muttilde_j$ when default weights are used. 

\begin{mycorollary}[Computation of $\beta^f$, linear $f$, general weights]
	\label{cor:beta-computation-linear}
Assume that for any $x\in\Reals^d$, $f(x)=f_0+f_1x_1+\cdots+f_dx_d$.
Then, for any $1\leq j\leq d$, 
\[
\beta_j^f = \frac{p\littlec_j}{\ew_{j,\bxi_j}(p\littlec_j - \ew_{j,\bxi_j})}(\ew_{j,\bxi_j}^\times - \gamma_j\cdot \ew_{j,\bxi_j})f_j
\, .
\]
Moreover,
\[
\beta^f_0 = f(\gamma) - \sum_{j=1}^{d} \frac{1}{p\littlec_j - \ew_{j,\bxi_j}} (\ew_{j,\bxi_j}^\times - \gamma_j\cdot \ew_{j,\bxi_j}) f_j
\, .
\]
\end{mycorollary}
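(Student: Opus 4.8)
The plan is to mirror the proof of Corollary~\ref{cor:gamma-computation-linear-general-weights} given above, replacing $\Gamma^f$ by $\beta^f$ and invoking Proposition~\ref{prop:beta-computation-additive-f-general-weights} in place of Proposition~\ref{prop:gamma-computation-additive-f-general-weights}. The starting observation is that $\beta^f = \Sigma^{-1}\Gamma^f$ is \emph{linear} in $f$: the matrix $\Sigma$ does not depend on $f$, and $\Gamma^f$ is linear in $f$ by Eq.~\eqref{eq:computation-gamma-general}. Hence I can split the linear function $f(x) = f_0 + f_1 x_1 + \cdots + f_d x_d$ as a constant part $f_0$ plus the additive part $g(x) = \sum_{j=1}^d f_j x_j$ whose one-dimensional components are $g_j(x) = f_j \cdot x$, and treat the two contributions separately.

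First I would dispose of the constant part. For $f \equiv f_0$ one has $\bigc^{-1}\Gamma^{f_0}_0 = f_0$ and $\bigc^{-1}\Gamma^{f_0}_j = \ew_{j,\bxi_j}/(p\littlec_j) \cdot f_0$ by Eq.~\eqref{eq:constant-gamma}. Multiplying by $\Sigma^{-1}$ from Proposition~\ref{prop:computation-inverse-sigma} in the form of Eq.~\eqref{eq:inverse-matrix} with $\alpha_j = \ew_{j,\bxi_j}/(p\littlec_j)$, the first row produces $f_0 + \sum_j \frac{\alpha_j}{1-\alpha_j}f_0 - \sum_j \frac{-1}{1-\alpha_j}\cdot(-\alpha_j) f_0$, where the two sums cancel and leave $\beta^{f_0}_0 = f_0$; likewise row $j$ gives $\frac{-1}{1-\alpha_j} + \frac{1}{\alpha_j(1-\alpha_j)}\alpha_j = 0$, so $\beta^{f_0}_j = 0$. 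This matches the expected fact that a constant model yields a pure intercept.

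Next I would handle the additive part via Proposition~\ref{prop:beta-computation-additive-f-general-weights}. Using linearity of the conditional expectation in the definition of $\ew_{j,b}^\psi$, with $g_j(x)=f_j\cdot x$ one has $\ew_{j,b}^{g_j}=f_j\,\ew_{j,b}^\times$, hence $\ew_{j,\bxi_j}^{g_j}=f_j\,\ew_{j,\bxi_j}^\times$ and $\sum_{b=1}^p \ew_{j,b}^{g_j}=f_j\cdot p\littlec_j\gamma_j$ by definition of $\gamma_j$. Substituting into the $\beta^g_j$ formula and factoring out $f_j$ immediately yields $\beta^g_j=\frac{p\littlec_j}{\ew_{j,\bxi_j}(p\littlec_j-\ew_{j,\bxi_j})}(\ew_{j,\bxi_j}^\times-\gamma_j\ew_{j,\bxi_j})f_j$. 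Since $\beta^f_j=\beta^{f_0}_j+\beta^g_j$ with $\beta^{f_0}_j=0$, this is the claimed expression for $\beta^f_j$, so the coefficient formula falls out directly.

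The last and slightly more delicate step is the intercept. From Proposition~\ref{prop:beta-computation-additive-f-general-weights} one has $\beta^g_0=\sum_{k}\frac{1}{p\littlec_k-\ew_{k,\bxi_k}}\sum_{b\neq\bxi_k}\ew_{k,b}^{g_k}$, and $\sum_{b\neq\bxi_k}\ew_{k,b}^{g_k}=f_k(p\littlec_k\gamma_k-\ew_{k,\bxi_k}^\times)$. Adding the constant contribution $\beta^{f_0}_0=f_0$ and using the identity $\gamma_k-\frac{\ew_{k,\bxi_k}^\times-\gamma_k\ew_{k,\bxi_k}}{p\littlec_k-\ew_{k,\bxi_k}}=\frac{p\littlec_k\gamma_k-\ew_{k,\bxi_k}^\times}{p\littlec_k-\ew_{k,\bxi_k}}$ lets me rewrite the coefficient of each $f_k$ so that the total reassembles into $f(\gamma)-\sum_j\frac{1}{p\littlec_j-\ew_{j,\bxi_j}}(\ew_{j,\bxi_j}^\times-\gamma_j\ew_{j,\bxi_j})f_j$, where $f(\gamma)=f_0+\sum_j f_j\gamma_j$. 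I expect this reshuffling of the intercept to be the only genuinely fiddly part of the argument; everything else is a direct substitution.
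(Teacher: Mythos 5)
Your proposal is correct and follows exactly the route the paper intends (and leaves implicit, since this corollary is stated without proof): linearity of $\beta^f$ in $f$, a separate check that a constant $f_0$ yields $\beta^{f_0}_0=f_0$ and $\beta^{f_0}_j=0$, and then Proposition~\ref{prop:beta-computation-additive-f-general-weights} applied with $f_j(x)=f_j\cdot x$ so that $\ew_{j,b}^{f_j}=f_j\,\ew_{j,b}^\times$ and $\sum_b \ew_{j,b}^{f_j}=f_j\,p\littlec_j\gamma_j$, mirroring the paper's proof of Corollary~\ref{cor:gamma-computation-linear-general-weights}. Your substitutions and the intercept identity $\gamma_k-\frac{\ew_{k,\bxi_k}^\times-\gamma_k\ew_{k,\bxi_k}}{p\littlec_k-\ew_{k,\bxi_k}}=\frac{p\littlec_k\gamma_k-\ew_{k,\bxi_k}^\times}{p\littlec_k-\ew_{k,\bxi_k}}$ all check out (and you even avoid the paper's typo $\ew_{j,b}^{f_j}=f_j\ew_{j,b}$ by correctly using $\ew_{j,b}^\times$).
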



Let us see how we can recover the analysis of \citet{Garreau_von_Luxburg_2020} in the linear case. 

\begin{myexample}[Computation of $\beta^f$, linear $f$, smooth weights]
	We have seen before that, in this case, $\gamma = m$. 
	Moreover, 
	\[
	p\littlec_j - \ew_{j,\bxi_j} = (1-\alpha_j)\frac{p\nu}{\sqrt{\nu^2+\sigma^2}}\exp{\frac{-(\xi_j-\mu_j)^2}{2(\nu^2+\sigma^2)}}
	\, .
	\]
	Then we write
	\begin{align*}
	\ew_{j,\bxi_j}^\times - \gamma_j \ew_{j,\bxi_j} &= (\alpha_j m_j - \theta_j)\frac{p\nu}{\sqrt{\nu^2+\sigma^2}}\exp{\frac{-(\xi_j-\mu_j)^2}{2(\nu^2+\sigma^2)}} \\
	&\phantom{blablabla}- m_j \alpha_j \frac{p\nu}{\sqrt{\nu^2+\sigma^2}}\exp{\frac{-(\xi_j-\mu_j)^2}{2(\nu^2+\sigma^2)}} \\
	&= -\theta_j \frac{p\nu}{\sqrt{\nu^2+\sigma^2}}\exp{\frac{-(\xi_j-\mu_j)^2}{2(\nu^2+\sigma^2)}}
	\, . 
	\end{align*}
	We deduce that 
	\begin{equation}
	\label{eq:aux:linear-old}
	\frac{\ew_{j,\bxi_j}^\times - \gamma_j \ew_{j,\bxi_j}}{p\littlec_j - \ew_{j,\bxi_j}} = \frac{-\theta_j}{1-\alpha_j}
	\, ,
	\end{equation}
	and therefore
	\[
	\beta^f_0= f(m) + \sum_{j=1}^{d} \frac{\theta_jf_j}{1-\alpha_j} 
	\, .
	\]
	Now, for any given $j>0$, $p\littlec_j / \ew_{j,\bxi_j} = 1/\alpha_j$. 
	Combining with Eq.~\eqref{eq:aux:linear-old} we obtain
	\[
	\beta^f_j = \frac{-\theta_jf_j}{\alpha_j(1-\alpha_j)}
	\, .
	\]
	This is the expression of $\beta^f$ appearing in Theorem~3.1 of \citet{Garreau_von_Luxburg_2020}. 
\end{myexample}


\subsection{Multiplicative $f$}

When $f$ is multiplicative (Assumption~\ref{ass:multiplicative}), we can also be more precise in the computation of $\beta^f$. 

\begin{mylemma}[Computation of $\beta^f$, multiplicative $f$, general weights]
	\label{lemma:beta-computation-multiplicative-f-general-weights}
	Assume that~$f$ satisfies Assumption~\ref{ass:multiplicative} and is bounded on $\supp$. 
	Then, for any $1\leq j\leq d$, 
	\[
	\beta^f_j = \frac{\prod_{k=1}^{d}\littlec_k^{f_k}}{\bigc} \cdot \frac{p\littlec_j}{p\littlec_j - \ew_{j,\bxi_j}} \left( \frac{\ew_{j,\bxi_j}^{f_j}}{\ew_{j,\bxi_j}}\cdot \frac{\littlec_j}{\littlec_j^{f_j}} - 1\right)
	\, .
	\]
Moreover,
	\[
	\beta^f_0 = \frac{\prod_{k=1}^{d}\littlec_k^{f_k}}{\bigc} \left\{1 + \sum_{j=1}^{d} \frac{\ew_{j,\bxi_j}}{p\littlec_j-\ew_{j,\bxi_j}} \left(1 - \frac{\ew_{j,\bxi_j}^{f_j}}{\ew_{j,\bxi_j}}\cdot \frac{\littlec_j}{\littlec_j^{f_j}}\right)\right\}
	\, .
	\]
\end{mylemma}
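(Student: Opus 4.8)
The plan is to reduce everything to the general-$f$ formula already established in Proposition~\ref{prop:beta-computation-general-f-general-weights}, so that the only genuine work is to evaluate the two expectations $\expec{\pi f(x)}$ and $\expec{\pi z_j f(x)}$ when $f$ is multiplicative. Writing $w_j(x_j)\defeq \exps{\frac{-1}{2\nu^2}(\tau_j(\xi_j)-\tau_j(x_j))^2}$, the general weights of Eq.~\eqref{eq:definition-weights} factorize as $\pi = \prod_{j=1}^d w_j(x_j)$, and by Assumption~\ref{ass:multiplicative} we have $f(x)=\prod_{j=1}^d f_j(x_j)$. Since the coordinates $x_1,\ldots,x_d$ are sampled independently, both products split across coordinates and the expectations become products of one-dimensional integrals.

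The key step is then a single-coordinate computation, carried out exactly as in the additive case (this is essentially the content of Lemma~\ref{lemma:key-computation}). Conditioning on the bin index $b_j$, which is uniform on $\{1,\ldots,p\}$, and using the definition of $\ew_{j,b}^{f_j}$ in Eq.~\eqref{eq:def-ew-general-weights}, one gets $\expec{w_j(x_j)f_j(x_j)} = \frac{1}{p}\sum_{b=1}^p \ew_{j,b}^{f_j} = \littlec_j^{f_j}$ by the definition of the normalization constant (Eq.~\eqref{eq:def-little-cj-psi}). For the factor carrying $z_j=\indic{b_j=\bxi_j}$, only the term $b=\bxi_j$ survives the total-expectation sum, leaving $\expec{z_j w_j(x_j) f_j(x_j)} = \frac{1}{p}\ew_{j,\bxi_j}^{f_j}$.

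Collecting the coordinate-wise factors, and writing $P\defeq\prod_{k=1}^d\littlec_k^{f_k}$, I would conclude
\[
\expec{\pi f(x)} = P
\quad\text{and}\quad
\expec{\pi z_j f(x)} = \frac{1}{p}\,\ew_{j,\bxi_j}^{f_j}\,\frac{P}{\littlec_j^{f_j}}
\, .
\]
Substituting these into Proposition~\ref{prop:beta-computation-general-f-general-weights}, the common factor $P/\bigc$ pulls out of both $\beta^f_j$ and $\beta^f_0$, and a short simplification of the prefactor $\frac{p^2\littlec_j^2}{\ew_{j,\bxi_j}(p\littlec_j-\ew_{j,\bxi_j})}\cdot\frac{1}{p}$ already produces the bracketed expression $\bigl(\frac{\ew_{j,\bxi_j}^{f_j}}{\ew_{j,\bxi_j}}\cdot\frac{\littlec_j}{\littlec_j^{f_j}}-1\bigr)$ claimed for $\beta^f_j$.

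The main obstacle, such as it is, lies in the intercept: there the two expectations feed two separate sums over $j$, and I would need to merge them into one. Factoring $\ew_{j,\bxi_j}/(p\littlec_j-\ew_{j,\bxi_j})$ out of the combined summand and grouping $\ew_{j,\bxi_j}-\littlec_j\,\ew_{j,\bxi_j}^{f_j}/\littlec_j^{f_j} = \ew_{j,\bxi_j}\bigl(1-\frac{\ew_{j,\bxi_j}^{f_j}}{\ew_{j,\bxi_j}}\cdot\frac{\littlec_j}{\littlec_j^{f_j}}\bigr)$ casts it into the stated form. Everything else is the routine bookkeeping one expects, with products of $\littlec_k^{f_k}$ playing the role that sums played in the additive proof of Proposition~\ref{prop:beta-computation-additive-f-general-weights}.
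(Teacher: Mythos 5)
Your proposal is correct and follows essentially the same route as the paper's own proof: reduce to Proposition~\ref{prop:beta-computation-general-f-general-weights}, use the multiplicative structure and coordinate-wise independence to factor the two expectations into one-dimensional terms (yielding $\expec{\pi f(x)}=\prod_k \littlec_k^{f_k}$ and $\expec{\pi z_j f(x)}=\tfrac{1}{p}\ew_{j,\bxi_j}^{f_j}\prod_{k\neq j}\littlec_k^{f_k}$, exactly as in the paper via Lemma~\ref{lemma:base-computation}), and finish by algebraic simplification. The only cosmetic difference is that the per-coordinate evaluation you invoke is Lemma~\ref{lemma:base-computation} rather than Lemma~\ref{lemma:key-computation}, but the computation you perform is the right one.
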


\begin{proof}
In view of Proposition~\ref{prop:beta-computation-general-f-general-weights}, we just have to compute $\expec{\pi f(x)}$ and $\expec{\pi z_{j}f(x)}$ for any given $1\leq j\leq d$. 
	We begin with the computation of $\expec{\pi f(x)}$:
	\begin{align*}
	\expec{\pi f(x)} &= \expec{\prod_{k=1}^k \exp{\frac{-(\tau_k(x_{k}) - \tau_k(\xi_k))^2}{2\nu^2}} f_k(x_{k})} \tag{Assumption \ref{ass:multiplicative} $+$ Eq. \eqref{eq:definition-weights}} \\
	&= \prod_{k=1}^d \expec{\exp{\frac{-(\tau_k(x_{k}) - \tau_k(\xi_k))^2}{2\nu^2}}f_k(x_{k})} \tag{independence}\\
	\expec{\pi f(x)} &= \prod_{k=1}^d \littlec_k^{f_k} \, .\tag{Lemma \ref{lemma:base-computation}}
	\end{align*}
	The second computation is very similar in spirit:
	\begin{align*}
	\expec{\pi z_{j} f(x)} &= \expec{\prod_{\substack{k=1\\ k\neq j}}^k \exps{\frac{-(\tau_k(x_{k}) - \tau_k(\xi_k))^2}{2\nu^2}} f_k(x_{k})\cdot \exps{\frac{-(\tau_j(x_{j}) - \tau_j(\xi_j))^2}{2\nu^2}} z_{j}f_j(x_{j})} \tag{Assumption \ref{ass:multiplicative} $+$ Eq. \eqref{eq:definition-weights}} \\
	&= \prod_{\substack{k=1\\ \neq j}}^d \expec{\exp{\frac{-(\tau_k(x_{k}) - \tau_k(\xi_k))^2}{2\nu^2}}f_k(x_{k})} \cdot \expec{\exps{\frac{-(\tau_j(x_{j}) - \tau_j(\xi_j))^2}{2\nu^2}} z_{j}f_j(x_{j})}\tag{independence}\\
	\expec{\pi z_{j} f(x)} &= \prod_{\substack{k=1 \\ k\neq j}}^d \littlec_k^{f_k} \cdot \frac{\ew_{j,\bxi_j}^{f_j}}{p}\tag{Lemma \ref{lemma:base-computation}}
	\, .
	\end{align*}
Simple algebra concludes the proof.
\end{proof}


\section{Proof of Proposition~\ref{prop:explanation-stability-default-weights}}
\label{section:proof-stability-corollary}

In this Appendix, we prove the regularity result for Tabular LIME,  Proposition~\ref{prop:explanation-stability-default-weights} of the main paper. 

\begin{proof}
First let us set $h\defeq f-g$ and $\epsilon\defeq \infnorm{h}$. 
We notice that
\[
\smallnorm{\beta^f - \beta^g} = \smallnorm{\Sigma^{-1}\Gamma^f - \Sigma^{-1}\Gamma^g} = \smallnorm{\Sigma^{-1}\Gamma^h}
\, .
\]
Let us focus first on the first coordinate:
\begin{align*}
(\Sigma^{-1}\Gamma^h)_0 &= \bigc^{-1}\left(1+\sum_{j=1}^{d}\frac{1}{p\littlec-1} \right)\expec{\pi h(x)} + \bigc^{-1}\sum_{j=1}^{d} \frac{-p\littlec}{p\littlec - 1}\expec{\pi z_{j}h(x)} \\
&= \bigc^{-1}\expec{\pi h(x)} + \bigc^{-1}\sum_{j=1}^{d}\frac{1}{p\littlec - 1}\expec{\pi (1-p\littlec z_{j})h(x)}
\, .
\end{align*}
Recall Lemma~\ref{lemma:expectation-computations}:
\[
\abs{\bigc^{-1}\expec{\pi h(x)}} \leq \epsilon
\, .
\]
As for the second part, we write
\begin{align*}
\expec{\pi \abs{1-p\littlec z_{j}}h(x)} &\leq (p\littlec \expec{\pi z_{j}} + \expec{\pi})\epsilon \\
&= (p\littlec \cdot \frac{\bigc}{p\littlec} + \bigc)\epsilon \tag{Lemma \ref{lemma:expectation-computations}} \\
\expec{\pi\abs{1-p\littlec z_{j}}h(x)} &\leq 2\bigc\epsilon
\end{align*}
We deduce 
\begin{equation}
\label{eq:aux:bound-beta-0}
\abs{(\Sigma^{-1}\Gamma^h)_0} \leq \left(1+ \frac{2d}{p-1}\exps{\frac{1}{2\nu^2}}\right)\epsilon
\, .
\end{equation}
Now let us set $j\geq 1$. 
\begin{align*}
(\Sigma^{-1}\Gamma^h)_j &= \bigc^{-1}\left[\frac{-p\littlec}{p\littlec - 1}\expec{\pi h(x)} + \frac{p^2\littlec^2}{p\littlec - 1}\expec{\pi z_{j}h(x)}\right] \\
&= \frac{\bigc^{-1}p\littlec}{p\littlec - 1} \expec{\pi (p\littlec z_{j}- 1)h(x)}
\, .
\end{align*}
As before, we obtain
\begin{equation}
\label{eq:aux:bound-beta-j}
\abs{(\Sigma^{-1}\Gamma^h)_j} \leq \frac{2p\littlec\epsilon }{p\littlec - 1} \leq \frac{2p\exps{\frac{1}{2\nu^2}}\epsilon}{p-1}
\, .
\end{equation}
We then collect Eq.~\eqref{eq:aux:bound-beta-0} and \eqref{eq:aux:bound-beta-j} to obtain the promised bound. 
\end{proof}


\section{Proof of Proposition~\ref{prop:beta-computation-indicator}}
\label{section:proof-computation-beta-indicator}

In this section, we show how to compute $\beta^f$ for indicator function with rectangular support (Proposition~\ref{prop:beta-computation-indicator} of the main paper). 

\begin{proof}
Our main task is to compute the $\ew_{j,b}^{a_j}$ coefficients in this specific case. 
By definition of the $\ew_{j,b}$, we have
\begin{equation}
\label{eq:ew-tree-computation}
\ew_{j,b}^{a_j} = \exps{\frac{-\indic{b=\bxi_j}}{2\nu^2}}\condexpec{\indic{x_j\in [s_j,t_j]}}{b_j\neq b}
\eqdef \exps{\frac{-\indic{b\neq \bxi_j}}{2\nu^2}} \et_{j,b}
\, .
\end{equation}
Since $x_j$ has support on $[q_{j,b-1},q_{j,b}]$ conditionally to the event $\{b_j=b\}$, it is straightforward to compute $\et_{j,b}$ with respect to the relative position of $[s_j,t_j]$ and $[q_{j,b-1},q_{j,b}]$. 
In particular, $\et_{j,b}=0$ if the intersection is empty, and we find
\begin{equation}
\label{eq:expression-et}
\et_{j,b} = \frac{\Phi\left(\frac{t_j\wedge q_{j,b} -\mu_{j,b}}{\sigma_{j,b}}\right) - \Phi\left(\frac{s_j\vee q_{j,b-1}-\mu_{j,b}}{\sigma_{j,b}}\right)}{\Phi\left(\frac{q_{j,b}-\mu_{j,b}}{\sigma_{j,b}}\right) - \Phi\left(\frac{q_{j,b-1}-\mu_{j,b}}{\sigma_{j,b}}\right)}
\, 
\end{equation}
otherwise. 
Now recall that we assumed that there exist $1\leq \bzero_j\leq p$ such that $[s_j,t_j]\subseteq [q_{j,\bzero_j-1},q_{j,\bzero_j}]$. 
Therefore Eq.~\eqref{eq:expression-et} simplifies and we find 
\[
\et_{j,b} = 
\begin{cases}
\frac{\Phi\left(\frac{t_j -\mu_{j,b}}{\sigma_{j,b}}\right) - \Phi\left(\frac{s_j-\mu_{j,b}}{\sigma_{j,b}}\right)}{\Phi\left(\frac{q_{j,b}-\mu_{j,b}}{\sigma_{j,b}}\right) - \Phi\left(\frac{q_{j,b-1}-\mu_{j,b}}{\sigma_{j,b}}\right)} \eqdef \lambda_j &\text{ if }b = \bzero_j , \\
0 &\text{ otherwise.}
\end{cases}
\]
Straightforward computations yield
\[
\frac{\littlec_j^{a_j}}{\littlec} = \frac{\et_{j,\bxi_j}+\exps{\frac{-1}{2\nu^2}} \sum_{b\neq\bxi_j}\et_{j,b}}{1+(p-1)\exps{\frac{-1}{2\nu^2}} }
\]
and 
\[
\frac{p\littlec}{p\littlec - 1}\left( \ew_{j,\bxi_j}^{a_j}\cdot \frac{\littlec}{\littlec_j^{a_j}} - 1\right) = \frac{1}{p-1}\sum_{b\neq \bxi_j}(\et_{j,\bxi_j}-\ew_{j,b}) \cdot \frac{\littlec}{\littlec_j^{a_j}}
\, .
\]
For any $1\leq k\leq d$, there are two possible cases: either $\xi_k\in\Abar_k$ ($\xi$ is aligned with $A$ along dimension $k$), or $\xi_k\notin\Abar_k$ ($\xi$ is not aligned). 
In the first case, $\et_{k,\bxi_k}=\lambda_k$ and all the other $\et_{k,b}$ are equal to zero. 
Therefore $\frac{\littlec_k^{a_k}}{\littlec} = \frac{\lambda_k}{p\littlec}$, and $\frac{1}{p-1}\sum_{b\neq \bxi_j}(\et_{k,\bxi_k}-\ew_{k,b}) = \lambda_k$. 
In the second case, $\et_{k,\bxi_k}=0$ and there is only one $b\neq \bxi_k$ such that $\et_{k,b}=\lambda_k$. 
We deduce that $\frac{\littlec_k^{a_k}}{\littlec} = \frac{\lambda_k\exps{\frac{-1}{2\nu^2}}}{p\littlec}$ and $\frac{1}{p-1}\sum_{b\neq \bxi_k}(\et_{k,\bxi_k}-\ew_{k,b}) = \frac{-\lambda_k}{p-1}$. 
From this discussion, we obtain
\[
\frac{\prod_{k=1}^d \littlec_k^{a_k}}{\littlec^d} = \frac{\prod_{k=1}^d \lambda_k}{p^d\littlec^d}\exp{\frac{-\card{k \text{ s.t. } \xi_k \notin \Abar_k}}{2\nu^2}} = \frac{\relimp{A }}{p^d\littlec^d}\exp{\frac{-\dist{\xi}{A}}{2\nu^2}}
\, .
\]
Finally, we can use Proposition~\ref{prop:beta-computation-multiplicative-default} to conclude. 
\end{proof}


\section{Concentration of $\Sigmahat_n$}
\label{section:concentration-sigmahat}

In this section, we show that $\Sigmahat_n$ is concentrated around $\Sigma$ in operator norm. 
The idea is to use standard results on the concentration of sum of independent random matrices \citep{Vershynin_2018}. 
Indeed, $\Sigmahat$ can be written as $\frac{1}{n}\sum_{i=1}^{n}\pi_i Z_iZ_i^\top$. 
Since each of these matrices are bounded and identically distributed, we turn to a Hoeffding-type inequality. 
We borrow the following result from \citet{Tropp_2012}. 

\begin{mytheorem}[Matrix Hoeffding \citep{Tropp_2012}]
	\label{th:matrix-hoeffding}
	Consider a finite sequence $M_i$ of independent, random, symmetric matrices with common dimension $D$, and let $A_i$ be a sequence of fixed symmetric matrices. 
	Assume that each random matrix satisfies 
	\[
	\expec{M_i} = 0 \quad \text{and}\quad M_i^2 \preccurlyeq A_i^2 \quad \text{almost surely}
	\, .
	\] 
	Then, for all $t\geq 0$, 
	\[
	\proba{\lambdamax{\sum_{i=1}^{n} M_i} \geq t } \leq D \cdot \exp{\frac{-t^2}{8\sigma^2}}
	\, ,
	\]
	where $\sigma^2 \defeq \opnorm{\sum_{i=1}^{n}A_i^2}$. 
\end{mytheorem}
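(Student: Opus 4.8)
The plan is to prove this by the matrix Laplace transform method, the standard route to matrix Chernoff--Hoeffding inequalities. First I would convert the tail bound on the largest eigenvalue into a bound on a \emph{trace moment generating function}. For any $\theta > 0$, the spectral mapping identity $\exps{\theta\lambdamax{\sum_i M_i}} = \lambdamax{\exp{\theta\sum_i M_i}}$ together with Markov's inequality and the elementary domination $\lambdamax{Y}\leq\trace{Y}$ (valid since $\exp{\theta\sum_i M_i}$ is positive semidefinite) gives
\[
\proba{\lambdamax{\sum_{i=1}^n M_i}\geq t} \leq \exps{-\theta t}\,\expec{\trace{\exp{\theta\sum_{i=1}^n M_i}}}
\, .
\]
The entire problem is now transferred to controlling $\expec{\trace{\exp{\theta\sum_i M_i}}}$, after which I would optimize over $\theta$.

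The central difficulty is that the $M_i$ do not commute, so this expectation does not factor over the independent summands the way a scalar moment generating function would. The key tool to bypass this is Lieb's concavity theorem, which states that $H\mapsto \trace{\exp{C + \log H}}$ is concave on positive definite matrices. Applying Jensen's inequality to this map, peeling off the summands $M_i$ one at a time and using their independence, yields the subadditivity of the matrix cumulant generating function,
\[
\expec{\trace{\exp{\theta\sum_{i=1}^n M_i}}} \leq \trace{\exp{\sum_{i=1}^n \log\expec{\exp{\theta M_i}}}}
\, .
\]
This reduces everything to a bound on the individual matrix log-MGFs $\log\expec{\exp{\theta M_i}}$.

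For the per-summand control I would invoke the matrix Hoeffding lemma: using the hypotheses $\expec{M_i}=0$ and $M_i^2\preccurlyeq A_i^2$ almost surely, one obtains a semidefinite bound $\log\expec{\exp{\theta M_i}}\preccurlyeq g(\theta)\,A_i^2$ with an explicit scalar cumulant function $g$ coming from the operator-valued Hoeffding estimate. Because the trace exponential is monotone with respect to the semidefinite order on its argument, substituting this bound and using $\opnorm{\sum_i A_i^2}=\sigma^2$ gives
\[
\trace{\exp{\sum_{i=1}^n \log\expec{\exp{\theta M_i}}}} \leq \trace{\exp{g(\theta)\sum_{i=1}^n A_i^2}} \leq D\,\exps{g(\theta)\sigma^2}
\, ,
\]
the last inequality using that $\sum_i A_i^2$ has $D$ eigenvalues, each at most $\sigma^2$. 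Collecting the estimates, $\proba{\lambdamax{\sum_i M_i}\geq t}\leq D\,\exps{-\theta t + g(\theta)\sigma^2}$, and minimizing the exponent over $\theta>0$ produces a Gaussian tail of the form $D\,\exps{-t^2/(8\sigma^2)}$.

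I expect the genuinely hard steps to be the two ingredients I would quote rather than reprove: Lieb's concavity theorem (and the resulting subadditivity of the matrix cumulant generating function), and the matrix Hoeffding lemma whose precise constant in $g$ fixes the factor $8$ in the exponent. Since the statement is attributed to \citet{Tropp_2012}, the cleanest course is to cite that reference directly, noting that the constant $1/8$ is the original user-friendly (non-optimal) value and is amply sufficient for our application to the concentration of $\Sigmahat_n$.
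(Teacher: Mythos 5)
Your proposal is correct and matches the paper's treatment: the paper does not prove this result at all but imports it verbatim from \citet{Tropp_2012}, which is exactly the course you recommend in your final paragraph. Your sketch of the underlying argument (matrix Laplace transform, subadditivity of the matrix cumulant generating function via Lieb's concavity theorem, and the matrix Hoeffding MGF lemma fixing the non-optimal constant $1/8$) is an accurate account of how that cited reference establishes the bound.
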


We slightly adapt this result for the situation at hand. 

\begin{mycorollary}[Matrix Hoeffding, bounded entries]
	\label{th:matrix-hoeffding-bounded}
	Consider a finite sequence $M_i$ of independent, centered, random, symmetric matrices with dimension $D$. 
	Assume that the entries of each matrix satisfy
	\[
	(M_i)_{j,k} \in [-1,1] \quad \text{almost surely}
	\, .
	\] 
	Then, for all $t\geq 0$, 
	\[
	\proba{\opnorm{\frac{1}{n}\sum_{i=1}^{n} M_i} \geq t } \leq 2D \cdot \exp{\frac{-nt^2}{8D^2}}
	\, .
	\]
\end{mycorollary}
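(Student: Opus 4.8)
The plan is to deduce Corollary~\ref{th:matrix-hoeffding-bounded} directly from Theorem~\ref{th:matrix-hoeffding}. There are two gaps to bridge: Theorem~\ref{th:matrix-hoeffding} controls the largest eigenvalue of a \emph{sum}, whereas we want the operator norm of an \emph{average}; and it requires comparison matrices $A_i$ with $M_i^2 \preccurlyeq A_i^2$, which we must manufacture out of the entrywise bound.

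First I would reduce the operator norm to two eigenvalue tails. Writing $S \defeq \sum_{i=1}^n M_i$, which is symmetric since each $M_i$ is, we have $\opnorm{S} = \max(\lambdamax{S}, -\lambdamin{S}) = \max(\lambdamax{S}, \lambdamax{-S})$. Because $\opnorm{\frac{1}{n}S} \geq t$ is equivalent to $\opnorm{S} \geq nt$, it suffices to bound $\proba{\lambdamax{S} \geq nt}$ and $\proba{\lambdamax{-S} \geq nt}$ separately and combine them by a union bound; this is exactly what will produce the factor $2$ in front of $D$.

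Next I would build the comparison matrices. The entrywise bound $(M_i)_{j,k} \in [-1,1]$ gives $\opnorm{M_i} \leq \frobnorm{M_i} \leq D$, since there are at most $D^2$ entries, each bounded by $1$ in absolute value. Consequently $M_i^2 \preccurlyeq \opnorm{M_i}^2 I \preccurlyeq D^2 I$, so I can take $A_i \defeq D\cdot I$, which yields the variance proxy $\sigma^2 = \opnorm{\sum_{i=1}^n A_i^2} = \opnorm{nD^2 I} = nD^2$. The hypotheses $\expec{M_i}=0$ and $M_i^2 \preccurlyeq A_i^2$ of Theorem~\ref{th:matrix-hoeffding} are then met, and the same holds for the sequence $\{-M_i\}$, which is again centered and symmetric with $(-M_i)^2 = M_i^2 \preccurlyeq D^2 I$.

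Finally, applying Theorem~\ref{th:matrix-hoeffding} to $\{M_i\}$ with threshold $nt$ gives
\[
\proba{\lambdamax{S} \geq nt} \leq D\exp{\frac{-(nt)^2}{8nD^2}} = D\exp{\frac{-nt^2}{8D^2}}
\, ,
\]
and applying it to $\{-M_i\}$ gives the identical bound for $\lambdamax{-S}$. The union bound over these two events yields the claimed inequality. The only genuine subtlety is the step $\opnorm{M_i} \leq D$, which turns the entrywise hypothesis into a valid choice of $A_i$; the remainder is bookkeeping of constants (the rescaling $t \mapsto nt$ together with $\sigma^2 = nD^2$) and the reduction of the operator norm to its two signed eigenvalue tails.
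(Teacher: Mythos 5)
Your proof is correct and follows essentially the same route as the paper: bound $\opnorm{M_i}\leq D$ from the entrywise hypothesis (you via the Frobenius norm, the paper via row-wise Cauchy--Schwarz, which is the same estimate), take $A_i = D\,\Identity_D$ so that $\sigma^2 = nD^2$, and apply Theorem~\ref{th:matrix-hoeffding}. If anything, your write-up is more complete than the paper's, which compresses the rescaling $t\mapsto nt$ and the two-sided union bound (the source of the factor $2D$) into the phrase ``the result is a direct consequence.''
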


\begin{proof}
	Let $u\in\Reals^D$ such that $\norm{u}=1$. 
	We write 
	\begin{align*}
	\abs{(Mi_u)_j} &= \sum_{k=1}^{D} (M_i)_{j,k}u_k \\ 
	&\leq \norm{(M_i)_j} \cdot \norm{u} \tag{Cauchy-Schwarz} \\
	&\leq \sqrt{D} \tag{bounded a.s. $+$ $\norm{u}=1$}
	\, .
	\end{align*}
	We deduce that $\norm{M_iu}\leq D$ almost surely. 
	Since we considered an arbitrary $u$, we have showed that $\opnorm{M_i}\leq D$ almost surely for any $i$. 
	Thus we can apply Theorem~\ref{th:matrix-hoeffding} with $A_i=D\Identity_D$ to obtain
	\[
	\proba{\lambdamax{\sum_{i=1}^{n} M_i} \geq t} \leq D \cdot \exp{\frac{-t^2}{8nD^2}}
	\, ,
	\]
	using the fact that $A_i$ commutes with $M_i$ and thus $M_i\preccurlyeq A_i$ implies $M_i^2\preccurlyeq A_i^2$. 
	The result is a direct consequence from the last display.
\end{proof}

We can now proceed to the main result of this section, the concentration of $\Sigmahat$ around its mean $\Sigma$. 

\begin{myproposition}[Concentration of $\Sigmahat$, general weights]
	\label{prop:sigmahat-concentration-general-weights}
	For any $t\geq 0$, 
	\[
	\proba{\smallopnorm{\Sigmahat - \Sigma} \geq t} \leq 4 d \cdot \exp{\frac{-nt^2}{32d^2}}
	\, .
	\]
\end{myproposition}

\begin{proof}
	Recall Eq.~\eqref{eq:sigmahat-computation}: the entries of $\pi_i Z_iZ_i^\top$ belong to $[0,1]$ almost surely since $\pi_i\in [0,1]$ for any weights satisfying Eq.~\eqref{eq:definition-weights} and $z_{ij}\in [0,1]$. 
	As a consequence, so do the entries of~$\Sigma$. 
	Let us set 
	\[
	M_i \defeq \pi_i Z_iZ_i^\top - \Sigma
	\, .
	\]
	Then $M_i$ satisfies the assumptions of Theorem~\ref{th:matrix-hoeffding-bounded} with $D=d+1$ and the result follows since $\frac{1}{n}\sum_{i=1}^{n}M_i = \Sigmahat - \Sigma$. 
\end{proof}


\section{Concentration of $\Gammahat$}
\label{section:concentration-gammahat}

The goal of this section is the concentration of $\Gammahat_n$. 
Under a boundedness assumption on~$f$, each coordinate of $\Gammahat_n$ is bounded almost surely, and it seems natural to use Hoeffding's inequality \citep{hoeffding1963inequality}.
Interestingly, we could not find a multivariate extension of Hoeffding's inequality. 
\citet{Vershynin_2018} gives results for the concentration of the norm of a vector with sub-Gaussian coordinates, but the coordinates are assumed to be independent and the constants are not explicit.  
We resort to a combination of Hoeffding's inequality in the univariate case and a union bound argument. 
We use the following version of Hoeffding's inequality:

\begin{mytheorem}[Hoeffding's inequality]
	\label{th:hoeffding-univariate}
	Let $M_i$ be a finite sequence of centered random variables such that 
	\[
	M_i \in [-M,M] \quad \text{almost surely}
	\, .
	\]
	Then, for any $t\geq 0$, 
	\[
	\proba{\abs{\frac{1}{n}\sum_{i=1}^{n} M_i} \geq t} \leq 2 \cdot \exp{\frac{-nt^2}{2M^2}}
	\, .
	\]
\end{mytheorem}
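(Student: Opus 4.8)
The plan is to prove this \emph{via} the classical Chernoff bounding technique combined with Hoeffding's lemma on the moment generating function of a bounded, centered random variable. Throughout I treat the $M_i$ as \emph{independent}, which is the implicit hypothesis here (they arise from the i.i.d.\ sampling of the perturbed examples, as in the preceding section) and is indispensable for the factorization step below.

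First I would handle the upper tail $\proba{\frac{1}{n}\sum_{i=1}^{n}M_i \geq t}$. Fixing $\lambda > 0$ and applying the exponential Markov inequality (using that $x\mapsto \exps{\lambda x}$ is increasing), together with independence to factor the resulting product, gives
\[
\proba{\frac{1}{n}\sum_{i=1}^{n}M_i \geq t} \leq \exps{-\lambda n t}\expec{\exps{\lambda \sum_{i=1}^{n}M_i}}
= \exps{-\lambda n t}\prod_{i=1}^{n}\expec{\exps{\lambda M_i}}
\, .
\]
The second key step is to bound each factor by Hoeffding's lemma: a centered random variable supported in $[-M,M]$ satisfies $\expec{\exps{\lambda M_i}}\leq \exps{\lambda^2 M^2/2}$ (the interval has length $2M$, so the usual coefficient $(b-a)^2/8$ equals $M^2/2$). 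Substituting yields the bound $\exps{-\lambda n t + n\lambda^2 M^2/2}$; minimizing the exponent over $\lambda>0$ at $\lambda = t/M^2$ produces $\exps{-nt^2/(2M^2)}$. Applying the same argument to $-M_i$ controls the lower tail $\proba{\frac{1}{n}\sum_{i=1}^{n}M_i \leq -t}$, and a union bound over the two one-sided events yields the factor $2$ in the stated inequality.

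The main obstacle is establishing Hoeffding's lemma, which is the only genuinely nontrivial ingredient. I would set $\psi(\lambda)\defeq \log\expec{\exps{\lambda M_i}}$ and note $\psi(0)=0$ and $\psi'(0)=\expec{M_i}=0$ by centering. A direct computation identifies $\psi''(\lambda)$ with the variance of $M_i$ under the exponentially tilted probability measure whose density is proportional to $\exps{\lambda M_i}$; since that measure is still supported in $[-M,M]$, Popoviciu's inequality bounds this variance by $(2M)^2/4 = M^2$. Integrating twice (Taylor's theorem with integral remainder) then gives $\psi(\lambda)\leq \lambda^2 M^2/2$, which is exactly the moment-generating-function estimate used above. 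The remaining pieces, namely the Chernoff optimization and the union bound, are elementary.
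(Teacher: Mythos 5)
Your proof is correct, but it does not mirror the paper for a simple reason: the paper contains no actual proof of this statement. Its entire proof is the remark that this is Theorem~2.8 of \citet{Boucheron_2013} ``in our notation.'' What you have written out --- the exponential Markov (Chernoff) bound, factorization of the moment generating function by independence, Hoeffding's lemma with constant $(2M)^2/8 = M^2/2$ proved via the variance of the exponentially tilted measure together with Popoviciu's inequality, optimization at $\lambda = t/M^2$ giving the exponent $-nt^2/(2M^2)$, and a union bound over the two tails --- is precisely the standard argument found in that reference, so in substance you have supplied the self-contained proof that the paper outsources to the literature. One genuine merit of your write-up is that you flagged the independence hypothesis: the paper's statement says only ``a finite sequence of centered random variables,'' but independence is indispensable and must be read into it (as you note, it holds in the paper's application, where the terms come from i.i.d.\ perturbed samples). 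Without it the claim is false: taking $M_1 = \cdots = M_n$ equal to a single centered bounded variable, the probability $\proba{\abs{\frac{1}{n}\sum_{i=1}^n M_i}\geq t} = \proba{\abs{M_1}\geq t}$ does not decay in $n$, contradicting the bound $2\exps{-nt^2/(2M^2)}$ for large $n$. Your constant bookkeeping is also right, so nothing needs repair; the only trade-off between the two routes is economy (a one-line citation) versus self-containedness (your two-page argument, whose only nontrivial ingredient is indeed Hoeffding's lemma).
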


\begin{proof}
	This is Theorem~2.8 in \citet{Boucheron_2013} in our notation.
\end{proof}

Our concentration bound for $\Gammahat_n$ reads:

\begin{myproposition}[Concentration of $\Gammahat$, general weights]
	\label{prop:gammahat-concentration}
Assume that Assumption~\ref{assump:bounded} holds. 
	Then, for any $t\geq 0$, 
	\[
	\proba{\smallnorm{\Gammahat - \Gamma^f} \geq t} \leq 4d \exp{\frac{-nt^2}{32\boundedcst d^2}}
	\, .
	\]
\end{myproposition}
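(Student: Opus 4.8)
The plan is to reduce this multivariate concentration statement to $d+1$ scalar concentration statements and recombine them by a union bound, precisely because (as the text itself notes) no sharp multivariate Hoeffding inequality with explicit constants is at hand. First I would recall from Eq.~\eqref{eq:computation-gamma-general} that for each $0\le j\le d$ the $j$-th coordinate of $\Gammahat_n$ is an empirical average $\frac1n\sum_{i=1}^n V_i^{(j)}$ of i.i.d.\ random variables, namely $V_i^{(0)}\defeq \pi_i f(x_i)$ and $V_i^{(j)}\defeq \pi_i z_{ij} f(x_i)$ for $j\ge 1$, whose common expectation is exactly $\Gamma^f_j$. Hence $\Gammahat_n-\Gamma^f$ is a vector whose coordinates are centered empirical averages of i.i.d.\ summands, and it suffices to control each coordinate separately.

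Next I would verify the boundedness needed to invoke Hoeffding's inequality. By Eq.~\eqref{eq:definition-weights} the weight $\pi_i$ is the exponential of a nonpositive quantity, so $\pi_i\in(0,1]$, while $z_{ij}\in\{0,1\}$; combined with Assumption~\ref{assump:bounded}, which gives $\abs{f}\le\boundedcst$ on $\supp$, this yields $\abs{V_i^{(j)}}\le\boundedcst$ almost surely for every $j$. The centered summand $V_i^{(j)}-\Gamma^f_j$ therefore lies in an interval of half-width at most $2\boundedcst$, so Theorem~\ref{th:hoeffding-univariate} applies with $M=2\boundedcst$ and gives, for each fixed $j$ and every $s\ge 0$,
\[
\proba{\abs{\Gammahat_{n,j}-\Gamma^f_j}\ge s}\le 2\exp{\frac{-ns^2}{8\boundedcst^2}}
\, .
\]

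Finally I would pass from the coordinates to the Euclidean norm. Since $\smallnorm{v}\le \sqrt{d+1}\,\max_j\abs{v_j}$ for $v\in\Reals^{d+1}$, the event $\{\smallnorm{\Gammahat_n-\Gamma^f}\ge t\}$ forces $\abs{\Gammahat_{n,j}-\Gamma^f_j}\ge t/\sqrt{d+1}$ for at least one index $j$. A union bound over the $d+1$ coordinates, using the displayed scalar bound with $s=t/\sqrt{d+1}$, then produces $\proba{\smallnorm{\Gammahat_n-\Gamma^f}\ge t}\le 2(d+1)\exp{\tfrac{-nt^2}{8\boundedcst^2(d+1)}}$, and the announced inequality follows after the crude simplifications $2(d+1)\le 4d$ and $8\boundedcst^2(d+1)\le 32\boundedcst^2 d^2$, both valid for $d\ge1$ (the latter because $d+1\le 4d^2$). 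This matches the stated bound up to the power of $\boundedcst$ in the denominator, which is immaterial when $\boundedcst\ge 1$.

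I expect no genuine difficulty here: the scheme mirrors the concentration of $\Sigmahat_n$ in Proposition~\ref{prop:sigmahat-concentration-general-weights}, with the scalar Hoeffding inequality (Theorem~\ref{th:hoeffding-univariate}) replacing its matrix counterpart. The only step requiring care is the $\ell_2$-to-$\ell_\infty$ reduction together with the attendant bookkeeping of constants, since it is this passage — rather than any deep probabilistic input — that generates the dimensional factors $d$ in the final estimate.
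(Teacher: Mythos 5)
Your proof is correct and follows essentially the same route as the paper's: identify each coordinate of $\Gammahat_n-\Gamma^f$ as a centered empirical average of bounded i.i.d.\ summands, apply the scalar Hoeffding inequality (Theorem~\ref{th:hoeffding-univariate}) with $M=2\boundedcst$, and finish with a union bound over the $d+1$ coordinates. Your $\ell_2$-to-$\ell_\infty$ reduction via $\smallnorm{v}\leq\sqrt{d+1}\max_j\abs{v_j}$ is in fact slightly sharper than the paper's, which uses the cruder implication $\smallnorm{v}\geq t \Rightarrow \max_j\abs{v_j}\geq t/(d+1)$ and hence picks up $(d+1)^2$ rather than $(d+1)$ in the denominator of the exponent before the final simplifications; both land comfortably within the stated form.

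One caveat on your closing remark. Both your derivation and the paper's own proof end with the bound $4d\exp{\frac{-nt^2}{32\boundedcst^2 d^2}}$, and this implies the stated bound $4d\exp{\frac{-nt^2}{32\boundedcst d^2}}$ precisely when $\boundedcst\leq 1$, not when $\boundedcst\geq 1$: for $\boundedcst\geq 1$ the stated bound is the \emph{stronger} of the two, so the derived inequality does not imply it. Your claim that the mismatch is immaterial for $\boundedcst\geq 1$ therefore has the inequality backwards. This is not a defect of your argument relative to the paper's: the paper's proof produces the same $\boundedcst^2$, so the first power of $\boundedcst$ in the statement (which then propagates into the sample-size threshold of the main theorem) is evidently a typo in the paper itself.
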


\begin{proof}
Recall that the components of $\Gammahat$ are given by $\tfrac{1}{n}\sum_{i=1}^{n}\pi_i f(x_i)$ for $j=0$, and, for $1\leq j\leq d$, by $\tfrac{1}{n}\sum_{i=1}^{n}\pi_i z_{i,j}f(x_i)$. 
	Since $\abs{f}$ is bounded by $\boundedcst$ and $\pi_i,z_{i,j}\in [0,1]$ for any weights satisfying Eq.~\eqref{eq:definition-weights}, 
	we deduce that, for any $1\leq i\leq n$ and $1\leq j\leq d$, 
	\[
	\begin{cases}
	\pi_i f(x_i) - \left(\Gamma^f\right)_0 \in [-2\boundedcst,2\boundedcst] \, ,\\
	\pi_i z_{i,j}f(x_i) - \left(\Gamma^f\right)_j \in [-2\boundedcst,2\boundedcst]\, .
	\end{cases}
	\]
	almost surely. 
	We can apply Theorem~\ref{th:hoeffding-univariate} coordinate by coordinate: for any $t\geq 0$, for any $1\leq j\leq n$, 
	\[
	\begin{cases}
	\proba{\abs{\frac{1}{n}\sum_{i=1}^{n} \pi_i f(x_i) - \left(\Gamma^f\right)_0} \geq t} &\leq 2 \cdot \exp{\frac{-nt^2}{8\boundedcst^2}} \, ,\\
	\proba{\abs{\frac{1}{n}\sum_{i=1}^{n} \pi_i z_{i,j}f(x_i) - \left(\Gamma^f\right)_j} \geq t} &\leq 2 \cdot \exp{\frac{-nt^2}{8\boundedcst^2}} \, .
	\end{cases}
	\]
	By a union bound argument, for any random vector $u\in \Reals^D$, 
	\[
	\proba{\norm{u} \geq t} \leq \proba{\max_i \abs{u_i} \geq t/D} \leq D \cdot \max_i \proba{\abs{u_i} \geq t/D}
	\, .
	\]
	We deduce the result from the previous display, taking $D=d+1$ and using, as usual, $d+1\leq 2d$ for any $d\geq 1$. 
\end{proof}


\section{Proof of the main result}
\label{section:concentration-betahat}

In this section we prove that $\betahat$ is concentrated around $\beta^f$.

\subsection{Binding lemma}

We begin by a somewhat technical result, showing that we can control the behavior of $\smallnorm{\betahat - \Sigma^{-1}\Gamma^f}$ by controlling $\smallopnorm{\Sigmahat - \Sigma}$, $\smallnorm{\Gammahat-\Gamma^f}$, and $\opnorm{\Sigma^{-1}}$. 

\begin{mylemma}[Control of $(\Identity_D+H)^{-1}$]
	\label{lemma:control-inverse-identity}
	Let $H\in\Reals^{D\times D}$ be a matrix such that $\opnorm{H}\leq -1+\frac{\sqrt{7}}{2}(\approx 0.32)$. 
	Then $\Identity_D+H$ is invertible and 
	\[
	\opnorm{(\Identity_D+H)^{-1}} \leq 2
	\, .
	\]
\end{mylemma}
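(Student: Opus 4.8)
The statement is a standard perturbation-of-the-identity estimate, and the natural route is a coercivity (reverse triangle inequality) argument rather than, say, the Neumann series: it yields both the invertibility and the operator-norm bound in one stroke and avoids any convergence bookkeeping. The plan is to reduce everything to the elementary bound $\opnorm{(\Identity_D+H)^{-1}} \leq (1-\opnorm{H})^{-1}$ together with a single numerical inequality.

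First I would record the purely numerical fact that the hypothesis already forces $\opnorm{H} \leq \tfrac12$: indeed $-1 + \tfrac{\sqrt 7}{2} \leq \tfrac12$ is equivalent to $\sqrt 7 \leq 3$, i.e. $7 \leq 9$, which is true. In particular $1 - \opnorm{H} \geq \tfrac12 > 0$. Next, for an arbitrary $u \in \Reals^D$ I would apply the reverse triangle inequality to $(\Identity_D + H)u = u + Hu$, giving
\[
\norm{(\Identity_D+H)u} \geq \norm{u} - \norm{Hu} \geq (1 - \opnorm{H})\norm{u}.
\]
Since $1 - \opnorm{H} > 0$, this shows that $(\Identity_D+H)u = 0$ implies $u = 0$, so the square matrix $\Identity_D + H$ is injective and hence invertible.

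Finally, writing $v = (\Identity_D+H)u$, so that $u = (\Identity_D+H)^{-1}v$, the displayed inequality reads $\norm{v} \geq (1-\opnorm{H})\,\norm{(\Identity_D+H)^{-1}v}$ for every $v \in \Reals^D$, which is exactly
\[
\opnorm{(\Identity_D+H)^{-1}} \leq \frac{1}{1-\opnorm{H}} \leq \frac{1}{1 - 1/2} = 2,
\]
where the last inequality uses the first step. This establishes the claim.

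As for the main obstacle, there genuinely is none at the level of this lemma: it is a one-line coercivity estimate plus the numerical check $\sqrt 7 \leq 3$. The only mildly interesting point is the peculiar threshold $-1 + \tfrac{\sqrt 7}{2}$, since the argument above goes through verbatim under the weaker hypothesis $\opnorm{H} \leq \tfrac12$. I therefore expect the sharper constant to be dictated not by this lemma in isolation but by how it is invoked later in the binding lemma (Lemma~\ref{lemma:binding-lemma}), where $H$ will be of the form $\Sigma^{-1}(\Sigmahat_n - \Sigma)$ and one needs headroom to absorb the remaining error terms; for the proof of the present statement I would simply carry the stated constant through and note that the target bound $2$ is comfortably met.
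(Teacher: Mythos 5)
Your proof is correct, and it takes a genuinely different route from the paper's. The paper proves this lemma spectrally: it invokes the identity $\opnorm{(\Identity_D+H)^{-1}} = \lambdamin{(\Identity_D+H)^\top(\Identity_D+H)}^{-1/2}$ (its Lemma~\ref{lemma:operator-norm:inversion}), lower-bounds the minimum eigenvalue by $\lambdamin{(\Identity_D+H)^\top(\Identity_D+H)} \geq 1 - 2\opnorm{H} - \opnorm{H}^2$ via a Cauchy--Schwarz argument on $x^\top M x$, and then checks that $(1-2x-x^2)^{-1/2}\leq 2$ on $[0,-1+\sqrt{7}/2]$. Your coercivity argument replaces all of this with the reverse triangle inequality and yields the bound $(1-\opnorm{H})^{-1}$, which is in fact \emph{sharper}: since $(1-x)^2 \geq 1-2x-x^2$, your bound dominates the paper's pointwise, and it delivers the conclusion under the weaker hypothesis $\opnorm{H}\leq 1/2$. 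One small correction to your closing remark: the threshold $-1+\sqrt{7}/2$ is not dictated by the downstream binding lemma but is an artifact of the paper's own proof technique for this lemma --- it is exactly the positive root of $1-2x-x^2 = 1/4$, i.e.\ the point where the paper's eigenvalue-based bound reaches the value $2$. Your proof shows this particular constant is inessential.
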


\begin{proof}
	We first show that $\Identity_D+H$ is invertible. 
	Indeed, suppose that $\kernel{\Identity_D + H}\neq \{0\}$. 
	Then, in particular, there exists $x\in\Reals^D$ with unit norm such that $Hx = -x$. 
	Since $\opnorm{H}=\max_{\norm{x}=1}\norm{Hx}$, we would have $\opnorm{H}\geq 1$, which contradicts $\opnorm{H}\leq 0.32$. 
	
	Now according to Lemma~\ref{lemma:operator-norm:inversion}, 
	\begin{equation}
	\label{eq:link-opnorm-min-eigenvalue}
	\opnorm{(\Identity_D+H)^{-1}} = (\lambdamin{(\Identity_D + H)^\top (\Identity_D + H)})^{-1/2}
	\, .
	\end{equation}
	Let us set $M\defeq (\Identity_D + H)^\top (\Identity_D + H)$ and find a lower bound on $\lambdamin{M}$. 
	We first notice that $M$ is positive semi-definite. 
	Thus we now that $\lambdamin{M} = \min_{\norm{x}=1} x^\top Mx$. 
	Let us fix $x\in S^{D-1}$ for now. 
	Then 
	\[
	x^\top M x = x^\top (M-\Identity_D)x + x^\top x = x^\top (M-\Identity_D)x + 1
	\, ,
	\]
	since $\norm{x}=1$. 
	But on the other side,
	\begin{align*}
	-x^\top (M-\Identity_D)x &\leq \abs{\scalar{x}{(M-\Identity_D)x}} \\
	&\leq \norm{x}\cdot \norm{(M-\Identity_D)x} \tag{Cauchy-Schwarz} \\
	&\leq \opnorm{M-\Identity_D} \tag{definition of $\opnorm{\cdot}$ $+$ $\norm{x}=1$}
	\end{align*}
	Moreover, by the triangle inequality and sub-multiplicativity of the operator norm,
	\[
	\opnorm{M-\Identity_D} = \opnorm{H+H^\top + H^\top H} \leq 2\opnorm{H}+\opnorm{H}^2
	\, .
	\]
	We deduce that
	\begin{equation}
	\label{eq:lower-bound-min-eigenvalue}
	\lambdamin{M} \geq 1 - 2\opnorm{H}-\opnorm{H}^2
	\, ,
	\end{equation}
	which is a positive quantity since we assumed $\opnorm{H}\leq 0.32$. 

	Plugging the lower bound \eqref{eq:lower-bound-min-eigenvalue} into Eq.~\eqref{eq:link-opnorm-min-eigenvalue}, we obtain 
	\[
	\opnorm{(\Identity_D + H)^{-1}} \leq \frac{1}{\sqrt{1-2\opnorm{H} - \opnorm{H}^2}}
	\, .
	\]
	Set $\psi(x)\defeq (1-2x-x^2)^{-1/2}$. 
	One can easily check that $\psi(x) \leq 2$ for any $x\in [0,-1+\sqrt{7}/2]$, which concludes the proof. 
\end{proof}

\begin{myremark}
	The numerical constant $2$ in the statement of Lemma~\ref{lemma:control-inverse-identity} can be replaced by any arbitrary constant greater than $1$, at the cost of constraining further the range of~$\opnorm{H}$. 
\end{myremark}

\begin{myremark}
	The Hoffman-Wielandt inequality also yields a lower bound on $\lambdamin{M}$, giving essentially the same result but with the Frobenius norm instead of the operator norm. 
	Since we know how to control $\opnorm{\Sigmahat - \Sigma}$, we prefer this version of the result. 
\end{myremark}

Using Lemma~\ref{lemma:control-inverse-identity} we can prove something more interesting.

\begin{mylemma}[Control of $(\Identity_D+H)^{-1}-\Identity_d$]
	\label{lemma:difference-control}
	Let $H\in\Reals^{D\times D}$ be a matrix such that $\opnorm{H}\leq -1+\frac{\sqrt{7}}{2}(\approx 0.32)$. 
	Then $\Identity_D+H$ is invertible, and
	\[
	\opnorm{(\Identity_D + H)^{-1} - \Identity_D} \leq 2 \opnorm{H}
	\, .
	\]
\end{mylemma}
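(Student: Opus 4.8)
The plan is to reduce everything to the algebraic identity
\[
(\Identity_D + H)^{-1} - \Identity_D = -(\Identity_D+H)^{-1}H
\, ,
\]
which holds whenever $\Identity_D + H$ is invertible. This follows by factoring out $(\Identity_D+H)^{-1}$ on the left: indeed $(\Identity_D+H)^{-1} - \Identity_D = (\Identity_D+H)^{-1}\bigl(\Identity_D - (\Identity_D+H)\bigr) = (\Identity_D+H)^{-1}(-H)$. The remainder of the argument is then a one-line application of submultiplicativity of the operator norm.

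First I would invoke Lemma~\ref{lemma:control-inverse-identity}, whose hypothesis $\opnorm{H}\leq -1+\frac{\sqrt{7}}{2}$ is exactly the hypothesis of the present lemma. This immediately gives both that $\Identity_D+H$ is invertible (so the identity above is legitimate) and the quantitative bound $\opnorm{(\Identity_D+H)^{-1}}\leq 2$. This step carries essentially all the analytic content, which is why the present statement is so short given the preceding lemma.

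Then I would combine the identity with the bound. Taking operator norms and using submultiplicativity $\opnorm{AB}\leq\opnorm{A}\opnorm{B}$,
\[
\opnorm{(\Identity_D + H)^{-1} - \Identity_D} = \opnorm{(\Identity_D+H)^{-1}H} \leq \opnorm{(\Identity_D+H)^{-1}}\cdot\opnorm{H} \leq 2\opnorm{H}
\, ,
\]
which is precisely the claimed bound.

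There is no real obstacle here: the only subtlety is to make sure the factorization is performed so that the controlled factor $(\Identity_D+H)^{-1}$ appears (rather than, say, expanding $H(\Identity_D+H)^{-1}$, which would work equally well since $H$ and $(\Identity_D+H)^{-1}$ commute, but the left-factored form is the cleanest). The proof is a direct corollary of Lemma~\ref{lemma:control-inverse-identity}, and the factor $2$ in the conclusion is inherited verbatim from the factor $2$ in that lemma.
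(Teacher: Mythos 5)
Your proof is correct and is essentially identical to the paper's: the same factorization $(\Identity_D+H)^{-1}-\Identity_D = -(\Identity_D+H)^{-1}H$, the same appeal to Lemma~\ref{lemma:control-inverse-identity} for invertibility and the bound $\opnorm{(\Identity_D+H)^{-1}}\leq 2$, and the same submultiplicativity step. Nothing to add.
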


\begin{proof}
	According to Lemma~\ref{lemma:control-inverse-identity}, $\Identity_D+H$ is an invertible matrix. 
	Now we write
	\begin{equation}
	\label{eq:woodbury-reduction}
	(\Identity_D + H)^{-1} - \Identity_D = (\Identity_D + H)^{-1}(\Identity_D - (\Identity_D + H)) = -(\Identity_D + H)^{-1} H
	\, .
	\end{equation}
	Since the operator norm is sub-multiplicative, Eq.~\eqref{eq:woodbury-reduction} implies that
	\[
	\opnorm{(\Identity_D + H)^{-1} - \Identity_D} \leq \opnorm{(\Identity_D + H)^{-1}} \cdot \opnorm{H}
	\, . 
	\]
	and Lemma~\ref{lemma:control-inverse-identity} guarantees that $\opnorm{(\Identity_D + H)^{-1}}\leq 2$ under our assumptions. 
\end{proof}

\begin{myremark}
	It is a good surprise that the constants in Lemma~\ref{lemma:difference-control} do not depend on the dimension. 
	In fact, we believe that this result is nearly optimal. 
	Indeed, in the uni-dimensional case, one can show that 
	\[
	\abs{\frac{1}{1+h} - 1} \leq 2\abs{h}
	\, ,
	\]
	for any $h\in\Reals$ such that $\abs{h}\leq \frac{1}{2}$, showing that the inequality cannot be significantly improved. 
\end{myremark}

We are now able to state and prove the main result of this section. 

\begin{mylemma}[Binding lemma]
	\label{lemma:binding-lemma}
	Let $X\in\Reals^{D\times D}$ such that $X$ is invertible and $Y\in\Reals^D$. 
	Then, for any $H\in\Reals^{D\times D}$ such that $\opnorm{X^{-1}H}\leq 0.32$ and any $H'\in\Reals^D$, it holds that
	\begin{equation}
	\label{eq:key-majoration}
	\norm{(X+H)^{-1}(Y+H') - X^{-1}Y} \leq 2\opnorm{X^{-1}}\norm{H'} + 2\opnorm{X^{-1}}^2 \norm{Y}\opnorm{H} 
	\, .
	\end{equation}
\end{mylemma}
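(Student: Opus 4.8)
The plan is to reduce the general invertible matrix $X$ to the identity-plus-perturbation setting already treated in Lemmas~\ref{lemma:control-inverse-identity} and~\ref{lemma:difference-control}. First I would factor
\[
X + H = X\left(\Identity_D + X^{-1}H\right)
\, ,
\]
and set $\tilde H \defeq X^{-1}H$. By hypothesis $\opnorm{\tilde H} = \opnorm{X^{-1}H} \leq 0.32$, so $\Identity_D + \tilde H$ falls exactly within the scope of the two preceding lemmas. In particular, $\Identity_D + \tilde H$ is invertible, which gives $(X+H)^{-1} = (\Identity_D + \tilde H)^{-1} X^{-1}$, and Lemma~\ref{lemma:control-inverse-identity} yields $\opnorm{(\Identity_D + \tilde H)^{-1}} \leq 2$ while Lemma~\ref{lemma:difference-control} yields $\opnorm{(\Identity_D + \tilde H)^{-1} - \Identity_D} \leq 2\opnorm{\tilde H} \leq 2\opnorm{X^{-1}}\opnorm{H}$, where the last step uses sub-multiplicativity of the operator norm.

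Next I would expand the quantity of interest using the factorization and the decomposition $Y + H' = Y + H'$ to isolate the two sources of error. Writing
\[
(X+H)^{-1}(Y+H') - X^{-1}Y = \left[(\Identity_D + \tilde H)^{-1} - \Identity_D\right] X^{-1} Y + (\Identity_D + \tilde H)^{-1} X^{-1} H'
\, ,
\]
the first term collects the perturbation coming from $H$ (through $\tilde H$) acting on the unperturbed solution $X^{-1}Y$, and the second term collects the perturbation coming directly from $H'$. This identity is purely algebraic and should be verified by distributing $(\Identity_D + \tilde H)^{-1} X^{-1}$ over $Y + H'$.

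Finally I would bound each term by the triangle inequality and sub-multiplicativity. For the first term,
\[
\norm{\left[(\Identity_D + \tilde H)^{-1} - \Identity_D\right] X^{-1} Y} \leq \opnorm{(\Identity_D + \tilde H)^{-1} - \Identity_D}\cdot\opnorm{X^{-1}}\cdot\norm{Y} \leq 2\opnorm{X^{-1}}^2 \norm{Y}\opnorm{H}
\, ,
\]
and for the second term,
\[
\norm{(\Identity_D + \tilde H)^{-1} X^{-1} H'} \leq \opnorm{(\Identity_D + \tilde H)^{-1}}\cdot\opnorm{X^{-1}}\cdot\norm{H'} \leq 2\opnorm{X^{-1}}\norm{H'}
\, .
\]
Summing the two bounds gives Eq.~\eqref{eq:key-majoration}. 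I do not anticipate a genuine obstacle here: the content is entirely carried by the two earlier lemmas, and the only point requiring care is applying them to $\tilde H = X^{-1}H$ rather than to $H$ itself, so that the hypothesis $\opnorm{X^{-1}H}\leq 0.32$ is precisely what is needed to invoke them.
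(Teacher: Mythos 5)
Your proposal is correct and follows essentially the same route as the paper's proof: both rely on the factorization $(X+H)^{-1}=(\Identity_D+X^{-1}H)^{-1}X^{-1}$, split the error into the $H'$ term $(X+H)^{-1}H'$ and the $Y$ term $\bigl[(X+H)^{-1}-X^{-1}\bigr]Y$ (your exact identity is just this same decomposition written out before applying the triangle inequality), and bound them with Lemma~\ref{lemma:control-inverse-identity} and Lemma~\ref{lemma:difference-control} respectively. No gaps to report.
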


In particular, we achieve the promised control by setting $X=\Sigma$, $Y=\Gamma^f$, $H=\Sigmahat-\Sigma$, and $H'=\Gammahat -\Gamma^f$ in Lemma~\ref{lemma:binding-lemma}. 
Namely, 
\begin{equation}
\label{eq:key-control}
\norm{\betahat - \beta^f} \leq 2 \opnorm{\Sigma^{-1}} \norm{\Gammahat-\Gamma^f} + 2 \opnorm{\Sigma^{-1}}^2 \norm{\Gamma^f}\opnorm{\Sigmahat - \Sigma}
\, .
\end{equation}

\begin{proof}
	We first notice that since $\opnorm{X^{-1}H}\leq 0.32$, the matrix $\Identity_D+X^{-1}H$ is invertible according to Lemma~\ref{lemma:control-inverse-identity}. 
	We deduce that $X+H$ is also invertible, with 
	\begin{equation}
	\label{eq:invertibility}
	(X+H)^{-1} = \left(X(\Identity_D + X^{-1}H)\right)^{-1} = (\Identity_D + X^{-1}H)^{-1}X^{-1}
	\, .
	\end{equation}
	
	Let us split the left-hand side of Eq.~\eqref{eq:key-majoration}: by the triangle inequality,
	\begin{align*}
	\norm{(X+H)^{-1}(Y+H') - X^{-1}Y} &\leq \norm{(X+H)^{-1}(Y+H') - (X+H)^{-1}Y} \\
	&\qquad + \norm{(X+H)^{-1}Y - X^{-1}Y} \\
	&= \norm{(X+H)^{-1}H'} + \norm{(X+H)^{-1}Y - X^{-1}Y}
	\, .
	\end{align*}
	
	Let us focus on the first term. 
	We write 
	\begin{align*}
	\opnorm{(X+H)^{-1}} &= \opnorm{(\Identity_D+X^{-1}H)^{-1}X^{-1}} \tag{Eq.~\ref{eq:invertibility}} \\
	&\leq \opnorm{(\Identity_D+X^{-1}H)^{-1}} \cdot \opnorm{X^{-1}} \tag{$\opnorm{\cdot}$ is sub-multiplicative} \\
	&\leq 2 \opnorm{X^{-1}}\tag{Lemma \ref{lemma:control-inverse-identity}} 
	\, .
	\end{align*}
	From the last display we deduce that 
	\begin{equation}
	\label{eq:control-first-term}
	\norm{(X+H)^{-1}H'} \leq 2\opnorm{X^{-1}}\norm{H'}
	\, .
	\end{equation}

	Now for the second term, we have
	\begin{align*}
	\opnorm{(X+H)^{-1} - X^{-1}} &= \opnorm{(\Identity_D + X^{-1}H)^{-1}X^{-1} - X^{-1}} \tag{Eq. \eqref{eq:invertibility}}\\
	&\leq \opnorm{(\Identity_D + X^{-1}H)^{-1}-\Identity_d} \cdot \opnorm{X^{-1}} \tag{$\opnorm{\cdot}$ is sub-multiplicative} \\
	&\leq 2\opnorm{X^{-1}H}\cdot \opnorm{X^{-1}} \tag{Lemma \ref{lemma:difference-control}} \\
	\opnorm{(X+H)^{-1} - X^{-1}}&\leq 2\opnorm{H}\cdot \opnorm{X^{-1}}^2 \tag{$\opnorm{\cdot}$ is sub-multiplicative}
	\end{align*}
	We deduce that 
	\begin{equation}
	\label{eq:control-second-term}
	\norm{(X+H)^{-1}Y - X^{-1}Y} \leq 2\opnorm{H} \cdot \opnorm{X^{-1}}^2 \cdot \norm{Y}
	\, .
	\end{equation}
	We conclude the proof by adding Eq.~\eqref{eq:control-first-term} and Eq.~\eqref{eq:control-second-term}.
\end{proof}


\subsection{Concentration of $\betahat_n$}

We are now able to state and prove our main result, the concentration of $\betahat_n$ around $\beta^f$ for a general $f$ and general weights satisfying Eq.~\eqref{eq:definition-weights}. 
Recall that $\bigc\in (0,1]$ is the normalization constant defined in Eq.~\eqref{eq:normalization-constant}. 

\begin{mytheorem}[Concentration of $\betahat_n$, bounded model, general weights]
\label{th:betahat-concentration-general-f-general-weights}
Assume that Assumption~\ref{assump:bounded} and~\ref{assump:no-regularization} hold. 
Let $\epsilon >0$ be a small constant, at least smaller than $\boundedcst$. 
Let $\eta\in (0,1)$.  
Then, for every 
\[
n\geq \max \biggl\{\frac{2^{12}\boundedcst d^4p^4\exps{\frac{1}{\nu^2}}\log \frac{8d}{\eta}}{\bigc^2\epsilon^2},\frac{2^{15}\boundedcst^2d^7p^8\exps{\frac{2}{\nu^2}} \log \frac{8d}{\eta}}{\bigc^4\epsilon^2}\biggr\}
\, ,
\]
we have $\proba{\smallnorm{\betahat_n - \beta^f} \geq \epsilon} \leq \eta$. 
\end{mytheorem}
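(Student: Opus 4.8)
The plan is to combine the binding lemma (Lemma~\ref{lemma:binding-lemma}), in its specialized form Eq.~\eqref{eq:key-control}, with the two concentration estimates for $\Sigmahat_n$ and $\Gammahat_n$ and the control of $\opnorm{\Sigma^{-1}}$. Since Assumption~\ref{assump:no-regularization} reduces Eq.~\eqref{eq:surrogate-model-general} to weighted least squares, we have $\betahat_n = \Sigmahat_n^{-1}\Gammahat_n$ whenever $\Sigmahat_n$ is invertible, while $\beta^f = \Sigma^{-1}\Gamma^f$. The hypothesis of Lemma~\ref{lemma:binding-lemma}, namely $\opnorm{\Sigma^{-1}(\Sigmahat_n - \Sigma)}\leq 0.32$, follows by sub-multiplicativity from $\opnorm{\Sigma^{-1}}\smallopnorm{\Sigmahat_n - \Sigma}\leq 0.32$, and it simultaneously guarantees (through Lemma~\ref{lemma:control-inverse-identity}) that $\Sigmahat_n$ is invertible, so that $\betahat_n$ is well-defined. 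On this event Eq.~\eqref{eq:key-control} gives
\[
\smallnorm{\betahat_n - \beta^f} \leq 2\opnorm{\Sigma^{-1}}\smallnorm{\Gammahat_n - \Gamma^f} + 2\opnorm{\Sigma^{-1}}^2 \smallnorm{\Gamma^f}\,\smallopnorm{\Sigmahat_n - \Sigma}.
\]

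First I would fix two thresholds $t_\Sigma, t_\Gamma>0$ and define the good event $\Ecal \defeq \{\smallopnorm{\Sigmahat_n - \Sigma}\leq t_\Sigma\} \cap \{\smallnorm{\Gammahat_n - \Gamma^f}\leq t_\Gamma\}$. On $\Ecal$ the two summands above are each at most $\epsilon/2$ provided $t_\Gamma \leq \epsilon/(4\opnorm{\Sigma^{-1}})$ and $t_\Sigma \leq \epsilon/(4\opnorm{\Sigma^{-1}}^2\smallnorm{\Gamma^f})$; imposing in addition $t_\Sigma \leq 0.32/\opnorm{\Sigma^{-1}}$ secures the invertibility hypothesis of the binding lemma. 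Hence on $\Ecal$ we obtain $\smallnorm{\betahat_n - \beta^f}\leq \epsilon$, and it only remains to control $\proba{\Ecal^c}$. A union bound together with Propositions~\ref{prop:sigmahat-concentration-general-weights} and~\ref{prop:gammahat-concentration} yields
\[
\proba{\Ecal^c} \leq 4d\exp{\frac{-nt_\Sigma^2}{32d^2}} + 4d\exp{\frac{-nt_\Gamma^2}{32\boundedcst^2 d^2}},
\]
so each term drops below $\eta/2$ as soon as $n \geq 32 d^2 t_\Sigma^{-2}\log\tfrac{8d}{\eta}$ and $n \geq 32\boundedcst^2 d^2 t_\Gamma^{-2}\log\tfrac{8d}{\eta}$ respectively.

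To turn this into an explicit lower bound on $n$, I would substitute the estimate of Proposition~\ref{prop:upper-bound-operator-norm} for $\opnorm{\Sigma^{-1}}$ together with the component-wise bound $\smallnorm{\Gamma^f}\leq \sqrt{d+1}\,\boundedcst\leq \sqrt{2d}\,\boundedcst$, which holds because $\pi,z\in[0,1]$ and $\abs{f}\leq\boundedcst$ under Assumption~\ref{assump:bounded}. The $\Gamma$-requirement, with $t_\Gamma\simeq \epsilon/\opnorm{\Sigma^{-1}}$, produces a bound of the form $n\gtrsim \boundedcst^2 d^2 \opnorm{\Sigma^{-1}}^2 \epsilon^{-2}\log\tfrac{8d}{\eta}$ and yields the first entry of the maximum; the $\Sigma$-requirement, once one checks that the accuracy constraint $t_\Sigma\simeq \epsilon/(\opnorm{\Sigma^{-1}}^2\smallnorm{\Gamma^f})$ dominates the invertibility constraint $0.32/\opnorm{\Sigma^{-1}}$, produces $n\gtrsim \boundedcst^2 d^3 \opnorm{\Sigma^{-1}}^4 \epsilon^{-2}\log\tfrac{8d}{\eta}$ and yields the second entry. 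Collecting the polynomial factors in $d$ and $p$, the inverse powers of $\bigc$, and the factors $\exps{\cdot/\nu^2}$ coming from $\opnorm{\Sigma^{-1}}$ (which enters squared in the first case and to the fourth power in the second) gives the stated bound.

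The main obstacle is the bookkeeping in this last step: one must balance $t_\Sigma$ against the two competing constraints and then propagate the polynomial and exponential dependencies of $\opnorm{\Sigma^{-1}}$ through the squaring and fourth-power, matching the displayed numerical constants. The conceptual content—the binding lemma plus two Hoeffding-type concentration results—is routine; it is the bound on $\opnorm{\Sigma^{-1}}$ that carries the unpleasant factor $\exps{\cdot/\nu^2}$, which is the sole source of the poor sample complexity in the small-bandwidth regime discussed after the statement.
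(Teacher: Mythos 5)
Your proposal is correct and is essentially the paper's own proof: the paper fixes exactly your two thresholds, namely $t_1 = \bigc\epsilon/(8\sqrt{2}dp^2\exps{1/(2\nu^2)})$ for $\Gammahat_n$ and $t_2 = \bigc^2\epsilon/(32\boundedcst d^{5/2}p^4\exps{1/\nu^2})$ for $\Sigmahat_n$ (i.e.\ your $\epsilon/(4\opnorm{\Sigma^{-1}})$ and $\epsilon/(4\opnorm{\Sigma^{-1}}^2\smallnorm{\Gamma^f})$ after substituting Proposition~\ref{prop:upper-bound-operator-norm} and $\smallnorm{\Gamma^f}\leq\boundedcst\sqrt{d}$), applies Propositions~\ref{prop:sigmahat-concentration-general-weights} and~\ref{prop:gammahat-concentration} with a union bound over the two good events, checks the binding-lemma hypothesis by computing $\smallopnorm{\Sigma^{-1}(\Sigmahat_n-\Sigma)}\leq \sqrt{2}/16 < 0.32$ (this is precisely where $\epsilon<\boundedcst$ and $\bigc\leq 1$ enter, i.e.\ your ``domination'' step), and concludes via Eq.~\eqref{eq:key-control}. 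The one place your accounting deviates is that you write the $\Gammahat_n$ Hoeffding exponent with $\boundedcst^2$ --- which is indeed what the proof of Proposition~\ref{prop:gammahat-concentration} yields --- whereas that proposition's statement (and hence the first entry of the theorem's max) carries $\boundedcst$ to the first power, so your route would produce $\boundedcst^2$ there instead of $\boundedcst$; this mismatch is an inconsistency internal to the paper, not a gap in your argument.
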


\begin{proof}
The main idea of the proof is to use Proposition~\ref{prop:sigmahat-concentration-general-weights} and Proposition~\ref{prop:gammahat-concentration} to control $\smallopnorm{\Sigmahat_n-\Sigma}$ and $\smallnorm{\Gammahat_n - \Gamma^f}$ with high probability. 
This control, in conjunction with the bound on $\opnorm{\Sigma^{-1}}$ given by Proposition~\ref{prop:upper-bound-operator-norm}, allows use to bound $\norm{\beta}$ thanks to Lemma~\ref{lemma:difference-control}. 

We start by setting
\[
t_1 \defeq \frac{\bigc\epsilon}{8\sqrt{2}dp^2 \exps{\frac{1}{2\nu^2}}}
\quad \text{and}\quad 
t_2 \defeq \frac{\bigc^2\epsilon}{32\boundedcst d^{5/2}p^4\exps{\frac{1}{\nu^2}}}
\, ,
\]
the desired bounds on $\smallnorm{\Gammahat_n - \Gamma^f}$ and $\smallopnorm{\Sigmahat_n-\Sigma}$, respectively.
More precisely, according to Proposition~\ref{prop:gammahat-concentration} , we can choose $\Omega_1$ such that $\proba{\Omega_1} \geq 1-4d\exp{-nt_1^2/(32\boundedcst d^2)}$, and $\smallnorm{\Gammahat_n - \Gamma^f}\leq t_1$ on $\Omega_1$. 
In the same fashion, according to  Proposition~\ref{prop:sigmahat-concentration-general-weights}, we can choose $\Omega_2$ such that $\proba{\Omega_2}\geq 1-4d\exp{-nt_2^2/(32d^2)}$, and  $\smallopnorm{\Sigmahat_n-\Sigma} \leq t_2$ on $\Omega_2$. 
Let us now define 
\[
n_1 \defeq \frac{2^{12}\boundedcst d^4p^4 \exps{\frac{1}{\nu^2}}\log\frac{8d}{\eta}}{\bigc^2\epsilon^2}
\quad \text{and}\quad 
n_2 \defeq \frac{2^{15}\boundedcst^2d^7p^8\exps{\frac{2}{\nu^2}} \log \frac{8d}{\eta}}{\bigc^4\epsilon^2}
\, .
\]
By assumption, $n$ is larger than $\max(n_1,n_2)$. 
It is straightforward to check that, in this case, $\Omega_1$ and $\Omega_2$ both have probability higher than $1-\eta/2$. 
We now work on the event $\Omega\defeq \Omega_1\cap \Omega_2$. 
By the union bound, $\Omega$ has probability greater than $1-\eta$. 
Let us show that, on $\Omega$, $\smallnorm{\betahat_n-\beta^f}\leq \epsilon$, which will conclude the proof. 

First we note that, according to Proposition~\ref{prop:upper-bound-operator-norm}, $\smallopnorm{\Sigma^{-1}} \leq \frac{2\sqrt{2}dp^2\exps{\frac{1}{2\nu^2}}}{\bigc}$. 
We write
\begin{align}
\smallopnorm{\Sigma^{-1}(\Sigmahat_n-\Sigma)} &\leq \opnorm{\Sigma^{-1}} \smallopnorm{\Sigmahat_n-\Sigma} \tag{$\opnorm{\cdot}$ is sub-multiplicative} \\
&\leq \frac{2\sqrt{2}dp^2\exps{\frac{1}{2\nu^2}}}{\bigc} \cdot \frac{\bigc^2\epsilon}{32\boundedcst d^{5/2}p^4\exps{\frac{1}{\nu^2}}} \tag{$\smallopnorm{\Sigmahat_n-\Sigma} \leq t_2$} \\
&\leq \frac{\sqrt{2}}{16} \cdot \frac{\epsilon}{\boundedcst}\cdot \bigc \cdot \frac{1}{d^{3/2}} \cdot \frac{1}{p^2} \cdot \exps{\frac{-1}{\nu^2}} \notag
\, .
\end{align}
Since $\bigc\leq 1$ (see the discussion following Eq.~\eqref{eq:normalization-constant}), and since we assumed $\epsilon < \boundedcst$, we have $\smallopnorm{\Sigma^{-1}(\Sigmahat_n-\Sigma)}\leq \sqrt{2}/16 < 0.32$. 
Therefore the assumptions of Lemma~\ref{lemma:difference-control} are satisfied, and we can use Eq.~\eqref{eq:key-control} to control $\smallnorm{\betahat_n-\beta^f}$. 
We write
\begin{align}
\smallnorm{\betahat_n-\beta^f} &\leq 2 \opnorm{\Sigma^{-1}} \smallnorm{\Gammahat-\Gamma^f} + 2 \opnorm{\Sigma^{-1}}^2 \smallnorm{\Gamma^f}\smallopnorm{\Sigmahat - \Sigma} \notag \\
&\leq \frac{4\sqrt{2}dp^2\exps{\frac{1}{2\nu^2}}}{\bigc}\cdot \frac{\bigc\epsilon}{8\sqrt{2}dp^2\exps{\frac{1}{2\nu^2}}} \tag{Prop.~\ref{prop:upper-bound-operator-norm} $+$ def. of $t_1$} \\
& + \frac{16d^2p^4\exps{\frac{1}{\nu^2}}}{\bigc^2} \cdot \boundedcst\sqrt{d}\cdot \frac{\bigc^2\epsilon}{32\boundedcst d^{5/2}p^4\exps{\frac{1}{\nu^2}}} \tag{Prop.~\ref{prop:upper-bound-operator-norm} $+$ $f$ bounded $+$ def. of $t_2$} \\
\smallnorm{\betahat_n-\beta^f} &\leq \epsilon \notag 
\, ,
\end{align}
which concludes the proof. 
\end{proof}


\section{Extension of Proposition~\ref{prop:ignoring-unused-coordinates}}
\label{section:ignoring-unused-coordinates-general-weights}

We now present a generalization and a proof of Proposition~\ref{prop:ignoring-unused-coordinates} for general weights. 
That is, we show that Tabular LIME ignores unused coordinates for general weights.

\begin{myproposition}[Dummy features, general weights]
	\label{prop:ignoring-unused-coordinates-general-weights}
	Assume that $f$ satisfies satisfies the assumptions of Proposition~\ref{prop:ignoring-unused-coordinates}. 
	Let $j\in \Sbar$, where $S$ is the set of indices relevant for $f$ and $\Sbar \defeq \{1,\ldots,d\}\setminus S$. 
	Then $\beta^f_j=0$. 
\end{myproposition}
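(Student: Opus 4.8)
```latex
The plan is to leverage the general computation of $\beta^f_j$ from Proposition~\ref{prop:beta-computation-general-f-general-weights}, which expresses $\beta^f_j$ in terms of the two expectations $\expec{\pi f(x)}$ and $\expec{\pi z_j f(x)}$. The key structural fact is that the general weights defined in Eq.~\eqref{eq:definition-weights} factorize across coordinates, since $\pi(x) = \prod_{k=1}^d \exp{\frac{-(\tau_k(\xi_k)-\tau_k(x_k))^2}{2\nu^2}}$, and that the perturbed coordinates $x_1,\ldots,x_d$ are sampled independently. Since $f$ does not depend on coordinate $j\in\Sbar$, the dependence on $x_j$ in both expectations comes purely from the weight factor $\exp{\frac{-(\tau_j(\xi_j)-\tau_j(x_j))^2}{2\nu^2}}$ (and, in the second expectation, also from $z_j$).

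First I would compute $\expec{\pi f(x)}$. By the factorization of $\pi$ and the independence of the $x_k$s, together with the fact that $f=g(x_{j_1},\ldots,x_{j_s})$ depends only on the coordinates in $S$, I can pull the $j$th factor out:
\[
\expec{\pi f(x)} = \expec{\exp{\tfrac{-(\tau_j(\xi_j)-\tau_j(x_j))^2}{2\nu^2}}} \cdot \expec{\prod_{k\neq j}\exp{\tfrac{-(\tau_k(\xi_k)-\tau_k(x_k))^2}{2\nu^2}}\, f(x)}
\, .
\]
The first factor is exactly $\littlec_j$ by definition (it equals $\tfrac1p\sum_{b=1}^p \ew_{j,b}$, using the law of total expectation over the uniform bin index $b_j$). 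Denote the second factor by $G$, so $\expec{\pi f(x)} = \littlec_j \cdot G$.

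Next I would compute $\expec{\pi z_j f(x)}$ in the same way, now isolating the $j$th factor together with $z_j = \indic{b_j = \bxi_j}$. Again by independence and the fact that $f$ does not involve $x_j$,
\[
\expec{\pi z_j f(x)} = \expec{z_j \exp{\tfrac{-(\tau_j(\xi_j)-\tau_j(x_j))^2}{2\nu^2}}} \cdot G
\, .
\]
Conditioning on the uniform bin index and using that $z_j=1$ only when $b_j=\bxi_j$, the first factor equals $\tfrac1p\,\ew_{j,\bxi_j}$, giving $\expec{\pi z_j f(x)} = \tfrac{\ew_{j,\bxi_j}}{p}\,G$. Substituting both expressions into the formula from Proposition~\ref{prop:beta-computation-general-f-general-weights},
\[
\beta^f_j = \bigc^{-1}\left\{\frac{-p\littlec_j}{p\littlec_j - \ew_{j,\bxi_j}}\littlec_j G + \frac{p^2\littlec_j^2}{\ew_{j,\bxi_j}(p\littlec_j - \ew_{j,\bxi_j})}\cdot\frac{\ew_{j,\bxi_j}}{p}G\right\} = 0
\, ,
\]
since the two terms inside the braces are $\frac{-p\littlec_j^2 G}{p\littlec_j-\ew_{j,\bxi_j}}$ and $\frac{p\littlec_j^2 G}{p\littlec_j-\ew_{j,\bxi_j}}$, which cancel exactly.

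I do not expect any serious obstacle here: the argument is essentially bookkeeping, mirroring the default-weights proof of Proposition~\ref{prop:ignoring-unused-coordinates} but carried out with the general weight factor $\exp{\tfrac{-(\tau_j(\xi_j)-\tau_j(x_j))^2}{2\nu^2}}$ in place of the indicator-based factor. The only point requiring a little care is identifying the two isolated single-coordinate expectations correctly as $\littlec_j$ and $\tfrac{\ew_{j,\bxi_j}}{p}$ via the law of total expectation over the uniformly sampled bin index $b_j$; once these identifications are in place, the cancellation is immediate and does not depend on the explicit form of $G$ (so no boundedness or integrability issue beyond Assumption~\ref{assump:bounded} arises).
```
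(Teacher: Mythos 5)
Your proof is correct and follows essentially the same route as the paper: factor the two expectations $\expec{\pi f(x)}$ and $\expec{\pi z_j f(x)}$ using the multiplicative structure of the weights and the feature-wise independence of the sampling, identify the isolated single-coordinate factors as $\littlec_j$ and $\tfrac{1}{p}\ew_{j,\bxi_j}$ via Lemma~\ref{lemma:base-computation}, and observe the exact cancellation in the formula of Proposition~\ref{prop:beta-computation-general-f-general-weights}. The only (immaterial) difference is bookkeeping: you pull out just the $j$th factor and absorb the remaining unused coordinates into $G$, whereas the paper factors out $\littlec_k$ for every $k\in\Sbar$; the cancellation is identical in both cases.
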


\begin{proof}
The proof is a direct application of  Theorem~\ref{th:betahat-concentration-general-f-general-weights}, and can be seen as a straightforward generalization of the proof of Proposition~\ref{prop:ignoring-unused-coordinates}. 
	We compute first
	\begin{align*}
	\expec{\pi f(x)} &= \expec{\prod_{k=1}^{d}\exps{\frac{-(\tau_k(x_{k}) - \tau_k(\xi_k))^2}{2\nu^2}} f(x)} \tag{Eq. \eqref{eq:definition-weights}} \\
	&= \expec{\prod_{k=1}^{d}\exps{\frac{-(\tau_k(x_{k}) - \tau_k(\xi_k))^2}{2\nu^2}} g(x_{j_1},\ldots,x_{j_s})} \tag{by assumption} \\
	&= \prod_{k\in\Sbar}\expec{\exps{\frac{-(\tau_k(x_{k}) - \tau_k(\xi_k))^2}{2\nu^2}}} \cdot \expec{\prod_{k\in S}\exps{\frac{-(\tau_k(x_{k}) - \tau_k(\xi_k))^2}{2\nu^2}} g(x_{j_1},\ldots,x_{j_s})} \tag{independence} \\
	\expec{\pi f(x)} &=\prod_{k\in\Sbar}\littlec_k \cdot G \tag{Lemma \ref{lemma:base-computation}}
	\, ,
	\end{align*}
	where we set
	\[
	G \defeq \expec{\prod_{k\in S}\exps{\frac{-(\tau_k(x_{k}) - \tau_k(\xi_k))^2}{2\nu^2}} g(x_{j_1},\ldots,x_{j_s})}
	\]
	in the last display. 
	The other computation is similar. 
	Recall that $j\notin S$:
	\begin{align*}
	\expec{\pi z_{j}f(x)} &= \expec{\prod_{k=1}^{d}\exps{\frac{-(\tau_k(x_{k}) - \tau_k(\xi_k))^2}{2\nu^2}} z_{j}f(x)} \tag{Eq. \eqref{eq:definition-weights}} \\
	&= \prod_{k\in\Sbar \setminus\{j\}} \littlec_k \cdot \expec{\exps{\frac{-(\tau_j(x_{j}) - \tau_j(\xi_j))^2}{2\nu^2}}z_{j}}\cdot \expec{\prod_{k\in S} \exps{\frac{-(\tau_k(x_{k}) - \tau_k(\xi_k))^2}{2\nu^2}} g(x_{j_1},\ldots,x_{j_s})} \tag{independence} \\
	\expec{\pi z_{j}f(x)} &= \frac{\prod_{k\in\Sbar}\littlec_k}{\littlec_j} \cdot \frac{1}{p}\ew_{j,\bxi_j} \cdot G
	\, .
	\end{align*}
	Finally we write
	\begin{align*}
	\beta^f_j &= \bigc^{-1}\frac{p\littlec_j}{p\littlec_j-\ew_{j,\bxi_j}}\left(-\expec{\pi f(x)} + \frac{p\littlec_j}{\ew_{j,\bxi_j}}\expec{\pi z_{j}f(x)} \right) \tag{Prop. \ref{prop:beta-computation-general-f-general-weights}} \\
	&= \bigc^{-1}\frac{p\littlec_j}{p\littlec_j-\ew_{j,\bxi_j}}\left(-\prod_{k\in\Sbar}\littlec_k \cdot G + \frac{p\littlec_j}{\ew_{j,\bxi_j}} \frac{\prod_{k\in\Sbar}\littlec_k}{\littlec_j} \cdot \frac{1}{p}\ew_{j,\bxi_j} \cdot G \right) \\
	\beta^f_j &= 0
	\, .
	\end{align*}
\end{proof}


\section{Technical results}
\label{section:technical-results}

In this Appendix we collect technical results used throughout the main paper. 

\subsection{Expected values computations}

We begin with the computation of the expected values needed for the computation of $\Sigma$ and $\Gamma^f$. 
We start with a generic lemma, a very common computation in our proofs which appears each time we use the independence assumption between the $x_{ij}$ and we split the $\pi_i$ product. 

\begin{mylemma}[Basic computations]
	\label{lemma:base-computation}
	Let $\psi:\Reals \to\Reals$ be a function which is bounded on the support of $x_j$ for any $1\leq j\leq d$. 
	Then
	\[
	\begin{cases}
	\expec{\exp{\frac{-(\tau_j(x_{j}) - \tau_j(\xi_j))^2}{2\nu^2}} \psi(x_{j})} &= \frac{1}{p}\sum_{b=1}^p \ew_{j,b}^{\psi} \, ,\\
	\expec{\exp{\frac{-(\tau_j(x_{j}) - \tau_j(\xi_j))^2}{2\nu^2}} z_{j}\psi(x_{j})} &= \frac{1}{p} \ew_{j,\bxi_j}^\psi
	\, .
	\end{cases}
	\]
\end{mylemma}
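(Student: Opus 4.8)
The plan is to derive both identities from a single application of the law of total expectation, conditioning on the bin index $b_j$ of the perturbed sample along coordinate $j$. Recall from the sampling scheme of Section~\ref{section:sampling-procedure} that $b_j$ is uniformly distributed on $\{1,\ldots,p\}$, so that $\proba{b_j = b} = 1/p$ for every $b$, and recall that by definition (Eq.~\eqref{eq:def-ew-general-weights}) the coefficient $\ew_{j,b}^\psi$ is precisely $\condexpec{\psi(x_j)\exp{\frac{-(\tau_j(\xi_j)-\tau_j(x_j))^2}{2\nu^2}}}{b_j = b}$. Since $\psi$ is bounded on the support of $x_j$ and the exponential weight lies in $[0,1]$, the integrand is bounded, so all expectations are finite and the conditioning is legitimate.

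For the first identity, I would simply decompose the expectation over the events $\{b_j = b\}$: by the law of total expectation,
\[
\expec{\exp{\frac{-(\tau_j(x_j)-\tau_j(\xi_j))^2}{2\nu^2}}\psi(x_j)} = \sum_{b=1}^p \proba{b_j = b}\,\ew_{j,b}^\psi = \frac{1}{p}\sum_{b=1}^p \ew_{j,b}^\psi,
\]
which is exactly the claimed expression. For the second identity, the key observation is that $z_j = \indic{b_j = \bxi_j}$ is a function of $b_j$ alone, so once we condition on $\{b_j = b\}$ this factor becomes the deterministic indicator $\indic{b = \bxi_j}$ and pulls out of the conditional expectation. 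Consequently every term with $b \neq \bxi_j$ vanishes and only the term $b = \bxi_j$ survives, leaving $\frac{1}{p}\,\ew_{j,\bxi_j}^\psi$.

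This computation carries no genuine obstacle: the only point requiring a moment's care is in the second identity, where one must notice that $z_j$ collapses to a constant ($0$ or $1$) after conditioning on the value of $b_j$, which is precisely what reduces the sum to its single surviving term. Everything else is bookkeeping, and the boundedness hypothesis on $\psi$ together with $\exp{\frac{-(\tau_j(\xi_j)-\tau_j(x_j))^2}{2\nu^2}} \in [0,1]$ guarantees that each expectation is well-defined throughout.
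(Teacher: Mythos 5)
Your proof is correct and follows exactly the paper's argument: the paper's own proof is precisely ``straightforward from the law of total expectation and the definition of the $\ew_{j,b}$ coefficients,'' which is what you carry out, including the observation that $z_j=\indic{b_j=\bxi_j}$ becomes deterministic after conditioning so only the $b=\bxi_j$ term survives. No gaps; your write-up is simply a more detailed version of the intended one-line proof.
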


In particular, 
\begin{equation}
\label{eq:aux:total-expectation}
\expec{\exp{\frac{-1}{2\nu^2}(\tau_j(\xi_j) - \tau_j(x_{j}))^2}} = \littlec_j
\, .
\end{equation}

\begin{proof}
Straightforward from the law of total expectation and the definition of the $\ew_{j,b}$ coefficients.
\end{proof}

Lemma~\ref{lemma:base-computation} is the reason why the $\ew_{j,b}$ are ubiquitous in our results. 
If the weights have some multiplicative structure, it is easy to extend Lemma~\ref{lemma:base-computation} to the full weights, which we achieve in our next result. 

\begin{mylemma}[Key computation]
	\label{lemma:key-computation}
	Suppose that $\pi_i$ satisfies Eq.~\eqref{eq:definition-weights}. 
	Let $\psi:\Reals \to \Reals$ be a function bounded on the support of $x_j$ for any $1\leq j\leq d$. 
	Then, for any given $i,j,k$ with $j\neq k$, 
	\[
	\begin{cases}
	\expec{\pi \psi(x_{j})} &= \frac{\bigc}{p\littlec_j} \sum_{b=1}^p \ew_{j,b}^\psi \\
	\expec{\pi z_{j} \psi(x_{j})} &= \frac{\bigc }{p\littlec_j}\ew_{j,\bxi_j}^\psi \\
	\expec{\pi z_{j}\psi(x_{k})} &= \frac{\bigc}{p^2\littlec_j\littlec_k}\ew_{j,\bxi_j} \sum_{b=1}^p \ew_{k,b}^\psi
	\, .
	\end{cases}
	\]
\end{mylemma}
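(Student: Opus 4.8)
The plan is to exploit the product structure of the general weights together with the feature-wise independence of the sampling. Writing $\pi = \prod_{k=1}^{d} w_k(x_k)$, where $w_k(x_k) \defeq \exp{\frac{-(\tau_k(\xi_k) - \tau_k(x_k))^2}{2\nu^2}}$ so that Eq.~\eqref{eq:definition-weights} becomes a genuine product, I would observe that in each of the three expectations the integrand factorizes into a product of functions of the individual coordinates $x_k$, which are mutually independent by construction (Section~\ref{section:sampling-procedure}). Hence each expectation splits as a product of one-dimensional expectations, and I can evaluate every factor using Lemma~\ref{lemma:base-computation} and Eq.~\eqref{eq:aux:total-expectation}.

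Concretely, for the first identity the $j$-th factor carries the extra weight $\psi(x_j)$ while every other factor is simply $w_k(x_k)$. Independence gives
\[
\expec{\pi \psi(x_{j})} = \expec{w_j(x_j)\psi(x_j)}\prod_{k\neq j}\expec{w_k(x_k)}
\, .
\]
By Lemma~\ref{lemma:base-computation} the first factor equals $\frac{1}{p}\sum_{b=1}^{p}\ew_{j,b}^\psi$, and by Eq.~\eqref{eq:aux:total-expectation} each remaining factor equals $\littlec_k$, whose product over $k\neq j$ is $\bigc/\littlec_j$ by the definition $\bigc \defeq \prod_{k} \littlec_k$ (Eq.~\eqref{eq:normalization-constant}). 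This yields the claimed $\frac{\bigc}{p\littlec_j}\sum_{b=1}^{p}\ew_{j,b}^\psi$. The second identity is identical except that the $j$-th factor now also carries $z_j$; since $z_j$ depends only on $b_j$, the second line of Lemma~\ref{lemma:base-computation} gives $\frac{1}{p}\ew_{j,\bxi_j}^\psi$ for that factor, and the remaining product is again $\bigc/\littlec_j$.

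For the third identity, with $j\neq k$, three distinct coordinates play distinct roles: the $j$-th factor carries $z_j$ (but no $\psi$), the $k$-th factor carries $\psi(x_k)$, and all remaining factors are bare weights. Independence splits the expectation into $\expec{w_j(x_j) z_j}\cdot \expec{w_k(x_k)\psi(x_k)}\cdot \prod_{\ell \neq j,k}\expec{w_\ell(x_\ell)}$, and applying Lemma~\ref{lemma:base-computation} (with $\psi=1$ for the first factor) together with Eq.~\eqref{eq:aux:total-expectation} produces $\frac{1}{p}\ew_{j,\bxi_j}$, $\frac{1}{p}\sum_{b=1}^{p}\ew_{k,b}^\psi$, and $\bigc/(\littlec_j\littlec_k)$ respectively; their product is exactly $\frac{\bigc}{p^2\littlec_j\littlec_k}\ew_{j,\bxi_j}\sum_{b=1}^{p}\ew_{k,b}^\psi$. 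There is no genuine obstacle beyond careful bookkeeping: the only point to watch is correctly isolating which coordinate carries the indicator $z_j$ and which carries the test function $\psi$, and recognizing that the product of the untouched one-dimensional weight expectations collapses to $\bigc$ divided by the $\littlec$-factors of the special coordinates. Boundedness of $\psi$ on the support of each $x_j$ guarantees finiteness of all the expectations, so the factorization by independence is justified throughout.
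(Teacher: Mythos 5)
Your proposal is correct and follows essentially the same route as the paper's proof: factor the weights as a product over coordinates, split each expectation by feature-wise independence, evaluate the coordinate carrying $\psi$ and/or $z_j$ via Lemma~\ref{lemma:base-computation}, identify the bare factors as $\littlec_k$ through Eq.~\eqref{eq:aux:total-expectation}, and collapse the remaining product using the definition of $\bigc$. There is nothing to add or correct.
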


\begin{proof}
	We write
	\begin{align*}
	\expec{\pi \psi(x_{j})} &= \expec{\exp{\frac{-1}{2\nu^2}\sum_{k=1}^{d}(\tau_k(\xi_k) - \tau_k(x_{k}))^2} \cdot \psi(x_{j})} \tag{Eq. \eqref{eq:definition-weights}} \\
	&= \prod_{k\neq j} \littlec_k \cdot \expec{\psi(x_{j}) \exp{\frac{-1}{2\nu^2}(\tau_j(\xi_j) - \tau_j(x_j))^2}}\tag{independence $+$ Eq. \eqref{eq:aux:total-expectation}} \\
	&= \prod_{k\neq j} \littlec_k \cdot \frac{1}{p} \sum_{b=1}^p \ew_{j,b}^\psi \tag{Lemma \ref{lemma:base-computation}} \\
	\expec{\pi\psi(x_{j})} &= \frac{\bigc}{p\littlec_j} \sum_{b=1}^p \ew_{j,b}^\psi \tag{definition of $\bigc$}
	\, .
	\end{align*}
	The proofs of the remaining results are quite similar:
	\begin{align*}
	\expec{\pi z_{j}\psi(x_{j})} &= \expec{\exp{\frac{-1}{2\nu^2}\sum_{k=1}^{d}(\tau_k(\xi_k) - \tau_k(x_{k}))^2} \cdot z_{j}\psi(x_{j})} \tag{Eq. \eqref{eq:definition-weights}} \\
	&= \prod_{k\neq j} \littlec_k \cdot \expec{\psi(x_{j})z_{j} \exp{\frac{-1}{2\nu^2}(\tau_j(\xi_j) - \tau_j(x_{x_{j}}))^2}}\tag{independence $+$ Eq. \eqref{eq:aux:total-expectation}} \\
	&= \prod_{k\neq j} \littlec_k \cdot \frac{1}{p}\ew_{j,\bxi_j}^\psi \tag{Lemma \ref{lemma:base-computation}} \\
	\expec{\pi z_{j}\psi(x_{j})} &= \frac{\bigc }{p\littlec_j}\ew_{j,\bxi_j}^\psi
	\end{align*}
	\begin{align*}
	\expec{\pi z_{j}\psi(x_{k})} &= \expec{\exp{\frac{-1}{2\nu^2}\sum_{\ell=1}^{d}(\tau_\ell(\xi_\ell) - \tau_\ell(x_{\ell}))^2} \cdot z_{j}\psi(x_{k})} \tag{Eq. \eqref{eq:definition-weights}} \\
	&= \prod_{\ell\neq j,k} \littlec_\ell \cdot \expec{z_{j}\exp{\frac{-1}{2\nu^2}(\tau_j(\xi_j) - \tau_j(x_{j}))^2}} \\
	& \phantom{blablabla}\cdot \expec{\psi(x_{k})\exp{\frac{-1}{2\nu^2}(\tau_k(\xi_k) - \tau_k(x_{k}))^2}} \tag{independence} \\
	&= \prod_{\ell\neq j,k} \littlec_\ell \cdot \frac{\ew_{j,\bxi_j}}{p} \cdot \frac{1}{p}\sum_{b=1}^{p}\ew_{k,b}^\psi \tag{Lemma \ref{lemma:base-computation}} \\
	\expec{\pi z_{j}\psi(x_{k})} &= \frac{\bigc}{p^2\littlec_j\littlec_k}\ew_{j,\bxi_j} \sum_{b=1}^p \ew_{k,b}^\psi \tag{definition of $\bigc$}
	\, .
	\end{align*}
\end{proof}

We specialize Lemma~\ref{lemma:key-computation} in the case $\psi=1$, since the $\ew_{j,b}$ coefficients are ubiquitous in our computations. 

\begin{mylemma}[Expected values computations, zero-th order]
	\label{lemma:expectation-computations}
	For any $j\neq k$, 
	\[
	\begin{cases}
	\expec{\pi} = \bigc \, ,\\
	\expec{\pi z_{j}} = \bigc \frac{\ew_{j,\bxi_j}}{p\littlec_j} \, ,\\
	\expec{\pi z_{j}z_{k}} = \bigc\, , \frac{\ew_{j,\bxi_j}}{p\littlec_j}\frac{\ew_{k,\bxi_k}}{p\littlec_k}\, .
	\end{cases}
	\]
\end{mylemma}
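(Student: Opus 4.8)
The plan is to derive all three identities as direct specializations of the machinery already developed in Lemma~\ref{lemma:base-computation} and Lemma~\ref{lemma:key-computation}, exactly as the sentence preceding the statement announces. The starting observation is that, by the definition of the normalization constants in Eq.~\eqref{eq:def-little-cj-psi} (taken with $\psi\equiv 1$), one has $\sum_{b=1}^{p}\ew_{j,b} = p\littlec_j$ for every $1\leq j\leq d$. This single relation is what turns the general formulas of Lemma~\ref{lemma:key-computation} into the clean expressions claimed here.

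For the first identity I would apply the first line of Lemma~\ref{lemma:key-computation} with $\psi\equiv 1$, which gives $\expec{\pi} = \frac{\bigc}{p\littlec_j}\sum_{b=1}^{p}\ew_{j,b} = \frac{\bigc}{p\littlec_j}\cdot p\littlec_j = \bigc$; as a sanity check, the right-hand side no longer depends on the auxiliary index $j$, as it must. The second identity is even more immediate: the second line of Lemma~\ref{lemma:key-computation} with $\psi\equiv 1$ reads $\expec{\pi z_j} = \frac{\bigc}{p\littlec_j}\ew_{j,\bxi_j}^{1} = \bigc\frac{\ew_{j,\bxi_j}}{p\littlec_j}$, which is precisely the claim.

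The third identity is the only step requiring a little care. It is tempting to read it off the third line of Lemma~\ref{lemma:key-computation} by ``setting $\psi(x_k)=z_k$,'' but this is not legitimate: $z_k = \indic{b_k = \bxi_k}$ is a function of the bin index $b_k$ rather than of $x_k$. Instead I would re-run the factorization argument directly. Writing $\pi$ as the product over dimensions of the single-coordinate weights (Eq.~\eqref{eq:definition-weights}) and using the feature-wise independence of the sampling, I obtain, for $j\neq k$,
\[
\expec{\pi z_j z_k} = \Bigl(\prod_{\ell\neq j,k}\expec{\exps{\frac{-(\tau_\ell(\xi_\ell)-\tau_\ell(x_\ell))^2}{2\nu^2}}}\Bigr)\expec{z_j\exps{\frac{-(\tau_j(\xi_j)-\tau_j(x_j))^2}{2\nu^2}}}\expec{z_k\exps{\frac{-(\tau_k(\xi_k)-\tau_k(x_k))^2}{2\nu^2}}}.
\]
Each of the $d-2$ factors in the product equals $\littlec_\ell$ by Eq.~\eqref{eq:aux:total-expectation}, while the two remaining factors equal $\frac{1}{p}\ew_{j,\bxi_j}$ and $\frac{1}{p}\ew_{k,\bxi_k}$ respectively; this last evaluation is exactly the second line of Lemma~\ref{lemma:base-computation} with $\psi\equiv 1$, obtained by conditioning on the bin index and noting that the indicator is nonzero only on the bin $\bxi$. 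Collecting the factors and using $\bigc = \prod_\ell \littlec_\ell$ gives $\expec{\pi z_j z_k} = \frac{\bigc}{\littlec_j\littlec_k}\cdot\frac{\ew_{j,\bxi_j}}{p}\cdot\frac{\ew_{k,\bxi_k}}{p} = \bigc\frac{\ew_{j,\bxi_j}}{p\littlec_j}\frac{\ew_{k,\bxi_k}}{p\littlec_k}$.

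There is no real obstacle here: the entire content is bookkeeping already packaged in Lemmas~\ref{lemma:base-computation} and~\ref{lemma:key-computation}. The single point worth flagging explicitly is the mismatch between $z_k$ and a function of $x_k$ in the third identity, which is precisely why I would not invoke the third line of Lemma~\ref{lemma:key-computation} verbatim, but rather repeat its one-line proof with the $z_k$-factor handled through the $z_j\psi(x_j)$ case of Lemma~\ref{lemma:base-computation}.
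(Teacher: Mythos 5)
Your proof is correct. For the first two identities it coincides exactly with the paper's argument (both follow from Lemma~\ref{lemma:key-computation} with $\psi=1$). For the third identity you take a genuinely different route: you re-run the independence factorization from scratch, evaluating the two special factors $\expec{z_j\exps{\frac{-(\tau_j(\xi_j)-\tau_j(x_j))^2}{2\nu^2}}}$ and $\expec{z_k\exps{\frac{-(\tau_k(\xi_k)-\tau_k(x_k))^2}{2\nu^2}}}$ via Lemma~\ref{lemma:base-computation}, which is valid. The paper, however, does precisely what you declare ``not legitimate'': it applies the third line of Lemma~\ref{lemma:key-computation} with $\psi(x)=\indic{x\in[q_{k,\bxi_k-1},q_{k,\bxi_k}]}$. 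The obstruction you flag --- that $z_k$ is a function of the bin index $b_k$ rather than of $x_k$ --- is not actually one: conditionally on $b_k=b$, the variable $x_k$ is supported in bin $b$, the bins overlap only at boundary points, and the truncated Gaussians are continuous, so $z_k=\indic{x_k\in[q_{k,\bxi_k-1},q_{k,\bxi_k}]}$ almost surely. With this choice of $\psi$ one gets $\ew_{k,b}^\psi=\ew_{k,\bxi_k}$ if $b=\bxi_k$ and $0$ otherwise, hence $\sum_{b=1}^{p}\ew_{k,b}^\psi=\ew_{k,\bxi_k}$, and the third line of Lemma~\ref{lemma:key-computation} yields the claim in one line. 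So the two proofs differ only in bookkeeping: yours buys self-containedness and sidesteps the (mild) measure-theoretic identification, while the paper's buys brevity and maximal reuse of the existing lemma. Your only inaccuracy is labeling the paper's substitution impossible rather than recognizing it as legitimate after the almost-sure identification of $z_k$ with an indicator function of $x_k$.
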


\begin{proof}
	The first two results are a direct consequence of Lemma~\eqref{lemma:key-computation} for $\psi=1$. 
	For the third one, we set $\psi(x)=\indic{x\in[q_{k,\bxi_k-1},q_{k,\bxi_k}]}$, and we notice that, in this case, 
	\[
	\ew_{k,b}^\psi = 
	\begin{cases}
	\ew_{k,\bxi_k} &\text{ if }b=\bxi_k \, ,\\
	0 &\text{ otherwise.}
	\end{cases}
	\]
\end{proof}

The case $\psi=\id$ is also of some importance in our analysis, let us specialize Lemma~\ref{lemma:key-computation} in that case as well. 

\begin{mylemma}[Expected values, first order]
	\label{lemma:expectation-computations-1}
	Let $j,k\in\{1,\ldots,d \}$ be fixed indices, with $j\neq k$. 
	Then 
	\[
	\begin{cases}
	\expec{\pi x_{j}} &= \frac{\bigc}{p\littlec_j}\sum_{b=1}^{p} \ew_{j,b}^\times \, ,\\
	\expec{\pi z_{j}x_{j}} &= \frac{\bigc}{p\littlec_j}\ew_{j,\bxi_j}^\times \, ,\\
	\expec{\pi z_{j}x_{k}} &= \frac{\bigc}{p^2\littlec_j\littlec_k}\ew_{j,\bxi_j}\sum_{b=1}^{p} \ew_{k,b}^\times\, .
	\end{cases}
	\]
\end{mylemma}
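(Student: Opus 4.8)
The plan is to recognize that Lemma~\ref{lemma:expectation-computations-1} is nothing more than Lemma~\ref{lemma:key-computation} evaluated at the particular choice $\psi = \id$, so the whole argument reduces to checking that this choice is admissible and then transcribing the conclusion under the shorthand $\ew_{j,b}^{\id} = \ew_{j,b}^\times$.

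First I would verify the hypothesis of Lemma~\ref{lemma:key-computation}, namely that $\psi$ is bounded on the support of $x_j$ for every $1\leq j\leq d$. Conditionally on $\{b_j = b\}$, the variable $x_j$ is supported on the compact interval $[q_{j,b-1},q_{j,b}]$, so unconditionally $x_j$ ranges over $[q_{j,0},q_{j,p}]$, and the identity map $\psi(t)=t$ is trivially bounded on this compact set. Hence $\psi=\id$ satisfies the boundedness assumption needed to invoke Lemma~\ref{lemma:key-computation}.

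Then I would simply apply Lemma~\ref{lemma:key-computation} with $\psi=\id$. The three left-hand sides become, respectively, $\expec{\pi x_j}$, $\expec{\pi z_j x_j}$, and $\expec{\pi z_j x_k}$ (for $j\neq k$), while on the right-hand sides each coefficient $\ew^\psi$ specializes to $\ew^{\id}$. Recalling the convention introduced after Eq.~\eqref{eq:def-ew-default-weights}, and extended after Eq.~\eqref{eq:def-ew-general-weights}, that $\ew_{j,b}^\times$ abbreviates $\ew_{j,b}^{\id}$, the three right-hand sides read exactly as in the statement, which concludes the argument. There is essentially no obstacle here: since $\id$ is admissible and Lemma~\ref{lemma:key-computation} already treats an arbitrary bounded $\psi$, the result is immediate, the only point worth a line of justification being the boundedness check, which follows at once from the compactness of the sampling support.
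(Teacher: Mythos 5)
Your proof is correct and is exactly the paper's argument: the paper's own proof is the one-liner ``Straightforward from Lemma~\ref{lemma:key-computation} with $\psi = \id$.'' Your additional check that $\id$ is bounded on the compact support $[q_{j,0},q_{j,p}]$ of $x_j$ is a welcome (if routine) verification of the hypothesis that the paper leaves implicit.
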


\begin{proof}
Straightforward from Lemma~\ref{lemma:key-computation} with $\psi = \id$. 
\end{proof}

\subsection{Some facts about operator norm}

In this section, we collect some facts about the operator norm that are used in Appendix~\ref{section:concentration-betahat}. 

\begin{mylemma}[Inversion formula for the operator norm]
	\label{lemma:operator-norm:inversion}
	Let $M\in\Reals^{d\times d}$ be an invertible matrix. 
	Then 
	\[
	\opnorm{M^{-1}} = \left(\lambdamin{M^\top M}\right)^{-1/2}
	\, .
	\]
\end{mylemma}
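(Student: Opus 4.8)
The plan is to reduce everything to singular values, for which I see two routes; I would present the SVD route as the main argument since it is the most transparent. Write the singular value decomposition $M = U\Sigma V^\top$, where $U,V\in\Reals^{d\times d}$ are orthogonal and $\Sigma = \diag{s_1,\ldots,s_d}$ collects the singular values of $M$, ordered so that $s_d$ is the smallest. Since $M$ is invertible, every $s_j$ is strictly positive. Then $M^{-1} = V\Sigma^{-1}U^\top$ is again a singular value decomposition (up to reordering the diagonal), whose largest singular value is $1/s_d$; as the operator norm equals the largest singular value, this gives $\opnorm{M^{-1}} = 1/s_d$. On the other hand $M^\top M = V\Sigma^2 V^\top$, so its smallest eigenvalue is $s_d^2$, whence $(\lambdamin{M^\top M})^{-1/2} = 1/s_d$, and the two expressions coincide.

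If one prefers to avoid invoking the SVD, I would instead start from the standard identity $\opnorm{A}^2 = \lambdamax{A^\top A}$, valid for any $A\in\Reals^{d\times d}$ (obtained by maximizing $\norm{Ax}^2 = x^\top A^\top A x$ over the unit sphere via the spectral theorem applied to the symmetric positive semi-definite matrix $A^\top A$). Applying this to $A = M^{-1}$ and using $(M^{-1})^\top = (M^\top)^{-1}$ together with the elementary identity $(M^\top)^{-1}M^{-1} = (MM^\top)^{-1}$, I would obtain $\opnorm{M^{-1}}^2 = \lambdamax{(MM^\top)^{-1}}$. Since $MM^\top$ is symmetric positive definite, the eigenvalues of its inverse are the reciprocals of its own eigenvalues, so $\lambdamax{(MM^\top)^{-1}} = (\lambdamin{MM^\top})^{-1}$.

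The only step that warrants a word of care, and the place where invertibility is genuinely used, is the passage from $MM^\top$ to $M^\top M$: because $M$ is square and invertible, these two matrices share the same strictly positive spectrum (both equal to the squared singular values of $M$), so in particular $\lambdamin{MM^\top} = \lambdamin{M^\top M}$. Substituting and taking square roots of the two positive quantities yields $\opnorm{M^{-1}} = (\lambdamin{M^\top M})^{-1/2}$, as claimed. There is no serious obstacle in either route; the argument is entirely standard linear algebra, and the invertibility hypothesis is exactly what guarantees that no zero eigenvalue is lost when swapping $MM^\top$ for $M^\top M$ and that $s_d > 0$ so that the reciprocals are well defined.
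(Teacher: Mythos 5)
Your proposal is correct, and it in fact contains the paper's own argument as its fallback: the second route you sketch (apply $\opnorm{A}^2=\lambdamax{A^\top A}$ to $A=M^{-1}$, commute transpose and inverse to get $\opnorm{M^{-1}}^2=\lambdamax{(MM^\top)^{-1}}$, invert the positive spectrum, then identify $\lambdamin{MM^\top}=\lambdamin{M^\top M}$) is step for step what the paper does, with the paper justifying the last identification by the general fact that $AB$ and $BA$ have the same spectrum. Your primary SVD route is a mild but genuine variant: writing $M=U\Sigma V^\top$ makes both sides visibly equal to $1/s_d$ at once, since $M^{-1}=V\Sigma^{-1}U^\top$ has largest singular value $1/s_d$ while $M^\top M=V\Sigma^2V^\top$ has smallest eigenvalue $s_d^2$. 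What the SVD buys you is that the one slightly delicate step of the algebraic route --- passing from $MM^\top$ to $M^\top M$ --- disappears, because the decomposition handles both Gram matrices symmetrically; what the algebraic route buys is that it needs nothing beyond the variational definition of $\opnorm{\cdot}$ and the spectral theorem for symmetric matrices, without invoking existence of the SVD. Both arguments are complete and correct, and your remark that invertibility is exactly what keeps all singular values strictly positive (so that reciprocals are defined and no zero eigenvalue is lost) is the right thing to flag.
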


\begin{proof}
	By the definition of the operator norm, we know that
	\[
	\opnorm{M^{-1}}^2 = \lambdamax{(M^{-1})^\top M^{-1}}
	\, .
	\]
	Since we are in a commutative ring, $(M^{-1})^\top=(M^\top)^{-1}$. 
	Additionally, for any two matrices such that $AB$ is invertible, $(AB)^{-1}=B^{-1}A^{-1}$. 
	Therefore
	\[
	\opnorm{M^{-1}}^2 = \lambdamax{(MM^\top)^{-1}}
	\, .
	\]
	Since $MM^\top$ is a positive definite matrix, $\spec{MM^\top}\subseteq \Reals_+$, and $\lambdamax{(MM^\top)^{-1}} = \lambdamin{MM^\top}^{-1}$. 
	We can conclude since for any two matrices, $AB$ and $BA$ have the same spectrum. 
\end{proof}

\begin{mylemma}[Bounding the operator norm]
	\label{lemma:operator-norm:bounds}
	For any matrix $M\in\Reals^{d\times d}$, we have
	\[
	\opnorm{M}\leq \frobnorm{M} \leq \sqrt{d}\opnorm{M}
	\, .
	\]
\end{mylemma}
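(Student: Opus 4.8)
The plan is to prove the two inequalities separately, and the cleanest route is through the singular value decomposition of $M$. Write $\sigma_1 \geq \cdots \geq \sigma_d \geq 0$ for the singular values of $M$. The operator norm equals the largest singular value, $\opnorm{M} = \sigma_1$, while the Frobenius norm satisfies $\frobnorm{M}^2 = \trace{M^\top M} = \sum_{k=1}^d \sigma_k^2$, since the $\sigma_k^2$ are exactly the eigenvalues of the positive semi-definite matrix $M^\top M$.

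With this at hand both bounds become immediate. For the left inequality, since every term $\sigma_k^2$ is nonnegative, I would write $\opnorm{M}^2 = \sigma_1^2 \leq \sum_{k=1}^d \sigma_k^2 = \frobnorm{M}^2$, and take square roots. For the right inequality, using $\sigma_k \leq \sigma_1$ for every $k$, I would bound $\frobnorm{M}^2 = \sum_{k=1}^d \sigma_k^2 \leq d\sigma_1^2 = d\opnorm{M}^2$, and again take square roots.

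Alternatively, to avoid invoking the SVD altogether, one can argue elementarily. For $\opnorm{M} \leq \frobnorm{M}$: fix a unit vector $u$ and apply the Cauchy-Schwarz inequality row by row, so that $\norm{Mu}^2 = \sum_i \scalar{M_{i\cdot}}{u}^2 \leq \sum_i \norm{M_{i\cdot}}^2 = \frobnorm{M}^2$, where $M_{i\cdot}$ denotes the $i$th row of $M$; taking the supremum over unit $u$ yields the claim. For $\frobnorm{M} \leq \sqrt{d}\opnorm{M}$: writing the Frobenius norm column-wise gives $\frobnorm{M}^2 = \sum_{j=1}^d \norm{Me_j}^2 \leq \sum_{j=1}^d \opnorm{M}^2 = d\opnorm{M}^2$, since each $\norm{Me_j} \leq \opnorm{M}\norm{e_j} = \opnorm{M}$.

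There is no real obstacle here, as both inequalities are standard norm-equivalence facts; the only care needed is the bookkeeping of which characterization (singular values, rows, or columns) renders each bound transparent. I would favor the singular value argument for its symmetry between the two sides.
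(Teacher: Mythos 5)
Your proposal is correct and follows essentially the same route as the paper: the paper also works with the eigenvalues of $M^\top M$ (which are exactly your squared singular values), bounding $\lambdamax{M^\top M}$ by the trace for one inequality and the trace by $d\,\lambdamax{M^\top M}$ for the other. If anything, your write-up is slightly more careful than the paper's, which omits the squares on $\opnorm{M}$ and $\frobnorm{M}$ in its first display; the elementary row/column argument you add is a fine alternative but not needed.
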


\begin{proof}
	We first write
	\[
	\opnorm{M} =\lambda_{\max}(M^\top M)\leq \sum_j \lambda_j(M^\top M) = \trace{M^\top M} = \frobnorm{M}
	\, . 
	\]
	As for the second part of the result, we write 
	\begin{align*}
	\frobnorm{M}^2 &= \trace{M^\top M} \tag{definition} \\
	&= \sum_{i=1}^{d} \lambda_i(M^\top M) \tag{property of the trace} \\
	&\leq d\lambdamax{M^\top M} \tag{non-negative eigenvalues}
	\end{align*}
\end{proof}


\section{Additional experiments}
\label{sec:additional-experiments}

In this section, we present additional experiments not included in the main paper. 
Figure~\ref{fig:linear-appendix}, \ref{fig:tree-appendix}, and~\ref{fig:krr-appendix} show that our results are robust to an increase in the dimension of the model. 
Figure~\ref{fig:bandwidth-cancellation-appendix} showcases the evolution of the interpretable coefficients of the example of Figure~\ref{fig:bandwidth-cancellation} for the remaining features. 

\begin{figure}
	\begin{center}
		\includegraphics[scale=0.3]{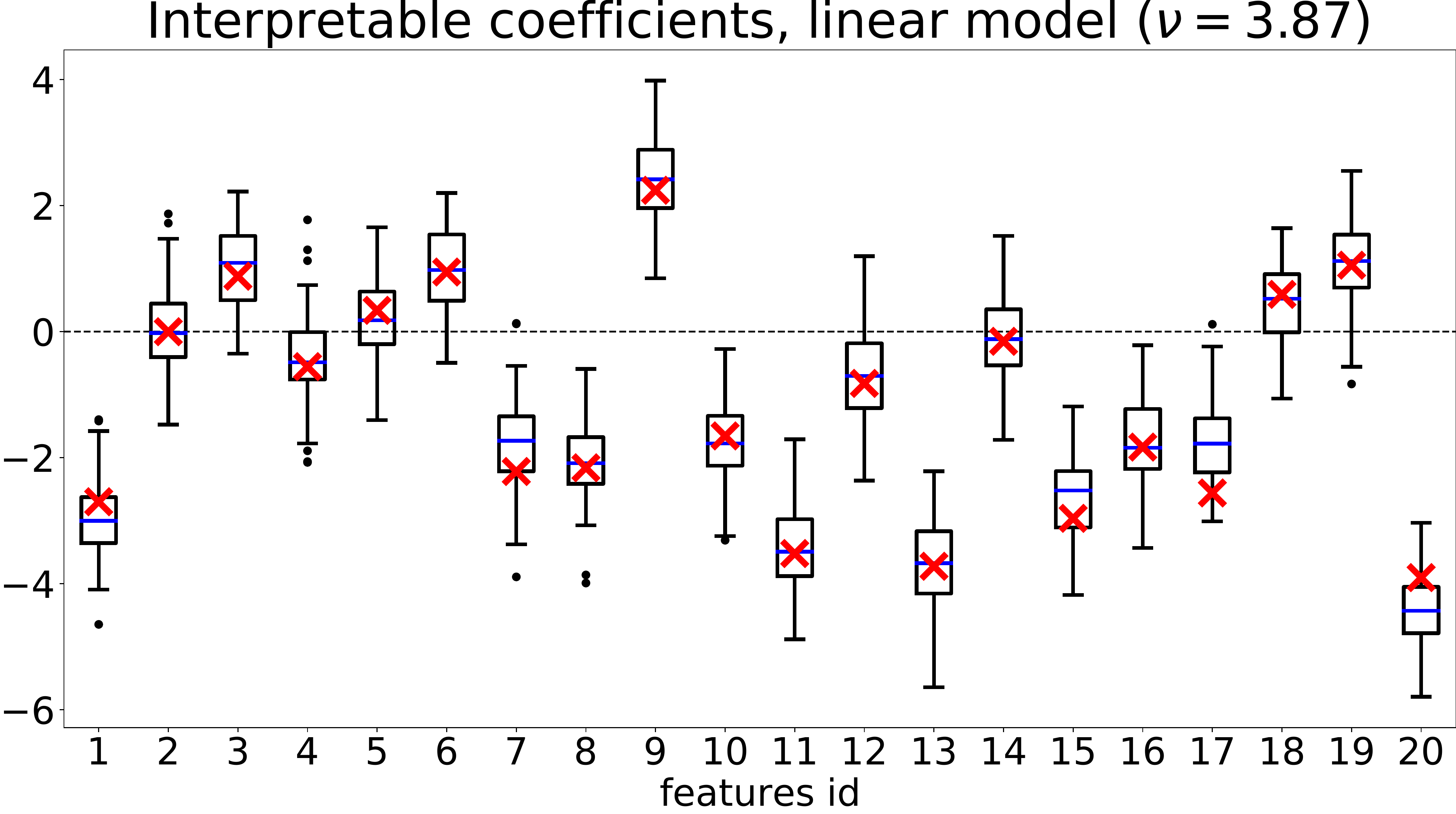}
	\end{center}
	\caption{\label{fig:linear-appendix}Interpretable coefficients for a linear model with arbitrary coefficients in dimension $20$. In red, the values given by Corollary~\ref{cor:beta-computation-liner-default-weights}. The match between theory and practice remains quite accurate.}
\end{figure}

\begin{figure}
	\centering
	\includegraphics[scale=0.3]{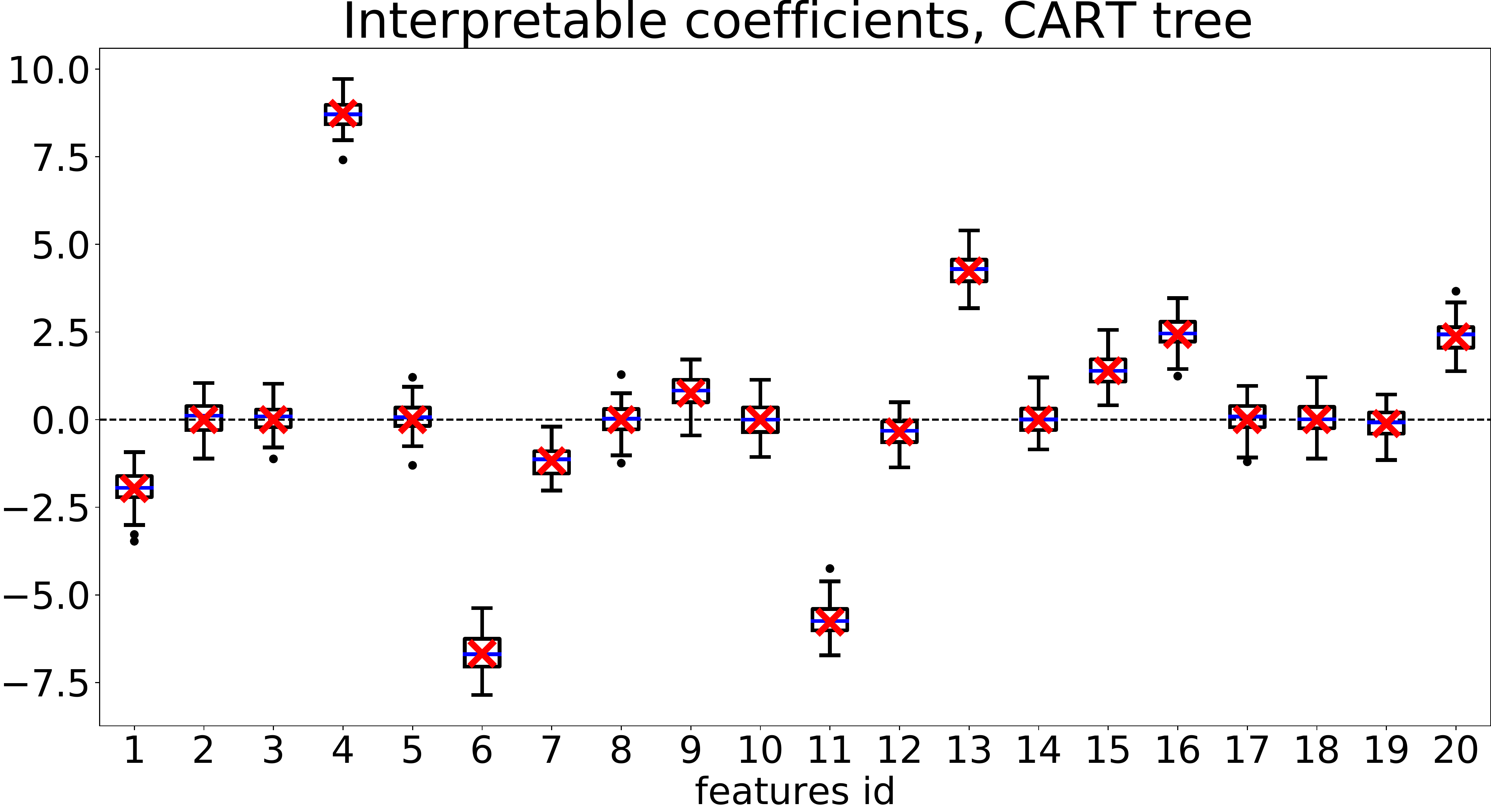}
	\caption{\label{fig:tree-appendix}Interpretable coefficients for a CART tree in dimension $20$. The tree, of depth $3$, was fitted on the function $x\mapsto \sum_j x_j$ with training data is uniform in $[-10,10]^d$. We repeated the experiment $100$ times. The theoretical predictions (red crosses) are obtained by retrieving the partition associated to the tree and using Corollary~\ref{cor:beta-computation-tree}. }
\end{figure}

\begin{figure}
	\begin{center}	
		\includegraphics[scale=0.3]{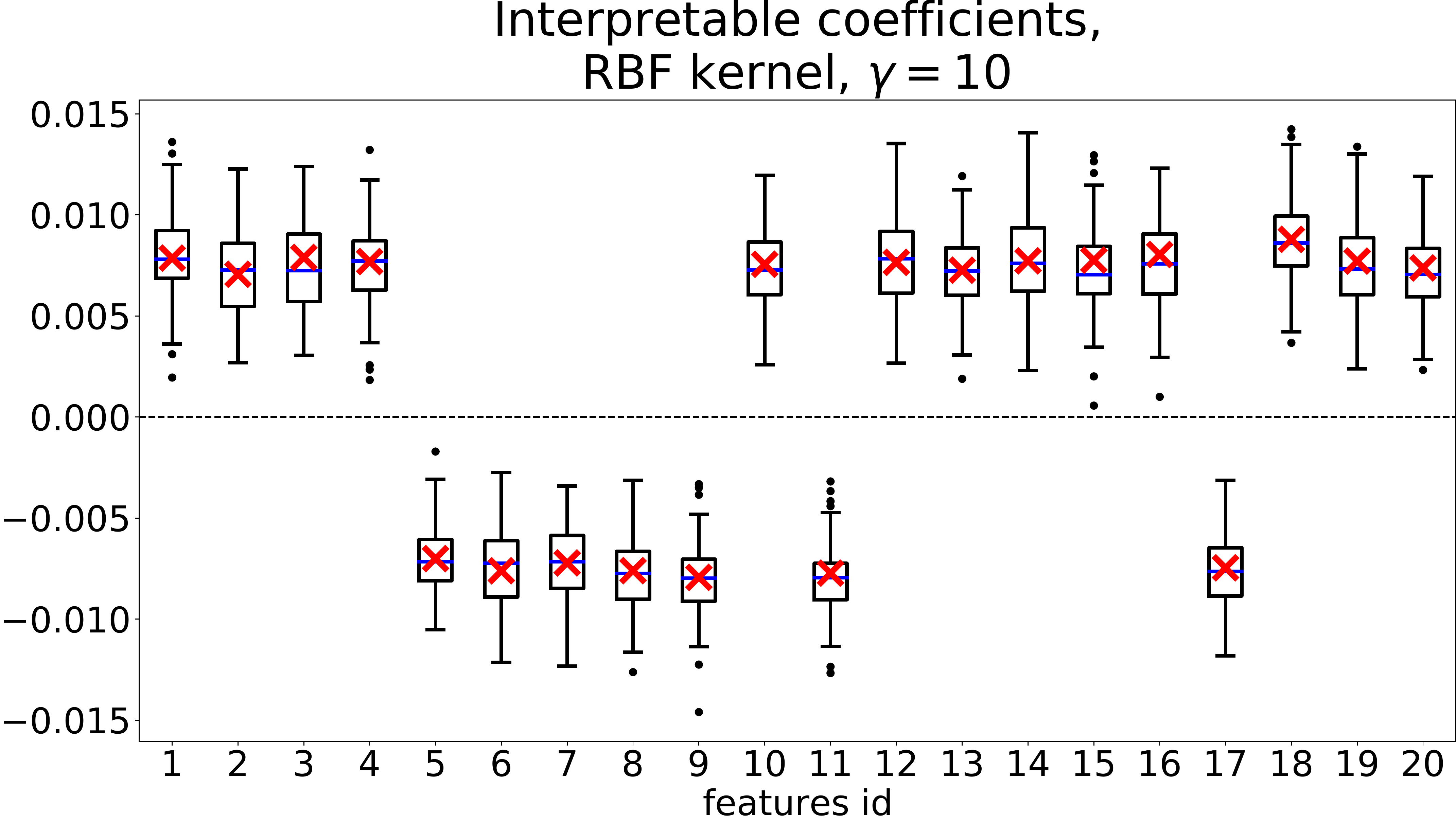}
	\end{center}
	\caption{\label{fig:krr-appendix}Interpretable coefficients for a Gaussian kernel with bandwidth parameter $\gamma=10$, in dimension $20$. We repeated the experiment $100$ times. The theoretical predictions (in red) come from Proposition~\ref{prop:ews-computation-gaussian-kernel}. }
\end{figure}

\begin{figure}
\begin{center}
\includegraphics[scale=0.25]{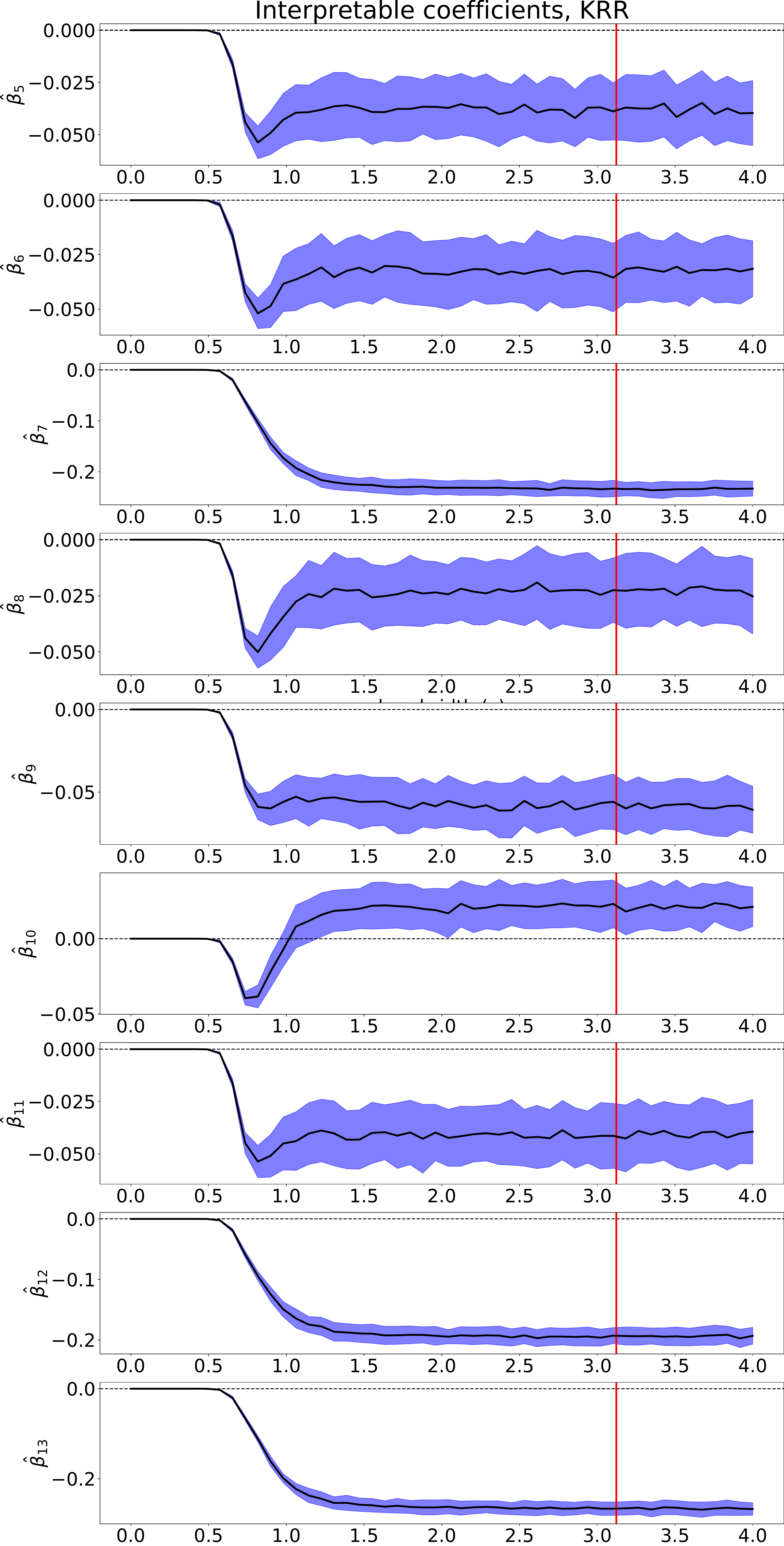}
\end{center}
\caption{\label{fig:bandwidth-cancellation-appendix}Evolution of the interpretable features as a function of the bandwidth. The model is a kernel ridge regressor trained on the Wine dataset. }
\end{figure}

\end{document}